\definecolor{navy}{rgb}{0, 0, 0.75}
\numberwithin{equation}{section}
\theoremstyle{plain}
\newtheorem{theorem}[equation]{Theorem}
\newtheorem{corollary}[equation]{Corollary}
\newtheorem{lemma}[equation]{Lemma}
\theoremstyle{definition}
\newtheorem{definition}[equation]{Definition}
\newtheorem{remark}[equation]{Remark}
\DeclareMathOperator*{\argmin}{argmin}
\newcommand{\prox}{\mathrm{prox}}
\newcommand{\spn}{\mathrm{span}}
\newcommand{\abs}[1]{\lvert#1\rvert}
\newcommand{\norm}[1]{\lVert#1\rVert}
\newcommand{\Abs}[1]{{\left\lvert#1\right\rvert}}
\newcommand{\Norm}[1]{{\left\lVert#1\right\rVert}}
\newcommand{\R}{\mathbb{R}}
\newcommand{\N}{\mathbb{N}}
\newcommand{\cS}{\mathcal{S}}
\newcommand{\cT}{\mathcal{T}}
\DeclareMathOperator*{\E}{\mathbb{E}}
\newcommand{\iid}{\overset{\textit{iid}}{\sim}}
\newcommand{\toP}{\xrightarrow{\textit{P}}}
\newcommand{\hu}{\widehat{u}}
\newcommand{\hv}{\widehat{v}}
\newcommand{\hb}{\widehat{\beta}}
\newcommand{\tu}{\widetilde{u}}
\newcommand{\tv}{\widetilde{v}}
\newcommand{\tb}{\widetilde{\beta}}
\newcommand{\eps}{\varepsilon}
\newcommand{\pe}{{\varepsilon_{\mathrm{DP}}}}
\newcommand{\tpe}{{\widetilde{\varepsilon}_{\mathrm{DP}}}}
\newcommand{\hpe}{{\widehat{\varepsilon}_{\mathrm{DP}}}}
\newcommand{\pd}{{\delta_{\mathrm{DP}}}}
\newcommand{\pa}{{\alpha_{\mathrm{DP}}}}
\newcommand{\pr}{{\rho_{\mathrm{DP}}}}
\newcommand\blfootnote[1]{
    \begingroup \renewcommand\thefootnote{}\footnote{#1}
    \endgroup
}
\title{
Differentially Private Learning \\ Beyond the Classical Dimensionality Regime\blfootnote{This work was supported in part by Simons Foundation Grant 733782, Cooperative Agreement CB20ADR0160001 with the United States Census Bureau, and NSF CAREER DMS-2340241.}
}
\author{
    Cynthia Dwork \\
    Harvard University \\
    \url{dwork@seas.harvard.edu}
\and
    Pranay Tankala \\
    Harvard University \\
    \url{pranay_tankala@g.harvard.edu}
\and
    Linjun Zhang \\
    Rutgers University \\
    \url{linjun.zhang@rutgers.edu}
}
\date{February 18, 2025}
\begin{document}

\maketitle

\begin{abstract}
    We initiate the study of differentially private learning in the \emph{proportional dimensionality regime}, in which the number of data samples $n$  and problem dimension $d$ approach infinity at rates proportional to one another, meaning that $d / n \to \delta$ as $n \to \infty$ for an arbitrary, given constant $\delta \in (0, \infty)$. This setting is significantly more challenging than that of all prior theoretical work in high-dimensional differentially private learning, which, despite the name, has assumed that $\delta = 0$ or is sufficiently small for problems of sample complexity $O(d)$, a regime typically considered ``low-dimensional'' or ``classical'' by modern standards in high-dimensional statistics.

    We provide sharp theoretical estimates of the error of several well-studied differentially private algorithms for robust linear regression and logistic regression, including output perturbation, objective perturbation, and noisy stochastic gradient descent, in the proportional dimensionality regime. The $1 + o(1)$ factor precision of our error estimates enables a far more nuanced understanding of the price of privacy of these algorithms than that afforded by existing, coarser analyses, which are essentially vacuous in the regime we consider. Using our estimates, we discover a previously unobserved ``double descent''-like phenomenon in the training error of objective perturbation for robust linear regression.  We also identify settings in which output perturbation outperforms objective perturbation on average, and vice versa, demonstrating that the relative performance of these algorithms is less clear-cut than suggested by prior work.
    
    To prove our main theorems, we introduce several probabilistic tools that have not previously been used to analyze differentially private learning algorithms, such as a modern Gaussian comparison inequality and recent universality laws with origins in statistical physics.
\end{abstract}

\newpage
\tableofcontents

\newpage
\section{Introduction}
\label{sec:introduction}

Over the last two decades, a groundbreaking line of research has demonstrated the limitations of classical statistical theory for modern high-dimensional learning problems \cite{donoho2009message,donoho2011noise,bayati2011amp,bayati2011lasso,el2013robust,karoui2013asymptotic, sur2019modern,candes2020phase,han2023universality,wang2024universality,han2024entrywise}. For example, although the classical theory of logistic regression predicts that the maximum likelihood estimator (MLE) is asymptotically unbiased with variance given by the inverse Fisher information, these predictions fail dramatically in the \emph{proportional dimensionality regime}, in which the number data samples $n$ and problem dimension $d$ approach infinity at rates proportional to one another. Indeed, \cite{sur2019modern} showed that even if $n = 5d$ or $n = 10d$, the MLE can be biased, with greater variance than predicted by the classical theory. In a result we find particularly intriguing, they further show that the MLE fails to exist with high probability when the dimensionality ratio $d/n$ crosses a certain \emph{phase transition} threshold. Despite these challenges, this proportional high-dimensional regime is highly relevant in modern data science, where technological advancements generate datasets with feature dimensions comparable to, or even exceeding, sample sizes.
 
Motivated by these insights and the growing importance of privacy-preserving learning, we initiate the study of differentially private learning in the proportional dimensionality regime. 

As an illustrative example, consider the task of fitting a $d$-dimensional linear regression model to $n$ labeled data samples $(\bm{x}_i, y_i) \in \R^{d} \times \R$. For simplicity, suppose that $\bm{\beta}^\star \in [-1, +1]^d$ denotes the unknown regression coefficients, that the feature vectors $\bm{x}_i$ are subgaussian with covariance $\frac{1}{d}\bm{I}_d$, and that the labels $y_i$ have constant variance given $\bm{\beta^\star}$ and $\bm{x}_i$. In this setting, it is known that there exists an $(\pe, \pd)$-differentially private algorithm, due to \cite{liu2022hdptr}, such that for any sufficiently small $\alpha > 0$, the algorithm's output $\widehat{\bm{\beta}}$ has error\footnote{The factor of $1/d$ in the estimation error is appropriate because the $\bm{x}_i$ have covariance $\frac{1}{d}\bm{I}_d$. More generally, when the $\bm{x}_i$ have covariance $\bm{\Sigma} \in \R^{d \times d}$, a natural notion of estimation error is $\norm{\bm{\Sigma}^{1/2}(\bm{\beta} - \bm{\beta}^\star)}^2$. A good rule of thumb is that in regression, the ``scale'' of $\bm{\beta}$ ought to be inversely proportional to the ``scale'' of $\bm{x}$. All our statements regarding $\norm{\bm{\beta}^\star} \le O(\sqrt{d})$ and $\norm{\bm{x}_i} \le O(1)$ and error metric $\frac{1}{d}\norm{\bm{\beta} - \bm{\beta}^\star}^2$ can be straightforwardly reformulated as statements regarding $\norm{\bm{\beta}^\star} \le O(1)$ and $\norm{\bm{x}_i} \le O(\sqrt{d})$ and error metric $\norm{\bm{\beta} - \bm{\beta}^\star}^2$.} \(\frac{1}{d}\norm{\widehat{\bm{\beta}} - \bm{\beta}^\star}^2 \le \alpha^2\) with probability $99\%$, and the algorithm only uses a sample of size
\[
    n \le O{\left(\frac{d}{\alpha^2} + \frac{d + \log(1/\pd)}{\alpha \pe}\right)} \cdot \mathrm{polylog}{\left(\frac{1}{\alpha}\right)}.
\]
This sample complexity upper bound is tight up to polylogarithmic factors \cite{cai2021cost}, but nevertheless yields vacuous error bounds in the proportional regime. Indeed, even if we were to set $\pe = 1$ and $\pd = 1/n^2$, this upper bound would only guarantee that $\alpha \le O(1)$ because $d/n = \Theta(1)$ in the proportional regime. In the absence of knowledge of the constant suppressed by the $O(\cdot)$ notation, this bound on $\alpha$ is no better than that of the trivial estimator $\widehat{\bm{\beta}}_{\mathrm{trivial}} = \bm{0}$, which has error
\[
    \frac{1}{d}\norm{\bm{\hb}_{\mathrm{trivial}} - \bm{\beta}^\star}^2 = \frac{1}{d}\norm{\bm{\beta}^\star}^2 \le 1.
\]
Therefore, the above-mentioned upper bound leaves unanswered the question of whether the algorithm in question exhibits a nontrivial privacy-utility tradeoff in the proportional regime, unless the constant $\delta$ to which $d/n$ converges is sufficiently small, a regime considered ``low-dimensional'' or ``classical'' by modern standards \cite{wainwright2019statistics}. 
Similar issues arise in other recent works on private linear regression, which assume either that $n = \widetilde{O}(d)$ or in some cases $n = O(d)$ for a suppressed, unoptimized constant factor \cite{wang2018revisiting, sheffet2019techniques, varshney2022nearly, brown2024ols, brown2024gradient}. In this example of regression in the proportional regime, everything from the most trivial estimators to the most sophisticated algorithms achieves constant error, rather than error converging to zero. The relevant task then becomes understanding the precise values of these constants, which is of course not just of theoretical interest. Indeed, in genomic, neuroscience, and image data from various domains, the dimensionality $d$ of the data is not only proportional to $n$, but often even larger than $n$, and so in all of these settings, we must go beyond the standard $O(d/n)$ or $O(\sqrt{d/n})$ convergence rates. Instead, a more refined approach, relying on the solution to a system of nonlinear equations, is necessary to capture the nuanced dependence on $d/n$ in the proportional dimensionality regime.

This example extends beyond linear regression, with similar challenges arising in other forms of regression and in areas such as heavy-tailed moment estimation \cite{kamath2020heavy, liu2021robust, hopkins2022sosexp, agarwal2024person}, Gaussian parameter estimation \cite{kamath2019privately, liu2021robust, ashtiani2022gaussians, kothari2022robust, kamath2022unbounded, alabi2023gaussian, hopkins2023robustness, brown2023affine, kuditipudi2023pretty}, high-dimensional hypothesis testing and selection \cite{bun2019hypothesis, canonne2019hypothesis, gopi2020locally, narayanan2022hypothesis, pour2024locally}, and empirical risk minimization \cite{chaudhuri2011differentially, kifer2012private, bassily2014erm}, which have all received sustained attention from the differentially private algorithm design community.

\subsection{Results Overview}

Our first main contribution is the first tight utility analysis of several differentially private learning algorithms in the proportional dimensionality regime. We determine the utility, including the optimal constants, of three prominent private estimation algorithms:
\begin{enumerate}[(1)]
    \item In Sections~\ref{sec:main-body-objective-perturbation-linear} and \ref{sec:main-body-objective-perturbation-logistic}, we determine the error of the objective perturbation algorithm, due to \cite{chaudhuri2011differentially, kifer2012private}. Our bounds are tight up to a $1 \pm n^{-\Omega(1)}$ factor for robust linear regression and a $1 \pm e^{-(\log n)^{\Omega(1)}}$ factor for logistic regression, and they depend in nuanced ways on the problem parameters, such as $\pe$, $\pd$, and the dimensionality ratio $\delta$. Using these estimates, we discover a previously unobserved ``double descent''-like phenomenon in the \emph{training error} of objective perturbation for robust linear regression, which is distinct from the better-known double descent curve for the \emph{test error} of non-private linear regression.

    In \cref{sec:main-body-privacy}, we also give a refined privacy bound for the objective perturbation algorithm, which extends the analysis of \cite{redberg2023improving} to arbitrarily small regularization strengths $\lambda > 0$ and perturbation magnitudes $\nu > 0$.
    
    \item In \cref{sec:main-body-output-perturbation}, we determine similarly tight error estimates for the output perturbation algorithm, due to \cite{dwork2006dp, chaudhuri2011differentially}. Using our estimates, we show that the relative performance of objective perturbation and output perturbation is more nuanced than suggested by previous work. Indeed, while some prior analyses \cite{chaudhuri2011differentially, kifer2012private} suggest that objective perturbation is better overall, we identify some parameter settings for which output perturbation does better on average.
    
    \item In \cref{sec:main-body-dp-sgd}, we determine the error of a version of the noisy stochastic gradient descent algorithm (DP-SGD) of \cite{song2013dpsgd, bassily2014erm} that is restricted to a constant number of iterations. These estimates are accurate up to a $1 + o(1)$ factor, and they apply to certain variants of robust linear regression and logistic regression.
\end{enumerate}

Our second main contribution is the introduction of two powerful techniques from the high-dimensional statistics literature that have not previously been used to analyze differentially private learning algorithms. The first technique is the \emph{Convex Gaussian Minimax Theorem} (CGMT) of \cite{stojnic2013framework, thrampoulidis2015regularized}, which relates certain min-max optimization problems defined in terms of Gaussian random \emph{matrices} in $\R^{n \times d}$ to simpler problems defined in terms of Gaussian random \emph{vectors} in $\R^n$ and $\R^d$. The second technique is the \emph{universality law} of \cite{han2023universality, han2024entrywise}, which allow us to apply CGMT to a wide range of non-Gaussian data distributions, including bounded data arising from the feature- or gradient-clipping subroutines of differentially private algorithms. Both CGMT and universality were developed to analyze non-private estimators, and a major part of our proofs, which we sketch in \cref{sec:techniques}, is verifying that the differentially private algorithms of interest to us satisfy the technical requirements needed to apply these techniques.

\subsubsection{Utility of Objective Perturbation for Robust Linear Regression}
\label{sec:main-body-objective-perturbation-linear}

Our first result is a sharp utility analysis in the proportional regime of the classic \emph{objective perturbation} algorithm for empirical risk minimization, which was introduced for pure differential privacy by \cite{chaudhuri2011differentially}, extended to approximate differential privacy by \cite{kifer2012private}, and studied further in several subsequent theoretical and empirical works \cite{jain2014risk, iyengar2019convex, neel2020oracle, redberg2023improving}. Given a data set \((\bm{x}_1, y_1), \ldots, (\bm{x}_n, y_n) \in \R^d \times \R\) and a loss function $\ell(\bm{\beta}; (\bm{x}, y))$, the algorithm samples $\bm{\xi} \sim \mathcal{N}(\bm{0}, \bm{I}_d)$ and outputs
\[
    \widehat{\bm{\beta}} = \argmin_{\bm{\beta} \in \R^d}\, \sum_{i=1}^n \ell(\bm{\beta}; (\bm{x}_i, y_i)) + \frac{\lambda}{2}\norm{\bm{\beta}}^2 + \nu\langle \bm{\xi}, \bm{\beta}\rangle.
\]
We call $\lambda > 0$ and $\nu > 0$ the \emph{regularization} and \emph{perturbation} strengths, respectively.

Our first result concerns the error of objective perturbation for \emph{robust linear regression}, in which the standard squared loss function for linear regression is replaced with the $L$-Lipschitz \emph{Huber loss} function, due to \cite{huber1964robust}, defined as $\ell(\bm{\beta}; (\bm{x}, y)) = H_L(y - \langle \bm{x}, \bm{\beta}\rangle)$, where
\[
    H_L(r) = \begin{cases} \frac{1}{2}r^2 &\text{if }\abs{r} \le L \\ L\abs{r}-\frac{1}{2}L^2 &\text{if } \abs{r} \ge L.\end{cases}
\]

We consider data $(\bm{X}, \bm{y}) \in \R^{n \times d} \times \R^n$ generated from a \emph{linear model}, meaning that there exists a ground-truth vector $\bm{\beta}^\star \in \R^d$ and error vector $\bm{\eps}^\star \in \R^n$ independent of $\bm{X}$ for which
\(
    \bm{y} = \bm{X}\bm{\beta}^\star + \bm{\eps}^\star.
\)
We also assume for the utility analysis that the feature matrix $\bm{X}$ is average-case in a certain sense, which is standard in the differentially private estimation literature. For concreteness, in our informally stated \cref{thm:main-objective-huber-informal} below, the reader can imagine that the entries are independent, random elements of $\{-1/\sqrt{d}, +1/\sqrt{d}\}$, but the full version, \cref{thm:main-huber-objective-perturbation} in \cref{sec:objective-perturbation-linear}, will apply to various distributions with different conditions on the means and variances. As always, privacy holds in a worst-case sense and requires no distributional assumptions. We state the privacy bound in terms of \emph{zero-concentrated} differential privacy (zCDP), due to \cite{dwork2016concentrateddifferentialprivacy, bun2016zcdp}, which we define in \cref{sec:preliminaries}. We also use the abbreviation $[r]_L = \min(L, \max(-L, r))$ for scalars $r \in \R$ and $L > 0$.

\begin{theorem}[Informal Version of \cref{thm:main-huber-objective-perturbation}]
\label{thm:main-objective-huber-informal}
    Let $(\sigma^\star, \tau^\star)$ denote the solution to the following system of two scalar equations in two variables $(\sigma, \tau)$, which we write in terms of $L, \lambda, \nu, \delta \in (0, \infty)$, dummy variables $Z_1, Z_2 \iid \mathcal{N}(0, 1)$, and $\kappa^2 = \frac{1}{d}\norm{\bm{\beta}^\star}^2$ and $\eps^2 = \frac{1}{n}\norm{\bm{\eps}^\star}^2$ as follows:
    \begin{align*}
        \sigma^2 &= \tau^2\left(\frac{1}{\delta} \E\mathopen{}\left[\frac{\sigma Z_1 + \varepsilon Z_2}{1 + \tau}\right]^2_L + \lambda^2\kappa^2 + \nu^2\right), \\
        \tau &= \frac{1}{\lambda \delta}\left(\delta - \frac{\tau}{1+\tau}\Pr\mathopen{}\left[-L < \frac{\sigma Z_1 + \varepsilon Z_2}{1 + \tau} < L\right]\right).
    \end{align*}
    The output $\widehat{\bm{\beta}}$ of the objective perturbation algorithm with Huber loss satisfies $\pr$-zCDP for a constant $\pr = \pr(L, \lambda, \nu)$ independent of $n$ that is finite for any $L,\lambda,\nu \in (0, \infty)$. If the data are generated from a linear model as $n \to \infty$ and $d/n \to \delta$, then w.h.p., $\widehat{\bm{\beta}}$ has estimation error \[\frac{1}{d}\norm{\widehat{\bm{\beta}} - \bm{\beta}^\star}^2 =  (\sigma^\star)^2 \pm n^{-\Omega(1)}.\]
    
\end{theorem}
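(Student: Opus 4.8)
The plan is to combine two tools from high-dimensional statistics — the Convex Gaussian Minimax Theorem (CGMT) and the universality laws of \cite{han2023universality, han2024entrywise} — following the now-standard route for analyzing regularized $M$-estimators in the proportional regime, while handling three features specific to objective perturbation: the extra Gaussian term $\nu\langle\bm\xi,\bm\beta\rangle$, the fact that the clipping lives inside the loss (via the Huber loss rather than an explicit feature-clipping step), and the quantitative $n^{-\Omega(1)}$ rate. The $\pr$-zCDP claim is handled separately by a worst-case sensitivity analysis of the objective-perturbation map, extending \cite{redberg2023improving} to arbitrary $\lambda,\nu>0$ and using that $H_L$ has a bounded, Lipschitz derivative. \textbf{Reduction to Gaussian features.} Because the ridge term makes the objective $\lambda$-strongly convex, $\widehat{\bm\beta}$ is the unique minimizer, and I would first show, via an a priori argument (using only $\norm{\bm\beta^\star}\le O(\sqrt{d})$, $\norm{\bm\eps^\star}\le O(\sqrt{n})$, and operator-norm bounds on $\bm X$), that $\widehat{\bm\beta}$ lies w.h.p.\ in a fixed compact ball. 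Since $H_L$ is $1$-Lipschitz with Lipschitz derivative, the per-sample loss is pseudo-Lipschitz, so the universality laws apply and show that $\frac{1}{d}\norm{\widehat{\bm\beta}-\bm\beta^\star}^2$ has the same limit, up to a polynomially small error, as when $\bm X$ is replaced by $\bm G/\sqrt{d}$ with i.i.d.\ standard Gaussian entries; the perturbation $\bm\xi$ is already Gaussian and independent, so it passes through untouched.

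\textbf{CGMT setup.} For Gaussian $\bm X$, set $\bm w=\bm\beta-\bm\beta^\star$ so that the residual is $\bm\eps^\star-\bm X\bm w$, and dualize the Huber loss through its Fenchel conjugate, $\sum_i H_L(r_i)=\max_{\bm u}\{\langle\bm u,\bm r\rangle-\sum_i H_L^\star(u_i)\}$ (with the $u_i$ automatically confined to $[-L,L]$). The objective becomes a saddle point that is bilinear in $\bm X$,
\[
    \min_{\bm w}\max_{\bm u}\; -\langle\bm u,\bm X\bm w\rangle+\langle\bm u,\bm\eps^\star\rangle-\sum_i H_L^\star(u_i)+\tfrac{\lambda}{2}\norm{\bm w+\bm\beta^\star}^2+\nu\langle\bm\xi,\bm w+\bm\beta^\star\rangle,
\]
to which CGMT applies: with $\bm X=\bm G/\sqrt{d}$, the term $-\langle\bm u,\bm X\bm w\rangle$ is replaced by $\frac{1}{\sqrt{d}}(\norm{\bm w}\langle\bm g,\bm u\rangle+\norm{\bm u}\langle\bm h,\bm w\rangle)$ for independent standard Gaussians $\bm g\in\R^n$, $\bm h\in\R^d$, yielding the auxiliary problem. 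A useful point: the $\bm h$-term combines with the perturbation into $\langle\frac{\norm{\bm u}}{\sqrt{d}}\bm h+\nu\bm\xi,\bm w\rangle$, which, since $\bm h\perp\bm\xi$, is distributed as $\langle\widetilde{\bm h},\bm w\rangle$ with $\widetilde{\bm h}\sim\mathcal N(\bm0,(\tfrac{\norm{\bm u}^2}{d}+\nu^2)\bm I_d)$ — precisely where the $+\nu^2$ in the first fixed-point equation originates; similarly $\lambda^2\kappa^2$ appears upon completing the square in the ridge term against $\bm\beta^\star$ (whose squared norm is $d\kappa^2$).

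\textbf{Scalarization and transfer.} Optimizing over the directions of $\bm w$ and $\bm u$ with their norms held fixed, and using Gaussian concentration of the resulting inner products as $n,d\to\infty$ with $d/n\to\delta$, reduces the auxiliary problem to a deterministic scalar min-max problem in two variables: the scale of $\bm w$, which converges to $\sqrt{d}\,\sigma$ so that $\sigma^2$ is the estimation error, and a Moreau-envelope parameter $\tau$ for $H_L$. Using the identities $r-\prox_{\tau H_L}(r)=\tau\,[\,r/(1+\tau)\,]_L$ and $\prox_{\tau H_L}'(r)=\tfrac{1}{1+\tau}\mathbf{1}\{\abs{r}<(1+\tau)L\}$, the stationarity conditions of this scalar problem are exactly the stated system for $(\sigma^\star,\tau^\star)$, with the effective residual playing the role of $\sigma Z_1+\eps Z_2$ — a prediction-error part $\sigma Z_1\sim\mathcal N(0,\sigma^2)$ plus the true-error part $\eps Z_2$ with $\eps^2=\frac{1}{n}\norm{\bm\eps^\star}^2$. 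Finally CGMT transfers the conclusion — concentration of $\frac{1}{d}\norm{\widehat{\bm\beta}-\bm\beta^\star}^2$ around $(\sigma^\star)^2$, which is a Lipschitz functional of the minimizer — back to the primary problem with Gaussian $\bm X$, and the universality step carries it over to general $\bm X$.

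\textbf{Main obstacle.} I expect the crux to be twofold and intertwined. First, the a priori compactness and strong-convexity facts must be quantitative and \emph{uniform} enough that CGMT and its minimizer consequence remain valid even as $\lambda,\nu\to 0$; this requires control of the smallest Hessian eigenvalue and of $\mathrm{dist}(\widehat{\bm\beta},\text{boundary of the ball})$ that does not degrade too quickly. Second, obtaining the claimed $n^{-\Omega(1)}$ rate — rather than a bare $o(1)$ — demands quantitative versions of CGMT, of the universality laws, and of Gaussian concentration, together with a stability analysis of the scalar system: one must show $(\sigma^\star,\tau^\star)$ is the unique solution and that the relevant Jacobian is nondegenerate there, so that small fluctuations in the random scalar problem translate into only polynomially small fluctuations in $\sigma$.
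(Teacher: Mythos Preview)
Your route is the same as the paper's: Fenchel-dualize the Huber loss into a min-max problem bilinear in $\bm X$, invoke universality to pass to Gaussian design, apply CGMT to obtain the auxiliary problem, and scalarize to the $(\sigma,\tau)$ system. Your way of reading off the extra $+\nu^2$ by merging the $\R^d$-valued CGMT Gaussian with $\bm\xi$ is equivalent to the paper's device of absorbing $\bm\xi$ into a shifted ground truth $\bm\beta^\star+\tfrac{\nu}{\lambda}\bm\xi$.

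There is one genuine gap. Your a priori argument places $\widehat{\bm\beta}$ only in an $\ell^2$ ball of radius $O(\sqrt d)$, via operator-norm bounds on $\bm X$. This is not sufficient for the CGMT universality theorem of \cite{han2023universality}: that result requires the constraint set to sit inside an $\ell^\infty$ box $[-L_{\bm u},L_{\bm u}]^d$ with $L_{\bm u}=O(n^c)$ for a small constant $c$, because its error term scales like $L_{\bm u}^2/n^{1/6}$ and is vacuous when $L_{\bm u}=\Theta(\sqrt n)$. What is actually needed is $\norm{\widehat{\bm\beta}}_\infty\le n^{o(1)}$ w.h.p. The paper obtains this via a leave-one-\emph{feature}-out argument: if $\widehat{\bm\beta}^{(j)}$ denotes the minimizer with $\beta_j$ clamped to $\beta^\star_j$, strong convexity gives $|\widehat\beta_j-\beta^\star_j|\le\lambda^{-1}|\nabla_j H(\widehat{\bm\beta}^{(j)})|$, and the key observation is that the truncated residual $[\bm y-\bm X\widehat{\bm\beta}^{(j)}]_L$ is independent of the column $\bm X_j$, so $\langle[\bm y-\bm X\widehat{\bm\beta}^{(j)}]_L,\bm X_j\rangle$ has $O(1)$ subgaussian norm; a union bound over $j\in[d]$ then yields $\norm{\widehat{\bm\beta}-\bm\beta^\star}_\infty\le O(\log n)+2\norm{\bm\beta^\star}_\infty$. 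Operator-norm control alone does not deliver this, and without it the universality step (and hence the $n^{-\Omega(1)}$ rate) does not go through.

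Two minor points. First, the theorem is for fixed $\lambda,\nu>0$, so your worry about uniformity as $\lambda,\nu\to0$ is not relevant here. Second, existence of the solution $(\sigma^\star,\tau^\star)$ is \emph{assumed}, not proved, so you do not need uniqueness or Jacobian nondegeneracy; the $n^{-\Omega(1)}$ rate comes instead from strong convexity of the auxiliary objective together with the quantitative CGMT and universality statements.
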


We will sketch the proof of \cref{thm:main-objective-huber-informal} in \cref{sec:techniques}, but we defer the full proof to \cref{sec:objective-perturbation-linear}. As already discussed, standard $O(d/n)$ or $O(\sqrt{d/n})$ error rates are vacuous in the proportional regime, so \cref{thm:main-objective-huber-informal} instead captures the nuanced dependence on $d/n$ through a small system of nonlinear, scalar equations that depend on $\delta$. To quickly gain intuition for these equations, it is instructive to consider simplified versions that arise in the limit as $L \to \infty$ or $\nu \to 0$ or $\lambda \to 0$ even though, strictly speaking, our theorem only holds for fixed values of $L, \lambda, \nu \in (0, \infty)$.

First, consider the limit as $\nu \to 0$, which corresponds to removing the random linear perturbation term that was introduced for privacy. In this case, we recover a result in Section 5.2 of \cite{thrampoulidis2018mestimators} for non-private ridge regression with Gaussian features, which was later extended to subgaussian features by Theorem 3.12 of \cite{han2023universality}. Indeed, these works characterize the error of the non-private, ridge regularized Huber regressor via the solution to a pair of equations in $(\sigma, \tau)$, which match ours in the case of $\nu = 0$. In Figure~\ref{fig:huber-objective-estimation}, we numerically compare the solution to our equations with $\nu > 0$ to those of prior work with $\nu = 0$ to precisely quantify the price of privacy paid by objective perturbation for robust linear regression.

Next, consider the limit $L \to \infty$, which corresponds to approximating Huber loss by squared loss. In this case, the equations simplify substantially:
\begin{align*}
    \sigma^2 = \tau^2\left(\frac{1}{\delta} \cdot \frac{\sigma^2 + \eps^2}{(1+\tau)^2} + \lambda^2\kappa^2 + \nu^2\right), \qquad \tau = \frac{1}{\lambda \delta}\left(\delta - \frac{\tau}{1+\tau}\right).
\end{align*}
In the non-private case ($\nu = 0$), solving for the error $\sigma^\star$ in the limit as $\lambda \to 0$ yields
\[
    (\sigma^\star)^2 \to \begin{cases}
        \frac{\delta}{1-\delta} \eps^2 &\text{if } \delta < 1, \\
        \frac{1}{\delta - 1} \eps^2 + \frac{\delta-1}{\delta}\kappa^2 &\text{if }\delta > 1.
    \end{cases}
\]
Similarly, in the private case ($\nu > 0$), in the limit as $\lambda \to 0$, the error satisfies
\[
    (\sigma^\star)^2 \to \begin{cases}
        \frac{\delta}{1-\delta} \eps^2 + \frac{\delta^2}{(1-\delta)^3}\nu^2 &\text{if } \delta < 1, \\
        +\infty &\text{if }\delta > 1.
    \end{cases}
\]
In other words, in the $\delta < 1$ regime, where $n > d$ objective perturbation incurs a mild additive penalty in squared error compared to the non-private case. However, in the $\delta > 1$ regime, where $n < d$, any constant noise level $\nu > 0$, no matter how small, sends $\sigma^\star$ from a finite constant to $+\infty$, a dramatic deterioration. This behavior is exhibited in the top two estimation error plots of Figure~\ref{fig:huber-objective-estimation}, where we see a sharp phase transition at  $n / (n + d) = 1/2$ (the region to the right of this boundary corresponds to $n > d$ or $\delta < 1$, and the region to the left of this boundary corresponds to  $n < d$ or $\delta > 1$). The ``spike'' in estimation error at $n = d$ in the non-private case is an example of the well-known \emph{double descent} phenomenon in modern machine learning \cite{belkin2019reconciling, belkin2020two, mei2022generalization}, and it is notable that it does not occur in private case of fixed $\nu > 0$.

However, we reach the opposite conclusion if we examine in-sample \emph{training error} (norm of the residuals $\bm{y} - \bm{X}\bm{\hb}$) instead of out-of-sample \emph{test error} (norm of $\bm{\hb} - \bm{\beta}^\star$). Indeed, while \cref{thm:main-objective-huber-informal} only estimates the squared norm of $\bm{\hb} - \bm{\beta}^\star$, the full version (\cref{thm:main-huber-objective-perturbation}) will also consider various other error metrics, including $\ell^p$ norms for all $p \ge 1$, as well as various residual norms. Again considering the limits $L \to \infty$ and $\lambda \to 0$, \cref{thm:main-huber-objective-perturbation} predicts that
\[
    \frac{1}{n}\norm{\bm{y} - \bm{X}\bm{\hb}}^2 \to \begin{cases}
        (1-\delta)\eps^2 + \frac{\delta^2}{1-\delta} \nu^2 &\text{if }\delta < 1,\\
        \frac{\delta}{\delta - 1}\nu^2 &\text{if }\delta > 1.
    \end{cases}
\]
Strikingly, this formula shows a descent-like ``spike'' in training error that emerges \emph{only} in the private case. Indeed, if $\nu = 0$, the residual error is $0$ for $n < d$ and monotonically increases to $\eps^2$ as $n$ grows from $d$ to $\infty$, but if $\nu > 0$, then there is a singularity at $\delta = 1$ on the order of $O(1/\abs{\delta-1})$. This behavior is exhibited in the bottom two residual error plots of Figure~\ref{fig:huber-objective-estimation}.

\begin{figure}[t]
    \centering
    \includegraphics[width=0.495\linewidth]{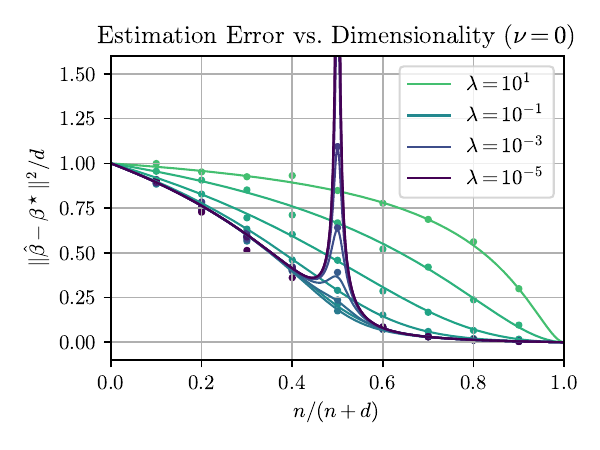}
    \includegraphics[width=0.495\linewidth]{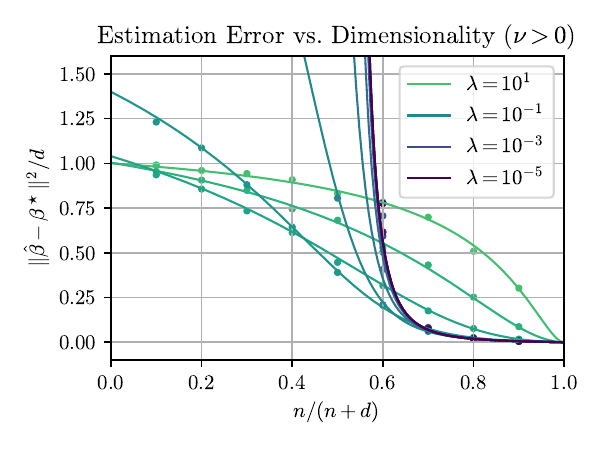}
    \includegraphics[width=0.495\linewidth]{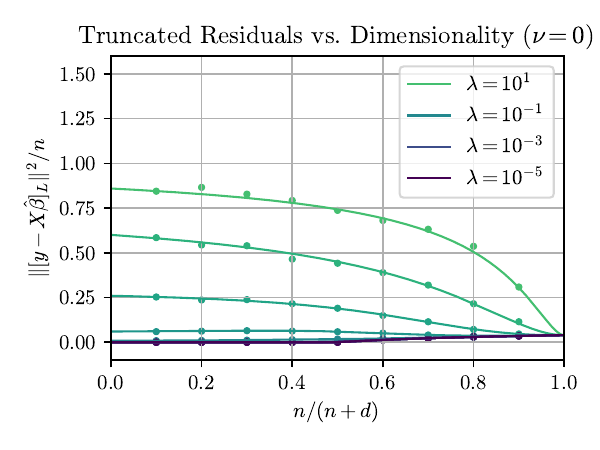}
    \includegraphics[width=0.495\linewidth]{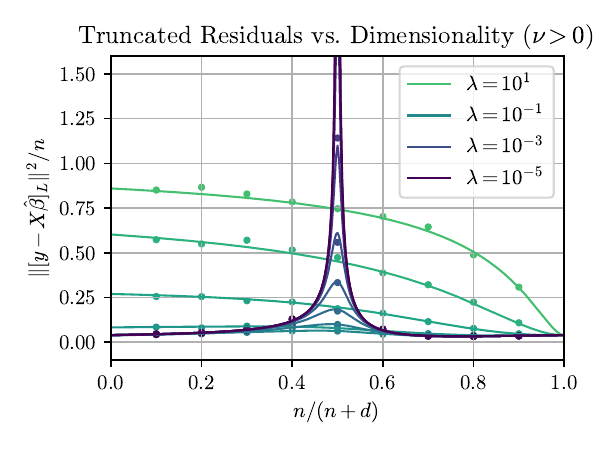}
    
    \caption{\cref{thm:main-objective-huber-informal}'s predictions of the \emph{estimation error} and \emph{truncated residuals} of objective perturbation with Huber loss. Larger $\delta \in (0, \infty)$ corresponds to smaller $n/(n+d) \in (0, 1)$. Curves indicate theoretical predictions, and dots indicate the mean over $100$ simulations of the algorithm on synthetic data with $n \times d = 1000$. In the left plots, the perturbation magnitude is $\nu = 0$, corresponding to the non-private case, but in the right plots, $\nu = 1/5$. All plots use $L = 10$, $\kappa = 1$, $\bm{\beta}^\star \sim \mathcal{N}(\bm{0}, \kappa^2\bm{I}_d)$, $\bm{\eps}^\star \sim \mathcal{N}(\bm{0}, (1/5)^2\bm{I}_n)$, $\bm{X} \sim \frac{1}{\sqrt{d}} \mathrm{Uniform}(\{-1,+1\}^{n \times d})$, and $\bm{y} = \bm{X}\bm{\beta}^\star + \bm{\eps}^\star$.}
    \label{fig:huber-objective-estimation}
\end{figure}

\begin{remark}
\label{rmk:assumptions}
    Although the incorporation of $\nu$ is the main quantitative difference between our theorem and those of \cite{thrampoulidis2018mestimators, han2023universality}, our theorem also comes with a number of qualitative advantages that will become more apparent once we have presented its full version (\cref{thm:main-huber-objective-perturbation}). For example, while Theorem 3.12 of \cite{han2023universality} assumes that $\bm{\beta}^\star$ and $\bm{\eps}^\star$ are random with i.i.d. coordinates, our theorem does not even require them to be random (although, in the deterministic case, we do require certain mild conditions on their coordinates as $n \to \infty$). Next, while \cite{thrampoulidis2018mestimators, han2023universality} only determine the limiting value of $\frac{1}{d}\norm{\bm{\hb} - \bm{\beta}^\star}^2$, we also use $\sigma^\star$ and $\tau^\star$ to characterize a wide range of other scalar quantities of interest regarding the algorithm's output $\bm{\hb}$, such as all of its $\ell^p$ distances to the ground-truth $\bm{\beta}^\star$, its correlation with the coefficients $\bm{\xi}$ of the random linear perturbation term, the error of its predictions $\bm{X}\bm{\hb}$ on the training data, and more.
\end{remark}

Further intuition for the meaning of $\sigma^\star$ and $\tau^\star$ can be gleaned from \cref{thm:main-huber-objective-perturbation}, which will roughly show that, for $\bm{z} \sim \mathcal{N}(\bm{0}, \bm{I}_d)$ independent of the perturbation vector $\bm{\xi}$,
\[
    \bm{\hb} \approx (1 - \tau^\star \lambda)\bm{\beta}^\star - \tau^\star \nu \bm{\xi} + \sqrt{(\sigma^\star)^2 - (\tau^\star)^2(\lambda^2\kappa^2+\nu^2)} \cdot \bm{z}.
\]
In this form, it is clear that a larger $\sigma^\star$ leads to a larger coefficient on the $\bm{z}$ term, so
$\sigma^\star$ is related to the \emph{variance} of $\bm{\hb} - \bm{\beta}^\star$. Similarly, $\tau^\star$ is related to the correlations of $\bm{\hb}$ with both $\bm{\xi}$ and $\bm{\beta^}\star$, the latter of which is related to the \emph{bias} of $\bm{\hb}$. Formalizing this intuition will be easy to do given the full version of the theorem.

\subsubsection{Utility of Objective Perturbation for Logistic Regression}
\label{sec:main-body-objective-perturbation-logistic}

Our next result concerns \emph{logistic regression}, where $\bm{y} \in \{0, 1\}^n$. The logistic loss of $\bm{\beta}$ on a sample $(\bm{x}, y)$ is $\rho(\langle \bm{x}, \bm{\beta}\rangle)$ if $y = 0$ and $\rho(-\langle \bm{x}, \bm{\beta}\rangle)$ if $y = 1$, where
\(
    \rho(t) = \log(1 + e^t).
\)
We consider data $(\bm{X}, \bm{y}) \in \R^{n \times d} \times \{0, 1\}^n$ generated from a \emph{logistic model}, meaning that there exists $\bm{\beta}^\star \in \R^d$ for which
\(
    \bm{y} \sim \mathrm{Bernoulli}(\rho'(\bm{X}\bm{\beta}^\star)).
\)
Here, $\rho'(t) = 1/(1+e^{-t})$ is the \emph{sigmoid} function, which outputs values between $0$ and $1$.

The subsequent theorem, much like the previous one, will involve a small number of scalar equations. We state them in terms of the \emph{proximal operator} of the function $\gamma \cdot \rho$ for a scalar $\gamma > 0$, denoted $\mathrm{prox}_{\gamma \rho}$. Intuitively, this function takes as input a point $s \in \R$ and outputs another point $t \in \R$ that approximately minimizes $\gamma \rho$ while maintaining proximity to $s$. For more detail, see \cref{sec:preliminaries} or \cite{parikh2014proximal}.

\begin{theorem}[Informal Version of \cref{thm:logistic-objective-perturbation}]
\label{thm:main-objective-logistic-informal}
    Let $(\alpha^\star, \sigma^\star, \gamma^\star)$ denote the solution to the following system of three scalar equations in three variables $(\alpha, \sigma, \gamma)$, which we write in terms of the algorithm's parameters $\lambda, \nu > 0$, the dimensionality ratio $\delta \in (0, \infty)$, dummy variables $Z_1, Z_2 \iid \mathcal{N}(0, 1)$ and $\kappa^2 = \frac{1}{d}\norm{\bm{\beta}^\star}^2$, as follows:
    \begin{align*}
        \sigma^2 &= \gamma^2\left(\frac{1}{\delta}\E\mathopen{}\left[2\rho'(-\kappa Z_1)\rho'(\prox_{\gamma\rho}(\kappa\alpha Z_1+\sigma Z_2) \big)^2\right] + \nu^2\right), \\
        \alpha &= -\frac{1}{\delta}\E[ 2\rho''(-\kappa Z_1)\prox_{\gamma \rho}\big(\kappa \alpha Z_1+\sigma Z_2\big)], \\
        \gamma &= \frac{1}{\lambda \delta}\mathopen{}\left(\delta - 1 + \E\mathopen{}\left[\frac{2\rho'(-\kappa Z_1)}{1+\gamma \rho''\big(\prox_{\gamma\rho}(\kappa\alpha Z_1 + \sigma Z_2)\big)}\right]\right).
    \end{align*}
    The output $\widehat{\bm{\beta}}$ of the objective perturbation algorithm with logistic loss satisfies $\pr$-zCDP for a constant $\pr = \pr(\lambda, \nu)$ independent of $n$ that is finite for any $\lambda, \nu \in (0, \infty)$. If the data are generated from a logistic model as $n \to \infty$ and $d/n \to \delta$, then w.h.p., $\widehat{\bm{\beta}}$ has estimation error \[\frac{1}{d}\norm{\widehat{\bm{\beta}} - \bm{\beta}^\star}^2 = (1 - \alpha^\star)^2\kappa^2 + (\sigma^\star)^2 \pm e^{-(\log n)^{\Omega(1)}}.\]
\end{theorem}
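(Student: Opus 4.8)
The statement splits into a privacy claim and a utility claim. The privacy claim will follow from the refined objective-perturbation analysis announced in \cref{sec:main-body-privacy}: its only inputs are that each per-sample loss $\bm\beta\mapsto\rho(\pm\langle\bm x,\bm\beta\rangle)$ is convex with Hessian $\rho''(\pm\langle\bm x,\bm\beta\rangle)\,\bm x\bm x^\top$ bounded in operator norm by $\frac14\norm{\bm x}^2$ (since $\rho''\le\frac14$), and that the added $\frac{\lambda}{2}\norm{\bm\beta}^2$ makes the perturbed objective $\lambda$-strongly convex for every realization of the data. These hold with constants independent of $n$ and $d$, so the resulting zCDP parameter $\pr$ depends only on $\lambda$ and $\nu$, and, as always, no distributional assumptions are needed.

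For the utility claim, condition on $\bm\xi$. Because the perturbed objective is $\lambda$-strongly convex, $\bm{\hb}$ is the unique minimizer, and comparing its value to the value at $\bm 0$ (using $\norm{\bm\xi}=O(\sqrt d)$ and a concentration bound on $\frac1n\sum_i\rho(\pm\langle\bm x_i,\bm\beta^\star\rangle)$) gives $\norm{\bm{\hb}}=O(\sqrt d)$ with high probability, confining the optimization to a Euclidean ball of radius $O(\sqrt d)$. To invoke the universality law of \cite{han2023universality,han2024entrywise} I would additionally truncate the linear predictors $\langle\bm x_i,\bm\beta\rangle$ at a $\mathrm{polylog}(n)$ scale so that all relevant functionals become effectively Lipschitz and bounded, and then check the remaining hypotheses: $\rho$ is $1$-Lipschitz with bounded curvature, the data obey a single-index/generalized-linear model with link $\rho'$, $\bm X$ has the needed moments, and the composite term $\frac{\lambda}{2}\norm{\bm\beta}^2+\nu\langle\bm\xi,\bm\beta\rangle$ is admissible — the last point being benign because $\bm\xi$ is an independent Gaussian vector that folds into the separable part of the objective. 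Universality then replaces $\bm X$ by a matrix with i.i.d.\ $\mathcal N(0,1/d)$ entries; it is precisely the $\mathrm{polylog}(n)$ truncation scale that degrades the error from the polynomial rate of the Huber case to the $e^{-(\log n)^{\Omega(1)}}$ rate here.

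With Gaussian $\bm X$, I would expose the objective's dependence on $\bm X$ through the linear predictors by writing $\bm v=\bm X\bm\beta$ with a Lagrange multiplier $\bm u$, obtaining the convex--concave problem $\min_{\bm\beta,\bm v}\max_{\bm u}\ \sum_i\ell_i(v_i)+\frac{\lambda}{2}\norm{\bm\beta}^2+\nu\langle\bm\xi,\bm\beta\rangle+\langle\bm u,\bm X\bm\beta-\bm v\rangle$. Splitting $\bm X$ into its projection along $\bm\beta^\star$ (which, together with $\bm y$, is the data we condition on) and an independent Gaussian part $\widetilde{\bm X}$ composed with the projector $P^\perp$ onto the orthogonal complement of $\bm\beta^\star$, the matrix enters only through the bilinear form $\langle\bm u,\widetilde{\bm X}P^\perp\bm\beta\rangle$, so the Convex Gaussian Minimax Theorem \cite{stojnic2013framework,thrampoulidis2015regularized} applies and replaces this min--max by the Auxiliary Optimization in two Gaussian vectors $\bm g\in\R^n$, $\bm h\in\R^d$. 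Standard reductions collapse the AO to an optimization over a few scalars — the $\bm\beta^\star$-correlation and orthogonal norm of $\bm\beta$ (tracked by $\alpha$ and $\sigma$) and the scale of $\bm u$ (tracked by $\gamma$) — with the $v_i$ and $u_i$ coordinates eliminated into Moreau envelopes of $\rho$, i.e.\ into $\prox_{\gamma\rho}$. Taking expectations over the Gaussian vectors and over the Bernoulli labels conditioned on $\langle\bm x_i,\bm\beta^\star\rangle\approx\kappa Z_1$ — where the $y\in\{0,1\}$ cases recombine through $\rho'(t)+\rho'(-t)=1$ into the factors $2\rho'(-\kappa Z_1)$ — the stationarity conditions of the resulting scalar program are exactly the three displayed fixed-point equations in $(\alpha^\star,\sigma^\star,\gamma^\star)$. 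The scalarization also gives $\bm{\hb}\approx\alpha^\star\bm\beta^\star+(\text{a vector orthogonal to }\bm\beta^\star\text{ of squared norm}\approx d(\sigma^\star)^2)$, whence $\frac1d\norm{\bm{\hb}-\bm\beta^\star}^2=\frac1d\norm{(\alpha^\star-1)\bm\beta^\star+\text{noise}}^2\to(1-\alpha^\star)^2\kappa^2+(\sigma^\star)^2$.

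The main obstacle is the usual one for CGMT-based arguments: not the formal computation but (i) transferring the AO conclusion back to the primal, which needs a strong-convexity/strong-concavity (or stability/uniqueness) argument for the scalarized objective, and (ii) showing that the three fixed-point equations admit a unique solution with the right monotone dependence on $\lambda,\nu,\delta$. The flat region of the logistic loss (convex but far from strongly convex in the signal-carrying directions) makes the a priori norm control and the AO stability analysis more delicate than in the Huber case and is the reason the truncation/compactification step — and hence the weaker $e^{-(\log n)^{\Omega(1)}}$ rate — seems hard to avoid. A secondary difficulty is carefully propagating the universality error through the truncation and through the nonlinear link, which, unlike the additive-noise linear model, couples $\bm y$ to $\bm X$.
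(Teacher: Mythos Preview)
Your proposal follows essentially the same three-step architecture as the paper (universality $\to$ Legendre transform and CGMT $\to$ scalarization), and you correctly place universality \emph{before} the CGMT step---the paper stresses that this order is forced, because the $\bm\beta^\star$-parallel/perpendicular split that isolates the bilinear term relies on Gaussian rotational invariance and hence cannot be done until $\bm X$ is Gaussian.

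Two clarifications relative to the paper. First, your universality step is underspecified: you cite both \cite{han2023universality} and \cite{han2024entrywise} and speak of truncating the predictors, but the paper is explicit that CGMT universality (\cite{han2023universality}) does \emph{not} apply to logistic regression for exactly the reason above, and instead constructs $\sigma$-smoothed gradient-descent iterates $\bm\beta_\sigma^{(t)}\to\bm{\hb}$ that fit the GFOM framework of \cite{han2024entrywise}; the $e^{-(\log n)^{\Omega(1)}}$ rate then comes from balancing the smoothing parameter $\sigma$ against the iterate count $t$ in the GFOM error bound (they take $t=(\log n)^{1/8}$, $\sigma=e^{-(\log n)^{1/8}}$), not merely from a predictor truncation. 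Second, your obstacle (ii)---existence and uniqueness of $(\alpha^\star,\sigma^\star,\gamma^\star)$---is simply \emph{assumed} in the formal theorem, not proved; the paper's effort goes into your obstacle (i), which it handles by exhibiting an explicit approximate saddle point of the auxiliary objective and invoking the $\lambda$-strong convexity in $\bm\beta$ and the $4$-strong concavity in $\bm v$ (coming from $-\rho^\star$) to localize the true optimizer.
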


Much like our results for robust linear regression, \cref{thm:main-objective-logistic-informal} is a differentially private analogue of existing, non-private results in the literature. Taking $\nu \to 0$, we recover Theorem 2 of \cite{salehi2019impact} for regularized logistic regression with Gaussian features, which was later extended to subgaussian features by Theorem 4.3 of \cite{han2024entrywise}. Taking $\lambda \to 0$ as well, we recover the landmark result of \cite{sur2019modern} on the behavior of the (non-regularized) MLE, up to some minor differences in notation and scaling. In \cref{sec:techniques}, we sketch the proof of \cref{thm:main-objective-logistic-informal}. In \cref{sec:objective-perturbation-logistic}, we state a rigorous version of the theorem (\cref{thm:logistic-objective-perturbation}), provide a complete proof, and visualize the predictions (Figure~\ref{fig:logistic-objective-estimation}). We remark that the full version of the theorem will also have some qualitative advantages over prior results akin to those discussed in Remark~\ref{rmk:assumptions}.

As in the previous section, further intuition for the meaning of $\alpha^\star$, $\sigma^\star$, and $\gamma^\star$ can be gleaned from \cref{thm:logistic-objective-perturbation}, which will roughly show that, for $\bm{z} \sim \mathcal{N}(\bm{0}, \bm{I}_d)$ independent of $\bm{\xi}$, 
\[
    \bm{\hb} \approx \alpha^\star \bm{\beta}^\star - \gamma^\star \nu\bm{\xi} + \sqrt{(\sigma^\star)^2 - (\gamma^\star\nu)^2} \cdot \bm{z}.
\]
Consequently, $\alpha^\star$ is related to the bias of $\bm{\hb}$ and $\sigma^\star$ is related to the variance of the debiased estimate $\bm{\hb} - \alpha^\star \bm{\beta}^\star$. Similarly, $\gamma^\star$ is related to the correlation of $\bm{\hb}$ with $\bm{\xi}$.

\subsubsection{Privacy of Objective Perturbation with Small \texorpdfstring{$\lambda$}{Lambda}}
\label{sec:main-body-privacy}

Theorems \ref{thm:main-objective-huber-informal} and \ref{thm:main-objective-logistic-informal} precisely determine the utility of objective perturbation as a function of $\lambda$ and $\nu$, but these informal statements leave the algorithm's privacy loss parameter $\pr(\lambda, \nu)$ unspecified. The following result allows us to quantify the privacy-utility tradeoff of the algorithm:

\begin{theorem}[Informal Version of \cref{thm:rho-zcdp-bound}]
\label{thm:informal-privacy-bound}
    If $\ell$ is an $L$-Lipschitz, $s$-smooth, generalized linear model (GLM) loss, then objective perturbation satisfies $\pr$-zCDP for
    \[
        \pr = \log{\left(1 + \frac{s}{\lambda}\right)} + \frac{L^2}{2\nu^2} + \sqrt{\frac{2}{\pi}} \cdot \frac{L}{\nu}.
    \]
\end{theorem}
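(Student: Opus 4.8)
The plan is to carry out the classical change-of-variables analysis of objective perturbation \cite{chaudhuri2011differentially, kifer2012private, redberg2023improving}, but to track every quantity exactly so the bound stays finite as $\lambda\downarrow0$ and $\nu\downarrow0$, and then to feed the result into the standard moment-generating-function criterion for zCDP. Write $L_D(\bm\beta):=\sum_i\ell(\bm\beta;(\bm x_i,y_i))$. Since $\lambda>0$, the perturbed objective $L_D(\bm\beta)+\tfrac\lambda2\norm{\bm\beta}^2+\nu\langle\bm\xi,\bm\beta\rangle$ is $\lambda$-strongly convex, so its minimizer $\widehat{\bm\beta}$ is a bi-Lipschitz function of $\bm\xi$ with $\bm\xi=-\tfrac1\nu(\nabla L_D(\widehat{\bm\beta})+\lambda\widehat{\bm\beta})$. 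Pushing the density of $\bm\xi$ through this map by the area formula --- legitimate even for merely $C^{1,1}$ losses such as Huber, since $\bm\beta\mapsto\nabla L_D(\bm\beta)+\lambda\bm\beta$ is a bi-Lipschitz homeomorphism with a.e.\ Jacobian determinant $\nu^{-d}\det(\nabla^2L_D(\bm\beta)+\lambda\bm I_d)$ --- gives the output density
\[
    p_D(\bm\beta)=(2\pi\nu^2)^{-d/2}\exp\!\Bigl(-\tfrac1{2\nu^2}\norm{\nabla L_D(\bm\beta)+\lambda\bm\beta}^2\Bigr)\det\!\bigl(\nabla^2L_D(\bm\beta)+\lambda\bm I_d\bigr),
\]
whose normalizing constant does not depend on $D$.

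Now fix neighboring data sets $D=D'\cup\{(\bm x_n,y_n)\}$ and use the GLM form $\ell(\bm\beta;(\bm x,y))=f(\langle\bm x,\bm\beta\rangle;y)$: the gradients differ by $\bm\Delta(\bm\beta):=f'(\langle\bm x_n,\bm\beta\rangle;y_n)\bm x_n$, which is parallel to $\bm x_n$ with $\norm{\bm\Delta(\bm\beta)}\le L$, and the Hessians differ by the rank-one matrix $f''(\langle\bm x_n,\bm\beta\rangle;y_n)\bm x_n\bm x_n^\top$ of operator norm $\le s$. Writing $\bm g_{D'}(\bm\beta):=\nabla L_{D'}(\bm\beta)+\lambda\bm\beta$ and $\bm M(\bm\beta):=\nabla^2L_{D'}(\bm\beta)+\lambda\bm I_d\succeq\lambda\bm I_d$ and invoking the matrix determinant lemma, the privacy loss $\mathcal L(\bm\beta):=\log(p_D(\bm\beta)/p_{D'}(\bm\beta))$ is
\[
    \mathcal L(\bm\beta)=\log\!\bigl(1+f''(\langle\bm x_n,\bm\beta\rangle;y_n)\,\bm x_n^\top\bm M(\bm\beta)^{-1}\bm x_n\bigr)-\tfrac1{\nu^2}\langle\bm g_{D'}(\bm\beta),\bm\Delta(\bm\beta)\rangle-\tfrac1{2\nu^2}\norm{\bm\Delta(\bm\beta)}^2.
\]
By convexity of $f$ and $\norm{\bm M(\bm\beta)^{-1}}_{\mathrm{op}}\le1/\lambda$, the first (determinant) term lies in $[0,\log(1+s/\lambda)]$, which accounts for the $\log(1+s/\lambda)$ summand of $\pr$; for the reverse divergence $D_\alpha(p_{D'}\|p_D)$ this term is nonpositive and so only helps.

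The key observation is that under $\widehat{\bm\beta}\sim p_D$ the vector $\nabla L_D(\widehat{\bm\beta})+\lambda\widehat{\bm\beta}$ equals $-\nu$ times the injected noise, hence is exactly $\mathcal N(\bm0,\nu^2\bm I_d)$. Substituting $\bm g_{D'}=(\nabla L_D+\lambda\bm\beta)-\bm\Delta$ and using $\bm\Delta\parallel\bm x_n$, $\norm{\bm\Delta}\le L$ gives the crude pointwise bound $\mathcal L(\widehat{\bm\beta})\le\log(1+s/\lambda)+\tfrac{L^2}{2\nu^2}+\tfrac{L}{\nu^2}\abs{W}$, where $W:=\langle\nabla L_D(\widehat{\bm\beta})+\lambda\widehat{\bm\beta},\bm x_n/\norm{\bm x_n}\rangle\sim\mathcal N(0,\nu^2)$. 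Combining this with the elementary inequality $\E[e^{s\abs Z}]\le e^{s\sqrt{2/\pi}+s^2/2}$ for $Z\sim\mathcal N(0,1)$ and $s\ge0$ (which follows by convexity, since the exponentially tilted law of $\abs Z$ is a truncated Gaussian and thus has variance at most $1$), one gets, for all $t\ge0$,
\[
    \E_{\widehat{\bm\beta}\sim p_D}\!\bigl[e^{t\mathcal L(\widehat{\bm\beta})}\bigr]\le e^{t\pr}\exp\!\Bigl(\tfrac{t^2L^2}{2\nu^2}\Bigr)\le e^{t\pr+t^2\pr},
\]
the last step using $L^2/(2\nu^2)\le\pr$. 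The Bun--Steinke characterization \cite{bun2016zcdp} then certifies $\pr$-zCDP; the divergence $D_\alpha(p_{D'}\|p_D)$ is handled identically and only drops the $\log(1+s/\lambda)$ term.

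The main point requiring care --- and essentially the only difference from \cite{redberg2023improving} --- is ensuring that nothing in the argument implicitly assumes $\lambda$ or $\nu$ is bounded away from zero: the change of variables needs only $\lambda>0$, the determinant bound only $\lambda>0$, and the Gaussian moment bound only $\nu>0$, so $\pr$ is finite for every $\lambda,\nu\in(0,\infty)$. A secondary technical point is that Huber loss is only $C^{1,1}$, not $C^2$, so the density formula must be justified through the area formula with the almost-everywhere Hessian rather than a literal smooth change of variables. I do not expect the moment computation itself to be an obstacle: the crude pointwise bound on $\mathcal L$ already fits inside a zCDP parameter of size $\pr$ precisely because $\pr$ is large enough to serve simultaneously as a bound on the mean of the privacy loss and (more than) its sub-Gaussian variance proxy, so no finer structural control of the GLM --- such as monotonicity of $\bm\beta\mapsto f'(\langle\bm x_n,\bm\beta\rangle;y_n)$ along the noise direction $\bm x_n$ --- is needed for this estimate. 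If there is a genuine difficulty it is conceptual rather than computational: recognizing that this direct route works at all, where the classical $(\varepsilon,\delta)$-style analyses used a more delicate argument that degraded as $\lambda,\nu\to0$.
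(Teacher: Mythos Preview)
Your proposal is correct and takes essentially the same approach as the paper: a change-of-variables density computation, the matrix determinant lemma to bound the Jacobian ratio by $\log(1+s/\lambda)$ (this is precisely the paper's refinement of \cite{redberg2023improving}), and a folded-Gaussian moment bound to control the R\'enyi divergences. The only cosmetic difference is in how the folded-Gaussian MGF is bounded---you use the sub-Gaussian inequality $\log\E[e^{t|Z|}]\le t\sqrt{2/\pi}+t^2/2$ directly, while the paper invokes the exact formula $\log\E[e^{t|Z|}]=t^2/2+\log(2\Phi(t))$ and then argues that $\log(2\Phi(t))/t$ decreases from $\sqrt{2/\pi}$ to $0$.
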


In \cref{sec:objective-perturbation-privacy}, we give a full statement (\cref{thm:rho-zcdp-bound}) and proof of this theorem. There, we also prove even tighter $(\pe, \pd)$-DP guarantees. Importantly, our DP guarantees hold for any fixed $\lambda, \nu > 0$ and do not degrade as $n,d \to \infty$. Our proofs refine closely related arguments of \cite{redberg2023improving}, which establish only privacy in the case where $\lambda > s$ (e.g. $s = 1$ for robust linear regression, or $s = 1/4$ for logistic regression). Note that the privacy loss $\pr$ shrinks to $0$ as $\lambda, \nu \to \infty$.

\subsubsection{Utility of Output Perturbation}
\label{sec:main-body-output-perturbation}

\begin{figure}
    \centering
    \includegraphics[width=0.495\linewidth]{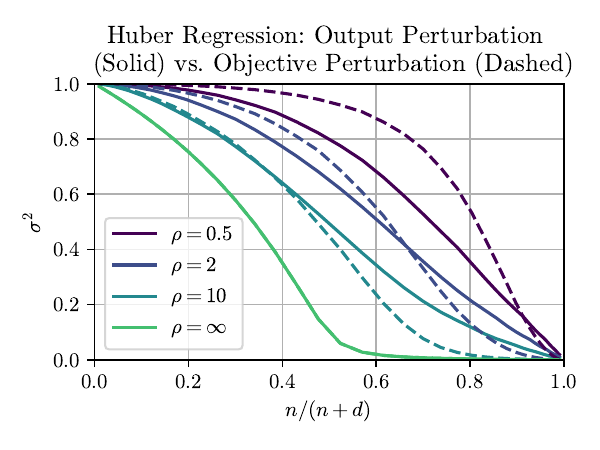}
    \includegraphics[width=0.495\linewidth]{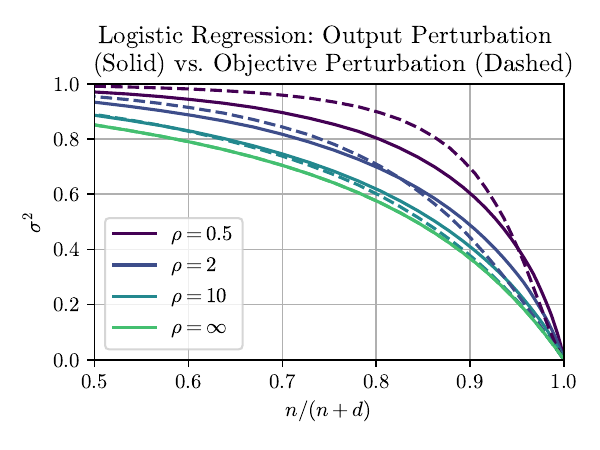}
    \caption{Left: Comparison of error estimates for output perturbation (Corollary~\ref{thm:main-output-huber-informal}) and objective perturbation (\cref{thm:main-objective-huber-informal}) on Huber regression with $L = 1$, $\bm{\eps}^\star \sim \mathcal{N}(\bm{0}, (1/10)^2\bm{I}_n)$. Right: Comparison of error estimates for output perturbation (Corollary~\ref{thm:main-output-logistic-informal}) and objective perturbation (\cref{thm:main-objective-logistic-informal}) for logistic regression. In both plots, $\kappa = 1$, $\bm{\beta}^\star \sim \mathcal{N}(\bm{0}, \kappa^2\bm{I}_d)$.}
    \label{fig:algorithm-comparison}
\end{figure}

Our results on objective perturbation can easily be modified to characterize the utility of \emph{output perturbation}, an even more foundational differentially private algorithm due to \cite{dwork2006our}, which samples $\bm{\xi} \sim \mathcal{N}(\bm{0}, \bm{I}_d)$ and outputs\[\widehat{\bm{\beta}} = {\left(\argmin_{\bm{\beta} \in \R^d}\, \sum_{i=1}^n \ell(\bm{\beta}; (\bm{x}_i, y_i)) + \frac{\lambda}{2}\norm{\bm{\beta}}^2\right)} + \nu\bm{\xi}.\] Here, the perturbation comes in the form of an additive noise term, rather than a random linear term in the objective. Extracting the $\nu^2$ term from the systems of equations of \cref{thm:main-objective-huber-informal} and \cref{thm:main-objective-logistic-informal} yields the following two corollaries, respectively.

Figure~\ref{fig:algorithm-comparison} uses these corollaries, along with the theorems of Sections~\ref{sec:main-body-objective-perturbation-linear} and \ref{sec:main-body-objective-perturbation-logistic}, to compare the privacy-utility tradeoffs of output perturbation and objective perturbation. Interestingly, for many privacy levels $\pr$, neither algorithm's error curve lies strictly below the other, suggesting that their relative performance \emph{depends} on the dimensionality ratio $\delta$. In other words, while prior work suggests that objective perturbation is generally the better algorithm of the two, Figure~\ref{fig:algorithm-comparison} tells a more complicated story, albeit one that is highly dependent on the chosen privacy bounds.

We give formal statements of these corollaries in \cref{sec:output-perturbation}, where we also verify their predictions against simulated data (Figure~\ref{fig:output-estimation}).

\begin{corollary}[Informal Version of Corollary~\ref{thm:main-huber-output-perturbation}]
\label{thm:main-output-huber-informal}
    Let $(\sigma^\star, \tau^\star)$ denote the solution to the system of equations from \cref{thm:main-objective-huber-informal} under the substitution $\nu = 0$. Then, under the same conditions as \cref{thm:main-objective-huber-informal}, the estimation error of the perturbed output $\bm{\hb}$ satisfies, w.h.p., \[\frac{1}{d}\norm{\widehat{\bm{\beta}} - \bm{\beta}^\star}^2 =(\sigma^\star)^2 + \nu^2 \pm n^{-\Omega(1)}.\]
\end{corollary}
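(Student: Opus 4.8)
\textbf{Proof proposal for Corollary~\ref{thm:main-output-huber-informal}.}
The key observation is that output perturbation is nothing more than the non-private, ridge-regularized Huber regressor with an independent Gaussian vector added on top, so its error decomposes into a term already controlled by the existing (non-private) theory, a pure-noise term, and a negligible cross term. Write $\widehat{\bm\beta} = \widehat{\bm\beta}_0 + \nu\bm\xi$, where
\[
\widehat{\bm\beta}_0 = \argmin_{\bm\beta\in\R^d}\, \sum_{i=1}^n H_L(y_i - \langle\bm x_i,\bm\beta\rangle) + \tfrac{\lambda}{2}\norm{\bm\beta}^2
\]
is exactly the output of the objective perturbation algorithm with $\nu = 0$, and $\bm\xi\sim\mathcal N(\bm 0,\bm I_d)$ is drawn independently of $(\bm X,\bm y)$. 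Expanding the square,
\[
\frac{1}{d}\norm{\widehat{\bm\beta}-\bm\beta^\star}^2
= \frac{1}{d}\norm{\widehat{\bm\beta}_0-\bm\beta^\star}^2
+ \frac{2\nu}{d}\langle\widehat{\bm\beta}_0-\bm\beta^\star,\bm\xi\rangle
+ \frac{\nu^2}{d}\norm{\bm\xi}^2,
\]
and I would control the three summands separately.

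For the first summand, I invoke the $\nu = 0$ case of \cref{thm:main-huber-objective-perturbation} (equivalently, the non-private CGMT/universality analyses of \cite{thrampoulidis2018mestimators, han2023universality}): its system of scalar equations at $\nu = 0$ is, by definition, the system defining $(\sigma^\star,\tau^\star)$ in the present corollary, so $\tfrac1d\norm{\widehat{\bm\beta}_0-\bm\beta^\star}^2 = (\sigma^\star)^2 \pm n^{-\Omega(1)}$ with high probability. For the third summand, $\tfrac1d\norm{\bm\xi}^2$ is an average of $d$ independent $\chi^2_1$ variables, so by standard concentration $\tfrac1d\norm{\bm\xi}^2 = 1 \pm n^{-\Omega(1)}$ with high probability, giving $\tfrac{\nu^2}{d}\norm{\bm\xi}^2 = \nu^2 \pm n^{-\Omega(1)}$. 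For the cross term, conditioning on $(\bm X,\bm y)$ leaves $\langle\widehat{\bm\beta}_0-\bm\beta^\star,\bm\xi\rangle$ a centered Gaussian with variance $\norm{\widehat{\bm\beta}_0-\bm\beta^\star}^2$; on the high-probability event from the first summand (or, more crudely, using the a priori bound $\norm{\widehat{\bm\beta}_0} = O(\sqrt d)$, which follows with high probability by comparing the objective at $\widehat{\bm\beta}_0$ and at $\bm 0$ together with $\norm{\bm y} = O(\sqrt n)$), this variance is $O(d)$, so a Gaussian tail bound gives $\tfrac{2\nu}{d}\langle\widehat{\bm\beta}_0-\bm\beta^\star,\bm\xi\rangle = n^{-\Omega(1)}$ with high probability. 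A union bound over these three events yields $\tfrac1d\norm{\widehat{\bm\beta}-\bm\beta^\star}^2 = (\sigma^\star)^2 + \nu^2 \pm n^{-\Omega(1)}$, as claimed.

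The only real subtlety is the first summand: since \cref{thm:main-objective-huber-informal} is stated for fixed $\nu\in(0,\infty)$, I should either verify that the full version \cref{thm:main-huber-objective-perturbation} is established for all $\nu \ge 0$ or appeal directly to the prior non-private analysis, and I should check that the hypotheses on $(\bm X,\bm\beta^\star,\bm\eps^\star)$ there match those assumed in the present corollary. Everything else --- the $\chi^2$ concentration, the conditional Gaussianity and tail bound for the cross term, the a priori norm bound on $\widehat{\bm\beta}_0$, and the union bound --- is routine and, unlike the proof of \cref{thm:main-objective-huber-informal}, requires no new machinery: the perturbation vector $\bm\xi$ never enters the optimization problem, so it is genuinely independent of $\widehat{\bm\beta}_0$. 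The privacy guarantee is inherited from the standard analysis of the Gaussian mechanism applied to a function whose $\ell^2$-sensitivity is $O(L/\lambda)$ by Lipschitzness and strong convexity, and so it holds with a constant $\pr = \pr(L,\lambda,\nu)$ uniformly in $n$.
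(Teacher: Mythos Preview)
Your proposal is correct and follows essentially the same approach as the paper: both observe that output perturbation is objective perturbation with $\nu=0$ plus an independent additive Gaussian vector, invoke the $\nu=0$ case of \cref{thm:main-huber-objective-perturbation} for the non-private part, and then account for the added noise. The paper packages this slightly more abstractly via the pseudo-Lipschitz convergence $(\bm{\beta}^\star,\bm{\xi},\widehat{\bm\beta}_0-\bm{\beta}^\star)\rightsquigarrow(\beta_0^\star,\xi_0,u_0)$ and the test function $(u+\nu\xi)^2$, whose three summands are exactly your three terms, so the arguments coincide.
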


\begin{corollary}[Informal Version of Corollary~\ref{thm:logistic-output-perturbation}]
\label{thm:main-output-logistic-informal}
    Let $(\alpha^\star, \sigma^\star, \gamma^\star)$ denote the solution to the system of equations from \cref{thm:main-objective-logistic-informal} under the substitution $\nu = 0$. Then, under the same conditions as \cref{thm:main-objective-logistic-informal}, the estimation error of the perturbed output $\widehat{\bm{\beta}}$ satisfies, w.h.p., \[\frac{1}{d}\norm{\widehat{\bm{\beta}} - \bm{\beta}^\star}^2 = (1 - \alpha^\star)^2\kappa^2 + (\sigma^\star)^2 + \nu^2 \pm e^{-(\log n)^{\Omega(1)}}.\]
\end{corollary}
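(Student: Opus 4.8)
The plan is to reduce the claim to \cref{thm:main-objective-logistic-informal} by recognizing that the output perturbation estimator is nothing but the $\nu = 0$ objective perturbation estimator with an independent Gaussian vector added afterwards. Write $\bm{\tb} = \argmin_{\bm{\beta}\in\R^d}\sum_{i=1}^n \ell(\bm{\beta};(\bm{x}_i,y_i)) + \tfrac{\lambda}{2}\norm{\bm{\beta}}^2$ for the (non-private) ridge-regularized logistic M-estimator, which is well-defined and unique for every $\lambda > 0$ by strong convexity, so that the algorithm returns $\bm{\hb} = \bm{\tb} + \nu\bm{\xi}$ with $\bm{\xi}\sim\mathcal{N}(\bm{0},\bm{I}_d)$ drawn independently of $(\bm{X},\bm{y},\bm{\beta}^\star)$. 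Expanding the square,
\[
    \frac{1}{d}\norm{\bm{\hb}-\bm{\beta}^\star}^2 = \frac{1}{d}\norm{\bm{\tb}-\bm{\beta}^\star}^2 + \frac{2\nu}{d}\langle \bm{\tb}-\bm{\beta}^\star,\bm{\xi}\rangle + \frac{\nu^2}{d}\norm{\bm{\xi}}^2,
\]
and I would bound the three summands in turn.

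For the first summand, since setting $\nu = 0$ in objective perturbation returns exactly $\bm{\tb}$, I would invoke \cref{thm:main-objective-logistic-informal} (in its rigorous form, \cref{thm:logistic-objective-perturbation}) with $\nu = 0$ — equivalently, the non-private characterizations of \cite{salehi2019impact, han2024entrywise} that it recovers in that limit — to obtain, w.h.p.,
\[
    \frac{1}{d}\norm{\bm{\tb}-\bm{\beta}^\star}^2 = (1-\alpha^\star)^2\kappa^2 + (\sigma^\star)^2 \pm e^{-(\log n)^{\Omega(1)}},
\]
where $(\alpha^\star,\sigma^\star,\gamma^\star)$ solves the $\nu=0$ system; in particular $\norm{\bm{\tb}-\bm{\beta}^\star} = O(\sqrt d)$ w.h.p. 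For the third summand, $\norm{\bm{\xi}}^2\sim\chi^2_d$, so a standard chi-squared tail bound gives $\tfrac{\nu^2}{d}\norm{\bm{\xi}}^2 = \nu^2 \pm n^{-\Omega(1)}$ w.h.p. For the cross term, I would condition on the data: then $\bm{\tb}-\bm{\beta}^\star$ is deterministic and $\langle \bm{\tb}-\bm{\beta}^\star,\bm{\xi}\rangle\sim\mathcal{N}\big(0,\norm{\bm{\tb}-\bm{\beta}^\star}^2\big)$, so on the $O(\sqrt d)$-norm event a Gaussian tail bound yields $\tfrac{2\nu}{d}\abs{\langle\bm{\tb}-\bm{\beta}^\star,\bm{\xi}\rangle} = O(\sqrt{\log n / d}) = n^{-\Omega(1)}$ w.h.p. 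A union bound over the three events, plus the observation that the $n^{-\Omega(1)}$ fluctuations are dominated by the inherited $e^{-(\log n)^{\Omega(1)}}$ error, gives the stated estimate; the same recipe — substituting the relevant statements of \cref{thm:logistic-objective-perturbation} for the estimation-error bound above — produces the $\ell^p$-error and residual-norm versions in the full \cref{thm:logistic-output-perturbation}.

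There is no real obstacle: the quantitative content is entirely inherited from \cref{thm:main-objective-logistic-informal}, and the new ingredients are textbook Gaussian concentration. The two points that merit care are (i) the $\nu = 0$ specialization — the formal hypotheses of the objective perturbation theorem are stated for $\nu > 0$, so one must either check that its proof degrades gracefully as $\nu \to 0$ or simply cite the non-private results it generalizes — and (ii) the error-rate bookkeeping, i.e.\ confirming that the polynomially small deviations introduced by adding $\nu\bm{\xi}$ are absorbed by the (larger, quasi-polynomially small) error term carried over from the unperturbed analysis, so that the conclusion can still be phrased as $\pm\, e^{-(\log n)^{\Omega(1)}}$.
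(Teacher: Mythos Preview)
Your proposal is correct and matches the paper's approach: the paper also observes that output perturbation is exactly objective perturbation with $\nu = 0$ followed by adding an independent $\nu\bm{\xi}$, then invokes the $\nu = 0$ case of \cref{thm:logistic-objective-perturbation} (equivalently, the non-private results it recovers) and absorbs the additive Gaussian into the limiting description. The only cosmetic difference is that the paper packages the conclusion via the $\dashrightarrow$ pseudo-Lipschitz convergence $(\bm{\beta}^\star,\bm{\xi},\bm{\hb}) \dashrightarrow (\beta^\star_0,\xi_0,\alpha^\star\beta^\star_0+\sigma^\star Z+\nu\xi_0)$ and then reads off the $\ell^2$ error, whereas you expand the square and handle the three terms by explicit Gaussian/chi-squared concentration; both arrive at the same place, and your two caveats about the $\nu=0$ specialization and the error-rate bookkeeping are exactly the right things to flag.
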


\subsubsection{Utility of Noisy Stochastic Gradient Descent}
\label{sec:main-body-dp-sgd}

The \emph{noisy stochastic gradient descent} (a.k.a. DP-SGD) algorithm, which adds independent Gaussian noise to each iteration of SGD, is the cornerstone of modern, differentially private machine learning \cite{abadi2016deep}. In \cref{sec:dp-sgd}, we present results for DP-SGD that parallel our results for objective perturbation (Theorems \ref{thm:main-objective-huber-informal} and \ref{thm:main-objective-logistic-informal}) and for output perturbation (Corollaries \ref{thm:main-output-huber-informal} and \ref{thm:main-output-logistic-informal}). These results for DP-SGD follow directly from a black-box reduction to powerful, recent results in the non-private literature \cite{gerbelot2024meanfield, han2024entrywise}, and we validate them against simulated data. As such, we view this section more as a valuable point of reference than as a major technical contribution of our own. The results also come with some limitations. For example, certain technical requirements inherited from \cite{gerbelot2024meanfield} prevent us from studying the standard formulations of robust linear regression and logistic regression, let alone the broad range of convex and non-convex optimization problems to which DP-SGD is typically applied. Also, the system of equations now has size $O(T^2)$ for $T$ steps of DP-SGD, where $T$ must be held constant as $n \to \infty$ and $d/n \to \delta$. Overcoming these obstacles is a promising direction for future work.

\subsection{Technical Overview}
\label{sec:techniques}

In this section, we sketch the proofs of Theorems \ref{thm:main-objective-huber-informal} and \ref{thm:main-objective-logistic-informal}, our main results on objective perturbation. We carry out these proofs in Sections \ref{sec:objective-perturbation-linear} and \ref{sec:objective-perturbation-logistic}, respectively. As already discussed, our results for output perturbation are direct corollaries of these theorems, and our results for noisy stochastic gradient descent are similarly straightforward given prior results in the literature. Ultimately, for each of the three algorithms under consideration, we need two things:
\begin{itemize}
    \item A \emph{privacy proof}, which must hold for worst-case data points $(\bm{x}_i, y_i)$. These may be arbitrary points in $B_R(\bm{0}) \times \R$ in the case of robust linear regression, or arbitrary points in $B_R(\bm{0}) \times \{0, 1\}$ in the case of logistic regression, where $B_R(\bm{0})$ denotes the ball of radius $R$ centered at the origin in $\R^d$, i.e. \(B_R(\bm{0}) = \{\bm{x} \in \R^d : \norm{\bm{x}} \le R\} \subseteq \R^d.\)
    \item A \emph{utility proof}, which only needs to hold for data points $(\bm{x}_i, y_i)$ sampled independently from an appropriate distribution over $B_R(\bm{0}) \times \R$ in the case of robust linear regression, or over $B_R(\bm{0}) \times \{0, 1\}$ in the case of logistic regression.
\end{itemize}

Our privacy proofs build on ones that exist in the differential privacy literature, and we will hence inherit various boundedness requirements from those works \cite{bun2016zcdp, balle2018analytic, redberg2023improving}. Specifically, to apply our privacy analysis to robust linear regression and logistic regression, for which the gradient of the loss function scales in magnitude with $\norm{\bm{x}}$, we require the constraint that $\bm{x}_1, \ldots, \bm{x}_n \in B_R(\bm{0})$. This constraint can either be assumed as a priori knowledge, or enforced via a projection of unconstrained data in $\R^d$ onto $B_R(\bm{0})$. Bounding the norm of each $\bm{x}_i$ leads to a bound on the objective function's sensitivity, which ultimately yields our bound on the algorithm's privacy loss parameters $(\pe, \pd)$ or $\pr$.

Our utility proofs for objective perturbation are substantially more complex and comprise our main technical contribution. In these proofs, we initially focus on Gaussian features: \(\bm{x}_1, \ldots, \bm{x}_n \iid \mathcal{N}{\left(\bm{0}, \frac{1}{d}\bm{I}_d\right)}.\) The Gaussian distribution is not supported on $B_R(\bm{0})$, so we clearly must relax this assumption eventually. Nevertheless, Gaussianity is a fruitful starting point since it enables the use of a Gaussian comparison inequality called the \emph{Convex Gaussian Minimax Theorem} (CGMT), as discussed earlier in the introduction. We provide the formal statement of CGMT, along with a more detailed explanation of how it is used, in \cref{sec:preliminaries}.

Since our aim is to apply CGMT, the first step of our proof is to reformulate the objective perturbation algorithm as min-max optimization of the form \[\bm{\hb} = \argmin_{\bm{u} \in \cS_{\bm{u}}}\max_{\bm{v} \in \cS_{\bm{v}}}\; \underbrace{\langle \bm{X} \bm{u}, \bm{v}\rangle + \psi(\bm{u}, \bm{v})}_{Q_{\bm{u}, \bm{v}}}.\] Here,  $\cS_{\bm{u}} \subseteq \R^d$ and $\cS_{\bm{v}} \subseteq \R^n$ are constraint sets, $\bm{X} = [\bm{x}_1\,\cdots\,\bm{x}_n]^\top \in \R^{n \times d}$ denotes the feature matrix, which we also call the \emph{design matrix}, and $\psi : \cS_{\bm{u}} \times \cS_{\bm{v}} \to \R$ is a convex-concave function that may depend on the ground-truth $\bm{\beta}^\star$, the regression errors $\bm{\eps}^\star$, and the perturbation vector $\bm{\xi}$, but may \emph{not} depend on $\bm{X}$. We call $\psi$ the \emph{mean function} because \(\E[Q_{\bm{u}, \bm{v}}] = \psi(\bm{u}, \bm{v})\), where the expectation is taken only over the randomness of $\bm{X}$.

At this point, we have expressed $\widehat{\bm{\beta}}$ in terms of the extrema of $Q_{\bm{u}, \bm{v}}$ over $\bm{u} \in \cS_{\bm{u}}$ and $\bm{v} \in \cS_{\bm{v}}$. For the regression problems that we consider, the term $\psi(\bm{u}, \bm{v})$ will be very simple, so the main obstacle to optimizing $Q_{\bm{u}, \bm{v}}$ by hand is the presence of the random matrix in the bilinear term $\langle \bm{X} \bm{u}, \bm{v} \rangle$. Essentially, CGMT enables us to replace the Gaussian random matrix $\bm{X} \in \R^{n \times d}$ with two standard Gaussian random vectors $\bm{g} \in \R^d$ and $\bm{h} \in \R^n$, after which solving the min-max optimization by hand becomes a straightforward matter of calculus. Taking the derivatives of this \emph{auxiliary optimization} with respect to $\bm{u} \in \R^d$ and $\bm{v} \in \R^n$ and setting them equal to zero, while keeping in mind the constraint sets $\cS_{\bm{u}}$ and $\cS_{\bm{v}}$, yields a system of $n + d$ equations in $n + d$ variables characterizing the behavior of both this auxiliary optimizer and its corresponding dual variable. Ultimately, the guarantee of CGMT is that over the randomness of $\bm{X}$, the solution to this auxiliary problem is indistinguishable from the solution $\bm{\hb}$ to the original problem with respect to a wide range of \emph{pseudo-Lipschitz} test functions. To complete the proof in the case of Gaussian features, we show that this system of first-order optimality equations reduces to a fixed-size system of \emph{scalar} equations (two equations in two variables in the case of robust linear regression, or three equations in three variables in the case of logistic regression) in the limit as $n \to \infty$ with $d/n \to \delta$.

At this point, all that remains is to relax the Gaussianity assumption on $\bm{X}$, which we emphasize is crucial for the analysis of any differentially private algorithm that assumes its input is bounded in $B_R(\bm{0})$ or incorporates data clipping as a subroutine. To this end, we draw on recent, powerful theoretical results that formalize the notion of \emph{universality} from statistical physics \cite{han2023universality, han2024entrywise}. Informally speaking, universality is the idea that certain global properties of a system, such as the estimation error $\frac{1}{d}\norm{\bm{\hb} - \bm{\beta}^\star}^2$, should not depend too heavily on the local details of the system, such as the distribution of the entries of $\bm{X}$.

The first universality law we leverage is Corollary 2.6 of \cite{han2023universality}, which we call \emph{CGMT universality}. This result asserts that even when $\bm{X}$ is not Gaussian, the distribution of the extrema of $Q_{\bm{u}, \bm{v}}$ over $\bm{u} \in \cS_{\bm{u}}$ and $\bm{v} \in \cS_{\bm{v}}$ is essentially unchanged upon replacing $\bm{X}$ with a matrix $\bm{X}'$ with a few matching moments in the definition of $Q_{\bm{u}, \bm{v}}$. Taking $\bm{X}' = \bm{G}$ to be Gaussian allows us to apply CGMT even to non-Gaussian $\bm{X}$ drawn from any isotropic distribution over $B_R(\bm{0})$ with independent (not necessarily identically distributed!) entries. We believe our results can be extended to nonisotropic distributions, as well, which is a point we revisit in \cref{sec:objective-perturbation-linear}. Applying CGMT universality requires several technical conditions on the original optimization problem to be verified. For our purposes, this roughly amounts to showing that the output $\bm{\hb}$ of the objective perturbation algorithm satisfies $\norm{\bm{\hb}}_\infty \le n^{o(1)}$. We prove this in \cref{thm:objective-uniform-bound} using a leave-one-feature-out perturbation argument, a result that may be of independent interest.

Although CGMT universality as it appears in \cite{han2023universality} is exactly what we need to analyze the objective perturbation with robust linear loss, frustratingly, it cannot be directly applied to logistic regression. The reason is that, in the case of the logistic loss, even the initial step of reformulating the perturbed objective as $Q_{\bm{u}, \bm{v}} = \langle \bm{X}\bm{u}, \bm{v}\rangle + \psi(\bm{u}, \bm{v})$ requires the assumption of Gaussianity to ensure the independence of the first and second terms, a requirement for CGMT. Consequently, we need a universality law that addresses the perturbed logistic objective function in its original form, before transforming it into $Q_{\bm{u}, \bm{v}}$. For this, we use the more recent universality law of \cite{han2024entrywise}, called \emph{generalized first-order method} (GFOM) universality. To apply GFOM universality, we first construct a sequence of ``$\sigma$-smoothed'' iterates $\bm{\beta}^{(t)}_\sigma$ for $t \in \N$ and $\sigma > 0$ that start at $\bm{\beta}^{(0)} = \bm{0}$ but rapidly converge to $\bm{\hb}$ as $t \to \infty$ and $\sigma \to 0$. These iterates are \emph{not} part of the objective perturbation algorithm, but rather exist solely for the purpose of analysis, as they fit the description of a smooth ``generalized first-order method'' put forward in \cite{celentano2020gfom}. GFOM universality tells us that the behavior of these iterates (and hence their limit $\bm{\hb}$) is unchanged upon replacing $\bm{X}$ with a Gaussian matrix $\bm{G}$ with matching dimensions and a few matching moments. Our application of GFOM universality to the perturbed logistic objective mirrors an argument of \cite{han2024entrywise} in the non-private case. The key differences arise in verifying that various technical conditions still hold in the presence of the randomness introduced for privacy.

Ultimately, universality allows us to reconcile the Gaussian design assumptions of CGMT analysis with the necessarily different data distributions encountered by differentially private algorithms, including output perturbation, objective perturbation, and noisy stochastic gradient descent. In terms of our final error bounds, CGMT universality, which we use for robust linear regression, incurs only an additive $n^{-\Omega(1)}$ penalty. GFOM universality, which we use for logistic regression, incurs only an additive $e^{-(\log n)^{\Omega(1)}}$ penalty.

\subsection{Related Work}

Many prior works study algorithms for differentially private regression, but they do not to provide substantive guarantees in the proportional dimensionality regime as we do in this work \cite{dwork2009ptr, chaudhuri2011differentially, kifer2012private, mir2013landscape, bassily2014erm, wang2015privacy, sheffet2017ols, sheffet2019techniques, liu2022hdptr, varshney2022nearly, amin2023easy, brown2024ols, brown2024gradient}. Other works study the $d \gg n$ regime but require \emph{sparsity} assumptions \cite{steinke2017sparse, cai2021cost, zhou2022sparse, cai2023score, georgiev2024sparse, ma2024better, kent2024rateoptimalityphasetransition}. Since we do not assume sparsity, these works are not comparable to ours. Under sparsity assumptions, the analogous challenge would be to analyze the regime where $n$ is proportional to the $s \log(d/s)$ where $s$ is the sparsity level---the fact that this is the ``effective dimension'' can be argued via sample complexity, via $\eps$-nets, or via the statistical dimension of \cite{amelunxen2014living}.

Our proofs rely on tools from the non-private statistics literature that have not previously appeared in the privacy literature. The Convex Gaussian Minimax Theorem is based on the classical inequalities of \cite{slepian1962barrier, gordon1985processes}, but it was only recently presented in its current form and applied to modern statistical problems \cite{stojnic2013framework, thrampoulidis2015regularized, thrampoulidis2018mestimators, dhifallah2020preciseperformanceanalysislearning, deng2021double, miolane2021sparse, loureiro2021learning, wang2021slope, liang2022precise, zhang2022moderntheoryhighdimensionalcox, hu2022slope, montanari2023generalizationerrormaxmarginlinear, celentano2023lasso}. Prior to this, a leading technique for analysis in the proportional dimensionality regime was the \emph{approximate message passing} (AMP) framework of \cite{donoho2009message, bayati2011amp}. The universality laws we use are from \cite{han2023universality, han2024entrywise}, but there are many works on this topic \cite{korada2011lindeberg, karoui2013asymptotic, bayati2015polytope, panahi2017universal, omyak2017reduction, elkaroui2018geometry, abbasi2019linear, montanari2017elastic, dudeja2023amp, dudeja2024spectral}. Privacy analyses we draw on include the works of \cite{bun2016zcdp, balle2018analytic, redberg2021publishable, redberg2023improving}.

\section{Preliminaries}
\label{sec:preliminaries}

Throughout this paper, when working with an infinite sequence indexed by $n \in \N$, which may be a sequence of scalars, vectors, matrices, functions, or subsets of $\R^n$, we will often refer to the sequence as though it were a single object of that type. For example, when referring to the problem dimension $d$, it should be understood that $d$ is not merely a scalar, but rather a scalar sequence $d(n)$ (equivalently, $d_n$) indexed by $n$.

Consequently, by the \emph{proportional dimensionality regime}, we mean the setting in which
\[
    \lim_{n \to \infty}\frac{d(n)}{n} = \delta,
\]
where we denote the given dimensionality ratio by $\delta \in (0, \infty)$. In the proportional regime, any scalar sequence $f(n)$ satisfying $f(n) = \Theta(n^c)$ for a fixed constant $c > 0$ also satisfies $f(n) = \Theta(d^c)$, and vice versa. For consistency, when presented with such a notational choice, we shall always choose to express asymptotic bounds in terms of $n$, not $d$.

\subsection{Differential Privacy}

The algorithms we study in this paper all satisfy differential privacy (DP), the accepted definition of rigorous privacy protection for data analysis. We focus on the notion of \emph{approximate}, or $(\eps, \delta)$-differential privacy with respect to the \emph{add/remove-one} notion of adjacency:

\begin{definition}[\cite{dwork2006dp}]
\label{def:dp}
    Given a universe $\mathcal{X}$, we say data sets $\bm{x}, \bm{x}' \in \N^{\mathcal{X}}$ are \emph{adjacent} if $\norm{\bm{x} - \bm{x}'}_1 = 1$. We say a randomized algorithm $\mathcal{A} : \N^{\mathcal{X}} \to \mathcal{Y}$ satisfies \emph{$(\pe, \pd)$-differential privacy} if for all adjacent $\bm{x}, \bm{x}' \in \mathcal{X}^n$ and subsets $Y \subseteq \mathcal{Y}$,
    \[
        \Pr[\mathcal{A}(\bm{x}) \in Y] \le e^{\pe}\Pr[\mathcal{A}(\bm{x}') \in Y] + \pd.
    \]
\end{definition}

We shall also require two other, related notions of differential privacy, known as \emph{R\'{e}nyi} and \emph{zero-concentrated} DP, that have been used in major real-world deployments of DP. They are both defined in terms of the \emph{R\'{e}nyi divergence} of order $\pa > 1$ between two distributions $P$ and $Q$ over $\mathcal{Y}$:
\[
    D_\pa(P\|Q) = \frac{1}{\pa - 1}\log \E_{y \sim P} \log \left(\frac{P(y)}{Q(y)}\right)^{\pa - 1}.
\]
\begin{definition}[RDP, \cite{mironov2017renyi}]
\label{def:zcdp}
    Given $\pa > 1$ and $\pe \ge 0$, we say a randomized algorithm $\mathcal{A} : \N^{\mathcal{X}} \to \mathcal{Y}$ satisfies \emph{$(\pa, \pe)$-R\'{e}nyi DP} (RDP) if for all adjacent data sets $\bm{x}, \bm{x}' \in \mathcal{X}^n$,
    \[
        D_\pa(\mathcal{A}(\bm{x}) \| \mathcal{A}(\bm{x}')) \le \pe.
    \]
\end{definition}

Importantly, it is known that $(\pa, \pe)$-RDP implies $\left(\pe + \frac{\log(1/\pd)}{\pa-1}, \pd\right)$-DP for any $0 < \pd < 1$ \cite{mironov2017renyi}.

\begin{definition}[zCDP, \cite{bun2016zcdp}]
    We say a randomized algorithm $\mathcal{A} : \N^{\mathcal{X}} \to \mathcal{Y}$ satisfies \emph{$\pr$-zero-concentrated DP} (zCDP) if $\mathcal{A}$ satisfies $(\pa, \pr \cdot \pa)$-RDP for all $\pa > 1$.
\end{definition}

We use $\pe, \pd, \pa, \pr$ to refer to an algorithm's privacy parameters, rather than $\eps, \delta, \alpha, \rho$, in order to avoid confusion with other variable names. These include the regression errors $\eps$, the dimensionality ratio $\delta$, the bias variable $\alpha$, and the function $\rho$ used to define the logistic loss. For more background on differential privacy, including the motivation behind the definition, the protections it provides, interpretations of the privacy parameters, privacy composition theorems, and more, we refer the reader to \cite{dwork2014algorithmic, vadhan2017complexity}.

We collect here a handful of recent privacy analyses from the DP literature that we will repeatedly use in our paper. The first two results concern the \emph{Gaussian mechanism} $\mathcal{A}(\bm{x}) = f(\bm{x}) + \bm{z}$ where $\bm{z} \sim \mathcal{N}(\bm{0}, \nu^2 \bm{I}_d)$ \cite{dwork2006our}. To state the result, which we will eventually use to analyze the output perturbation algorithm, recall that a function $f : \mathcal{X}^n \to \R^d$ has $\ell^2$ sensitivity at most $\Delta$ if $\norm{f(\bm{x}) - f(\bm{x}')} \le \Delta$ for all adjacent $\bm{x}, \bm{x}' \in \mathcal{X}^n$.

\begin{theorem}[Lemma 2.5 of \cite{bun2016zcdp}]
\label{thm:gaussian-mechanism-zcdp}
    If $f : \N^{\mathcal{X}} \to \R^d$ has $\ell^2$ sensitivity at most $\Delta$, then the Gaussian mechanism w.r.t. $f$ satisfies $\pr$-zCDP for $\pr = \frac{\Delta^2}{2\nu^2}$.
\end{theorem}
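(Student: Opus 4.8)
The plan is to reduce the statement to a single elementary fact: the R\'enyi divergence of order $\pa$ between two multivariate Gaussians that share the covariance $\nu^2\bm{I}_d$. Fix any pair of adjacent data sets $\bm{x}, \bm{x}' \in \N^{\mathcal{X}}$, and set $\bm{\mu} = f(\bm{x})$ and $\bm{\mu}' = f(\bm{x}')$, so that $\mathcal{A}(\bm{x}) \sim \mathcal{N}(\bm{\mu}, \nu^2\bm{I}_d)$ and $\mathcal{A}(\bm{x}') \sim \mathcal{N}(\bm{\mu}', \nu^2\bm{I}_d)$. Because both laws factor as products over the $d$ coordinates, and R\'enyi divergence is additive over product measures, I would first reduce to the one-dimensional problem of computing $D_\pa\big(\mathcal{N}(\mu, \nu^2)\,\big\|\,\mathcal{N}(\mu', \nu^2)\big)$ and then sum the resulting per-coordinate contributions.

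For the one-dimensional computation, I would write the quantity $\int P(y)^{\pa} Q(y)^{1-\pa}\,dy$ (with $P, Q$ the two Gaussian densities), which equals $\exp\big((\pa-1) D_\pa(P\|Q)\big)$, expand the exponent $\pa(y-\mu)^2 + (1-\pa)(y-\mu')^2$, and complete the square in $y$. What remains is a normalized Gaussian integral equal to $1$, times an exponential of a $y$-free constant; a short algebraic identity shows that constant equals $-\pa(1-\pa)(\mu-\mu')^2/(2\nu^2)$. Applying $\tfrac{1}{\pa-1}\log(\cdot)$ then gives the clean formula $D_\pa\big(\mathcal{N}(\mu,\nu^2)\,\big\|\,\mathcal{N}(\mu',\nu^2)\big) = \pa(\mu-\mu')^2/(2\nu^2)$, and summing over the $d$ coordinates yields $D_\pa\big(\mathcal{A}(\bm{x})\,\big\|\,\mathcal{A}(\bm{x}')\big) = \pa\,\norm{\bm{\mu}-\bm{\mu}'}^2/(2\nu^2)$.

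To conclude, I would note that this expression depends on the two data sets only through $\norm{\bm{\mu} - \bm{\mu}'} = \norm{f(\bm{x}) - f(\bm{x}')}$ and is increasing in that quantity, so the $\ell^2$-sensitivity hypothesis $\norm{f(\bm{x}) - f(\bm{x}')} \le \Delta$ immediately bounds it by $\pa\Delta^2/(2\nu^2)$, uniformly over all adjacent pairs and all $\pa > 1$. Taking $\pr = \Delta^2/(2\nu^2)$, this is precisely the statement that $\mathcal{A}$ is $(\pa, \pr\cdot\pa)$-RDP for every $\pa > 1$, i.e., that $\mathcal{A}$ satisfies $\pr$-zCDP.

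There is no real obstacle in this argument -- it is the classical proof of Lemma 2.5 of \cite{bun2016zcdp}, reproduced for completeness -- and the only steps demanding any care are the completing-the-square identity in the Gaussian integral and the observation that, since the divergence is a monotone function of $\norm{f(\bm{x}) - f(\bm{x}')}$, the worst case over adjacent data sets is governed entirely by the sensitivity parameter $\Delta$, with no further distributional assumptions needed.
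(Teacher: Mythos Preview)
Your argument is correct and is exactly the standard proof of this fact. Note, however, that the paper does not supply its own proof of this statement: it is quoted in the preliminaries as Lemma~2.5 of \cite{bun2016zcdp} and used as a black box, so there is nothing in the paper to compare against beyond the citation itself.
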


\begin{theorem}[Theorem 8 of \cite{balle2018analytic}]
\label{thm:hockey-stick}
    If $f : \N^{\mathcal{X}} \to \R^d$ has $\ell^2$ sensitivity at most $\Delta$, then for any $\pe \ge 0$ and $\pd \in [0, 1]$, the Gaussian mechanism w.r.t. $f$ satisfies $(\pe, \pd)$-DP if and only if
    \[
        \delta \ge \mathrm{HockeyStick}{\left(\pe, \frac{\Delta}{\nu}\right)} = \Phi{\left(\frac{\Delta}{2\nu} - \frac{\pe \nu}{\Delta}\right)} - e^\pe  \Phi{\left(-\frac{\Delta}{2\nu} - \frac{\pe \nu}{\Delta}\right)}.
    \]
\end{theorem}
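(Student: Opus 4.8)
The plan is to translate the $(\pe,\pd)$-DP condition into a bound on a single divergence between two one-dimensional Gaussians, which can be evaluated in closed form. Recall first that a mechanism $\mathcal{A}$ is $(\pe,\pd)$-DP iff $\sup_{Y}\bigl(\Pr[\mathcal{A}(\bm{x})\in Y] - e^{\pe}\Pr[\mathcal{A}(\bm{x}')\in Y]\bigr)\le\pd$ for every adjacent pair $\bm{x},\bm{x}'$. The supremum here is the hockey-stick divergence $E_{e^{\pe}}(\mathcal{A}(\bm{x})\,\|\,\mathcal{A}(\bm{x}'))$, which is attained at $Y^{\star}=\{y:p(y)>e^{\pe}q(y)\}$ (with $p,q$ the densities of $\mathcal{A}(\bm{x}),\mathcal{A}(\bm{x}')$), so $E_{e^{\pe}}(P\|Q)=\int(p-e^{\pe}q)_{+}$. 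Hence the theorem reduces to: for $P=\mathcal{N}(f(\bm{x}),\nu^{2}\bm{I}_d)$, $Q=\mathcal{N}(f(\bm{x}'),\nu^{2}\bm{I}_d)$, and $\mu:=\norm{f(\bm{x})-f(\bm{x}')}\le\Delta$, compute $E_{e^{\pe}}(P\|Q)$ and determine its worst case over $\mu\in[0,\Delta]$.

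Next I would collapse the ambient dimension. After an orthogonal change of coordinates aligning $f(\bm{x})-f(\bm{x}')$ with the first axis, $P$ and $Q$ become product measures that share the same last $d-1$ coordinates: $P=\mathcal{N}(\mu,\nu^{2})\otimes R$ and $Q=\mathcal{N}(0,\nu^{2})\otimes R$. Since $(p-e^{\pe}q)_{+}$ factors through the common $R$-density, that factor integrates out and $E_{e^{\pe}}(P\|Q)=E_{e^{\pe}}(\mathcal{N}(\mu,\nu^{2})\,\|\,\mathcal{N}(0,\nu^{2}))$. In one dimension the log-likelihood ratio is affine in $y$, so $Y^{\star}$ is a half-line $\{y>\pe\nu^{2}/\mu+\mu/2\}$, and evaluating the two Gaussian tails at this threshold gives
\[
    E_{e^{\pe}}\bigl(\mathcal{N}(\mu,\nu^{2})\,\|\,\mathcal{N}(0,\nu^{2})\bigr)=\Phi\!\Bigl(\tfrac{\mu}{2\nu}-\tfrac{\pe\nu}{\mu}\Bigr)-e^{\pe}\Phi\!\Bigl(-\tfrac{\mu}{2\nu}-\tfrac{\pe\nu}{\mu}\Bigr)=:g(\mu).
\]

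The remaining step, and the crux of the argument, is to show that $g$ is strictly increasing on $(0,\infty)$; then the worst adjacent pair is the one realizing $\mu=\Delta$, so $E_{e^{\pe}}\le g(\Delta)=\mathrm{HockeyStick}(\pe,\Delta/\nu)$. The clean route is a normal-density identity: writing $a=\tfrac{\mu}{2\nu}-\tfrac{\pe\nu}{\mu}$ and $b=\tfrac{\mu}{2\nu}+\tfrac{\pe\nu}{\mu}$, one has $b^{2}-a^{2}=2\pe$, hence $\phi(a)=e^{\pe}\phi(b)$; substituting this into $g'(\mu)=\phi(a)a'(\mu)+e^{\pe}\phi(b)b'(\mu)$ cancels the $\pe\nu/\mu^{2}$ terms and leaves $g'(\mu)=\phi(a)/\nu>0$. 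The ``if'' direction is then immediate: every adjacent pair has $\mu\le\Delta$, so its divergence is at most $g(\Delta)\le\pd$. For ``only if,'' since $f$ has $\ell^{2}$ sensitivity $\Delta$ there are adjacent datasets with $\norm{f(\bm{x})-f(\bm{x}')}$ equal to (or arbitrarily close to) $\Delta$, and $(\pe,\pd)$-DP forces $\pd\ge g(\Delta)$.

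The main obstacle is the monotonicity of $g$: increasing $\mu$ raises both $\Phi(\cdot)$ terms, so the naive derivative is a sum of terms of indeterminate sign, and it is precisely the identity $\phi(a)=e^{\pe}\phi(b)$ that engineers the cancellation. An alternative, more conceptual proof of monotonicity is a data-processing argument: for $\mu<\mu'$, the channel $y\mapsto(\mu/\mu')y+\mathcal{N}\bigl(0,\nu^{2}(1-(\mu/\mu')^{2})\bigr)$ maps $\mathcal{N}(0,\nu^{2})\mapsto\mathcal{N}(0,\nu^{2})$ and $\mathcal{N}(\mu',\nu^{2})\mapsto\mathcal{N}(\mu,\nu^{2})$, so $g(\mu)\le g(\mu')$ by the data-processing inequality for $E_{\gamma}$; unwinding this channel, however, amounts to the same Gaussian computation. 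Everything else is routine bookkeeping.
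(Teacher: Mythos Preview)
The paper does not prove this statement: it is quoted verbatim as Theorem~8 of \cite{balle2018analytic} in the preliminaries and used as a black box, so there is no proof in the paper to compare your proposal against. Your argument is the standard one (reduce to a one-dimensional hockey-stick divergence via rotational invariance, evaluate it in closed form, then show monotonicity in $\mu$ via the identity $\phi(a)=e^{\pe}\phi(b)$), and it is correct; the only minor caveat is that the ``only if'' direction as stated requires the sensitivity to be attained (or approached) by some adjacent pair, which you flag.
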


The name ``HockeyStick'' comes from the term \emph{hockey stick divergence}, defined for $\pe \ge 0$ and distributions $P$ and $Q$ on $\R$ as $H_{e^\pe}(P\|Q) = \int_{-\infty}^\infty \mathrm{max}(0, P(x) - e^{\pe}Q(x))\,dx$. As we have defined it, $\mathrm{HockeyStick}(\pe, \Delta/\nu)$ measures this divergence between $P = \mathcal{N}(\Delta, \nu^2)$ and $Q = \mathcal{N}(0, \nu^2)$, which only depends on $\Delta$ and $\nu$ through their ratio $\Delta / \nu$.

The next result we state is a recent privacy analysis of objective perturbation from \cite{redberg2023improving}. Although we will not use this result directly in our paper, this result and its proof are the starting point from which our own privacy analysis in \cref{sec:objective-perturbation-privacy} proceeds. Indeed, the statement and proof of our privacy result will be very similar to that of \cite{redberg2023improving}. The key difference is that ours will apply to any regularization strength $\lambda > 0$ and perturbation strength $\nu > 0$, but that of \cite{redberg2023improving} requires $\lambda > s$ for some strictly positive $s$ depending on the smoothness of the loss function.

The result of \cite{redberg2023improving}, which we now state, concerns loss functions of the form $\ell(\bm{\beta}; (\bm{x}, y)) = \ell_0(\langle \bm{x}, \bm{\beta}\rangle, y)$ for a function $\ell_0 : \R^2 \to \R$. Such loss functions are called \emph{generalized linear model} (GLM) loss functions, and as the authors of \cite{redberg2023improving} emphasize, most privacy proofs for objective perturbation in the DP literature require this GLM assumption, even if they do not state it explicitly.\footnote{The statement of \cref{thm:redberg-privacy} is \emph{not} identical to the statements of Theorems 3.1 and 3.2 in \cite{redberg2023improving} because we have corrected a few typos in their statement: first, in the $(\eps, \delta)$-DP bound, we have added absolute value signs around the term $\log(1 - s/\lambda)$ in the definition of $\tilde{\eps}$. Similarly, we have added a factor of $2$ in the denominator of $L^2/2\nu^2$. The absolute value signs and the factor of $2$ are both present throughout the proof of Theorem 3.1 of \cite{redberg2023improving}, and are only omitted in the theorem's statement.}
\begin{theorem}[Theorems 3.1 and 3.2 of \cite{redberg2023improving}]
\label{thm:redberg-privacy}
    Suppose that $\ell_0 : \R^2 \to \R$ satisfies $\abs{\partial_1 \ell_0(\eta, y)} \le L$ and $0 \le \partial_1^2 \ell_0(\eta, y) \le s$ for some constants $L, s > 0$ and for all $\eta, y \in \R$. Then objective perturbation (\cref{alg:objective-perturbation}) with $R = 1$, any $\lambda, \nu > 0$, and the GLM loss function $\ell(\bm{\beta}; (\bm{x}, y)) = \ell_0(\langle \bm{x}, \bm{\beta}\rangle, y)$ satisfies $(\pe, \pd)$-differential privacy for any $\pe \ge 0$ and
    \[
        \pd = \begin{cases}
            2 \cdot \mathrm{HockeyStick}(\tpe, \frac{L}{\nu}) &\text{if }\hpe \ge 0,\\
            (1 - e^{\hpe}) + 2e^{\hpe}\cdot \mathrm{HockeyStick}\left(\frac{L^2}{2\nu^2}, \frac{L}{\nu}\right) &\text{otherwise,}
        \end{cases}
    \]
    where we set $\tpe = \pe - \abs{\log(1 - s/\lambda)}$ and $\hpe = \tpe - L^2/2\nu^2$, provided that $\lambda > s$.

    The algorithm also satisfies $(\pa, \pe)$-R\'{e}nyi differential privacy for any $\pa > 1$ and
    \[
        \pe = -\log\left(1 - \frac{s}{\lambda}\right) + \frac{L^2}{2\nu^2} + \frac{1}{\pa - 1}\log \mathbb{E}_{X \sim \mathcal{N}{\left(0, \frac{L^2}{\nu^2}\right)}}\bigl[e^{(\pa - 1)\abs{X}}\bigr].
    \]

\end{theorem}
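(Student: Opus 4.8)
The plan is the classical density-based analysis of objective perturbation, sharpened so as to track constants rather than to prove an asymptotic statement. Since $\partial_1^2\ell_0 \ge 0$, the perturbed objective is $\lambda$-strongly convex and hence has a unique minimizer $\hb$, characterized by the stationarity condition $\sum_{i=1}^n \partial_1\ell_0(\langle\bm{x}_i,\hb\rangle,y_i)\bm{x}_i + \lambda\hb + \nu\bm{\xi} = \bm{0}$. Solving for $\bm{\xi}$ exhibits the perturbation as $\bm{\xi} = \Psi_{\bm{x}}(\hb)$, where $\Psi_{\bm{x}} := -\nu^{-1}\nabla G_{\bm{x}}$ and $G_{\bm{x}}(\bm{\beta}) = \sum_i \ell_0(\langle\bm{x}_i,\bm{\beta}\rangle,y_i)+\tfrac{\lambda}{2}\|\bm{\beta}\|^2$ is $\lambda$-strongly convex; thus $\Psi_{\bm{x}}:\R^d\to\R^d$ is a bijection with Jacobian $-\nu^{-1}(\bm{H}_{\bm{x}}(\hb)+\lambda\bm{I}_d)$, where $\bm{H}_{\bm{x}}(\bm{\beta}) = \sum_i \partial_1^2\ell_0(\langle\bm{x}_i,\bm{\beta}\rangle,y_i)\,\bm{x}_i\bm{x}_i^\top\succeq\bm{0}$. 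The change-of-variables formula then gives the output density in closed form, $p_{\bm{x}}(\bm{b}) \propto \exp(-\tfrac12\|\Psi_{\bm{x}}(\bm{b})\|^2)\cdot\det(\bm{H}_{\bm{x}}(\bm{b})+\lambda\bm{I}_d)$ (for the Huber loss, whose $\partial_1^2\ell_0$ has a measure-zero discontinuity, this is justified via the a.e.\ Jacobian).

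Now fix adjacent datasets; without loss of generality the larger contains an extra sample $(\bm{x}_0,y_0)$ with $\|\bm{x}_0\|\le R = 1$. Two comparisons control the privacy loss $Z$ at an output $\bm b$. First, the Hessians differ by the rank-one term $c\,\bm{x}_0\bm{x}_0^\top$ with $c = \partial_1^2\ell_0(\langle\bm{x}_0,\bm{b}\rangle,y_0)\in[0,s]$, so by the matrix determinant lemma the determinant ratio takes the form $1\pm c\,\bm{x}_0^\top(\bm{H}+\lambda\bm{I}_d)^{-1}\bm{x}_0$ for the relevant $\bm{H}\succeq\bm{0}$; since $(\bm{H}+\lambda\bm{I})^{-1}\preceq\lambda^{-1}\bm{I}$ and $\|\bm{x}_0\|\le 1$, this lies in $[1-s/\lambda,\,1+s/\lambda]$, so (using $\lambda>s$) the determinant contribution to $Z$ is at most $|\log(1-s/\lambda)|$ in absolute value. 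Second, the two stationarity conditions differ only in the term $\nabla_{\bm{\beta}}\ell_0(\langle\bm{x}_0,\bm{b}\rangle,y_0) = \partial_1\ell_0(\langle\bm{x}_0,\bm{b}\rangle,y_0)\,\bm{x}_0$, so $\Psi_{\text{small}}(\bm{b}) = \Psi_{\text{large}}(\bm{b}) + \tilde a(\bm{b})\,\bm{x}_0$ with $|\tilde a(\bm{b})|\le L/\nu$. The crucial structural point — and the reason the GLM hypothesis is needed — is that this shift is along the \emph{fixed} direction $\bm{x}_0$ with a \emph{scalar} coefficient: writing $\bm{\xi}=\Psi(\hb)$ and $\xi_\| := \langle\bm{\xi},\bm{x}_0/\|\bm{x}_0\|\rangle\sim\mathcal{N}(0,1)$, the Gaussian part of $Z$ equals $a\xi_\| + \tfrac12 a^2$ with $a := \tilde a(\hb)\|\bm{x}_0\|$, $|a|\le L/\nu$. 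Since $a\mapsto a\xi_\|+\tfrac12 a^2$ is convex it is maximized over $|a|\le L/\nu$ at an endpoint, so \emph{pointwise} $Z \le \tfrac{L}{\nu}|\xi_\|| + \tfrac{L^2}{2\nu^2} + |\log(1-s/\lambda)|$ with $\xi_\|\sim\mathcal{N}(0,1)$.

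For the R\'enyi bound I would take the order-$\pa$ moment of $Z$: the two constants pull out of the expectation, and since $\tfrac{L}{\nu}|\xi_\||$ is equal in distribution to $|X|$ for $X\sim\mathcal{N}(0,L^2/\nu^2)$, one gets $D_\pa(\mathcal{A}(\bm{x})\|\mathcal{A}(\bm{x}')) \le \tfrac{L^2}{2\nu^2} - \log(1-s/\lambda) + \tfrac{1}{\pa-1}\log\E_{X\sim\mathcal{N}(0,L^2/\nu^2)}[e^{(\pa-1)|X|}]$, which is exactly the stated $\pe$. For the $(\pe,\pd)$-DP bound I would instead bound the hockey-stick divergence $H_{e^{\pe}}(\mathcal{A}(\bm{x})\|\mathcal{A}(\bm{x}')) = \E_{\mathcal{A}(\bm{x})}[(1-e^{\pe-Z})_+]$: by the pointwise bound on $Z$ and monotonicity of $t\mapsto(1-e^{\pe-t})_+$, it is at most the same functional evaluated at $\tfrac{L}{\nu}|\xi_\|| + \tfrac{L^2}{2\nu^2} + |\log(1-s/\lambda)|$. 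That last quantity is, up to reflection, the privacy loss of the one-dimensional Gaussian mechanism $\mathcal{N}(L/\nu,1)$ versus $\mathcal{N}(0,1)$ deterministically shifted by $\tfrac{L^2}{2\nu^2}+|\log(1-s/\lambda)|$, so its hockey-stick divergence reduces to $\mathrm{HockeyStick}(\cdot,L/\nu)$ at a shifted privacy level, with the reflection (absolute value) contributing the leading factor of $2$; absorbing $|\log(1-s/\lambda)|$ into $\tpe = \pe - |\log(1-s/\lambda)|$ and then tracking the $\tfrac{L^2}{2\nu^2}$ shift (which can push the effective level $\hpe = \tpe - \tfrac{L^2}{2\nu^2}$ below $0$) produces the stated piecewise formula for $\pd$.

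I expect the main obstacle to be precisely the issue flagged above: the "sensitivity magnitude" $a$ is a random quantity depending on the output $\hb$, so one cannot invoke an off-the-shelf Gaussian-mechanism guarantee, and it is the GLM structure — rank-one gradient along the fixed direction $\bm{x}_0$ — that lets one collapse the analysis to the single coordinate $\xi_\|$ and then take the worst case over $|a|\le L/\nu$ by convexity. A secondary obstacle, relevant only to the $(\pe,\pd)$ half, is obtaining the \emph{tight} hockey-stick expression rather than a crude tail bound: this needs the pointwise (rather than in-expectation) domination of privacy losses from the previous step together with careful bookkeeping of the deterministic shift by $\tfrac{L^2}{2\nu^2}$, which is the source of the case distinction on the sign of $\hpe$.
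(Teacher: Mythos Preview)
Your proposal is correct and matches the approach the paper uses for its refinement \cref{thm:our-objective-perturbation-privacy} (the paper does not itself prove \cref{thm:redberg-privacy}, which is quoted from \cite{redberg2023improving}, but explicitly states that the proof structure is identical): decompose the privacy loss $Z$ via change of variables into the log-determinant term $(*)$ bounded by the matrix determinant lemma and the Gaussian density ratio $(**)$ collapsed to one dimension via the GLM rank-one structure, then read off the R\'enyi and hockey-stick bounds. Your pointwise convexity-in-$a$ argument to handle the output-dependent shift magnitude is the correct way to deal with the subtlety you flagged.
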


\subsection{Convex Gaussian Minimax Theorem}

In this section, we present the Convex Gaussian Minimax Theorem (CGMT), due to \cite{stojnic2013framework,thrampoulidis2015regularized}, which will play a key role in our analysis of objective perturbation.

As discussed in the technical overview in \cref{sec:techniques}, CGMT allows us to better understand the optimizers of a certain class of min-max optimization problems that are defined in terms of a Gaussian random matrix $\bm{G} \in \R^{n \times d}$. The theorem relates the original problem to an \emph{auxiliary} optimization problem in which the random matrix $\bm{G}$ has been replaced with two random vectors $\bm{g} \in \R^d$ and $\bm{h} \in \R^n$, whose dimensions match the number of columns and rows of $\bm{G}$, respectively. Remarkably, the theorem shows that the optimum value of the original problem, which is a scalar-valued function of $\bm{G}$, has a similar distribution to the optimum value of the auxiliary problem, which is a scalar-valued function of $(\bm{g}, \bm{h})$ that is much easier to determine directly via calculus. The main assumption needed for CGMT is that the original min-max problem is convex-concave and depends bilinearly on $\bm{G}$.

At first glance, it may seem that knowing the optimum value of the original problem is not very useful. After all, in the proof sketch we outlined in \cref{sec:techniques}, what we really wanted to understand was the \emph{optimizer} $\bm{\hb}$, which is the location at which the optimum value is achieved. It turns out that by carefully choosing the constraint sets of the problem to which CGMT is applied, we can get our hands on a remarkable number of quantitative properties about $\bm{\hb}$. For example, suppose we would like to show that the estimation error of $\bm{\hb}$ with respect to the ground-truth $\bm{\beta}^\star$ is approximately $(\sigma^\star)^2$ in the limit. To do so, consider a small slack factor $\eps_n > 0$ and define the constraint set
\[
    \cS_{\eps_n} = \left\{\bm{\beta} \in \R^d : \Bigl\lvert\frac{1}{d}\norm{\bm{\beta} - \bm{\beta}^\star}^2 - (\sigma^\star)^2\Bigr\rvert \ge \eps_n\right\}.
\]
If we can show that, with high probability, the optimum value of the auxiliary problem over $\cS_{\eps_n}$ is strictly larger than the optimum value of the auxiliary problem over $\R^d$, then by CGMT, the same must be true of original problem. Since $\bm{\hb}$ is the optimizer of the original problem, this would allow us to conclude that with high probability, $\bm{\hb} \notin \cS_{\eps_n}$, so $\frac{1}{d}\norm{\bm{\hb} - \bm{\beta}^\star}^2 = (\sigma^\star)^2 \pm \eps_n$.

Nothing about the previous example is particularly specific to $\ell^2$ estimation error, which suggests (correctly) that the argument can be carried out for a wide range of test functions. It turns out that we will be able to handle a broad class of \emph{pseudo-Lipschitz} test functions. We will be able to handle slack factors on the order of $\eps_n = n^{-\Omega(1)}$ in the case of robust linear regression, or $\eps_n = e^{-(\log n)^{\Omega(1)}}$ in the case of logistic regression, both of which satisfy $\eps_n \to 0$ as $n \to \infty$.

\begin{theorem}[Convex Gaussian Minimax Theorem, \cite{stojnic2013framework, thrampoulidis2015regularized}]
\label{thm:cgmt}
    Given compact sets $\mathcal{S}_{\bm{u}} \subseteq \R^d$ and $\mathcal{S}_{\bm{v}} \subseteq \R^n$, a continuous function $\psi : \cS_{\bm{u}} \times \cS_{\bm{v}} \to \R$, a matrix $\bm{G} \in \R^{n \times d}$, and vectors $\bm{g} \in \R^d$, $\bm{h} \in \R^n$, consider the following two random variables indexed by $\bm{u} \in \cS_{\bm{u}}$ and $\bm{v} \in \cS_{\bm{v}}$:
    \begin{align*}
        Q_{\bm{u},  \bm{v}} &= \langle \bm{G} \bm{u}, \bm{v} \rangle + \psi(\bm{u}, \bm{v}) \\
        Q'_{\bm{u}, \bm{v}} &= \norm{\bm{u}}\langle \bm{h}, \bm{v} \rangle - \norm{\bm{v}}\langle \bm{g}, \bm{u} \rangle +\psi(\bm{u}, \bm{v}).
    \end{align*}
    We call $Q_{\bm{u}, \bm{v}}$ and $Q'_{\bm{u}, \bm{v}}$ the \emph{primary and auxiliary objective functions}. If $\bm{G}$, $\bm{g}$, and $\bm{h}$ all have independent $\mathcal{N}(0, \sigma^2)$ entries for some $\sigma > 0$, then:
    \begin{enumerate}[(a)]
        \item For any threshold $t \in \R$, \[\Pr\left[\min_{\bm{u} \in \cS_{\bm{u}}} \max_{\bm{v} \in \cS_{\bm{v}}}\, Q_{\bm{u}, \bm{v}} < t\right] \le 2\Pr\left[\min_{\bm{u} \in \cS_{\bm{u}}} \max_{\bm{v} \in \cS_{\bm{v}}}\, Q'_{\bm{u}, \bm{v}} < t\right].\]
        \item If $\mathcal{S}_{\bm{u}}$ and $\mathcal{S}_{\bm{v}}$ are convex and $\psi$ is convex-concave on $\mathcal{S}_{\bm{u}} \times \mathcal{S}_{\bm{v}}$, then for any threshold $t \in \R$, \[\Pr\left[\min_{\bm{u} \in \cS_{\bm{u}}} \max_{\bm{v} \in \cS_{\bm{v}}}\, Q_{\bm{u}, \bm{v}} > t\right] \le 2\Pr\left[\min_{\bm{u} \in \cS_{\bm{u}}} \max_{\bm{v} \in \cS_{\bm{v}}}\, Q'_{\bm{u}, \bm{v}} > t\right].\]
    \end{enumerate}
\end{theorem}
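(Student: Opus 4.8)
The plan is to derive both inequalities from \emph{Gordon's Gaussian min-max comparison inequality}, the min-max analogue of Slepian's lemma: if $(X_{i,j})$ and $(Y_{i,j})$ are centered Gaussian processes with $\E[X_{i,j}^2] = \E[Y_{i,j}^2]$ for all $i,j$, with $\E[X_{i,j}X_{i,k}] \le \E[Y_{i,j}Y_{i,k}]$ for all $i,j,k$, and with $\E[X_{i,j}X_{l,k}] \ge \E[Y_{i,j}Y_{l,k}]$ for all $i \ne l$ and all $j,k$, then $\Pr[\min_i \max_j X_{i,j} < t] \le \Pr[\min_i \max_j Y_{i,j} < t]$ for every $t \in \R$. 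This is classically stated for finite index sets; I would first record its extension to the compact sets $\cS_{\bm u}, \cS_{\bm v}$ via almost-sure continuity of the Gaussian sample paths and approximation by finite nets (which also makes $\min_{\bm u}\max_{\bm v}(\cdot)$ a bona fide measurable random variable). I would then index the processes by $i \leftrightarrow \bm u \in \cS_{\bm u}$ and $j \leftrightarrow \bm v \in \cS_{\bm v}$.

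\textbf{Part (a).} Introduce an independent auxiliary scalar $\gamma \sim \mathcal N(0,\sigma^2)$ and set $\widehat Q_{\bm u,\bm v} = Q_{\bm u,\bm v} + \norm{\bm u}\norm{\bm v}\,\gamma$. With $X_{\bm u,\bm v} = \langle \bm G \bm u, \bm v\rangle + \norm{\bm u}\norm{\bm v}\,\gamma$ and $Y_{\bm u,\bm v} = \norm{\bm u}\langle \bm h, \bm v\rangle - \norm{\bm v}\langle \bm g, \bm u\rangle$, direct moment computations give $\E[X_{\bm u,\bm v}^2] = \E[Y_{\bm u,\bm v}^2] = 2\sigma^2\norm{\bm u}^2\norm{\bm v}^2$; $\E[X_{\bm u,\bm v}X_{\bm u,\bm v'}] = \E[Y_{\bm u,\bm v}Y_{\bm u,\bm v'}] = \sigma^2\norm{\bm u}^2\big(\langle \bm v,\bm v'\rangle + \norm{\bm v}\norm{\bm v'}\big)$; and, for $\bm u \ne \bm u'$, $\E[X_{\bm u,\bm v}X_{\bm u',\bm v'}] - \E[Y_{\bm u,\bm v}Y_{\bm u',\bm v'}] = \sigma^2\big(\norm{\bm u}\norm{\bm u'} - \langle \bm u,\bm u'\rangle\big)\big(\norm{\bm v}\norm{\bm v'} - \langle \bm v,\bm v'\rangle\big) \ge 0$ by Cauchy--Schwarz. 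Since $\psi$ is deterministic, applying Gordon's inequality with thresholds $t - \psi(\bm u,\bm v)$ yields $\Pr[\min_{\bm u}\max_{\bm v}\widehat Q_{\bm u,\bm v} < t] \le \Pr[\min_{\bm u}\max_{\bm v}Q'_{\bm u,\bm v} < t]$. To remove $\gamma$ at the cost of a factor $2$: conditionally on $\bm G$, on the event $\{\gamma \le 0\}$ (which has probability $\tfrac12$ and is independent of $\bm G$) one has $\widehat Q_{\bm u,\bm v} \le Q_{\bm u,\bm v}$ pointwise, hence $\min_{\bm u}\max_{\bm v}\widehat Q_{\bm u,\bm v} \le \min_{\bm u}\max_{\bm v}Q_{\bm u,\bm v}$; taking expectations gives $\Pr[\min\max Q < t] \le 2\Pr[\min\max\widehat Q < t]$, and chaining proves (a).

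\textbf{Part (b).} I would reduce to (a) by a minimax swap. Because $\cS_{\bm u}, \cS_{\bm v}$ are compact and convex and $Q_{\bm u,\bm v} = \langle \bm G\bm u,\bm v\rangle + \psi(\bm u,\bm v)$ is convex in $\bm u$, concave in $\bm v$, and continuous, Sion's minimax theorem gives $\min_{\bm u}\max_{\bm v}Q = \max_{\bm v}\min_{\bm u}Q$, so $\{\min\max Q > t\} = \{\min_{\bm v}\max_{\bm u}(-Q_{\bm u,\bm v}) < -t\}$. Now $-Q_{\bm u,\bm v} = \langle(-\bm G^\top)\bm v,\bm u\rangle + (-\psi(\bm u,\bm v))$ is again of the form to which part (a) applies---with the roles of $\bm u$ and $\bm v$ exchanged, the i.i.d.\ Gaussian matrix $-\bm G^\top$, and the continuous mean function $-\psi$ (part (a) needs only continuity and compactness, which hold). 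Applying part (a) bounds $\Pr[\min_{\bm v}\max_{\bm u}(-Q) < -t]$ by twice the probability of the corresponding auxiliary event, and a short algebraic check identifies that auxiliary objective with $-Q'_{\bm u,\bm v}$ (after swapping the roles of $\bm g$ and $\bm h$), so the event is $\{\max_{\bm v}\min_{\bm u}Q'_{\bm u,\bm v} > t\}$. Finally, weak duality $\max_{\bm v}\min_{\bm u}Q' \le \min_{\bm u}\max_{\bm v}Q'$ (valid for any function) gives $\Pr[\max_{\bm v}\min_{\bm u}Q' > t] \le \Pr[\min_{\bm u}\max_{\bm v}Q' > t]$, completing (b).

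\textbf{Main obstacle.} The covariance identities above are mechanical, so the real work lies in two ``soft'' places. First, transferring Gordon's inequality from finite index sets to the compact sets $\cS_{\bm u},\cS_{\bm v}$: one must check that $\min_{\bm u}\max_{\bm v}(\cdot)$ is a well-defined, measurable random variable and that the finite-net inequalities pass to the limit---routine but requiring almost-sure continuity of the relevant Gaussian processes and a careful monotone-approximation argument. Second, the minimax swap in (b): one must confirm that $Q$ is genuinely convex--concave (bilinearity of $\langle \bm G\bm u,\bm v\rangle$ together with the convex--concave hypothesis on $\psi$) and jointly continuous on compact convex sets so that Sion's theorem applies, and that the negated, swapped objective retains exactly the bilinear-plus-deterministic structure demanded by part (a). Once these framework points are in place, the $\gamma$-augmentation trick---which simultaneously equalizes the variances and preserves the within-group covariances---and the factor-of-$2$ bookkeeping go through as above.
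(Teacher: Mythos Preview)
The paper does not supply its own proof of this theorem: it is stated in the Preliminaries section and attributed to \cite{stojnic2013framework, thrampoulidis2015regularized} without argument, so there is no in-paper proof to compare against. That said, your proposal is a faithful and correct reconstruction of the standard proof from those references: part (a) is exactly Gordon's inequality applied to the $\gamma$-augmented process $X_{\bm u,\bm v}=\langle \bm G\bm u,\bm v\rangle+\|\bm u\|\|\bm v\|\gamma$ versus $Y_{\bm u,\bm v}=\|\bm u\|\langle\bm h,\bm v\rangle-\|\bm v\|\langle\bm g,\bm u\rangle$, with the factor of $2$ coming from conditioning on $\{\gamma\le 0\}$; and part (b) is the Sion-minimax-then-swap reduction to part (a), followed by weak duality on the auxiliary. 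Your covariance identities and the identification of the swapped auxiliary with $-Q'_{\bm u,\bm v}$ (via $h'\leftrightarrow \bm g$, $g'\leftrightarrow \bm h$) are all correct.
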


\subsection{Universality}

Here, we state the two key universality laws from prior work that we will use in our analyses. The first result is Corollary 2.6 of \cite{han2023universality}, which we call CGMT universality. We have slightly rephrased the statement for clarity, at the cost of generality (e.g. by assuming that the mean function $\psi$ is differentiable). We will eventually use this result in our analyses of objective perturbation and output perturbation for robust linear regression. As we discussed earlier in the technical overview of \cref{sec:techniques}, this result allows us to extend CGMT analysis to non-Gaussian design matrices, as long as we have a few finite, matching moments.

\begin{theorem}[CGMT Universality: Corollary 2.6 of \cite{han2023universality}]
\label{thm:cgmt-universality}
    Consider random matrices $\bm{A}, \bm{B} \in \R^{n \times d}$ with independent entries satisfying the following three moment conditions as $n \to \infty$ with $d = \Theta(n)$:
    \begin{enumerate}[(1)]
        \item $\E\bm{A}_{ij} = \E\bm{B}_{ij} = 0$ for all $i \in [n]$ and $j \in [d]$.
        \item $\E\bm{A}_{ij}^2 = \E\bm{B}_{ij}^2 = 1$ for all $i \in [n]$ and $j \in [d]$.
        \item $\max_{i \in [n]} \max_{j \in [d]}\; \E\abs{\bm{A}_{ij}}^3 + \E\abs{\bm{B}_{ij}}^3 = O(1)$.
    \end{enumerate}
    Let $\mathcal{S}_{\bm{u}} \subseteq [-L_{\bm{u}}, L_{\bm{u}}]^d$ and $\mathcal{S}_{\bm{v}} \subseteq [-L_{\bm{v}}, L_{\bm{v}}]^n$ be measurable sets with $L_{\bm{u}}, L_{\bm{v}} \ge 1$, and set $L = L_{\bm{u}} + L_{\bm{v}}$. Let $\psi : \R^d \times \R^n \to \R$ be a differentiable function with \[\mathscr{M} = \max_{\bm{u} \in [-L, +L]^d} \max_{\bm{v} \in [-L, +L]^n}\,\norm{\nabla \psi(\bm{u}, \bm{v})}_1.\]
    Finally, consider the following random variables indexed by $\bm{u} \in \cS_{\bm{u}}$ and $\bm{v} \in \cS_{\bm{v}}$:\[Q^{\bm{A}}_{\bm{u}, \bm{v}} = \frac{1}{n^{3/2}}\langle \bm{A}\bm{u}, \bm{v}\rangle + \psi(\bm{u}, \bm{v}) \quad \text{and} \quad Q^{\bm{B}}_{\bm{u}, \bm{v}} = \frac{1}{n^{3/2}}\langle \bm{B}\bm{u}, \bm{v}\rangle + \psi(\bm{u}, \bm{v}).\]
    Then, for all thresholds $t \in \R$, all gaps $g > 0$, and all $\omega \ge n$,
    \[\Pr\left[\min_{\bm{u} \in \cS_{\bm{u}}} \max_{\bm{v} \in \cS_{\bm{v}}}\, Q^{\bm{A}}_{\bm{u}, \bm{v}} < t \right] - \Pr\left[\min_{\bm{u} \in \cS_{\bm{u}}} \max_{\bm{v} \in \cS_{\bm{v}}}\, Q^{\bm{B}}_{\bm{u}, \bm{v}}  < t + g\right] = O{\left({\left(1 + \frac{1}{g^3}\right)} {\left(\frac{\mathscr{M}}{\omega} + \frac{L^2\log^{2/3}(L\omega)}{n^{1/6}}\right)}\right)}.\]
\end{theorem}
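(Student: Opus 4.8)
We sketch the strategy one would follow to establish \cref{thm:cgmt-universality}, which is quoted from \cite{han2023universality}. The plan is a Lindeberg-style invariance principle applied to a smoothed version of the min-max value
\[
    F(\bm{M}) = \min_{\bm{u} \in \cS_{\bm{u}}} \max_{\bm{v} \in \cS_{\bm{v}}}\Bigl(\tfrac{1}{n^{3/2}}\langle \bm{M}\bm{u}, \bm{v}\rangle + \psi(\bm{u}, \bm{v})\Bigr),
\]
regarded as a function of the $n \times d$ matrix $\bm{M}$ that is swapped entrywise from $\bm{A}$ to $\bm{B}$. The first step is to replace the discontinuous event $\{F < t\}$ by a smooth statistic: fix a non-increasing $m : \R \to [0,1]$ with $m \equiv 1$ on $(-\infty,0]$, $m \equiv 0$ on $[g,\infty)$, and $\norm{m^{(k)}}_\infty = O(g^{-k})$ for $k \le 3$. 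Then $\Pr[F(\bm{A}) < t] \le \E[m(F(\bm{A}) - t)]$ and $\E[m(F(\bm{B}) - t)] \le \Pr[F(\bm{B}) < t + g]$, so it suffices to bound $\E[m(F(\bm{A}) - t)] - \E[m(F(\bm{B}) - t)]$; the three derivatives of $m$ are exactly what produce the $(1 + g^{-3})$ factor in the target estimate.

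The second step confronts the fact that $\bm{M} \mapsto F(\bm{M})$ is only Lipschitz, not $C^3$, so the Lindeberg argument cannot be applied to it directly. We replace $F$ by a smooth surrogate $F_{\beta}$ in two moves: (i) discretize $\cS_{\bm{u}}, \cS_{\bm{v}}$ by nets of resolution $1/\omega$, which changes the value by at most the Lipschitz constant of the objective in $(\bm{u},\bm{v})$ times the resolution -- a contribution of order $\mathscr{M}/\omega$ from $\psi$ plus a contribution from the bilinear term controlled by a high-probability bound on $\norm{\bm{M}}_{\mathrm{op}}$ (or a truncation of the atypically large entries), which is the origin of the $L^2 \log^{2/3}(L\omega)\, n^{-1/6}$ term; and (ii) replace the now-finite $\min$ over $\bm{u}$ and $\max$ over $\bm{v}$ by a nested soft-$\min$/soft-$\max$ (log-sum-exp at inverse temperatures $\beta_{\bm{u}}, \beta_{\bm{v}}$), which differs from the exact value by $O(\beta_{\bm{u}}^{-1}\log\abs{\cS_{\bm{u}}^{\mathrm{net}}} + \beta_{\bm{v}}^{-1}\log\abs{\cS_{\bm{v}}^{\mathrm{net}}})$. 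Because the net sizes are exponential in $d = \Theta(n)$, the temperatures must be polynomially large; and since each matrix-entry derivative of the bilinear form is $\tfrac{1}{n^{3/2}} u_j v_i = O(L_{\bm{u}}L_{\bm{v}}/n^{3/2})$, the $k$-th derivative of $F_{\beta}$ in a single entry direction is $O(\beta^{k-1}(L_{\bm{u}}L_{\bm{v}}/n^{3/2})^{k})$.

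The third step is the Lindeberg swap itself, applied to $\phi = m \circ F_{\beta}$. Order the $nd$ entries, interpolate between $\bm{A}$ and $\bm{B}$ one entry at a time, and at the $k$-th step condition on all other entries and Taylor-expand $\phi$ to third order in the swapped coordinate. Since the bilinear term is linear in $\bm{M}$ and $F_\beta$ is smooth, the first- and second-order Taylor coefficients depend only on the other entries; hence the first-order terms vanish in expectation by $\E\bm{A}_{ij} = \E\bm{B}_{ij} = 0$, the second-order terms cancel by $\E\bm{A}_{ij}^2 = \E\bm{B}_{ij}^2 = 1$, and only a third-order remainder survives, bounded per entry by $O\bigl((\E\abs{\bm{A}_{ij}}^3 + \E\abs{\bm{B}_{ij}}^3)\cdot\sup\abs{\partial^3\phi}\bigr)$. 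Summing over all $nd$ entries, using condition (3) for the moments and the chain rule (so that $\sup\abs{\partial^3\phi}$ is expressed through $g^{-1}, g^{-2}, g^{-3}$ and the $F_\beta$-derivative bounds above), and finally optimizing the net resolution and the temperatures against the approximation errors of the second step, yields the stated bound.

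I expect the main obstacle to be the second step: smoothing the min-max value into a $C^3$ function of $\bm{M}$ while losing only polylogarithmic and small-polynomial factors. The tension is intrinsic -- faithful soft-$\min$/soft-$\max$ approximation forces the inverse temperatures up, which inflates precisely the higher $\bm{M}$-derivatives that drive the Lindeberg remainder -- and resolving it requires a careful joint choice of the temperatures, the net resolution, and $g$, together with a good high-probability operator-norm (or truncation) argument for a non-Gaussian $\bm{M}$ whose entries have only three bounded moments. By comparison, the bookkeeping of the Lindeberg sum in the third step is routine once these smooth surrogates are in hand.
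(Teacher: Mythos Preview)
The paper does not prove this statement: \cref{thm:cgmt-universality} is quoted verbatim as Corollary~2.6 of \cite{han2023universality} and placed in the preliminaries as a black-box tool, with no proof or proof sketch given. So there is no ``paper's own proof'' to compare against. Your sketch is a faithful outline of the Lindeberg interpolation strategy that \cite{han2023universality} actually uses (smoothing the indicator, replacing the min-max by a soft-min/soft-max over a net, then swapping entries one at a time and matching the first two moments), and your identification of the main tension---balancing the soft-max temperature against the Lindeberg third-order remainder---is accurate. For the purposes of this paper, however, the intended ``proof'' is simply the citation.
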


The second result we build on is Theorem 3.2 of \cite{han2024entrywise}, namely generalized first-order method (GFOM) universality. This result establishes universality for a broad class of iterative methods, called \emph{first-order methods}, involving a random design matrix $\bm{X}$. The result ensures that many \emph{pseudo-Lipschitz} summaries of such iterates (see \cref{sec:mathematical-miscellany} for a definition of pseudo-Lipschitz) are essentially unchanged upon replacing $\bm{X}$ with a Gaussian matrix $\bm{G}$ of the same shape. As discussed in the technical overview of \cref{sec:techniques}, this will prove useful for our analysis of objective perturbation and output perturbation for logistic regression, as well our DP-SGD results for both robust linear regression and logistic regression.

We have modified the statement slightly from the version that appeared in \cite{han2024entrywise} to allow vector-valued iterates and different test functions per coordinate, changes that require only minimal, syntactic changes in the proof. Before stating the GFOM universality theorem, we give a formal definition of general first order methods:

\begin{definition}
    Given $\bm{u}^{(1)}, \ldots, \bm{u}^{(t)} \in \R^{m \times k}$, let $\bm{u}^{(s)}_i \in \R^k$ be the $i$\textsuperscript{th} row of $\bm{u}^{(s)}$. Let  $\bm{u}^{(1:t)} \in \R^{t \times m \times k}$ be the tensor comprising $\bm{u}^{(s)}$ for $s \in [t]$. Define $\bm{u}^{(1:t)}_i \in \R^{t \times k}$ similarly. Given functions $F_1, \ldots, F_m : \R^{t \times k} \to \R^k$, let $F(\bm{u}^{(1:t)}) \in \R^{m \times k}$ be the matrix with $i$\textsuperscript{th} row $F_i(\bm{u}^{(1:t)}_i) \in \R^k$.
\end{definition}

\begin{definition}
    A \emph{general first order method} (GFOM) consists of a matrix $\bm{X} \in \R^{n \times d}$, an \emph{initialization} $(\bm{u}^{(0)}, \bm{v}^{(0)}) \in \R^{d \times k} \times \R^{n \times \ell}$ with $k, \ell = O(1)$ and $d = \Theta(n)$, and functions $F_{1j}^{(t)},F_{2j}^{(t)} : \R^{t \times k} \to \R^k$, $G_{1i}^{(t)} : \R^{t \times \ell} \to \R^\ell$, and $G_{2i}^{(t)} : \R^{(t+1) \times \ell} \to \R^\ell$ for all $t \in \N$, $i \in [n]$, and $j \in [d]$. Its \emph{iterates} are
    \begin{equation}
    \begin{split}
        \bm{u}^{(t)} &= \bm{X} F_1^{(t)}(\bm{v}^{(0 : t - 1)}) + G_1^{(t)}(\bm{u}^{(0 : t-1)}) \in \R^{d \times k}, \\
        \bm{v}^{(t)} &= \bm{X}^\top G_2^{(t)}(\bm{u}^{(0:t)}) + F_2^{(t)}(\bm{v}^{(0:t-1)}) \in \R^{n \times \ell}.
    \end{split}
    \end{equation}
\end{definition}

\begin{theorem}[GFOM Universality, Theorem 3.2 of \cite{han2024entrywise}]
\label{thm:gfom-universality}
    Consider random matrices $\bm{A}, \bm{B} \in \R^{n \times d}$ with independent entries satisfying the following conditions as $n \to \infty$ with $d = \Theta(n)$:
    \begin{itemize}
        \item $\E \bm{A}_{ij} = \E\bm{B}_{ij} = 0$ for all $i \in [n]$ and $j \in [d]$.
        \item $\E\bm{A}_{ij}^2 = \E\bm{B}_{ij}^2$ for all $i \in [n]$ amd $j \in [d]$.
        \item $\max_{i \in [n]} \max_{j \in [d]}\, \norm{\bm{A}_{ij}}_{\psi_2} + \norm{\bm{B}_{ij}}_{\psi_2} = O(n^{-1/2})$.
    \end{itemize}
    Also consider a GFOM satisfying the following condition for some $\Lambda \ge 2$, stated in terms of the \emph{Lipschitz norm} $\norm{f}_{\mathrm{Lip}} = \inf\{L \in \R : \text{$f$ is $L$-Lipschitz}\}$ of a function $f$:
    \[
        \max_{s \in [t]} \max_{q \in \{1, 2\}} \max_{i \in [n]} \max_{j \in [d]}\; \norm{F^{(t)}_{qi}}_{\mathrm{Lip}} + \norm{G^{(t)}_{qj}}_{\mathrm{Lip}} + \abs{F^{(t)}_{qi}(0)} + \abs{G^{(t)}_{qj}(0)} \le \Lambda.
    \]
    Let $\bm{u}^{(t)}(\bm{X})$ and $\bm{v}^{(t)}(\bm{X})$ denote the output of this GFOM with matrix $\bm{X}$. Then, for any collection of $\Lambda$-pseudo-Lipschitz functions $\psi_{1j} : \R^{t \times k} \to \R$ and $\psi_{2i} : \R^{t \times \ell} \to \R$ of order $p$, for any $q \in \N$, there exists $C > 0$ such that
    \begin{align*}
        &\E\bigg\lvert{\frac{1}{d}\sum_{j=1}^d \Big(\psi_{1j}(\bm{u}^{(1:t)}_j(\bm{A})) - \psi_{1j}(\bm{u}^{(1:t)}_j(\bm{B}))\Big)}\bigg\rvert^q + \E\bigg\lvert{\frac{1}{n}\sum_{i=1}^n \Big( \psi_{2i}(\bm{v}^{(1:t)}_i(\bm{A})) - \psi_{2i}(\bm{v}^{(1:t)}_i(\bm{B}))\Big)}\bigg\rvert^q\\
        &\le {(\Lambda \log(n)(1 + \norm{\bm{u}^{(0)}}_\infty + \norm{\bm{v}^{(0)}}_\infty))^{Ct^3}}{n^{-1/(Ct^3)}}.
    \end{align*}
\end{theorem}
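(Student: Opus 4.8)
This is the universality law of \cite{han2024entrywise}, so in the paper itself one simply invokes it; but let me sketch the route by which such a statement is proved, since it belongs to a now-standard family of Lindeberg/invariance-principle arguments for iterative schemes. The whole general first-order method is a fixed measurable map $\bm{X} \mapsto (\bm{u}^{(1:t)}(\bm{X}), \bm{v}^{(1:t)}(\bm{X}))$ built by alternately multiplying by $\bm{X}$ and $\bm{X}^\top$ and applying the coordinatewise $\Lambda$-Lipschitz maps $F^{(s)}_q, G^{(s)}_q$; composing with the $\Lambda$-pseudo-Lipschitz summaries $\tfrac1d\sum_j\psi_{1j}$ and $\tfrac1n\sum_i\psi_{2i}$ turns the left-hand side into a comparison of $\E[\Phi^q(\bm{A})]$ and $\E[\Phi^q(\bm{B})]$ for scalar functionals $\Phi$ that are averages of coordinatewise Lipschitz-type quantities, together with a routine concentration estimate that lets one recenter each averaged summary around its mean. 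First I would mollify the merely-Lipschitz maps $F, G, \psi$ to smooth ones at scale $n^{-10}$, which perturbs the summaries negligibly and makes $\Phi$ thrice continuously differentiable (this is needed since the bound below rests on a third-order Taylor expansion); optionally I would also pass through an intermediary Gaussian matrix $\bm{G}$ with the common second moments $\E\bm{A}_{ij}^2 = \E\bm{B}_{ij}^2$, whose sub-Gaussian norms inherit the $O(n^{-1/2})$ bound, comparing $\bm{A} \leftrightarrow \bm{G}$ and $\bm{G} \leftrightarrow \bm{B}$ separately.

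Next I would run the Lindeberg exchange. Order the $nd = \Theta(n^2)$ entries, let $\bm{X}^{(k)}$ agree with $\bm{A}$ on the first $k$ entries and with $\bm{B}$ afterwards, and telescope $\E\Phi^q(\bm{A}) - \E\Phi^q(\bm{B}) = \sum_k(\E\Phi^q(\bm{X}^{(k)}) - \E\Phi^q(\bm{X}^{(k-1)}))$. For the $k$-th term, freeze all entries but the one being swapped, view $\Phi^q$ as a function of that single scalar, and Taylor-expand to second order around the value $0$: the constant and linear terms cancel because $\E\bm{A}_{ij} = \E\bm{B}_{ij} = 0$ and the frozen part is independent of the swapped entry, the quadratic terms cancel because $\E\bm{A}_{ij}^2 = \E\bm{B}_{ij}^2$, and what remains is the integral Taylor remainder, bounded by $C\sup_{|\theta|\le 1}|\partial^3_{ij}\Phi^q| \cdot (\E|\bm{A}_{ij}|^3 + \E|\bm{B}_{ij}|^3)$ on the high-probability event that $\bm{X}^{(k)}$ has operator norm and row/column $\ell^2$-norms that are $O(1)$ (which holds under the sub-Gaussian hypothesis, the complement contributing only a super-polynomially small correction absorbed into the final bound). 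Since the entries are $O(n^{-1/2})$-sub-Gaussian, $\E|\bm{A}_{ij}|^3 = O(n^{-3/2})$, so the whole telescoping sum is controlled by $O(n^{2}) \cdot O(n^{-3/2})$ times a uniform bound on the third partial derivatives of the $t$-step summary with respect to one matrix entry.

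The crux is that last sensitivity estimate. Perturbing a single entry $\bm{X}_{ij}$ affects an iterate both \emph{directly} --- the entry occurs explicitly in exactly one matrix--vector product per round, changing one coordinate by $O(1)$ --- and \emph{indirectly}, through iterates that already depend on $\bm{X}_{ij}$, whose perturbations get spread across all coordinates by the next multiplication by $\bm{X}$ or $\bm{X}^\top$ (each spreading step costing a factor $\norm{\bm{X}}_{\mathrm{op}} = O(1)$ but being damped because it emanates from a single coordinate), after which the $\tfrac1d$- or $\tfrac1n$-averaging in the summary reconverts the spread perturbation into a controlled contribution. Iterating the chain rule three times through the $t$ rounds of (linear map)$\,\circ\,$(Lipschitz nonlinearity) gives a bound polynomial in $\Lambda$, in $\log n$ (from the operator-norm and maximum-row-norm tail events), and in $1 + \norm{\bm{u}^{(0)}}_\infty + \norm{\bm{v}^{(0)}}_\infty$ (the initialization enters the Lipschitz maps at most linearly and compounds over rounds), with exponent growing polynomially in $t$. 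The combinatorial blow-up in the number of ``perturbation paths'' of length at most $t$, together with the need to carry out the accounting uniformly over three orders of differentiation --- keeping straight which derivative terms are ``small'' (carrying a power of $n^{-1/2}$ because they emanate from a single coordinate) versus ``not yet small,'' ensuring that the summary-level averaging is what recovers the missing power of $n$ in each ``not yet small'' term, and truncating the heavy parts of the entries where the sub-Gaussian bound alone does not suffice --- is exactly what degrades the final rate to $n^{-1/(Ct^3)}$ rather than $n^{-\Omega(1)}$ and produces the $(\cdot)^{Ct^3}$ prefactor. This is the main obstacle and the technical heart of the proof.

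Finally, the passage from scalar-valued to vector-valued iterates and from a single test function to the coordinate-indexed families $\psi_{1j}, \psi_{2i}$ changes nothing substantive: every Lipschitz and pseudo-Lipschitz constant enters the estimates only through its supremum over $i,j$, which are already the quantities appearing in the hypotheses, so the argument goes through with purely syntactic modifications, justifying the remark about ``minimal, syntactic changes.''
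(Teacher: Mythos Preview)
Your identification is correct: the paper does not prove this theorem at all. It is stated in the preliminaries as a black-box tool, attributed to \cite{han2024entrywise}, with the remark that the extension to vector-valued iterates and coordinate-indexed test functions ``require[s] only minimal, syntactic changes in the proof.'' So the paper's ``proof'' is simply the citation, and your opening sentence already captures this.

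Your sketch of the Lindeberg exchange argument is a faithful high-level outline of how results of this type are established, and is consistent with the approach in \cite{han2024entrywise}: entry-by-entry swapping, third-order Taylor expansion with the first two moments matched, and a sensitivity analysis of the iterates to single-entry perturbations that accounts for the $(\Lambda \log n)^{Ct^3} n^{-1/(Ct^3)}$ rate. Since the paper neither reproduces nor expands on that argument, there is nothing further to compare; your proposal goes beyond what the paper itself contains, and does so in the right direction.
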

Note that in \cref{thm:gfom-universality}, if the initialization $(\bm{u}^{(0)}, \bm{v}^{(0)})$ is random, then the conclusion of the theorem still holds even after conditioning on the value of $(\bm{u}^{(0)}, \bm{v}^{(0)})$.

\subsection{Mathematical Miscellany}
\label{sec:mathematical-miscellany}

\paragraph{Basic Definitions.}

We write vectors, matrices, and higher-order tensors in boldface. Given $m \in \N$, we write $[m] = \{1, 2, \ldots, m\}$. Given a vector $\bm{x} \in \R^n$ and $p \ge 1$, we write $\norm{\bm{x}}_p = (\sum_{i=1}^n \abs{x_i}^p)^{1/p}$ and $\norm{\bm{x}}_\infty = \max_{i \in [n]} \abs{x_i}$ (by H\"{o}lder's inequality, if $p < q$, then $\norm{\bm{x}}_q \le \norm{\bm{x}}_p \le n^{\frac{1}{p}-\frac{1}{q}}\norm{\bm{x}}_q$). Given a matrix $\bm{A} \in \R^{n \times m}$, we define $\norm{\bm{A}}_{r\to s} = \max_{\norm{\bm{x}}_r \le 1} \norm{\bm{A}\bm{x}}_s$ and $\norm{\bm{A}}_p = \norm{\bm{A}}_{p\to p}$. By default, we let $\norm{\bm{x}} = \norm{\bm{x}}_2$ and $\norm{\bm{A}} = \norm{\bm{A}}_2$. Given $\bm{x}, \bm{y} \in \R^n$, we write $\bm{x} \odot \bm{y} = (x_1y_1, \ldots, x_ny_n) \in \R^n$ for their entrywise (a.k.a. Hadamard) product. Given a set $\cS \subseteq \R^n$, a scalar $c \in \R$ and a vector $\bm{x} \in \R^n$, we write
\(
    \bm{x} + c\cdot \cS = \{\bm{x} + c\bm{y} : \bm{y} \in \cS\}
\)
for the rescaled and translated version of $\cS$.

\begin{definition}
    $f : \R^m \to \R$ is \emph{$L$-pseudo-Lipschitz of order $k$} if for all $\bm{a}, \bm{b} \in \R^m$,
    \[
        \abs{f(\bm{a}) - f(\bm{b})} \le L(1 + \norm{\bm{a}} + \norm{\bm{b}})^{k-1}\norm{\bm{a} - \bm{b}}.
    \]
    We say $f$ is \emph{pseudo-Lipschitz} if there exist $L \ge 0$ and $k \in \N$ for which $f$ is $L$-pseudo-Lipschitz of order $k$. We say $f$ is \emph{$L$-Lipschitz} if it is $L$-pseudo-Lipschitz of order $1$. Let
    \[
        \norm{f}_{\mathrm{Lip}} = \inf\{L \in \R : \text{$f$ is $L$-Lipschitz}\}.
    \]
\end{definition}

The canonical example of an order-$k$ pseudo-Lipschitz function is $f(\bm{x}) = \norm{\bm{x}}^k$. Observe that if $f$ is $L$-pseudo-Lipschitz of order $k$, there exists $L' > 0$ such that $\abs{f(\bm{x})} \le L'(1 + \norm{\bm{x}}^k)$. We will often use the fact that $(1 + \norm{\bm{a}} + \norm{\bm{b}})^{k-1} \le (3\max\{1, \norm{\bm{a}}, \norm{\bm{b}}\})^{k-1} \le O_k(1 + \norm{\bm{a}}^{k-1} + \norm{\bm{b}}^{k-1})$.

\paragraph{High-Dimensional Probability.} We collect some notation, definitions, and lemmas from probability that we will repeatedly use. For more detail, we refer the reader to \cite{vershynin2018probability}.

First, we say a sequence of events $E_n$ holds \emph{with high probability (w.h.p.)} if \(\Pr[E_n] \ge 1 - n^{-\Omega(1)}\), or merely \emph{asymptotically almost surely (a.a.s.)} if \(\Pr[E_n] \ge 1 - o(1).\) We say a sequence of random variables $x_1, x_2, \ldots \in \R$ \emph{converges in probability to the random variable $x_0 \in \R$}, written $x_n \toP x_0$, if for all $c > 0$, we have that a.a.s., $\abs{x_n - x_0} \le c$. Given $\bm{p} \in [0, 1]^n$, we will denote by $\mathrm{Bernoulli}(\bm{p})$ the distribution over $\{0, 1\}^n$ with independent, $\mathrm{Bernoulli}(p_i)$ coordinates. We will denote the standard Gaussian PDF and CDF by $\varphi$ and $\Phi$, respectively. Given a vector $\bm{\mu} \in \R^n$ and positive semidefinite matrix $\bm{\Sigma} \in \R^{n \times n}$, we will denote the multivariate Gaussian distribution with mean $\bm{\mu}$ and covariance $\bm{\Sigma}$ by $\mathcal{N}(\bm{\mu}, \bm{\Sigma})$. The key properties of the standard Gaussian distribution that we will use are its rotational invariance and Gaussian integration by parts, also known as Stein's lemma:

\begin{lemma}[Rotational Invariance]
    If $\bm{z} \sim \mathcal{N}(\bm{0}, \bm{I}_n)$ and $\bm{Q} \in \R^{n \times n}$ is orthogonal ($\bm{Q}^{-1} = \bm{Q}^\top$), then $\bm{Q}\bm{z} \sim \mathcal{N}(\bm{0}, \bm{I}_n)$, as well.
\end{lemma}

\begin{lemma}[Stein's Lemma]
    If $Z \sim \mathcal{N}(0, 1)$ and $g : \R \to \R$ is differentiable, then $\E[g(Z)Z] = E[g'(Z)]$ if both expectations exist.
\end{lemma}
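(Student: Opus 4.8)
The plan is to prove Stein's lemma by Gaussian integration by parts, built on the single identity $\varphi'(s) = -s\varphi(s)$ satisfied by the standard normal density $\varphi(s) = (2\pi)^{-1/2}e^{-s^2/2}$. To sidestep any delicate discussion of boundary terms at $\pm\infty$, I would integrate \emph{that} identity rather than integrating $g$ directly: since $\frac{d}{ds}(-\varphi(s)) = s\varphi(s)$ and $\varphi(s) \to 0$ as $\abs{s}\to\infty$, we obtain $\int_z^\infty s\varphi(s)\,ds = \varphi(z)$ and $\int_{-\infty}^z s\varphi(s)\,ds = -\varphi(z)$ for every $z \in \R$, of which I use the former on $[0,\infty)$ and the latter on $(-\infty,0]$. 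Substituting these into $\E[g'(Z)] = \int_{-\infty}^\infty g'(z)\varphi(z)\,dz$, splitting the range at $0$, and exchanging the order of integration converts the inner integrals of $g'$ into increments $g(z) - g(0)$, which yields $\E[g'(Z)] = \int_{-\infty}^\infty z\bigl(g(z) - g(0)\bigr)\varphi(z)\,dz = \E[Zg(Z)] - g(0)\,\E[Z] = \E[Zg(Z)]$, using $\E[Z] = 0$ at the end.

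Concretely, I would first write $g(z) = g(0) + g_1(z) - g_2(z)$, where $g_1(z) = \int_0^z (g'(t))^+\,dt$ and $g_2(z) = \int_0^z (g'(t))^-\,dt$; this decomposition is legitimate because the standing hypothesis $\E\abs{g'(Z)} < \infty$ forces $g'$ to be locally integrable (as $\varphi$ is bounded below on compact sets), so the fundamental theorem of calculus applies. The point of the decomposition is that each of the four pieces obtained after splitting at $0$ and separating the positive and negative parts of $g'$ has a nonnegative integrand of the form $(g'(t))^{\pm}\, s\varphi(s)$ over a region such as $\{0 \le t \le s\}$, so Tonelli's theorem applies with no further hypotheses. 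Carrying out the swap on, e.g., the positive half-line gives $\int_0^\infty (g'(t))^+\varphi(t)\,dt = \int_0^\infty s\varphi(s)\,g_1(s)\,ds$; the left-hand side is at most $\E\abs{g'(Z)} < \infty$, so every rearranged integral is finite. Summing the four reassembled pieces produces $\int_{-\infty}^\infty s\varphi(s)\bigl(g(s) - g(0)\bigr)\,ds$, and since this is absolutely convergent (bounded by $\E\abs{Zg(Z)} + \abs{g(0)}\,\E\abs{Z}$, both finite by assumption), it splits as $\E[Zg(Z)] - g(0)\,\E[Z] = \E[Zg(Z)]$.

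The only genuine subtlety is the bookkeeping needed to invoke Tonelli's theorem honestly: since $g'$ may change sign one cannot exchange the order of integration directly, so the monotone decomposition $g = g(0) + g_1 - g_2$ must come first, and the finiteness of the resulting pieces is precisely where the hypothesis $\E\abs{g'(Z)} < \infty$ enters, while the hypothesis $\E\abs{Zg(Z)} < \infty$ is what licenses splitting the final integral. Everything else is routine calculus. As an alternative one can argue directly via $\E[Zg(Z)] = -\int g(z)\varphi'(z)\,dz$ followed by integration by parts, but then one must separately verify that the boundary term $[g(z)\varphi(z)]_{-\infty}^{\infty}$ vanishes, which again reduces to the same integrability considerations; I would therefore present the Tonelli-based computation as the main line, since it makes the role of each hypothesis most transparent.
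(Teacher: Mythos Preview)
Your proof is correct and carefully argued. The paper, however, does not prove this lemma at all: it is stated without proof in the preliminaries section as a standard fact from probability, alongside other background results such as rotational invariance of the Gaussian. So there is no ``paper's own proof'' to compare against; your Tonelli-based argument stands on its own as a valid justification of a result the authors simply cite.
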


We also require the following notion of subgaussianity.

\begin{definition}[Subgaussian Norm]
\label{def:subgaussian-norm}
    The \emph{subgaussian norm} of a random variable $X \in \R$ is \[\norm{X}_{\psi_2} = \inf\left\{ c > 0  \;\Big|\; \E [e^{(X/c)^2}] \le 2\right\}.\]
    We say $X$ is \emph{subgaussian} if $\norm{X}_{\psi_2} < \infty$.
\end{definition}

One can check that if $X \sim \mathcal{N}(0, 1)$, then $\norm{X}_{\psi_2} = \Theta(1)$. Moreover, we have have $\norm{X}_{\psi_2} = O(1)$ if and only if $\Pr[\abs{X} > t] < 2\exp(-\Omega(t^2))$. Importantly, the subgaussian norm has the following relationship to projection onto a unit vector:

\begin{lemma}
\label{thm:subgaussian-inner-product}
    If the random vector $\bm{x} \in \R^d$ has independent, subgaussian components satisfying $\norm{x_1}_{\psi_2}, \ldots, \norm{x_d}_{\psi_2} \le \sigma$ and $\bm{u} \in \R^n$ is independent of $\bm{x}$ with $\norm{\bm{u}} = 1$, then \[\norm{\langle \bm{x}, \bm{u} \rangle}_{\psi_2} = O(\sigma).\]
\end{lemma}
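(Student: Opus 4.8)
The plan is to establish the standard fact that a linear combination of independent sub-gaussian coordinates, with coefficient vector of unit $\ell^2$ norm, is again sub-gaussian with a \emph{dimension-free} sub-gaussian norm — a Hoeffding-type estimate. First I would dispose of the randomness of $\bm u$: since $\bm u$ is independent of $\bm x$, conditioning on $\bm u$ and applying the tower property reduces the claim to showing, with a single absolute constant $K$, that $\E_{\bm x}[e^{(\langle \bm x, \bm v\rangle/(K\sigma))^2}] \le 2$ for every \emph{deterministic} $\bm v \in \R^d$ with $\norm{\bm v} = 1$; averaging this bound over the (almost surely unit) vector $\bm u$ then recovers the general statement, giving $\norm{\langle \bm x, \bm u\rangle}_{\psi_2} \le K\sigma$. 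One also works with \emph{centered} coordinates $x_i$, which is the case of interest in our applications (e.g.\ the isotropic Gaussian or Rademacher features used later) and the regime in which the statement is meaningful, since an uncentered mean vector can contribute a deterministic term of size up to $\Theta(\sigma\sqrt d)$.

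For a fixed unit vector $\bm v$, write $S = \langle \bm x, \bm v\rangle = \sum_{i=1}^d v_i x_i$. The key step is to pass through the moment generating function. The hypothesis $\norm{x_i}_{\psi_2} \le \sigma$ together with $\E x_i = 0$ implies, by the standard equivalence of sub-gaussian characterizations (the tail bound recalled just before this lemma, plus a routine integration), that $\E[e^{t x_i}] \le e^{C t^2 \sigma^2}$ for all $t \in \R$ and an absolute constant $C$. Independence of the $x_i$ then gives
\[
    \E[e^{t S}] = \prod_{i=1}^d \E[e^{t v_i x_i}] \le \prod_{i=1}^d e^{C t^2 v_i^2 \sigma^2} = e^{C t^2 \sigma^2 \norm{\bm v}^2} = e^{C t^2 \sigma^2},
\]
where the final equality uses $\norm{\bm v} = 1$. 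Thus $S$ obeys exactly the MGF bound of a single centered sub-gaussian variable of norm $O(\sigma)$. A Chernoff bound yields $\Pr[\abs{S} > r] \le 2 e^{-r^2/(4C\sigma^2)}$ for all $r > 0$, and the tail-bound characterization stated above (``$\norm{X}_{\psi_2} = O(1)$ iff $\Pr[\abs{X} > t] < 2\exp(-\Omega(t^2))$'', applied to $S/\sigma$) gives $\norm{S}_{\psi_2} = O(\sigma)$, completing the argument.

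I do not expect a genuine obstacle here; this is a routine lemma from high-dimensional probability (cf.\ \cite{vershynin2018probability}). The only points needing care are bookkeeping ones: (i) ensuring the constant hidden in $O(\sigma)$ is absolute and, crucially, does \emph{not} grow with $d$ — this is precisely the cancellation provided by independence, which upgrades the lossy triangle-inequality bound $\norm{S}_{\psi_2} \le \sum_{i=1}^d \abs{v_i}\,\norm{x_i}_{\psi_2} \le \sigma\norm{\bm v}_1 \le \sigma\sqrt d$ to the dimension-free estimate above; and (ii) cleanly cycling among the three equivalent descriptions of sub-gaussianity (Orlicz norm, MGF growth, sub-gaussian tails), each conversion costing only absolute constant factors and, for the MGF step, requiring the centering of the coordinates.
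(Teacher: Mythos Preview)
The paper does not actually prove this lemma: it is stated in the ``Mathematical Miscellany'' subsection as a standard fact from high-dimensional probability, with the reader referred to \cite{vershynin2018probability} for details. Your proposal is exactly the canonical Hoeffding-type argument one finds there (condition on $\bm u$, use independence to factor the MGF, sum the exponents via $\norm{\bm v}_2 = 1$, convert back to the $\psi_2$ norm), so it matches what the paper implicitly relies on. Your observation that centering of the $x_i$ is needed for the MGF step --- and that without it the deterministic shift $\langle \E\bm x, \bm u\rangle$ can be as large as $\Theta(\sigma\sqrt d)$ --- is correct and worth flagging; the paper's applications (Definition~\ref{def:subgaussian-design}) all have $\E x_{ij} = 0$, so the lemma is only ever invoked in the centered regime you treat.
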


\paragraph{Convex Analysis.}

We now briefly review some concepts from convex analysis that we will frequently use. For the most basic definitions of convexity, as well as more detail on the concepts discussed here, we refer the reader to \cite{DBLP:journals/ftml/Bubeck15, parikh2014proximal}.

\begin{definition}[Subdifferential]
\label{def:subdifferential}
    The \emph{subdifferential} of a function $f : \R^n \to \R$ at a point $\bm{x} \in \R^n$ is its set of \emph{subgradients} $\bm{g}$:
    \[
        \partial f(\bm{x}) = \{\bm{g} \in \R^n : \forall \bm{y} \in \R^n, f(\bm{y}) \ge f(\bm{x}) + \langle \bm{g}, \bm{y} - \bm{x} \rangle\}.
    \]
\end{definition}

Minimizing a (strongly convex) function amounts to finding a point $\bm{x}$ with a (nearly) vanishing subgradient $\bm{g}$:

\begin{lemma}[Convex Minimization]
\label{thm:convex-minimization}
    For any $f : \R^n \to \R$, the point $\bm{x} \in \R^n$ is a global minimizer of $f$ if and only if $\bm{0} \in \partial f(\bm{x})$. Relatedly, if $f$ is $m$-strongly convex and $\bm{g} \in \partial f(\bm{x})$, then
    \[
        f(\bm{x}) - \inf_{\bm{y} \in \R^n} f(\bm{y}) \le \frac{1}{2m}\norm{\bm{g}}^2.
    \]
\end{lemma}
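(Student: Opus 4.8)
The plan is to treat the two assertions separately, since each follows almost immediately from \cref{def:subdifferential}, and no machinery beyond the definition of the subdifferential is required.

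For the characterization of global minimizers, I would simply unwind the definition. The condition $\bm{0} \in \partial f(\bm{x})$ says precisely that for every $\bm{y} \in \R^n$ we have $f(\bm{y}) \ge f(\bm{x}) + \langle \bm{0}, \bm{y} - \bm{x}\rangle = f(\bm{x})$, which is exactly the statement that $\bm{x}$ is a global minimizer of $f$. Since this equivalence holds at the level of the defining inequality itself, both directions are established simultaneously and nothing further is needed.

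For the quantitative bound, I would invoke the standard consequence of $m$-strong convexity (as in \cite{DBLP:journals/ftml/Bubeck15}): whenever $\bm{g} \in \partial f(\bm{x})$, one has, for all $\bm{y} \in \R^n$,
\[
    f(\bm{y}) \ge f(\bm{x}) + \langle \bm{g}, \bm{y} - \bm{x}\rangle + \frac{m}{2}\norm{\bm{y} - \bm{x}}^2,
\]
which follows by applying the subgradient inequality of \cref{def:subdifferential} to the convex function $\bm{y} \mapsto f(\bm{y}) - \frac{m}{2}\norm{\bm{y}}^2$ (for which $\bm{g} - m\bm{x}$ is a subgradient at $\bm{x}$) and rearranging. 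The right-hand side is a strictly convex quadratic in $\bm{y}$; writing $\bm{z} = \bm{y} - \bm{x}$ and minimizing $\langle \bm{g}, \bm{z}\rangle + \frac{m}{2}\norm{\bm{z}}^2$ over $\bm{z} \in \R^n$ gives the minimizer $\bm{z}^\star = -\bm{g}/m$ and minimum value $-\frac{1}{2m}\norm{\bm{g}}^2$. Hence $f(\bm{y}) \ge f(\bm{x}) - \frac{1}{2m}\norm{\bm{g}}^2$ for every $\bm{y}$, and taking the infimum over $\bm{y}$ rearranges to the claimed inequality.

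I do not expect a genuine obstacle here; the statement is a textbook fact. The only point deserving a sentence of care is the passage from whichever definition of strong convexity one adopts (convexity of $f - \frac{m}{2}\norm{\cdot}^2$, versus taking the quadratic lower bound itself as the definition) to the displayed inequality above, but this is a one-line subgradient manipulation in either case.
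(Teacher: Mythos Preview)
Your proposal is correct; both parts are handled by the standard one-line arguments from the definition of the subdifferential and of strong convexity. The paper itself does not supply a proof of this lemma---it is stated without proof in the preliminaries as a standard fact from convex analysis (with \cite{DBLP:journals/ftml/Bubeck15} cited for background)---so there is nothing to compare against, and your write-up would serve as a suitable justification.
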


We shall repeatedly use the minimax theorem, a cornerstone result of game theory.
\begin{lemma}[Sion's Minimax Theorem, \cite{sion1958minimax}]
    Suppose $\cS_{\bm{x}} \subseteq \R^d$ and $\cS_{\bm{y}} \subseteq \R^n$ are convex and the function $f : \R^d \times \R^n \to \R$ satisfies the following two properties:
    \begin{itemize}
        \item The map $\bm{x} \mapsto f(\bm{x}, \bm{y})$ is a closed, proper, convex function for all $\bm{y} \in \cS_{\bm{y}}$.
        \item The map $\bm{y} \mapsto -f(\bm{x}, \bm{y})$ is a closed, proper, convex function for all $\bm{x} \in \cS_{\bm{x}}$.
    \end{itemize}
    If either $\cS_{\bm{x}}$ or $\cS_{\bm{y}}$ is compact, then \[\inf_{\bm{x} \in \cS_{\bm{x}}} \sup_{\bm{y} \in \cS_{\bm{y}}}\, f(\bm{x}, \bm{y}) = \sup_{\bm{y} \in \cS_{\bm{y}}}\inf_{\bm{x} \in \cS_{\bm{x}}}\, f(\bm{x}, \bm{y})\]
\end{lemma}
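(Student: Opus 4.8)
The plan is to isolate the one nontrivial inclusion and then reduce to the cleanest possible setting. The inequality $\sup_{\bm{y} \in \cS_{\bm{y}}} \inf_{\bm{x} \in \cS_{\bm{x}}} f(\bm{x},\bm{y}) \le \inf_{\bm{x} \in \cS_{\bm{x}}} \sup_{\bm{y} \in \cS_{\bm{y}}} f(\bm{x},\bm{y})$ is free: for any $\bm{x}_0, \bm{y}_0$ one has $\inf_{\bm{x}} f(\bm{x},\bm{y}_0) \le f(\bm{x}_0,\bm{y}_0) \le \sup_{\bm{y}} f(\bm{x}_0,\bm{y})$, and one takes $\sup$ over $\bm{y}_0$ and then $\inf$ over $\bm{x}_0$. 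So the entire content is the reverse inequality, and by passing from $(f,\cS_{\bm{x}},\cS_{\bm{y}})$ to $(-f,\cS_{\bm{y}},\cS_{\bm{x}})$ if necessary, I may assume that $\cS_{\bm{y}}$ is the compact one.

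Next I would reduce to the case where \emph{both} sets are compact. Put $\phi(\bm{x}) = \sup_{\bm{y} \in \cS_{\bm{y}}} f(\bm{x},\bm{y})$, which is finite (compact $\cS_{\bm{y}}$, continuous slices), convex, and lower semicontinuous. For large $R$ the truncated set $\cS_{\bm{x}}^R := \cS_{\bm{x}} \cap \overline{B}(\bm{0},R)$ is convex and compact, and $g_R(\bm{y}) := \inf_{\bm{x} \in \cS_{\bm{x}}^R} f(\bm{x},\bm{y})$ is concave and upper semicontinuous on $\cS_{\bm{y}}$. As $R \uparrow \infty$ one has $\inf_{\cS_{\bm{x}}^R} \phi \downarrow \inf_{\cS_{\bm{x}}} \phi$ and $g_R \downarrow g := \inf_{\bm{x} \in \cS_{\bm{x}}} f(\cdot,\bm{y})$ pointwise; a short compactness argument (take near-maximizers $\bm{y}_R$ of $g_R$, pass to a convergent subsequence, invoke upper semicontinuity) shows $\max_{\cS_{\bm{y}}} g_R \downarrow \max_{\cS_{\bm{y}}} g = \sup_{\bm{y}} \inf_{\bm{x}} f$. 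Hence, if the theorem holds for $(\cS_{\bm{x}}^R,\cS_{\bm{y}})$ for every $R$, letting $R \to \infty$ gives it for $(\cS_{\bm{x}},\cS_{\bm{y}})$.

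It therefore suffices to treat compact convex $\cS_{\bm{x}}$ and $\cS_{\bm{y}}$, and here I would produce an actual saddle point. A finite-valued convex–concave function on $\R^d \times \R^n$ is jointly continuous (a standard fact), so the best-response correspondence $\Phi(\bm{x},\bm{y}) = \big(\argmin_{\bm{x}' \in \cS_{\bm{x}}} f(\bm{x}',\bm{y})\big) \times \big(\argmax_{\bm{y}' \in \cS_{\bm{y}}} f(\bm{x},\bm{y}')\big)$ has nonempty, compact, convex values (using compactness, continuity, and convexity of $f(\cdot,\bm{y})$ / concavity of $f(\bm{x},\cdot)$) and, by Berge's maximum theorem, a closed graph. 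Kakutani's fixed point theorem then yields $(\bm{x}^\star,\bm{y}^\star)$ with $f(\bm{x}^\star,\bm{y}) \le f(\bm{x}^\star,\bm{y}^\star) \le f(\bm{x},\bm{y}^\star)$ for all $\bm{x},\bm{y}$, and the existence of such a saddle point immediately forces $\inf_{\bm{x}} \sup_{\bm{y}} f = \sup_{\bm{y}} \inf_{\bm{x}} f = f(\bm{x}^\star,\bm{y}^\star)$.

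The easy direction and the truncation step are routine, so I expect the compact case — equivalently, producing the saddle point — to be the crux. If one prefers to avoid fixed-point theory, the alternative is Sion's original argument: assume toward a contradiction that some constant $c$ satisfies $\sup_{\bm{y}} \inf_{\bm{x}} f < c < \inf_{\bm{x}} \sup_{\bm{y}} f$, so that the sets $\{\bm{x} : f(\bm{x},\bm{y}) > c\}$, $\bm{y} \in \cS_{\bm{y}}$, cover $\cS_{\bm{x}}$ while for each $\bm{y}$ some $\bm{x}$ has $f(\bm{x},\bm{y}) < c$; the heart is a two-point lemma stating that if $f(\cdot,\bm{y}_1) > c$ and $f(\cdot,\bm{y}_2) > c$ hold uniformly on $\cS_{\bm{x}}$, then $f(\cdot,\bm{y}_t) > c$ holds uniformly for some $\bm{y}_t \in [\bm{y}_1,\bm{y}_2]$, proved by a level-set/connectedness argument on that segment using convexity in $\bm{x}$, after which the one-dimensional Helly property and the finite-intersection property (this is where compactness of $\cS_{\bm{y}}$ enters) patch these segments into a global contradiction. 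Verifying the two-point lemma is the delicate part of that route; either way, essentially all of the work sits in this core step.
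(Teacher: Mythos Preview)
The paper does not prove this lemma; it is stated in \S\ref{sec:mathematical-miscellany} as a known result with a citation to Sion's original paper, so there is no ``paper's own proof'' to compare against. Your proposal is a reasonable sketch of two standard proofs of the minimax theorem (the Kakutani fixed-point route and Sion's level-set/connectedness argument), and either would suffice to establish the statement, though neither is needed for the purposes of this paper.
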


We next introduce a bit of notation that will simplify the statements of our results.

\begin{definition}[Moreau Envelope, Proximal Operator]
\label{def:proximal}
    Given $\tau \ge 0$, the \emph{Moreau envelope} of a closed, proper, convex function $f$ is
    \[
        e_f(\bm{x}; \tau) = \begin{cases}
            \min_{\bm{y} \in \R^n} \left\{\frac{1}{2\tau}\norm{\bm{y} - \bm{x}}^2 + f(\bm{y})\right\} &\text{if } \tau > 0,\\
            f(\bm{x}) & \text{if }\tau = 0.
        \end{cases}
    \]
    Its \emph{proximal operator} is the point $\bm{y}$ that minimizes $e_f(\bm{x}; \tau)$: \[\prox_{\tau f}(\bm{x}) = \argmin_{\bm{y} \in \R^n} \left\{\frac{1}{2}\norm{\bm{y} - \bm{x}}^2 + \tau f(\bm{y})\right\}.\]
\end{definition}
Here, the subscript $\tau f$ refers to the product $\tau \cdot f$, i.e. the function $f$ scaled by $\tau$. In the case that $\tau = 1$, we will simply write $\prox_f(\bm{x})$. When $\tau = 0$, so that $\tau f$ vanishes everywhere, the function $\prox_{\tau f}$ coincides with the identity function, i.e. $\prox_0(\bm{x}) = \bm{x}$.

Some remarks are in order. The proximal operator can be interpreted as a generalization of the projection operator. Indeed, suppose there is a nonempty, compact, convex set $C \subseteq \R^d$ for which \[f(\bm{x}) = \begin{cases} 0 &\text{if }\bm{x} \in C \\ +\infty &\text{if }\bm{x} \notin C.\end{cases}\]
Then, $f$ is a closed, proper, convex function and \[\prox_f(\bm{x}) = \argmin_{\bm{y} \in C}\, \norm{\bm{y} - \bm{x}}.\] Alternatively, the proximal operator can be interpreted as a \emph{backward} gradient step with step size $\tau$. Indeed, if $f$ is differentiable and $\bm{y} = \prox_{\tau f}(\bm{x})$, then \[\bm{y} = \bm{x} - \tau \nabla f(\bm{y}).\] For general $f$, we have that $\bm{y}$ is the unique point such that \[\bm{y} \in \bm{x} - \tau \partial f(\bm{y}),\] where $\partial f$ denotes the subgradient (see Definition~\ref{def:subdifferential}). A priori, it is not apparent that this equation has a unique solution for $\bm{y}$, but this follows from the definition of $\prox_{f}$ in Definition~\ref{def:proximal} and the strong convexity of $\frac{1}{2}\norm{\bm{y} - \bm{x}}^2$. For comparison, the standard equation for a \emph{forward} gradient step is \[\bm{y} = \bm{x} - \tau \nabla f(\bm{x}).\]

We remark that the proximal operator has a useful relationship to the convex conjugate operation, and that its derivatives can be expressed concisely:

\begin{definition}[Convex Conjugate]
    The \emph{convex conjugate} of an extended real-valued  function $f : \R^n \to \R \cup \{\pm \infty\}$ is the convex function \[f^\star(\bm{y}) = \sup_{\bm{x} \in \R^n} \langle \bm{x}, \bm{y}\rangle - f(\bm{x}).\]
    If $n = 1$ and $f$ is convex, then $f^\star$ is also called the \emph{Legendre transform} of $f$, and $(f^\star)' = (f')^{-1}$.
\end{definition}

\begin{lemma}[Biconjugation]
    For a closed, proper, convex function $f$, we have $f^{\star\star} = f$.
\end{lemma}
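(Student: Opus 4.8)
The statement is the Fenchel--Moreau biconjugation theorem, and the plan is to establish the two inequalities $f^{\star\star} \le f$ and $f \le f^{\star\star}$ separately. The first direction is elementary and uses nothing about $f$ beyond the definition of the conjugate: for any $\bm{x}, \bm{y} \in \R^n$ we have $f^\star(\bm{y}) \ge \langle \bm{y}, \bm{x}\rangle - f(\bm{x})$, i.e. $\langle \bm{y}, \bm{x}\rangle - f^\star(\bm{y}) \le f(\bm{x})$ (the Fenchel--Young inequality, valid even when $f(\bm{x}) = +\infty$). Taking the supremum over $\bm{y}$ yields $f^{\star\star}(\bm{x}) \le f(\bm{x})$ for every $\bm{x}$.

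For the reverse inequality I would pass to the geometric picture via the following dictionary: an affine function $\bm{x} \mapsto \langle \bm{s}, \bm{x}\rangle - c$ lies below $f$ everywhere if and only if $f^\star(\bm{s}) \le c$. Consequently $f^{\star\star}(\bm{x}) = \sup_{\bm{s}} \big( \langle \bm{s}, \bm{x}\rangle - f^\star(\bm{s}) \big)$ is exactly the pointwise supremum, over all affine minorants $\ell$ of $f$, of $\ell(\bm{x})$. Hence it suffices to show that $f$ coincides with the supremum of its affine minorants; that is, given $\bm{x}_0$ and any $t < f(\bm{x}_0)$, I must exhibit an affine minorant $\ell \le f$ with $\ell(\bm{x}_0) > t$.

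The workhorse is a separating-hyperplane argument on the epigraph $E = \{(\bm{x}, r) \in \R^n \times \R : r \ge f(\bm{x})\}$, which is convex since $f$ is convex, closed since $f$ is closed, and nonempty since $f$ is proper. Because $t < f(\bm{x}_0)$ means $(\bm{x}_0, t) \notin E$, strict separation of a point from a nonempty closed convex set furnishes $(\bm{a}, b) \in \R^n \times \R$ and $\gamma \in \R$ with $\langle \bm{a}, \bm{x}\rangle + b r \ge \gamma$ on $E$ and $\langle \bm{a}, \bm{x}_0\rangle + b t < \gamma$. Sending $r \to \infty$ in the first inequality forces $b \ge 0$. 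If $b > 0$, normalize $b = 1$ and take $r = f(\bm{x})$ to get $f(\bm{x}) \ge \gamma - \langle \bm{a}, \bm{x}\rangle$ on $\mathrm{dom}\,f$ (and trivially off it), so $\ell(\bm{x}) = \gamma - \langle \bm{a}, \bm{x}\rangle$ is an affine minorant with $\ell(\bm{x}_0) = \gamma - \langle \bm{a}, \bm{x}_0\rangle > t$, as required.

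The only genuine obstacle is the degenerate \emph{vertical} case $b = 0$, in which the separator only says $\langle \bm{a}, \bm{x}\rangle \ge \gamma$ on $\mathrm{dom}\,f$ and $\langle \bm{a}, \bm{x}_0\rangle < \gamma$ (so necessarily $\bm{x}_0 \notin \mathrm{dom}\,f$, i.e. $f(\bm{x}_0) = +\infty$) and produces no affine minorant by itself. To resolve it, I would first observe that $f$ admits at least one affine minorant $\ell_0$ --- obtained by running the $b > 0$ branch at any point where $f$ is finite, where verticality is impossible --- and then, for $\mu \ge 0$, consider $\ell_\mu(\bm{x}) = \ell_0(\bm{x}) - \mu\big(\langle \bm{a}, \bm{x}\rangle - \gamma\big)$: on $\mathrm{dom}\,f$ the subtracted quantity is nonnegative, so $\ell_\mu \le \ell_0 \le f$ there and hence everywhere, while $\ell_\mu(\bm{x}_0) = \ell_0(\bm{x}_0) + \mu\big(\gamma - \langle \bm{a}, \bm{x}_0\rangle\big) \to +\infty$ as $\mu \to \infty$. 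Choosing $\mu$ large gives $\ell_\mu(\bm{x}_0) > t$, completing the argument; combined with the easy direction this yields $f^{\star\star} = f$. (The same separation argument incidentally shows $f^\star$ is proper, closed, and convex, so $f^{\star\star}$ is well-defined throughout.)
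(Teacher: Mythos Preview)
Your proof is correct and follows the standard Fenchel--Moreau argument via separation of the epigraph, carefully handling the vertical-hyperplane case. The paper, however, does not prove this lemma at all: it is simply stated in the preliminaries (Section~\ref{sec:mathematical-miscellany}) as a standard fact from convex analysis, alongside other background results that are quoted without proof. So there is nothing to compare against; you have supplied a complete argument where the paper intentionally omits one.
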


\begin{lemma}[Proximal Operator of Conjugate]
\label{thm:prox-conjugate} $\prox_{f^\star}(\bm{x}) = \bm{x} - \prox_{f}(\bm{x})$.
\end{lemma}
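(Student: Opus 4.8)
The statement to prove, $\prox_{f^\star}(\bm{x}) = \bm{x} - \prox_f(\bm{x})$, is the classical \emph{Moreau decomposition}. The plan is to prove it through the first-order (subgradient) characterization of the proximal operator rather than by direct minimization. Throughout, $f$ is closed, proper, and convex, and hence so is $f^\star$, with $f^{\star\star} = f$ by biconjugation.

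First I would recall, from the remarks following Definition~\ref{def:proximal} specialized to $\tau = 1$, that $\bm{y} = \prox_f(\bm{x})$ is the \emph{unique} point satisfying $\bm{x} - \bm{y} \in \partial f(\bm{y})$, and similarly that $\prox_{f^\star}(\bm{x})$ is the unique point $\bm{z}$ with $\bm{x} - \bm{z} \in \partial f^\star(\bm{z})$. So it suffices to set $\bm{y} = \prox_f(\bm{x})$ and $\bm{z} = \bm{x} - \bm{y}$, and to verify that $\bm{x} - \bm{z} \in \partial f^\star(\bm{z})$; since $\bm{x} - \bm{z} = \bm{y}$, this amounts to deducing $\bm{y} \in \partial f^\star(\bm{x} - \bm{y})$ from the hypothesis $\bm{x} - \bm{y} \in \partial f(\bm{y})$.

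The bridge is the conjugate--subgradient correspondence: for closed, proper, convex $f$, one has $\bm{p} \in \partial f(\bm{q}) \iff \bm{q} \in \partial f^\star(\bm{p})$. I would prove this from the Fenchel--Young equality: unwinding Definition~\ref{def:subdifferential} together with the definition of $f^\star$, the inclusion $\bm{p} \in \partial f(\bm{q})$ holds iff $f(\bm{q}) + f^\star(\bm{p}) = \langle \bm{p}, \bm{q}\rangle$, i.e. iff the supremum defining $f^\star(\bm{p})$ is attained at $\bm{q}$. By biconjugation this equality is symmetric under the swap $(f, \bm{q}) \leftrightarrow (f^\star, \bm{p})$, so it is also equivalent to $\bm{q} \in \partial f^\star(\bm{p})$. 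Applying this with $\bm{p} = \bm{x} - \bm{y}$ and $\bm{q} = \bm{y}$ converts $\bm{x} - \bm{y} \in \partial f(\bm{y})$ into $\bm{y} \in \partial f^\star(\bm{x} - \bm{y})$, which is exactly the inclusion that the uniqueness characterization of $\prox_{f^\star}$ requires; hence $\prox_{f^\star}(\bm{x}) = \bm{z} = \bm{x} - \prox_f(\bm{x})$.

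The only step with any real content is the conjugate--subgradient correspondence, resting on Fenchel--Young and biconjugation; the remainder is bookkeeping with the two uniqueness characterizations. If one prefers to avoid that correspondence, an alternative is to expand $f^\star(\bm{z}) = \sup_{\bm{w}}(\langle \bm{z}, \bm{w}\rangle - f(\bm{w}))$ inside $\prox_{f^\star}(\bm{x}) = \argmin_{\bm{z}}\{\frac{1}{2}\norm{\bm{z} - \bm{x}}^2 + f^\star(\bm{z})\}$, exchange the inner $\sup_{\bm{w}}$ with the outer $\min_{\bm{z}}$ via Sion's minimax theorem (after a routine reduction to compact domains, justified by coercivity of the objective), solve the resulting unconstrained inner minimization over $\bm{z}$ in closed form to get $\bm{z} = \bm{x} - \bm{w}$, and recognize that the remaining optimization over $\bm{w}$ is precisely $\prox_f(\bm{x})$; reading off the optimal $\bm{z}$ gives the identity, and as a byproduct $e_{f^\star}(\bm{x}; 1) + e_f(\bm{x}; 1) = \frac{1}{2}\norm{\bm{x}}^2$. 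I expect the subgradient argument to be the cleaner presentation, since it is shorter and sidesteps the mild subtlety of extracting the argmin after a minimax exchange.
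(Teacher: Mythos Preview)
Your proof is correct: it is the standard subgradient/Fenchel--Young proof of the Moreau decomposition, and every step is sound. Note, however, that the paper does not actually supply its own proof of this lemma; it is listed in the preliminaries as a known fact (with a general reference to \cite{parikh2014proximal}), so there is no paper proof to compare against. Your argument is exactly the kind of short justification one would expect to see if the lemma were proved rather than cited.
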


\begin{lemma}[Moreau Envelope Derivatives, \cite{salehi2019impact}]
    \[
        \nabla_{\bm{x}} e_f(\bm{x}; \tau) = \frac{\bm{x} - \prox_{\tau f}(\bm{x})}{\tau}, \qquad \nabla_\tau e_f(\bm{x}; \tau) = - \frac{1}{2}\Norm{\frac{\bm{x} - \prox_{\tau f}(\bm{x})}{\tau}}^2
    \]
\end{lemma}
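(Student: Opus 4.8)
The plan is to derive both identities from the envelope theorem (Danskin's theorem), with the strong convexity of the inner minimization supplying the stability needed to differentiate through the $\min$. Fix $\tau > 0$ and write $\bm y^\star(\bm x, \tau) = \prox_{\tau f}(\bm x)$, which by \cref{def:proximal} is the unique minimizer of $\phi_{\bm x, \tau}(\bm y) = \tfrac{1}{2\tau}\norm{\bm y - \bm x}^2 + f(\bm y)$ (uniqueness because $\phi_{\bm x,\tau}$ is $\tfrac1\tau$-strongly convex), so that $e_f(\bm x; \tau) = \phi_{\bm x, \tau}(\bm y^\star(\bm x, \tau))$ and $\bm y^\star$ is a genuine function of $(\bm x, \tau)$.

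The one step requiring real care is showing that $(\bm x, \tau) \mapsto \bm y^\star(\bm x, \tau)$ is continuous at every point with $\tau > 0$. On a compact neighborhood of a base point $(\bm x_0, \tau_0)$ on which $\tau$ stays bounded away from $0$, an affine minorant of $f$ makes the $\phi_{\bm x, \tau}$ uniformly coercive, so all the minimizers $\bm y^\star(\bm x, \tau)$ over that neighborhood lie in one fixed bounded set $B$. On $B$ the difference $\phi_{\bm x, \tau}(\bm y) - \phi_{\bm x_0, \tau_0}(\bm y) = \tfrac{1}{2\tau}\norm{\bm y - \bm x}^2 - \tfrac{1}{2\tau_0}\norm{\bm y - \bm x_0}^2$ does not involve $f$ and is Lipschitz in $(\bm x, \tau)$ uniformly over $\bm y \in B$; feeding this, together with the fact that $\bm y^\star(\bm x, \tau)$ is optimal for $\phi_{\bm x, \tau}$, into the strong-convexity inequality $\phi_{\bm x_0, \tau_0}(\bm y) - \phi_{\bm x_0, \tau_0}(\bm y^\star(\bm x_0, \tau_0)) \ge \tfrac{1}{2\tau_0}\norm{\bm y - \bm y^\star(\bm x_0, \tau_0)}^2$ gives $\norm{\bm y^\star(\bm x, \tau) - \bm y^\star(\bm x_0, \tau_0)}^2 = O(\norm{\bm x - \bm x_0} + \abs{\tau - \tau_0})$, which is far more than continuity.

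Given continuity, both formulas come out of a two-sided sandwich that compares $e_f$ at a perturbed argument against $\phi$ evaluated at the two relevant proximal points. For the $\bm x$-gradient, plugging $\bm y^\star(\bm x, \tau)$ into $\phi_{\bm x + \bm h, \tau}$ and expanding the square gives $e_f(\bm x + \bm h; \tau) - e_f(\bm x; \tau) \le \tfrac1\tau\langle \bm x - \bm y^\star(\bm x, \tau), \bm h\rangle + \tfrac{1}{2\tau}\norm{\bm h}^2$, while plugging $\bm y^\star(\bm x + \bm h, \tau)$ into $\phi_{\bm x, \tau}$ gives the reverse inequality with $\bm y^\star(\bm x,\tau)$ replaced by $\bm y^\star(\bm x + \bm h, \tau)$; continuity collapses the gap to $o(\norm{\bm h})$, yielding $\nabla_{\bm x} e_f(\bm x; \tau) = \tfrac1\tau(\bm x - \prox_{\tau f}(\bm x))$. (One can skip continuity here: $e_f(\cdot; \tau)$ is convex as an infimal convolution of convex functions, and the upper bound alone pins down its unique subgradient.) For the $\tau$-derivative, the same swap of proximal points sandwiches $e_f(\bm x; \tau') - e_f(\bm x; \tau)$ between $\tfrac12\norm{\bm y^\star(\bm x,\tau) - \bm x}^2\bigl(\tfrac1{\tau'} - \tfrac1\tau\bigr)$ and $\tfrac12\norm{\bm y^\star(\bm x,\tau') - \bm x}^2\bigl(\tfrac1{\tau'} - \tfrac1\tau\bigr)$; since $\tfrac1{\tau'} - \tfrac1\tau = -\tfrac{\tau'-\tau}{\tau^2} + O((\tau'-\tau)^2)$ and $\bm y^\star(\bm x, \cdot)$ is continuous, both ends equal $-\tfrac{\tau'-\tau}{2\tau^2}\norm{\bm y^\star(\bm x,\tau) - \bm x}^2 + o(\tau'-\tau)$, so $\nabla_\tau e_f(\bm x;\tau) = -\tfrac{1}{2\tau^2}\norm{\bm x - \prox_{\tau f}(\bm x)}^2 = -\tfrac12\Norm{\tfrac{\bm x - \prox_{\tau f}(\bm x)}{\tau}}^2$.

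The main obstacle is thus the argmin-stability claim of the second paragraph; everything else is a routine envelope-theorem computation. I would also note that the second identity is precisely the Hamilton--Jacobi relation $\partial_\tau e_f + \tfrac12\norm{\nabla_{\bm x} e_f}^2 = 0$ satisfied by the Moreau envelope as a Hopf--Lax solution, so it can be read off from the first identity once that PDE is in hand; and, since the lemma is due to \cite{salehi2019impact}, one could alternatively just cite it.
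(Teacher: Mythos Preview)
Your proof sketch is correct; the envelope-theorem/sandwich argument you give is the standard way to establish these identities, and your continuity-of-the-argmin step via strong convexity is the right ingredient to make it rigorous. The paper itself does not prove this lemma at all---it is stated in the preliminaries with a citation to \cite{salehi2019impact} and no proof, so your final remark (``one could alternatively just cite it'') is in fact exactly what the paper does.
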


\section{Utility of Objective Perturbation for Robust Linear Regression}
\label{sec:objective-perturbation-linear}

\begin{algorithm}[t]
    \caption{Objective Perturbation}
    \label{alg:objective-perturbation}
    \begin{algorithmic}[1]
        \State \textbf{input:} design matrix $\bm{X} = [\bm{x}_1\,\cdots\,\bm{x}_n]^\top \in \R^{n \times d}$ with $\norm{\bm{x}_i} \le R$, response vector $\bm{y} \in \R^n$, loss function $\ell : \R^d \times \R^{d + 1} \to \R$, regularization strength $\lambda > 0$, perturbation strength $\nu > 0$.

        \vspace{0.5cm}
    
        \State Randomly sample the coefficients of the linear perturbation term: \[\bm{\xi} \sim \mathcal{N}(\bm{0}, \bm{I}_d)\]
        
        \State Optimize the regularized and perturbed objective function: \[\widehat{\bm{\beta}} = \argmin_{\bm{\beta} \in \R^d}\, \sum_{i=1}^n \ell(\bm{\beta}; (\bm{x}_i, y_i)) + \frac{\lambda}{2}\norm{\bm{\beta}}^2 + \nu\langle\bm{\xi}, \bm{\beta}\rangle\]
        
        \State \Return $\widehat{\bm{\beta}}$
    \end{algorithmic}
\end{algorithm}

In this section, we precisely characterize the privacy-utility tradeoff for the well-known objective perturbation algorithm \cite{chaudhuri2011differentially, kifer2012private, redberg2023improving} when applied to the problem of robust linear regression. The version of the algorithm that we consider is described in \cref{alg:objective-perturbation}. Before stating the main result of this section (\cref{thm:main-huber-objective-perturbation}), we briefly review the setup for robust linear regression, comment on the assumptions that will be necessary for our utility analysis, and give insight into the meaning of the theorem's conclusion, which is significantly stronger than in the simplified statement \cref{thm:main-objective-huber-informal} that we gave in the introduction.

\paragraph{Robust Linear Regression Model.} We are given a dataset $(\bm{X}, \bm{y})$ of $n$ samples. In the worst case, $(\bm{x}_i, y_i)$ are arbitrary points in $\R^{d + 1}$ with $\norm{\bm{x}_i} \le R$, but in the average case, there exists a ground-truth coefficient vector $\bm{\beta}^\star \in \R^d$ and regression error vector $\bm{\varepsilon}^\star \in \R^n$ independent of $\bm{X}$ such that
\[
    \bm{y} = \bm{X}\bm{\beta}^\star + \bm{\varepsilon}^\star.
\]

To dampen the effect of outliers in the worst-case setting, we do not seek the least squares estimate, but rather a vector $\bm{\hb}$ that minimizes the average \emph{Huber loss} (robust linear loss) on the dataset:

\begin{definition}
    The $L$-Lipschitz \emph{Huber loss} is \[H_L(r) = \begin{cases} \frac{1}{2}r^2 &\text{if }\abs{r} \le L \\ L\abs{r}-\frac{1}{2}L^2 &\text{if } \abs{r} \ge L.\end{cases}\]
    We denote the \emph{$L$-truncation} of $r \in \R$ by \[[r]_L = H'_L(r) = \begin{cases}
        -L &\text{if } r \le -L, \\
        r &\text{if }r \in [-L, +L], \\
        +L &\text{if }r \ge L.
    \end{cases}.\]
    For $\bm{r} \in \R^n$, we define $H_L(\bm{r}) = \sum_{i=1}^n H_L(r) \in \R$ and $[\bm{r}]_L = \nabla H_L(\bm{r}) = ([r_1]_L, \ldots, [r_n]_L) \in \R^n$.
\end{definition}

\begin{figure}
    \centering
    \includegraphics{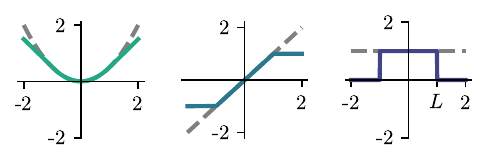}
    \caption{The functions $H_L$, $H_L'$, and $H_L''$ with $L = 1$.}
    \label{fig:huber-loss}
\end{figure}

\paragraph{Assumptions.} The main assumption for our utility analysis will be that the matrix $\bm{X}$, which we call the \emph{design matrix} or \emph{feature matrix}, is subgaussian in the sense defined below.

\begin{definition}[Subgaussian Design]
\label{def:subgaussian-design}
    We say a random matrix $\bm{X}\in B_R(\bm{0})^n$ with $d = \Theta(n)$ follows a \emph{subgaussian design} if its entries $x_{ij}$ are independent and satisfy the following conditions.
    \begin{enumerate}[(a)]
        \item For all $i \in [n]$ and $j \in [d]$, we have $\E[x_{ij}] = 0$ and $\E(x_{ij})^2 = 1/d$.
        \item $\max_{i \in [n]} \max_{j \in [d]}\, \norm{x_{ij}}_{\psi_2} = O(1/\sqrt{n}).$
    \end{enumerate}
\end{definition}

Note that any distribution supported on $\left[-\frac{R}{\sqrt{d}}, +\frac{R}{\sqrt{d}}\right]^{n \times d} \subseteq B_R(\bm{0})^n$ with independent, variance $1/d$ entries automatically satisfies the conditions of Definition~\ref{def:subgaussian-design}. For simplicity, we have required that the variance per coordinate is precisely $1/d$, but this is merely a scaling convention, and our results extend in a straightforward manner to other isotropic distributions. In fact, given a recent line of work analyzing estimators in the proportional regime for design matrices with nonisotropic covariances or otherwise dependent entries \cite{zhao2022arbitrary, fan2022amp, bao2023leaveoneoutapproachapproximatemessage, gerbelot2024meanfield}, we suspect that the result of \cref{thm:main-huber-objective-perturbation} can be further extended to nonisotropic $\bm{X}$.

More interestingly, although we do impose some mild conditions on the ground-truth $\bm{\beta}^\star$ and the regression errors $\bm{\varepsilon}^\star$, we do \emph{not} require them to be random! This is important because several previous works indeed assume that $\bm{\beta}^\star$ and $\bm{\eps}^\star$ are random with i.i.d. coordinates (e.g. \cite{sur2019modern, salehi2019impact}, or Theorem 3.12 of \cite{han2023universality}). We will clarify the conditions on $\bm{\beta}^\star$ and $\bm{\eps}^\star$ shortly once we have defined a certain notion of convergence.

\paragraph{Notion of Convergence.} \cref{thm:main-huber-objective-perturbation} will provide estimates for a large collection of scalar quantities that one might wish to compute on the coefficient vector output by the objective perturbation algorithm, along with non-asymptotic error bounds for those estimates that hold with high probability and decay at a $n^{-\Omega(1)}$ rate. In order to state the theorem as concisely as possible, we introduce the notation $\rightsquigarrow$ to describe this type of convergence.\footnote{Although the $\rightsquigarrow$ abbreviation is our own, studying convergence with respect to pseudo-Lipschitz tests is certainly not new; it is considered, for example, in \cite{bayati2011amp, celentano2020gfom, han2024entrywise}, among many other works.}

\begin{definition}[Polynomial-Rate Pseudo-Lipschitz Convergence]
\label{def:pl-convergence}
    Consider a random variable $x_0 \in \R$ with finite moments of all orders and a vector $\bm{x} \in \R^n$. We write $\bm{x} \rightsquigarrow x_0$, if for all pseudo-Lipschitz functions $f : \R \to \R$,
    \begin{equation}
    \label{eq:convergence}
        \exists c > 0 \quad\text{such that}\quad \Pr\mathopen{}\left[\Abs{\frac{1}{n}\sum_{i=1}^n f(x_i) - \E[f(x_0)]} \le n^{-c}\right] \ge 1 - O(n^{-c}).
    \end{equation}
    For $\bm{x} \in \R^m$ with $m = \Theta(n)$, we write $\bm{x} \rightsquigarrow x_0$ if \eqref{eq:convergence} holds with $m$ in place of $n$. Similarly, if $\bm{x}_0 \in \R^\ell$ and $\bm{X} = [\bm{x}_1\,\cdots\,\bm{x}_{m}]^\top \in \R^{m \times \ell}$ for $\ell = \Theta(1)$, we write $\bm{X} \rightsquigarrow \bm{x}_0$ if \eqref{eq:convergence} holds for all pseudo-Lipschitz $f : \R^\ell \to \R$ with $m$ in place of $n$.
\end{definition}

Observe that Definition~\ref{def:pl-convergence} can be applied either to a sequence of deterministic vectors $\bm{x} \in \R^m$, or a sequence of random vectors $\bm{x} \in \R^m$. In the case of deterministic vectors, the condition that the absolute difference is bounded by $n^{-c}$ with probability $1 - O(n^{-c})$ becomes equivalent to the condition that the absolute difference is deterministically bounded by $n^{-c}$. Next, observe that Definition~\ref{def:pl-convergence} is a strengthening of the familiar notion of \emph{convergence in distribution}, applied to the empirical distribution with mass $1/m$ on each of the coordinates of $\bm{x}$. Convergence in distribution is weaker than Definition~\ref{def:pl-convergence} because the former only requires \eqref{eq:convergence} to hold for functions $f$ that are both bounded and Lipschitz. Also, convergence in distribution only requires an $o(1)$ convergence rate, but Definition~\ref{def:pl-convergence} requires a much stronger $n^{-\Omega(1)}$ convergence rate. The following lemmas provide intuition for the meaning of $\bm{x} \rightsquigarrow x_0$. We defer their proofs to \cref{sec:pl-proofs}.

\begin{lemma}
\label{thm:pl-from-randomness}
    If $x_0 \in \R$ has finite moments and $x_1, \ldots, x_m \iid x_0$, then $(x_1, \ldots, x_m) \rightsquigarrow x_0$.
\end{lemma}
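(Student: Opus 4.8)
The plan is to reduce the statement to textbook concentration of i.i.d.\ averages. First I would fix an arbitrary pseudo-Lipschitz test function $f : \R \to \R$, say of order $k$, and recall (from the discussion following Definition~\ref{def:pl-convergence}) that then $\abs{f(x)} \le L'(1 + \abs{x}^k)$ for some $L' > 0$. Since $x_0$ has finite moments of all orders, $\E\abs{x_0}^{2k} < \infty$, so the random variable $Y_0 := f(x_0)$ has finite mean $\mu = \E[Y_0]$ and finite variance $\sigma^2 = \Var(Y_0)$. The variables $Y_i := f(x_i)$ are then i.i.d.\ copies of $Y_0$; this is the only step that uses the finite-moments hypothesis.

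Next I would apply Chebyshev's inequality to the empirical average. For any $c \in (0, 1/3]$,
\[
    \Pr\mathopen{}\left[\Abs{\frac{1}{m}\sum_{i=1}^m f(x_i) - \mu} > m^{-c}\right] \le \frac{\Var(Y_0)}{m \cdot m^{-2c}} = \sigma^2\, m^{-(1 - 2c)} \le \sigma^2\, m^{-c}.
\]
Taking, say, $c = 1/3$ shows that the defining condition \eqref{eq:convergence} holds for this $f$, and since $f$ was arbitrary, $(x_1, \dots, x_m) \rightsquigarrow x_0$. The vector version ($\bm{X} \rightsquigarrow \bm{x}_0$ for pseudo-Lipschitz $f : \R^\ell \to \R$) follows verbatim, replacing $\abs{x}^k$ by $\norm{\bm{x}}^k$ and using $\E\norm{\bm{x}_0}^{2k} < \infty$.

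There is no real obstacle here; the only things requiring a moment of care are (i) combining the polynomial growth of pseudo-Lipschitz functions with the all-moments assumption to guarantee $Y_0$ has a finite second moment, and (ii) choosing the exponent $c$ small enough that a single $c$ simultaneously controls the deviation $m^{-c}$ and the failure probability $O(m^{-c})$ required by Definition~\ref{def:pl-convergence}. If a convergence rate approaching the diffusive $m^{-1/2}$ were desired, one could replace Chebyshev's inequality by a Rosenthal- or Marcinkiewicz--Zygmund-type moment bound, which is legitimate since $Y_0$ inherits moments of all orders from $x_0$; but this refinement is unnecessary for the lemma as stated.
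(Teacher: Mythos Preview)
Your proposal is correct and matches the paper's own proof essentially line for line: both bound $|f(x)|$ by a polynomial via pseudo-Lipschitzness, use the finite-moments hypothesis to get a finite second moment for $f(x_0)$, and then apply Chebyshev with the exponent $c = 1/3$. The only cosmetic difference is that the paper writes the deviation as $n^{-1/3}$ (invoking $m = \Theta(n)$) rather than $m^{-c}$.
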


\begin{lemma}
\label{thm:uniform-bound-from-convergence}
    If $x_0 \in \R$ has finite moments and $\bm{x} \rightsquigarrow x_0$, then for all $c > 0$, w.h.p., $\norm{\bm{x}}_\infty \le O(n^c)$.\footnote{Moreover, if $\bm{x} \in \R^m$ is deterministic and $\bm{x} \rightsquigarrow x_0$, then $\norm{\bm{x}}_\infty \le n^{o(1)}$ holds deterministically.}
\end{lemma}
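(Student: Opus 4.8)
The plan is to exploit the fact that $\bm{x} \rightsquigarrow x_0$ controls not only bounded Lipschitz test functions but all pseudo-Lipschitz functions, in particular the polynomially growing maps $f(t) = \abs{t}^k$ for integer $k \ge 1$. Such an $f$ is $k$-pseudo-Lipschitz of order $k$: applying the mean value theorem to $t \mapsto \abs{t}^k$ gives $\bigl\lvert\abs{a}^k - \abs{b}^k\bigr\rvert \le k\,(\max\{\abs{a},\abs{b}\})^{k-1}\abs{a-b} \le k\,(1+\abs{a}+\abs{b})^{k-1}\abs{a-b}$ (and the case $k=1$ is immediate since $\abs{\cdot}$ is $1$-Lipschitz). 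This is the one place where we genuinely need pseudo-Lipschitz, rather than merely bounded-Lipschitz, convergence: detecting large outliers requires a test function with real polynomial growth.

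First I would fix $c > 0$ and set $k = \lceil 1/c\rceil$, so that $1/k \le c$. Applying the definition of $\bm{x}\rightsquigarrow x_0$ to $f(t) = \abs{t}^k$ produces a constant $c' > 0$ such that, with probability at least $1 - O(n^{-c'})$,
\[
    \frac{1}{n}\sum_{i=1}^n \abs{x_i}^k \le \E[\abs{x_0}^k] + n^{-c'}.
\]
Since $x_0$ has finite moments of all orders, $M_k := \E[\abs{x_0}^k]$ is a constant independent of $n$, so on this event and for $n$ large enough that $n^{-c'} \le 1$ we obtain $\sum_{i=1}^n \abs{x_i}^k \le (M_k + 1)\, n$.

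Then I would bound the maximum by the sum: $\norm{\bm{x}}_\infty^k = \max_{i \in [n]} \abs{x_i}^k \le \sum_{i=1}^n \abs{x_i}^k \le (M_k + 1)\,n$, hence $\norm{\bm{x}}_\infty \le (M_k+1)^{1/k}\, n^{1/k} = O(n^{1/k}) = O(n^c)$, and this holds with probability $1 - O(n^{-c'}) = 1 - n^{-\Omega(1)}$, i.e.\ w.h.p. For the deterministic footnote, the displayed inequality holds deterministically when $\bm{x}$ is deterministic, so $\norm{\bm{x}}_\infty \le (M_k+1)^{1/k}\,n^{1/k}$ for every fixed $k$ and all large $n$; letting $k \to \infty$ gives $\limsup_{n} \log\norm{\bm{x}}_\infty / \log n \le 1/k$ for all $k$, hence $\norm{\bm{x}}_\infty = n^{o(1)}$.

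There is no serious obstacle: the argument is routine once one thinks to use a high-order monomial as the test function. The only things to verify are the elementary pseudo-Lipschitz bound for $\abs{t}^k$ and the bookkeeping that the failure probability $O(n^{-c'})$ qualifies as ``w.h.p.''; the conceptual content is entirely in the choice of $f$.
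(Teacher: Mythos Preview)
Your proof is correct and follows essentially the same approach as the paper: apply the convergence hypothesis to the pseudo-Lipschitz test function $f(t)=\abs{t}^k$ to control the $k$th empirical moment, then extract an $\ell^\infty$ bound and send $k\to\infty$. Your direct inequality $\max_i \abs{x_i}^k \le \sum_i \abs{x_i}^k$ is actually a bit cleaner than the paper's Markov-style step (yielding $O(n^{1/k})$ rather than $O(n^{2/k})$), but the core idea is identical.
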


In the following theorem, we will phrase not only the conclusion in terms of pseudo-Lipschitz convergence, but also the assumptions on $\bm{\beta}^\star$ and $\bm{\eps}^\star$. For a somewhat contrived example demonstrating the breadth of $\bm{\beta}^\star$ to which our theorem can be applied, suppose the first $50\%$ of the coordinates of $\bm{\beta}^\star$ are deterministically equal to $0$, the next $20\%$ of the coordinates of $\bm{\beta}^\star$ are sampled i.i.d. from $\mathcal{N}(-3, 16)$, and the final $30\%$ deterministically comprise an arithmetic progression starting at $-10$ and ending at $10$. In this example, we have that $\bm{\beta}^\star \rightsquigarrow \beta^\star_0$ a.s., where $\beta^\star_0 \in \R$ follows a mixture distribution of a point mass at $0$, the distribution $\mathcal{N}(-3, 16)$, and the uniform distribution on $[-10, 10]$. 
To verify the condition \eqref{eq:convergence} with respect to, say, $f(t) = t^2$, one can check that w.h.p., $\frac{1}{d}\norm{\bm{\beta}^\star}^2 = \E(\beta^\star_0)^2 \pm O(n^{-1/2})$, where
\[
    \E(\beta^\star_0)^2 = 50\% \cdot 0 + 20\% \cdot (3^2 + 16) + 30\% \cdot \frac{1}{12}(10-(-10))^2 = 15.
\]
For an example of $\bm{\beta}^\star \in \R^d$ that our theorem does \emph{not} handle, consider $\bm{\beta}^\star = (\sqrt{d}, 0, \ldots, 0)$. On the one hand, the empirical distribution of the coordinates of this $\bm{\beta}^\star$ converges in distribution to the point mass at $\bm{0}$, so if $\bm{\beta}^\star$ were to converge with respect to pseudo-Lipschitz tests, it would also have to be toward the point mass at $\bm{0}$. On the other hand, we have $\frac{1}{d}\norm{\bm{\beta}^\star}^2 = 1 \neq 0 \pm n^{-\Omega(1)}$, so $\bm{\beta}^\star$ fails to satisfy the condition \eqref{eq:convergence} for $f(t) = t^2$.

We are now ready to state and prove the main theorem of this section.

\begin{theorem}
\label{thm:main-huber-objective-perturbation}
    Let $\widehat{\bm{\beta}}$ denote the output of \cref{alg:objective-perturbation} with parameters $R, \lambda, \nu > 0$ and instantiated with the $L$-Lipschitz Huber loss function: \[\ell(\bm{\beta}; (\bm{x}, y)) = H_L(y - \langle \bm{x}, \bm{\beta}\rangle).\]
    \begin{enumerate}[(a)]
        \item \emph{\textbf{(Privacy)}} 
        \label{thm:main-huber-objective-privacy}
        $\widehat{\bm{\beta}}$ satisfies $(\pe, \pd)$-differential privacy for any $\pe \ge 0$ and
        \[
            \pd = \begin{cases}
                2 \cdot \mathrm{HockeyStick}(\tpe, \frac{LR}{\nu}) &\text{if }\hpe \ge 0,\\
                (1 - e^{\hpe}) + 2e^{\hpe}\cdot \mathrm{HockeyStick}\left(\frac{L^2R^2}{2\nu^2}, \frac{LR}{\nu} \right) &\text{otherwise,}
            \end{cases}
        \]
        where we set $\tpe = \pe - \log(1 + R^2/\lambda)$ and $\hpe = \tpe - L^2R^2/2\nu^2$.
        
        \item \label{thm:main-huber-objective-utility} \emph{\textbf{(Utility)}} Suppose the following hold for some $\bm{\beta}^\star \in \R^d$ and $\bm{\varepsilon}^\star \in \R^n$ as $n \to \infty$ and $d/n \to \delta$:
        \begin{enumerate}[(i)]
            \item $\bm{X} \in B_R(\bm{0})^n \subseteq \R^{n \times d}$ follows a subgaussian design and $\bm{y} = \bm{X}\bm{\beta}^\star + \bm{\varepsilon}^\star$.
            \item There exist random variables $\beta^\star_0, \varepsilon^\star_0\in \R$ such that $\bm{\beta}^\star \rightsquigarrow \beta^\star_0$ and $\bm{\varepsilon}^\star \rightsquigarrow \varepsilon^\star_0$.
        \end{enumerate}
        Suppose there exist $\sigma^\star, \tau^\star > 0$ solving the following system of two scalar equations in two variables $(\sigma, \tau)$, which we write in terms of a dummy variable $Z \sim \mathcal{N}(0, 1)$ and $\kappa^2 = \E(\beta^\star_0)^2$ as
        \begin{subequations}
        \begin{align}
            \sigma^2 &= \tau^2\left(\frac{1}{\delta} \E\mathopen{}\left[\frac{\sigma Z + \varepsilon^\star_0}{1 + \tau}\right]_L^2 + \lambda^2\kappa^2 + \nu^2\right), \label{eq:huber-1} \\
            \tau &= \frac{1}{\lambda \delta}\left(\delta - \frac{\tau}{1+\tau}\Pr\mathopen{}\left[-L < \frac{\sigma Z + \varepsilon^\star_0}{1 + \tau} < L\right]\right). \label{eq:huber-2}
        \end{align}
        \end{subequations}
        Then, in terms of $(\sigma^\star, \tau^\star)$, the estimation error $\widehat{\bm{\beta}} - \bm{\beta}^\star$ satisfies, for $\xi_0 \sim \mathcal{N}(0, 1)$,
        \[
            (\bm{\beta}^\star, \, \bm{\xi}, \, \widehat{\bm{\beta}} - \bm{\beta}^\star) \rightsquigarrow \left(\beta_0^\star, \; \xi_0, \; \tau^\star{\left(\sqrt{\frac{1}{\delta} \E\mathopen{}\left[\frac{\sigma^\star Z + \varepsilon^\star_0}{1 + \tau^\star}\right]_L^2} Z - \lambda \beta^\star_0 - \nu \xi_0\right)}\right).
        \]
        Similarly, the $L$-truncated prediction error $[\bm{y} - \bm{X}\widehat{\bm{\beta}}]_L$ satisfies
        \[
            (\bm{\varepsilon}^\star, \, [\bm{y} - \bm{X}\widehat{\bm{\beta}}]_L) \rightsquigarrow \left(\varepsilon^\star_0, \; \left[\frac{\sigma^\star Z + \varepsilon^\star_0}{1 + \tau^\star}\right]_L\right).
        \]
        
    \end{enumerate}
\end{theorem}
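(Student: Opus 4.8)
The privacy claim in part~(\ref{thm:main-huber-objective-privacy}) will follow by a short reduction to the refined privacy analysis we carry out in \cref{sec:objective-perturbation-privacy}. The Huber loss is a GLM loss $\ell(\bm{\beta};(\bm{x},y)) = \ell_0(\langle\bm{x},\bm{\beta}\rangle,y)$ with $\ell_0(\eta,y) = H_L(y-\eta)$, so that $\abs{\partial_1\ell_0} = \abs{H_L'(y-\eta)} \le L$ and $0 \le \partial_1^2\ell_0 = H_L''(y-\eta) \le 1$; thus the smoothness parameter is $s = 1$. Rescaling each feature vector by $1/R$ converts the constraint $\norm{\bm{x}_i}\le R$ into $\norm{\bm{x}_i}\le 1$ while rescaling the effective Lipschitz and smoothness constants to $LR$ and $R^2$. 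Feeding these into our version of the privacy bound of \cite{redberg2023improving}, which (unlike \cref{thm:redberg-privacy}) requires no lower bound on $\lambda$, produces exactly the stated $\tpe = \pe - \log(1 + R^2/\lambda)$ and $\hpe = \tpe - L^2 R^2/(2\nu^2)$.

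For the utility claim in part~(\ref{thm:main-huber-objective-utility}), the plan is to route the analysis through the Convex Gaussian Minimax Theorem, with universality handling the (necessarily non-Gaussian, bounded) subgaussian design. First I would put the algorithm into min--max form: dualizing the Huber loss via its convex conjugate $H_L^\star(v) = \tfrac12 v^2$ on $[-L,L]$ (and $+\infty$ elsewhere) gives
\[
    \widehat{\bm{\beta}} = \argmin_{\bm{\beta}\in\R^d}\ \max_{\bm{v}\in[-L,L]^n}\ \langle\bm{v},\, \bm{y}-\bm{X}\bm{\beta}\rangle - \tfrac12\norm{\bm{v}}^2 + \tfrac{\lambda}{2}\norm{\bm{\beta}}^2 + \nu\langle\bm{\xi},\bm{\beta}\rangle.
\]
Substituting $\bm{y} = \bm{X}\bm{\beta}^\star + \bm{\varepsilon}^\star$ and changing variables $\bm{\beta} = \bm{\beta}^\star - \bm{u}$ makes the dependence on $\bm{X}$ bilinear: the objective becomes $\langle\bm{X}\bm{u},\bm{v}\rangle + \psi(\bm{u},\bm{v})$ with $\psi(\bm{u},\bm{v}) = \langle\bm{v},\bm{\varepsilon}^\star\rangle - \tfrac12\norm{\bm{v}}^2 + \tfrac{\lambda}{2}\norm{\bm{\beta}^\star-\bm{u}}^2 + \nu\langle\bm{\xi},\bm{\beta}^\star-\bm{u}\rangle$, which is convex in $\bm{u}$, concave in $\bm{v}$, and independent of $\bm{X}$. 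To meet the compactness hypothesis of CGMT I would establish a priori bounds: $\lambda$-strong convexity of the perturbed objective together with a crude bound on its gradient at $\bm{\beta}^\star$ gives $\norm{\widehat{\bm{\beta}}-\bm{\beta}^\star}\le O(\sqrt n)$ w.h.p., and the leave-one-feature-out perturbation argument of \cref{thm:objective-uniform-bound} gives the coordinatewise bound $\norm{\widehat{\bm{\beta}}}_\infty \le n^{o(1)}$ needed to apply CGMT universality; the dual variable already lives in the compact box $[-L,L]^n$.

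With the problem in this form, I would (i) apply CGMT universality (\cref{thm:cgmt-universality}) to replace the subgaussian $\bm{X}$ by a Gaussian $\bm{G}$ with $\mathcal{N}(0,1/d)$ entries and matching first two moments, which alters the min--max value over any measurable constraint set by only $n^{-\Omega(1)}$ (using $L_{\bm u},L_{\bm v} = n^{o(1)}$ from the a priori bounds and a polynomial bound on $\mathscr M$); (ii) apply CGMT (\cref{thm:cgmt}) to pass to the auxiliary problem
\[
    \min_{\bm{u}}\ \max_{\bm{v}\in[-L,L]^n}\ \norm{\bm{u}}\langle\bm{h},\bm{v}\rangle - \norm{\bm{v}}\langle\bm{g},\bm{u}\rangle + \psi(\bm{u},\bm{v}),
\]
with $\bm{g}\in\R^d,\bm{h}\in\R^n$ having $\mathcal{N}(0,1/d)$ entries; and (iii) solve this auxiliary problem by hand through a sequence of standard reductions --- introducing a scalar for $\norm{\bm{v}}$, performing the inner maximization over $\bm{v}$ (which produces a Huber/Moreau-envelope term and identifies the optimal $\bm{v}$ as an entrywise truncation $[\cdot]_L$), and performing the outer minimization over $\bm{u}$ after splitting it into its component along $\bm{g}$ and its orthogonal complement. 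This collapses the problem to an optimization over two scalars, one tracking the size of $\widehat{\bm{\beta}}-\bm{\beta}^\star$ (this will be $\sigma$) and one a dual scalar (this will be $\tau$). Here the hypotheses $\bm{\beta}^\star\rightsquigarrow\beta^\star_0$ and $\bm{\varepsilon}^\star\rightsquigarrow\varepsilon^\star_0$ enter crucially: they let me replace the coordinate sums $\tfrac1d\sum_j (\beta^\star_j)^2$, $\tfrac1n\sum_i H_L'(\cdot)$, $\tfrac1n\sum_i H_L''(\cdot)$, etc.\ --- which involve non-i.i.d.\ coordinates --- by the corresponding expectations with $n^{-\Omega(1)}$ error. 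The first-order optimality conditions of the two-scalar problem then read exactly as \eqref{eq:huber-1}--\eqref{eq:huber-2}, strict convexity--concavity forces the solution $(\sigma^\star,\tau^\star)$ to be unique, and the minimizing $\bm{u}$ concentrates (up to an overall sign) near the explicit vector --- built from $\bm{\beta}^\star$, $\bm{\xi}$, and a fresh Gaussian playing the role of $\bm{g}$ --- that appears in the theorem's $\rightsquigarrow$ conclusion.

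Finally, to transfer these conclusions back to $\widehat{\bm{\beta}}$ I would run the sandwiching argument sketched in \cref{sec:preliminaries}: for a fixed pseudo-Lipschitz test $f$ and slack $\eps_n = n^{-c}$, the auxiliary value restricted to the ``bad set'' determined by $f$ and $\eps_n$ exceeds the unrestricted auxiliary value by a constant gap w.h.p.\ --- because the scalar optimum is unique and bounded away from that set --- and chaining CGMT's two one-sided inequalities with CGMT universality shows $\widehat{\bm{\beta}}$ avoids the bad set w.h.p. Taking $f$ to range over joint test functions of $(\bm{\beta}^\star,\bm{\xi},\widehat{\bm{\beta}}-\bm{\beta}^\star)$ and of $(\bm{\varepsilon}^\star,[\bm{y}-\bm{X}\widehat{\bm{\beta}}]_L)$ yields the two stated $\rightsquigarrow$ limits. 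I expect the main obstacles to be: proving the $\ell^\infty$ a priori bound $\norm{\widehat{\bm{\beta}}}_\infty\le n^{o(1)}$ via the leave-one-feature-out argument (the genuinely new analytic ingredient and the gateway to universality); bookkeeping the $n^{-\Omega(1)}$ error budget through the composition of universality, CGMT, the scalarization, and the non-i.i.d.\ treatment of $\bm{\beta}^\star$ and $\bm{\varepsilon}^\star$; and verifying the regularity conditions (compactness, convex-concavity, and the pseudo-Lipschitz/norm bounds in \cref{thm:cgmt-universality}) under which these black-box theorems apply.
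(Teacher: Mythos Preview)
Your proposal is correct and follows essentially the same architecture as the paper: Legendre transform to min--max form, leave-one-feature-out for the $\ell^\infty$ a~priori bound, CGMT universality to pass to Gaussian design, CGMT to pass to the auxiliary problem, and the excision/sandwiching argument to transfer pseudo-Lipschitz conclusions back to $\widehat{\bm{\beta}}$ and the truncated residuals. The only notable stylistic difference is in the CGMT step: you describe the standard constructive scalarization (introduce scalars for $\norm{\bm u},\norm{\bm v}$, optimize out the vectors, arrive at a two-scalar problem whose stationarity conditions are \eqref{eq:huber-1}--\eqref{eq:huber-2}), whereas the paper takes a ``guess-and-verify'' route---it writes down the explicit pair $(\widetilde{\bm u},\widetilde{\bm v})$ appearing in the theorem's conclusion and checks directly that each is an approximate best response to the other by computing the gradients and showing they have norm $n^{1/2-\Omega(1)}$; this shortcut avoids introducing the intermediate scalar variables but is equivalent in content.
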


Several remarks about \cref{thm:main-huber-objective-perturbation} are in order. First, we comment that part \ref{thm:main-huber-objective-privacy}, whose proof we defer to \cref{sec:objective-perturbation-privacy}, is a variant of Theorem 3.1 in \cite{redberg2023improving} (for convenience, we included a statement of that prior result in \cref{sec:preliminaries}). Essentially, part \ref{thm:main-huber-objective-privacy} establishes a link between any constant values of $L, \lambda, \nu > 0$ and a family of $(\pe, \pd)$-DP privacy guarantees. In \cref{sec:objective-perturbation-privacy}, we also give a single-parameter $\pr$-zCDP privacy guarantee with a constant $\pr = \pr(L, \lambda, \nu)$, which is arguably easier to interpret but not as tight.

Next, we comment that part \ref{thm:main-huber-objective-utility}, whose proof will comprise the bulk of this section, yields several interesting corollaries about the behavior of $\bm{\hb}$ when combined with Definition~\ref{def:pl-convergence}. For example, suppose we would like to know the \emph{bias} of $\widehat{\bm{\beta}}$. If we take the order-$2$ pseudo-Lipschitz function $f(\beta^\star, u) = (\beta^\star)^2 + \beta^\star u$, then \cref{thm:main-huber-objective-perturbation}\ref{thm:main-huber-objective-utility} with Definition~\ref{def:pl-convergence} tells us that the average value of $f$ over the coordinates of $(\bm{\beta}^\star, \bm{\hb} - \bm{\beta}^\star)$ can be approximated by the expected value of $f$ over the randomness of $(\beta^\star_0, \tau^\star(\sigma^\star_v Z - \lambda \beta^\star_0 - \nu\xi_0))$ for an appropriate constant $\sigma^\star_v > 0$. Since $Z$, $\beta^\star_0$, and $\xi_0$ are independent, we deduce that with high probability over the design matrix $\bm{X}$ and the randomness $\bm{\xi}$ of the algorithm,
\begin{align*}
    \frac{1}{d}\langle \bm{\hb}, \bm{\beta}^\star \rangle &= \frac{1}{d}\sum_{j=1}^d f(\beta^\star_j, \hb_j - \beta^\star_j) = \E[(\beta^\star_0)^2 + \beta^\star_0 \cdot \tau^\star(\sigma^\star_vZ - \lambda \beta^\star_0 - \nu\xi_0)] \pm n^{-\Omega(1)} \\
    &= (1 - \tau^\star \lambda)\kappa^2 \pm n^{-\Omega(1)}.
\end{align*}
It is apparent from equation \eqref{eq:huber-2} that $1 - \tau^\star \lambda \in (0, 1)$. Therefore, we deduce that $\bm{\hb}$ is positively correlated with $\bm{\beta}^\star$, but ``shrunk'' toward $\bm{0}$ by a factor of $1 - \tau^\star \lambda \pm n^{-\Omega(1)}$. This is intuitively what we would expect given that the regularization term $\frac{\lambda}{2}\norm{\bm{\beta}}^2$ is centered at the origin.

Similarly, suppose we are interested in the \emph{mean squared error} of $\bm{\hb}$. If we take the order-$2$ pseudo-Lipschitz function $f(u) = u^2$, then  by \cref{thm:main-huber-objective-perturbation}\ref{thm:main-huber-objective-utility} and Definition~\ref{def:pl-convergence}, with high probability,
\begin{align*}
    \frac{1}{d}\norm{\widehat{\bm{\beta}} - \bm{\beta}^\star}^2 &= \frac{1}{d}\sum_{j=1}^d f(\hb_j - \beta^\star_j) = \E(\tau^\star(\sigma^\star_vZ - \lambda \beta^\star_0 - \nu\xi_0))^2 \pm n^{-\Omega(1)} \\
    &= (\tau^\star)^2((\sigma^\star_v)^2 + \lambda^2\kappa^2 + \nu^2) \pm n^{-\Omega(1)}.
\end{align*}
By equation \eqref{eq:huber-1}, the above expression is precisely $(\sigma^\star)^2 \pm n^{-\Omega(1)}$, so we deduce that the root mean squared error of $\bm{\beta}^\star$ is precisely $\sigma^\star \pm n^{-\Omega(1)}$. Similar computations can be carried out in this manner to determine, for example, the correlation of $\bm{\hb}$ with the perturbation vector $\bm{\xi}$ or the fact that the residual error of the predictions $\bm{X}\bm{\hb}$ satisfies, w.h.p., \[\frac{1}{n}\Norm{[\bm{y} - \bm{X}\bm{\hb}]_L}^2 = \E\mathopen{}\left[\frac{\sigma^\star Z + \varepsilon^\star_0}{1 + \tau^\star}\right]_L^2 \pm n^{-\Omega(1)}.\] The previous two examples both measure utility using squared $\ell^2$ norm, but this was arbitrary. Indeed, by choosing different pseudo-Lipschitz test functions, we could just as easily have expressed many other possible utility metrics, including $\frac{1}{d}\norm{\bm{\hb} - \bm{\beta}^\star}_p^p$ or $\frac{1}{n}\norm{[\bm{y} - \bm{X}\bm{\hb}]_L}_p^p$ for any $p \ge 1$, in terms of $(\sigma^\star, \tau^\star)$.
We validate these corollaries against synthetic data in Figure~\ref{fig:huber-objective-estimation}. 

Next, we remark that equations \eqref{eq:huber-1}, \eqref{eq:huber-2} can be simplified if one assumes approximately Gaussian errors. Indeed, if $\varepsilon^\star_0 \sim \mathcal{N}(0, \sigma_\varepsilon^2)$ for some $\sigma_\eps > 0$, then we have \[\frac{\sigma Z + \varepsilon^\star_0}{1 + \tau} \sim \mathcal{N}\left(0, \frac{\sigma^2 + \sigma^2_\varepsilon}{(1 + \tau)^2}\right).\] Thus, we can express the variance of its $L$-truncation, as well as the probability it falls in the interval $[-L, +L]$, using rescaled versions of the following formulas for the truncated Gaussian distribution: if $Z \sim \mathcal{N}(0, 1)$ and $s > 0$, then in terms of the standard Gaussian PDF $\varphi$ and CDF $\Phi$,
\begin{align*}
    \Pr[ s Z \in [-L, +L]] &= 2\Phi(L/s) - 1. \\
    \E[(s Z)^2 \mid s Z \in [-L, +L]] &= s^2\left(1 - \frac{2(L/s)\varphi(L/s)}{2\Phi(L/s)-1}\right).
\end{align*}

\subsection{Special Case of the Utility Proof}

Before we begin the full utility proof, we will first prove the following special case. This special case assumes not only that we are given average-case features $\bm{X}$, but also that the ground-truth coefficient vector $\bm{\beta}^\star$ and error vector $\bm{\varepsilon}^\star$ have i.i.d. coordinates. The conclusion of this special case consists of a single scalar, the limit of $\frac{1}{d}\norm{\widehat{\bm{\beta}} - \bm{\beta}^\star}^2$. Eventually, for the full proof of \cref{thm:main-huber-objective-perturbation}\ref{thm:main-huber-objective-utility}, we will remove these unnecessary i.i.d. assumptions on $\bm{\beta}^\star$ and $\bm{\eps}^\star$ and determine the limit of both the estimation error $\widehat{\bm{\beta}} - \bm{\beta}^\star$ and prediction error $[\bm{y} - \bm{X}\widehat{\bm{\beta}}]_L$ with respect to all pseudo-Lipschitz test functions. However, we choose to first emphasize this special case because of its exceedingly simple proof given prior results in the high-dimensional statistics literature on non-private robust linear regression.

\begin{theorem}
\label{thm:main-huber-special-case}
    Consider the setting of \cref{thm:main-huber-objective-perturbation}\ref{thm:main-huber-objective-utility}, but also assume that $\beta^\star_1, \ldots, \beta^\star_d \iid \beta^\star_0$ and $\varepsilon^\star_1, \ldots, \varepsilon^\star_n \iid \varepsilon^\star_0$ and that $\varepsilon^\star_0$ is continuous. Define $\sigma^\star$ as in \cref{thm:main-huber-objective-perturbation}. Then, as $n \to \infty$, \[\frac{1}{d}\norm{\widehat{\bm{\beta}} - \bm{\beta}^\star}^2 \toP (\sigma^\star)^2.\]
\end{theorem}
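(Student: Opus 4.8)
The plan is to reduce \cref{alg:objective-perturbation} with Huber loss to an instance of \emph{non-private} ridge-regularized Huber regression, for which the limiting error is already characterized in the literature (Section~5.2 of \cite{thrampoulidis2018mestimators} in the Gaussian case, Theorem~3.12 of \cite{han2023universality} in the subgaussian case). The reduction is a completing-the-square change of variables. Since the objective in \cref{alg:objective-perturbation} is $\lambda$-strongly convex, $\widehat{\bm{\beta}}$ is its unique minimizer, and
\[
    \tfrac{\lambda}{2}\norm{\bm{\beta}}^2 + \nu\langle\bm{\xi},\bm{\beta}\rangle = \tfrac{\lambda}{2}\Norm{\bm{\beta} + \tfrac{\nu}{\lambda}\bm{\xi}}^2 - \tfrac{\nu^2}{2\lambda}\norm{\bm{\xi}}^2,
\]
the last term being constant in $\bm{\beta}$. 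Substituting $\bm{w} = \bm{\beta} + \tfrac{\nu}{\lambda}\bm{\xi}$ and $\widetilde{\bm{y}} = \bm{y} + \tfrac{\nu}{\lambda}\bm{X}\bm{\xi}$ leaves the residuals unchanged, $y_i - \langle\bm{x}_i,\bm{\beta}\rangle = \widetilde y_i - \langle\bm{x}_i,\bm{w}\rangle$, so $\widehat{\bm{w}} := \widehat{\bm{\beta}} + \tfrac{\nu}{\lambda}\bm{\xi}$ is the minimizer of $\sum_i H_L(\widetilde y_i - \langle\bm{x}_i,\bm{w}\rangle) + \tfrac{\lambda}{2}\norm{\bm{w}}^2$.

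Next I would observe that this transformed problem is itself a linear model: writing $\bm{w}^\star := \bm{\beta}^\star + \tfrac{\nu}{\lambda}\bm{\xi}$, we have $\widetilde{\bm{y}} = \bm{X}\bm{w}^\star + \bm{\varepsilon}^\star$, with $\bm{\varepsilon}^\star$ still independent of $\bm{X}$, and the estimation error is preserved, $\widehat{\bm{w}} - \bm{w}^\star = \widehat{\bm{\beta}} - \bm{\beta}^\star$, so it suffices to characterize $\tfrac1d\norm{\widehat{\bm{w}} - \bm{w}^\star}^2$. The hypotheses of the imported theorem carry over: $\bm{\xi}\sim\mathcal{N}(\bm{0},\bm{I}_d)$ is drawn independently of $(\bm{X},\bm{\varepsilon}^\star)$ in \cref{alg:objective-perturbation}, so the coordinates of $\bm{w}^\star$ are i.i.d.\ with common law $\beta^\star_0 + \tfrac{\nu}{\lambda}\xi_0$, $\xi_0\sim\mathcal{N}(0,1)$, which has finite moments of all orders since $\beta^\star_0$ does; $\bm{\varepsilon}^\star$ still has i.i.d., continuous coordinates; and the subgaussian design assumption on $\bm{X}$ is untouched. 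Hence the cited result gives $\tfrac1d\norm{\widehat{\bm{w}} - \bm{w}^\star}^2 \toP (\sigma^\dagger)^2$, where $(\sigma^\dagger,\tau^\dagger)$ solves the non-private Huber fixed-point system --- namely \eqref{eq:huber-1}--\eqref{eq:huber-2} with $\nu = 0$ --- but with signal strength $\kappa^2$ replaced by $\widetilde\kappa^2 := \E(\beta^\star_0 + \tfrac{\nu}{\lambda}\xi_0)^2 = \kappa^2 + \nu^2/\lambda^2$, the cross term vanishing because $\E[\xi_0]=0$ and $\xi_0$ is independent of $\beta^\star_0$.

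To finish, I would match the two fixed-point systems. The parameter $\kappa$ does not enter \eqref{eq:huber-2} at all, and enters \eqref{eq:huber-1} (with $\nu=0$) only through the summand $\lambda^2\kappa^2$; substituting $\widetilde\kappa^2 = \kappa^2 + \nu^2/\lambda^2$ turns that summand into $\lambda^2\kappa^2 + \nu^2$, which is exactly the coefficient appearing in \eqref{eq:huber-1} as stated in \cref{thm:main-huber-objective-perturbation}. Thus $(\sigma^\dagger,\tau^\dagger) = (\sigma^\star,\tau^\star)$, and combining with $\widehat{\bm{w}} - \bm{w}^\star = \widehat{\bm{\beta}} - \bm{\beta}^\star$ yields $\tfrac1d\norm{\widehat{\bm{\beta}} - \bm{\beta}^\star}^2 \toP (\sigma^\star)^2$.

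In this special case I do not anticipate a genuine obstacle: the content is the change of variables, a routine check that the transformed ground truth $\bm{w}^\star$ still meets the i.i.d./finite-moment hypotheses of the imported theorem, and reconciling any scaling conventions (e.g.\ per-coordinate variance $1/d$ versus $1/n$, with the corresponding rescaling of $\bm{\beta}^\star$) between the statement here and the source. The genuinely hard work --- dropping the i.i.d.\ assumptions on $\bm{\beta}^\star$ and $\bm{\varepsilon}^\star$, handling arbitrary pseudo-Lipschitz test functions rather than just $\tfrac1d\norm{\cdot}^2$, and establishing the uniform bound $\norm{\widehat{\bm{\beta}}}_\infty \le n^{o(1)}$ needed to transfer CGMT from Gaussian to bounded designs via CGMT universality --- is what the full proof of \cref{thm:main-huber-objective-perturbation}\ref{thm:main-huber-objective-utility} must carry out, and is deferred to the remainder of the section.
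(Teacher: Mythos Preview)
Your proposal is correct and follows essentially the same approach as the paper: complete the square to absorb the linear perturbation into the quadratic regularizer, observe that the private problem with ground truth $\bm{\beta}^\star$ coincides with the non-private ridge-Huber problem with ground truth $\bm{\beta}^\star + \tfrac{\nu}{\lambda}\bm{\xi}$, invoke Theorem~3.12 of \cite{han2023universality}, and then match the fixed-point systems by replacing $\kappa^2$ with $\kappa^2 + \nu^2/\lambda^2$. The paper phrases the reduction via the error variable $\bm{u} = \bm{\beta} - \bm{\beta}^\star$ rather than your $\bm{w} = \bm{\beta} + \tfrac{\nu}{\lambda}\bm{\xi}$, but the content is identical.
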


The i.i.d. assumption on $\bm{\beta}^\star$ and $\bm{\varepsilon}^\star$ is stronger than assuming $\bm{\beta}^\star \rightsquigarrow \beta^\star_0$ and $\bm{\varepsilon}^\star \rightsquigarrow \varepsilon^\star_0$, which we saw in \cref{thm:pl-from-randomness}. Similarly, the conclusion of \cref{thm:main-huber-special-case} is weaker than that of \cref{thm:main-huber-objective-perturbation}.

The idea behind \cref{thm:main-huber-special-case} is a simple reduction to the non-private case. Indeed, the asymptotic behavior in the non-private case was derived by \cite{thrampoulidis2018mestimators} for Gaussian $\bm{X}$ and extended by \cite{han2023universality} to subgaussian designs.

\begin{proof}(\cref{thm:main-huber-special-case})
    In the absence of perturbation, which is the case when $\nu = 0$, Theorem 3.12 of \cite{han2023universality} implies that under the stated assumptions, the scaled error of $\widehat{\bm{\beta}}$ converges to the solution $\sigma^\star$ to the following system of two scalar equations in two variables $(\sigma, \tau)$. In terms of a dummy variable $Z \sim \mathcal{N}(0, 1)$, these equations are
    \begin{align*}
        \frac{1}{\delta} - 1 + \tau \lambda &= \frac{1}{\delta} \E[\prox_{\tau H_L}'(\sigma Z + \varepsilon^\star_0)],\\
        \sigma^2 &= \frac{1}{\delta}\E[(\sigma Z + E - \prox_{\tau H_L} (\sigma Z + \varepsilon^\star_0))^2] + \lambda^2 \tau^2 \kappa^2.
    \end{align*}
    A simple calculation using the definition of the proximal operator (Definition~\ref{def:proximal}) shows that for any $\tau > 0$ and $s \in \R$,
    \[s - \prox_{\tau H_L}(s) = \tau\mathopen{}\left[\frac{s}{1+\tau}\right]_L.\]
    If $\abs{s} \neq  (1 + \tau)L$, then a similarly simple calculation shows that
    \begin{align*}
        1 - \prox_{\tau H_L}'(s) &= \begin{cases} \frac{\tau}{1 + \tau} &\text{if } \abs{s} < (1 + \tau)L, \\ 0 &\text{if } \abs{s} > (1 + \tau)L.\end{cases}
    \end{align*}
    Substituting these formulas for $s - \prox_{\tau H_L}(s)$ and $1 - \prox_{\tau H_L}'(s)$ with $s = \sigma Z + \varepsilon^\star_0$ into the above system of equations yields the following simplification:
    \begin{equation}
    \label{eq:huber-non-private}
    \begin{split}
        \sigma^2 &= \tau^2\left(\frac{1}{\delta} \E\mathopen{}\left[\frac{\sigma Z + \varepsilon^\star_0}{1 + \tau}\right]_L^2 + \lambda^2\kappa^2\right), \\
        \tau &= \frac{1}{\lambda \delta}\left(\delta - \frac{\tau}{1 + \tau}\Pr\mathopen{}\left[-L < \frac{\sigma Z + \varepsilon^\star_0}{1 + \tau} < L\right]\right).
    \end{split}
    \end{equation}
    This proves \cref{thm:main-huber-special-case} in the non-private setting, where $\nu = 0$. To extend this result to the private setting, where $\nu > 0$, we carry out a straightforward reduction. To begin, we first rewrite the perturbed objective in \cref{alg:objective-perturbation} by completing the square to combine the quadratic regularization term with the linear perturbation term. Indeed,
    \[\widehat{\bm{\beta}} = \argmin_{\bm{\beta} \in \R^d}\, H_L(\bm{y} - \bm{X}\bm{\beta}) + \frac{\lambda}{2} \mathopen{}\left\lVert \bm{\beta} + \frac{\nu}{\lambda} \bm{\xi} \right\rVert^2\mathclose{}.\] Introducing the notation $\bm{u} = \bm{\beta} - \bm{\beta}^\star$ and $\widehat{\bm{u}} = \widehat{\bm{\beta}} - \bm{\beta}^\star$, and recalling that $\bm{\varepsilon}^\star = \bm{y} - \bm{X}\bm{\beta}^\star$, this becomes
    \[\widehat{\bm{u}} = \argmin_{\bm{u} \in \R^d}\, H_L(\bm{\varepsilon}^\star - \bm{X}\bm{u}) + \frac{\lambda}{2} \mathopen{}\left\lVert \bm{u} + \left(\bm{\beta}^\star + \frac{\nu}{\lambda} \bm{\xi}\right) \right\rVert^2\mathclose{}.\] In this formulation, since $\bm{X}$, $\bm{\beta}^\star$, and $\bm{\varepsilon}^\star$ are independent, it is clear that $\bm{\beta}^\star$ only influences the final, quadratic term of the optimization objective. Consequently, $\widehat{\bm{u}}$ has the same value in the case of $\nu > 0$ when the ground-truth vector is $\bm{\beta}^\star$ as in the non-private case when ground-truth vector is \[\bm{\beta}^\star + \frac{\nu}{\lambda}\bm{\xi}.\] In equation \eqref{eq:huber-non-private}, which we have already shown characterizes the limiting behavior in the non-private case, this amounts to defining $\xi_0 \sim \mathcal{N}(0, 1)$ and replacing each occurrence of $\E(\beta^\star_0)^2 = \kappa^2$ with \[\E\mathopen{}\left(\beta_0^\star+ \frac{\nu}{\lambda}\xi_0\right)^2 = \kappa^2 + \left(\frac{\nu}{\lambda}\right)^2.\] Doing so yields the system of equations from the statement of \cref{thm:main-huber-objective-perturbation}, which read
    \begin{align*}
        \sigma^2 &= \tau^2\left(\frac{1}{\delta} \E\mathopen{}\left[\frac{\sigma Z + E}{1 + \tau}\right]_L^2 + \lambda^2\kappa^2 + \nu^2\right), \\
        \tau &= \frac{1}{\lambda \delta}\left(\delta - \frac{\tau}{1+\tau}\Pr\mathopen{}\left[-L < \frac{\sigma Z + E}{1 + \tau} < L\right]\right).
    \end{align*}
    Letting $\sigma^\star$ be the solution for $\sigma$, we conclude that \[\frac{1}{\sqrt{d}}\norm{\widehat{\bm{\beta}} - \bm{\beta}^\star} \toP \sigma^\star.\]
\end{proof}

\subsection{Proof of Theorem~\ref{thm:main-huber-objective-perturbation}}
\label{sec:huber-objective-utility}

To prove \cref{thm:main-huber-objective-perturbation} in its full generality, we can no longer apply a black-box reduction to prior work in the non-private case since this work assumes that $\bm{\beta}^\star$ and $\bm{\varepsilon}^\star$ are i.i.d. and does not fully determine the asymptotic behavior of $\widehat{\bm{\beta}} - \bm{\beta}^\star$ and $[\bm{y} - \bm{X}\widehat{\bm{\beta}}]_L$. Instead, we will take a white-box approach, roughly following the proof strategy of \cite{thrampoulidis2018mestimators, han2023universality} and making several changes as necessary to accommodate the differences between their setting and ours. To reiterate, the main differences are the random linear perturbation term for differential privacy, our weakened assumptions on $\bm{\beta}^\star$ and $\bm{\varepsilon}^\star$, and our conclusion that includes a description of $[\bm{y} - \bm{X}\widehat{\bm{\beta}}]_L$ and involves a stronger notion of convergence. Consequently, the present proof is substantially more involved than the preceding one.

\paragraph{Proof Overview.}
Our proof proceeds in three steps, inspired by \cite{han2023universality}:
\begin{enumerate}[(1)]
    \item Relate the perturbed Huber objective to a certain min-max optimization problem.
    \item Use universality to relate this problem to another involving a Gaussian design matrix.
    \item Use CGMT to analyze this min-max optimization with a Gaussian design matrix as $n \to \infty$.
\end{enumerate}

For the first step, we will take the \emph{Legendre transform} (a.k.a. convex conjugate) of the Huber loss function, which in our case will have the effect of isolating the randomness of $\bm{X} \in \R^{n \times d}$ in a single, bilinear term.

For the second step, we will make critical use of Corollary 2.6 of \cite{han2023universality}, a powerful result that the authors call \emph{universality of the Gordon’s max-min (min-max) cost optimum}. The name ``Gordon'' comes from \emph{Gordon's inequality}, a precursor to the convex Gaussian minimax theorem (CGMT) that pertains to the same families of Gaussian random variables and their extrema. For convenience, we have included a simplified rephrasing of this result, which we refer to as \emph{CGMT universality}, as \cref{thm:cgmt-universality}.

For the third step, we will use CGMT, roughly following the template in \cite{thrampoulidis2018mestimators}, to determine the behavior of the estimation error $\widehat{\bm{\beta}} - \bm{\beta}^\star$ as $n \to \infty$. Unlike \cite{thrampoulidis2018mestimators}, our more detailed analysis will account for the random linear perturbation term, establish convergence in a stronger sense, and also explain the behavior of the residual error, $\bm{y} - \bm{X}\widehat{\bm{\beta}}$. For a statement of CGMT, see \cref{thm:cgmt}.

We remark that both steps 2 and 3 will be made more challenging by the fact that we do not assume that $\bm{\beta}^\star$ and $\bm{\varepsilon}^\star$ have i.i.d. coordinates.

\subsubsection{Step 1: Legendre Transform}

The goal of this step is to prove the following lemma, which relates the output of \cref{alg:objective-perturbation} with $L$-Lipschitz Huber loss to the min-max optimization of a certain random variable. Technically, the lemma holds in a worst-case sense, as neither the randomness of $\bm{X}$ and $\bm{\xi}$ nor any assumptions on $\bm{\beta}^\star$ or $\bm{\varepsilon}^\star$ are required for the proof. Eventually, the randomness of the optimization problem will arise from the randomness of the design matrix $\bm{X}$.

\begin{lemma}
\label{thm:objective-huber-legendre}
    Let $\widehat{\bm{\beta}}$ be the output of \cref{alg:objective-perturbation} when instantiated with \[\ell(\bm{\beta}; (\bm{x}, y)) = H_L(y - \langle \bm{x}, \bm{\beta}\rangle).\] Fix $\bm{\beta}^\star \in \R^d$ and define $\bm{\eps}^\star = \bm{y} - \bm{X}\bm{\beta}^\star$. Given $\bm{u} \in \R^d$ and $\bm{v} \in [-L, +L]^n$, define $Q_{\bm{u}, \bm{v}}$ as: \[Q_{\bm{u}, \bm{v}} = -\langle \bm{X} \bm{u}, \bm{v}\rangle + \psi(\bm{u}, \bm{v}) \qquad \text{for} \qquad \psi(\bm{u}, \bm{v}) = \frac{\lambda}{2}\norm{\bm{u}}^2 + \langle \lambda \bm{\beta}^\star + \nu\bm{\xi}, \bm{u}\rangle - \frac{1}{2}\norm{\bm{v}}^2 + \langle \bm{\varepsilon}^\star, \bm{v}\rangle.\] Then,
    \((\widehat{\bm{u}}, \widehat{\bm{v}}) = (\widehat{\bm{\beta}} - \bm{\beta}^\star, [\bm{y} - \bm{X}\widehat{\bm{\beta}}]_L)\)
    is the unique point in $\R^d \times [-L, +L]^n$ satisfying \[\max_{\bm{v} \in [-L, +L]^n}\, Q_{\widehat{\bm{u}}, \bm{v}} = \min_{\bm{u} \in \R^d}\, Q_{\bm{u}, \widehat{\bm{v}}}.\] We call it the \emph{saddle point} or \emph{Nash equilibrium} of $Q_{\bm{u}, \bm{v}}$.
\end{lemma}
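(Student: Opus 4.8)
The plan is to prove \cref{thm:objective-huber-legendre} by rewriting the perturbed Huber objective using the Legendre transform (convex conjugate) of the Huber loss, thereby isolating the dependence on $\bm{X}$ in a single bilinear term, and then verifying that the resulting min-max problem is convex-concave with the claimed unique saddle point.

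\textbf{Step 1: Variational form of the Huber loss.} First I would recall that the $L$-Lipschitz Huber loss $H_L$ is a closed, proper, convex function whose convex conjugate is $H_L^\star(v) = \frac{1}{2}v^2$ if $\abs{v} \le L$ and $+\infty$ otherwise. Equivalently, by biconjugation, for any $r \in \R$,
\[
    H_L(r) = \sup_{v \in [-L, +L]} \left\{ rv - \tfrac{1}{2}v^2 \right\},
\]
with the supremum attained uniquely at $v = [r]_L = H_L'(r)$. Summing over coordinates, $H_L(\bm{r}) = \sup_{\bm{v} \in [-L,+L]^n}\{\langle \bm{r}, \bm{v}\rangle - \tfrac12\norm{\bm{v}}^2\}$ with unique maximizer $\bm{v} = [\bm{r}]_L$. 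Applying this with $\bm{r} = \bm{y} - \bm{X}\bm{\beta}$, and writing $\bm{\beta} = \bm{\beta}^\star + \bm{u}$ so that $\bm{y} - \bm{X}\bm{\beta} = \bm{\eps}^\star - \bm{X}\bm{u}$, the objective of \cref{alg:objective-perturbation} becomes
\[
    \widehat{\bm{\beta}} = \bm{\beta}^\star + \argmin_{\bm{u} \in \R^d} \sup_{\bm{v} \in [-L,+L]^n} \left\{ \langle \bm{\eps}^\star - \bm{X}\bm{u}, \bm{v}\rangle - \tfrac12\norm{\bm{v}}^2 + \tfrac{\lambda}{2}\norm{\bm{u} + \bm{\beta}^\star}^2 + \nu\langle\bm{\xi}, \bm{u} + \bm{\beta}^\star\rangle\right\}.
\]
Expanding $\tfrac{\lambda}{2}\norm{\bm{u}+\bm{\beta}^\star}^2 = \tfrac{\lambda}{2}\norm{\bm{u}}^2 + \lambda\langle\bm{\beta}^\star,\bm{u}\rangle + \tfrac{\lambda}{2}\norm{\bm{\beta}^\star}^2$ and discarding the additive constants $\tfrac{\lambda}{2}\norm{\bm{\beta}^\star}^2 + \nu\langle\bm{\xi},\bm{\beta}^\star\rangle$ (which do not affect the $\argmin$), the inner objective is exactly $-\langle\bm{X}\bm{u},\bm{v}\rangle + \psi(\bm{u},\bm{v}) = Q_{\bm{u},\bm{v}}$ with $\psi$ as defined in the lemma statement. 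So $\widehat{\bm{u}} = \widehat{\bm{\beta}} - \bm{\beta}^\star$ solves $\min_{\bm{u}}\max_{\bm{v}} Q_{\bm{u},\bm{v}}$, and for the optimal $\widehat{\bm{u}}$ the inner maximizer is $\widehat{\bm{v}} = [\bm{\eps}^\star - \bm{X}\widehat{\bm{u}}]_L = [\bm{y} - \bm{X}\widehat{\bm{\beta}}]_L$.

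\textbf{Step 2: Saddle point and uniqueness.} Next I would establish that this min-max pair is genuinely a saddle point. The function $Q_{\bm{u},\bm{v}}$ is jointly continuous, convex in $\bm{u}$ (it is $\lambda$-strongly convex in $\bm{u}$ since $\psi$ contains $\tfrac{\lambda}{2}\norm{\bm{u}}^2$ and the rest is linear in $\bm{u}$), and strictly concave in $\bm{v}$ (because of the $-\tfrac12\norm{\bm{v}}^2$ term), and the constraint set $[-L,+L]^n$ for $\bm{v}$ is compact and convex. To invoke Sion's minimax theorem I need one of the two constraint sets compact; here $\bm{v}$ ranges over the compact set $[-L,+L]^n$, and although $\bm{u}$ ranges over all of $\R^d$, the $\lambda$-strong convexity in $\bm{u}$ together with the boundedness of $\bm{v}$ guarantees that the outer minimization is attained on a bounded set, so one may restrict $\bm{u}$ to a large compact ball without changing the value — making Sion's theorem applicable and giving $\min_{\bm{u}}\max_{\bm{v}} Q_{\bm{u},\bm{v}} = \max_{\bm{v}}\min_{\bm{u}} Q_{\bm{u},\bm{v}}$. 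The common value is attained at a saddle point $(\widehat{\bm{u}}, \widehat{\bm{v}})$ satisfying $\max_{\bm{v}} Q_{\widehat{\bm{u}},\bm{v}} = \min_{\bm{u}} Q_{\bm{u},\widehat{\bm{v}}}$. Uniqueness of $\widehat{\bm{u}}$ follows from $\lambda$-strong convexity of $\bm{u}\mapsto\max_{\bm{v}}Q_{\bm{u},\bm{v}}$ (a sup of $\lambda$-strongly convex functions), and uniqueness of $\widehat{\bm{v}}$ follows since $\bm{v}\mapsto Q_{\widehat{\bm{u}},\bm{v}}$ is strictly concave and hence has a unique maximizer, which we already identified as $[\bm{y} - \bm{X}\widehat{\bm{\beta}}]_L$. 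Finally, I would reconcile these identifications: the $\widehat{\bm{\beta}}$ from \cref{alg:objective-perturbation} is unique (its objective is $\lambda$-strongly convex), and the computation in Step 1 shows $\widehat{\bm{\beta}} - \bm{\beta}^\star$ is precisely the unique outer minimizer $\widehat{\bm{u}}$, closing the loop.

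\textbf{Main obstacle.} The genuinely delicate point is handling the unconstrained $\bm{u}\in\R^d$ domain: Sion's theorem as stated in the excerpt requires one constraint set to be compact, so I must argue carefully that strong convexity in $\bm{u}$ (with $\bm{v}$ confined to the compact cube) lets me truncate the $\bm{u}$-domain to a compact set without affecting either the value or the optimizers — and that after truncation the saddle point is interior, so the equilibrium condition transfers back to the unrestricted problem. Everything else — the conjugate of $H_L$, the algebraic completion-of-the-square identifying $Q_{\bm{u},\bm{v}}$, and the uniqueness arguments — is routine. I would also double-check the sign conventions: the lemma writes $Q_{\bm{u},\bm{v}} = -\langle\bm{X}\bm{u},\bm{v}\rangle + \psi(\bm{u},\bm{v})$, and the bilinear term arising from $\langle\bm{\eps}^\star - \bm{X}\bm{u},\bm{v}\rangle$ indeed contributes $-\langle\bm{X}\bm{u},\bm{v}\rangle$, while $\langle\bm{\eps}^\star,\bm{v}\rangle$ lands in $\psi$, matching the stated formula.
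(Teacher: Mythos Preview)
Your proposal is correct and follows essentially the same approach as the paper: apply the Legendre transform of $H_L$ to introduce the dual variable $\bm{v}$, make the change of variables $\bm{u} = \bm{\beta} - \bm{\beta}^\star$, identify the resulting objective as $Q_{\bm{u},\bm{v}}$, and conclude via the minimax theorem together with strong convexity in $\bm{u}$ and strong concavity in $\bm{v}$. Your treatment is in fact slightly more careful than the paper's, which simply invokes ``the minimax theorem'' without addressing the non-compactness of the $\bm{u}$-domain that you flag as the main obstacle.
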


\begin{proof}
    \cref{alg:objective-perturbation} simply computes
    \[\widehat{\bm{\beta}} = \argmin_{\bm{\beta} \in \R^d}\, H_L(\bm{y} - \bm{X}\bm{\beta}) + \frac{\lambda}{2}\norm{\bm{\beta}}^2 + \nu\langle \bm{\xi}, \bm{\beta}\rangle.\]
    Making the substitution $\bm{u} = \bm{\beta} - \bm{\beta}^\star$, we rewrite this as
    \[\widehat{\bm{u}} = \argmin_{\bm{u} \in \R^d}\, H_L(\bm{y} - \bm{X}(\bm{\beta}^\star + \bm{u})) + \frac{\lambda}{2}\norm{\bm{\beta}^\star + \bm{u}}^2 + \nu\langle \bm{\xi}, \bm{\beta}^\star + \bm{u}\rangle.\] Expanding the quadratic term, recalling that $\widehat{\bm{u}} = \widehat{\bm{\beta}} - \bm{\beta}^\star$ and $\bm{\varepsilon}^\star = \bm{y} - \bm{X}\bm{\beta}^\star$, and dropping terms that do not depend on $\bm{u}$, we simplify this to
    \begin{equation}
    \label{eq:huber-change-of-variables}
        \widehat{\bm{u}} = \argmin_{\bm{u} \in \R^d}\, H_L(\bm{\varepsilon}^\star - \bm{X}\bm{u}) + \frac{\lambda}{2}\norm{\bm{u}}^2 + \langle \lambda \bm{\beta}^\star + \nu\bm{\xi}, \bm{u}\rangle.
    \end{equation}
    Next, we take the \emph{Legendre transform} of $H_L$ in equation \eqref{eq:huber-change-of-variables}. In our case, this simply amounts to applying the following identity regarding $H_L$, which is valid for all $\bm{r} \in \R^n$: \[H_L(\bm{r}) = \max_{\bm{v} \in [-L, +L]^n}\, \langle \bm{\bm{r}, \bm{v}}\rangle - \frac{1}{2}\norm{\bm{v}}^2.\] The maximum is achieved iff $\bm{v} = [\bm{r}]_L$. Setting $\bm{r} = \bm{\varepsilon}^\star - \bm{X}\bm{u}$,
    \[
        \widehat{\bm{u}} = \argmin_{\bm{u} \in \R^d} \max_{\bm{v} \in [-L, +L]^n}\, \langle \bm{\varepsilon}^\star - \bm{X}\bm{u}, \bm{v}\rangle - \frac{1}{2}\norm{\bm{v}}^2 + \frac{\lambda}{2}\norm{\bm{u}}^2 + \langle \lambda \bm{\beta}^\star + \nu\bm{\xi}, \bm{u}\rangle.
    \]
    Note that this objective function is simply $Q_{\bm{u}, \bm{v}}$. Given $\bm{u}$, the maximum over $\bm{v}$ is achieved iff \[\bm{v} = [\bm{r}]_L = [\bm{\varepsilon}^\star - \bm{X}\bm{u}]_L.\] This coincides with $\widehat{\bm{v}} = [\bm{y} - \bm{X}\widehat{\bm{\beta}}]_L$ when $\bm{u} = \widehat{\bm{u}}$.
    The terms $\frac{\lambda}{2}\norm{\bm{u}}^2$ and $-\frac{1}{2}\norm{\bm{v}}^2$ ensure that $Q_{\bm{u}, \bm{v}}$ is $\lambda$-strongly convex in $\bm{u}$ and $1$-strongly concave in $\bm{v}$, so we conclude by the minimax theorem that $(\widehat{\bm{u}}, \widehat{\bm{v}})$ is the unique saddle point of $Q_{\bm{u}, \bm{v}}$.
\end{proof}

\subsubsection{Step 2: CGMT Universality}

The goal of this step is to use CGMT universality (\cref{thm:cgmt-universality} in this paper, or Corollary 2.6 of \cite{han2023universality}) to prove the following lemma, which relates the random variable $Q_{\bm{u}, \bm{v}}$ to the random variable $Q'_{\bm{u}, \bm{v}}$ that replaces $-\bm{X}$ with $\frac{1}{\sqrt{d}}\bm{G}$, where $\bm{G}$ has the same shape as $\bm{X}$ but independent, standard Gaussian entries:
\[Q'_{\bm{u}, \bm{v}} = \frac{1}{\sqrt{d}}\langle \bm{G} \bm{u}, \bm{v}\rangle + \psi(\bm{u}, \bm{v}).\]
Unlike $\bm{X}$, whose rows have $\ell^2$ norm bounded by $R$ (as required for differential privacy), the entries of the matrix $\bm{G}$ are unbounded in the worst case. Despite this, the lemma shows that the random variable obtained by minimizing over $\bm{u}$ and maximizing over $\bm{v}$  has nearly the same CDF regardless of whether we start with $Q_{\bm{u}, \bm{v}}$ or $Q'_{\bm{u}, \bm{v}}$. Note that the order of minimization over $\bm{u}$ and maximization over $\bm{v}$ below can be exchanged by the minimax theorem.

\begin{lemma}
\label{thm:objective-huber-universality}
    If $\bm{X}$ follows a subgaussian design and $\bm{y} = \bm{X}\bm{\beta}^\star + \bm{\eps}^\star$ for $\bm{\beta}^\star \rightsquigarrow \beta^\star_0$ and $\bm{\varepsilon}^\star \rightsquigarrow \varepsilon^\star_0$, then for all compact $\cS_{\bm{v}} \subseteq [-L, +L]^n$, for all compact $\cS_{\bm{u}} \subseteq [-L_{\bm{u}}, +L_{\bm{u}}]^d$ with $L_{\bm{u}} \le O(n^c)$ for a sufficiently small constant $c > 0$, and for all $\tau \in \R$, 
    \[\Pr\mathopen{}\left[\min_{\bm{u} \in \cS_{\bm{u}}} \max_{\bm{v} \in \cS_{\bm{v}}}\, Q'_{\bm{u}, \bm{v}} < \tau \right] \le \Pr\mathopen{}\left[\min_{\bm{u} \in \cS_{\bm{u}}} \max_{\bm{v} \in \cS_{\bm{v}}}\, Q_{\bm{u}, \bm{v}} < \tau + n^{1 - \Omega(1)}\right] + n^{-\Omega(1)}.\] The above inequality also holds if we swap $Q_{\bm{u}, \bm{v}}$ and $Q'_{\bm{u}, \bm{v}}$.
\end{lemma}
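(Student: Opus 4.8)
The plan is to derive \cref{thm:objective-huber-universality} from CGMT universality (\cref{thm:cgmt-universality}) after a change of normalization. The first step is to reconcile the normalizations: \cref{thm:cgmt-universality} is stated for random matrices with \emph{unit} entrywise variance appearing with the prefactor $n^{-3/2}$, whereas $\bm{X}$ has variance $1/d$ and enters $Q_{\bm{u},\bm{v}}$ with no prefactor. Since $\tfrac1n\cdot\tfrac1{\sqrt d} = \tfrac{\sqrt n}{\sqrt d}\cdot\tfrac1{n^{3/2}}$ and $\sqrt n/\sqrt d \to \delta^{-1/2}$, I would first divide the objective by $n$ and then apply the $\Theta(1)$ change of variables $\bm{u}' = \tfrac{\sqrt n}{\sqrt d}\bm{u}$; this rewrites both $\tfrac1n Q_{\bm{u},\bm{v}}$ and $\tfrac1n Q'_{\bm{u},\bm{v}}$ in the form $\tfrac1{n^{3/2}}\langle\bm{M}\bm{u}',\bm{v}\rangle + \widetilde{\psi}(\bm{u}',\bm{v})$ required by \cref{thm:cgmt-universality}, with $\bm{M}$ equal to the unit-variance matrix $-\sqrt d\,\bm{X}$ in the case of $Q$ and to $\bm{G}$ in the case of $Q'$, a common mean function $\widetilde{\psi}(\bm{u}',\bm{v}) = \tfrac1n\psi(\tfrac{\sqrt d}{\sqrt n}\bm{u}',\bm{v})$, and the enclosing boxes enlarged (if needed) to have $\ell^\infty$-radius at least $1$, which leaves $\cS_{\bm{u}},\cS_{\bm{v}}$ unchanged and makes them lie in boxes of size $L = O(n^c)$. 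The crucial point is that this initial division of the objective by $n$ is exactly what converts the $n^{-\Omega(1)}$ threshold slack produced by \cref{thm:cgmt-universality} into the $n^{1-\Omega(1)}$ slack claimed in the lemma.

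I would then verify the hypotheses of \cref{thm:cgmt-universality}. The first two moment conditions hold by construction of $-\sqrt d\,\bm{X}$ and $\bm{G}$; the third follows from the subgaussian-design bound $\norm{x_{ij}}_{\psi_2} = O(1/\sqrt n)$, which gives $\E\abs{-\sqrt d\,x_{ij}}^3 = d^{3/2}\,\E\abs{x_{ij}}^3 = O((d/n)^{3/2}) = O(1)$, and is immediate for $\bm{G}$. The function $\widetilde{\psi}$ is differentiable because $\psi$ is quadratic-plus-linear. It remains to bound $\mathscr{M} = \max_{\bm{u}',\bm{v}\in[-L,L]}\norm{\nabla\widetilde{\psi}(\bm{u}',\bm{v})}_1$: the quadratic part of $\psi$ contributes $O(dL/n) = O(n^c)$, and the linear part contributes $O(\norm{\bm{\beta}^\star}_1 + \norm{\bm{\xi}}_1 + \norm{\bm{\varepsilon}^\star}_1)/n$, where $\norm{\bm{\beta}^\star}_1 = O(n)$ and $\norm{\bm{\varepsilon}^\star}_1 = O(n)$ because $\bm{\beta}^\star\rightsquigarrow\beta^\star_0$ and $\bm{\varepsilon}^\star\rightsquigarrow\varepsilon^\star_0$ applied to the pseudo-Lipschitz function $f(t)=\abs{t}$ give $\tfrac1d\norm{\bm{\beta}^\star}_1 = \E\abs{\beta^\star_0}\pm n^{-\Omega(1)}$ and likewise for $\bm{\varepsilon}^\star$ (using that $\beta^\star_0,\varepsilon^\star_0$ have finite first moments), and $\norm{\bm{\xi}}_1 = O(n)$ with probability $1 - n^{-\Omega(1)}$ by concentration of a sum of $d$ i.i.d.\ folded-normal variables. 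Hence on a ``good event'' of probability $1 - n^{-\Omega(1)}$ over $(\bm{\xi},\bm{\beta}^\star,\bm{\varepsilon}^\star)$ we have $\mathscr{M} = O(n^c)$; since $\bm{X}$ and $\bm{G}$ are independent of $(\bm{\xi},\bm{\beta}^\star,\bm{\varepsilon}^\star)$, I may condition on this event and treat $\widetilde{\psi}$ as a fixed function when invoking the theorem.

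Finally, I would instantiate the conclusion of \cref{thm:cgmt-universality}. Choosing $\omega$ to be a fixed large power of $n$ makes $\mathscr{M}/\omega = n^{-\Omega(1)}$, and taking $c$ to be the advertised sufficiently small constant (any $c < 1/12$ works) makes $L^2\log^{2/3}(L\omega)/n^{1/6} = n^{-\Omega(1)}$; then choosing the gap $g$ to be a small enough negative power of $n$ that $g^{-3}$ times this bound remains $n^{-\Omega(1)}$ yields a total error of $n^{-\Omega(1)}$. Applying \cref{thm:cgmt-universality} with its matrix $\bm{A}$ taken to be $\bm{G}$ and $\bm{B}$ taken to be $-\sqrt d\,\bm{X}$ then gives $\Pr[\min_{\bm{u}}\max_{\bm{v}} Q'_{\bm{u},\bm{v}} < \tau] \le \Pr[\min_{\bm{u}}\max_{\bm{v}} Q_{\bm{u},\bm{v}} < \tau + ng] + n^{-\Omega(1)}$ with $ng = n^{1-\Omega(1)}$ (after undoing the division by $n$); swapping the roles of the two matrices (whose moment hypotheses are symmetric) gives the reverse inequality, and adding in the probability of the bad event completes the argument.

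The step I expect to be the main obstacle is the normalization bookkeeping: recognizing that the objective must be scaled by $1/n$ so that the effective bilinear prefactor $\tfrac1{n\sqrt d}$ lines up with the $\tfrac1{n^{3/2}}$ of \cref{thm:cgmt-universality} --- which is simultaneously the origin of the otherwise mysterious $n^{1-\Omega(1)}$ threshold slack --- and then calibrating $c$ small enough that the $n^{-1/6}$ gain in \cref{thm:cgmt-universality} dominates both the $L^2 = n^{2c}$ loss and (with a suitable choice of $\omega$) the $\mathscr{M}/\omega$ term. A secondary point needing care is that $\bm{\xi}$ enters $\psi$, so the universality comparison must be carried out conditionally on $\bm{\xi}$ (and on $\bm{\beta}^\star,\bm{\varepsilon}^\star$, if they are random), and the bound $\norm{\bm{\xi}}_1 = O(n)$ holds only with high probability, which is precisely why the lemma's conclusion also carries an additive $n^{-\Omega(1)}$ term.
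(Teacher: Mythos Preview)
Your proposal is correct and follows essentially the same route as the paper: rescale the objective by $1/n$, apply \cref{thm:cgmt-universality} with $\bm{A}=-\sqrt{d}\,\bm{X}$ and $\bm{B}=\bm{G}$ (and vice versa), bound $\mathscr{M}$ using the convergence assumptions on $\bm{\beta}^\star,\bm{\varepsilon}^\star$ and concentration for $\bm{\xi}$, and tune $g$ and $\omega$ so the resulting error is $n^{-\Omega(1)}$. The paper's proof is terser---it chooses $g=n^{-1/21}$ explicitly and bounds $\mathscr{M}$ via $\ell^\infty$ norms (through \cref{thm:uniform-bound-from-convergence}) rather than your direct $\ell^1$ argument from $f(t)=\abs{t}$---but these are cosmetic differences, and your explicit change of variables $\bm{u}'=\tfrac{\sqrt n}{\sqrt d}\bm{u}$ and conditioning step actually make the normalization bookkeeping cleaner than the paper's presentation.
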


\begin{proof}
    We calculate the gradient of $\psi$, as defined in \cref{thm:objective-huber-legendre}: \[\nabla \psi(\bm{u}, \bm{v}) = \left(\lambda \bm{u} + \lambda \bm{\beta}^\star + \nu\bm{\xi},\, -\bm{v} + \bm{\varepsilon}^\star\right).\]
    Since $\bm{\beta}^\star \rightsquigarrow \beta^\star_0$ and $\bm{\varepsilon}^\star \rightsquigarrow \varepsilon^\star_0$, \cref{thm:uniform-bound-from-convergence} implies that w.h.p., their $\ell^\infty$ norms grow no faster than $O(n^{c})$. The same is true of $\bm{\xi}$ with high probability by Gaussianity, and for $\bm{u}$ and $\bm{v}$ by the assumptions on $\cS_{\bm{u}}$ and $\cS_{\bm{v}}$, respectively. It follows that w.h.p., \[\norm{\nabla \psi(\bm{u}, \bm{v})}_1 = O(n^{1 + c}).\] Rescaling the objective by $\frac{1}{n}$ and applying \cref{thm:cgmt-universality} with $\bm{A} = -\sqrt{d}\bm{X}$ and $\bm{B} = \bm{G}$ (and vice versa) and $g = n^{-1/21}$, we conclude that \[\Pr\mathopen{}\left[\min_{\bm{u} \in \cS_{\bm{u}}} \max_{\bm{v} \in \cS_{\bm{v}}}\, Q'_{\bm{u}, \bm{v}} < \tau\right] = \Pr\mathopen{}\left[\min_{\bm{u} \in \cS_{\bm{u}}} \max_{\bm{v} \in \cS_{\bm{v}}}\, Q_{\bm{u}, \bm{v}} < \tau \pm O(n^{20/21})\right] \pm O(n^{-\frac{1}{6} + \frac{1}{7} + 2c + o(1)}).\]
\end{proof}

The key nontrivial assumption of \cref{thm:objective-huber-universality} is that the $\ell^\infty$ diameter of the constraint set $\cS_{\bm{u}}$ satisfies $L_{\bm{u}} \le O(n^c)$ for a sufficiently small $c > 0$. Therefore, to make effective use of this result, we must verify that for all $c > 0$, w.h.p., \[\norm{\widehat{\bm{\beta}} - \bm{\beta}^\star}_\infty = O(n^c).\] By the triangle inequality, it suffices to check that $\norm{\bm{\beta}^\star}_\infty = O(n^c)$ and $\norm{\widehat{\bm{\beta}}}_\infty = O(n^c)$. For the former, \cref{thm:uniform-bound-from-convergence} and $\bm{\beta}^\star \rightsquigarrow \beta^\star_0$ imply that $\norm{\bm{\beta}^\star}_\infty = O(n^c)$ with high probability. For the latter, we prove the following lemma.

\begin{lemma}
\label{thm:objective-uniform-bound}
    Let $\bm{\hb}$ be the output of \cref{alg:objective-perturbation} with $L$-Lipschitz Huber loss $\ell(\bm{\beta}; (\bm{x}, y)) = H_L(y - \langle \bm{x}, \bm{\beta}\rangle)$. Fix any $\bm{\eps}^\star \in \R^n$ and $\bm{\beta}^\star \rightsquigarrow \beta^\star_0$. If $\bm{X}$ follows a subgaussian design and $\bm{y} = \bm{X}\bm{\beta}^\star + \bm{\eps}^\star$, then for all $c > 0$, w.h.p. over $\bm{X}$ and $\bm{\xi}$,
    \[\norm{\widehat{\bm{\beta}}}_\infty \le O(n^c).\]
\end{lemma}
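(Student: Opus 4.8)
Since $d = \Theta(n)$, it suffices to fix $c > 0$ and a single coordinate $j \in [d]$, to show $\abs{\widehat\beta_j} \le O(n^c)$ with probability $1 - n^{-\omega(1)}$, and then to union-bound over the $\Theta(n)$ coordinates (this is why I aim for failure probability $n^{-\omega(1)}$ per coordinate rather than merely $n^{-\Omega(1)}$). Throughout I would condition on three events, each holding with the required probability: $\norm{\bm{X}_{\cdot j}} \le 2\sqrt{n/d}$, where $\bm{X}_{\cdot j}$ is the $j$-th column of $\bm{X}$ (a subexponential concentration bound for $\sum_i x_{ij}^2$, failing with probability $e^{-\Omega(d)}$); $\abs{\xi_j} \le (\log n)^{2/3}$ (a Gaussian tail bound); and $\norm{\bm{\beta}^\star}_\infty \le n^{c}$ (from $\bm{\beta}^\star \rightsquigarrow \beta_0^\star$ and \cref{thm:uniform-bound-from-convergence}). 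Note I would \emph{not} need any bound on $\norm{\bm{X}}$ or, crucially, on $\bm{\eps}^\star$.

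\textbf{First-order condition and leave-one-feature-out.} Since $H_L$ is $C^1$ with $\nabla H_L = [\,\cdot\,]_L$ and the objective of \cref{alg:objective-perturbation} is $\lambda$-strongly convex, $\widehat{\bm{\beta}}$ is the unique solution of $\bm{X}^\top[\bm{r}]_L = \lambda\widehat{\bm{\beta}} + \nu\bm{\xi}$ with $\bm{r} = \bm{y} - \bm{X}\widehat{\bm{\beta}}$, so its $j$-th coordinate reads $\langle\bm{X}_{\cdot j}, [\bm{r}]_L\rangle = \lambda\widehat\beta_j + \nu\xi_j$. The key observation is that $[\bm{r}]_L \in [-L, L]^n$ \emph{deterministically}, so $\norm{[\bm{r}]_L} \le L\sqrt n$; if $[\bm{r}]_L$ were independent of $\bm{X}_{\cdot j}$, then \cref{thm:subgaussian-inner-product} (the entries of $\bm{X}_{\cdot j}$ are subgaussian of norm $O(1/\sqrt n)$) would give $\abs{\langle\bm{X}_{\cdot j}, [\bm{r}]_L\rangle} \le L(\log n)^{2/3} = n^{o(1)}$ with probability $1 - n^{-\omega(1)}$, and we would be done. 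To engineer this independence, I would introduce the leave-one-feature-out estimator: let $\bm{X}^{(-j)}$ be $\bm{X}$ with its $j$-th column zeroed, set $\bm{y}^{(-j)} = \bm{y} - \beta^\star_j\bm{X}_{\cdot j} = \bm{X}^{(-j)}\bm{\beta}^\star + \bm{\eps}^\star$, and let $\widehat{\bm{\beta}}^{(-j)} = \argmin_{\bm{\beta}} H_L(\bm{y}^{(-j)} - \bm{X}^{(-j)}\bm{\beta}) + \tfrac\lambda2\norm{\bm{\beta}}^2 + \nu\langle\bm{\xi},\bm{\beta}\rangle$, with residual $\bm{r}^{(-j)} = \bm{y}^{(-j)} - \bm{X}^{(-j)}\widehat{\bm{\beta}}^{(-j)}$. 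By construction $\widehat{\bm{\beta}}^{(-j)}$ and $\bm{r}^{(-j)}$ are measurable functions of $(\bm{X}^{(-j)}, \bm{\beta}^\star, \bm{\eps}^\star, \bm{\xi})$, hence independent of $\bm{X}_{\cdot j}$; since $[\bm{r}^{(-j)}]_L$ is again in $[-L, L]^n$, the same concentration argument gives $\abs{\langle\bm{X}_{\cdot j}, [\bm{r}^{(-j)}]_L\rangle} \le n^{o(1)}$ with probability $1 - n^{-\omega(1)}$.

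\textbf{The crux: closing the loop with a push-through identity.} What remains is to transfer this bound from $\langle\bm{X}_{\cdot j}, [\bm{r}^{(-j)}]_L\rangle$ to $\langle\bm{X}_{\cdot j}, [\bm{r}]_L\rangle$. Subtracting the two first-order conditions, writing $[\bm{r}]_L - [\bm{r}^{(-j)}]_L = \bm{D}(\bm{r} - \bm{r}^{(-j)})$ for the diagonal matrix $\bm{D}$ of coordinatewise difference quotients of the $1$-Lipschitz nondecreasing map $[\,\cdot\,]_L$ (so $\bm{0} \preceq \bm{D} \preceq \bm{I}$), and substituting $\bm{r} - \bm{r}^{(-j)} = (\beta^\star_j - \widehat\beta_j)\bm{X}_{\cdot j} - \bm{X}^{(-j)}(\widehat{\bm{\beta}} - \widehat{\bm{\beta}}^{(-j)})$, one solves for the coordinates $\ne j$ of $\widehat{\bm{\beta}} - \widehat{\bm{\beta}}^{(-j)}$ (the $j$-th coordinate is pinned down separately by the optimality conditions) and, applying the identity $\bm{A}(\bm{A}^\top\bm{A} + \lambda\bm{I})^{-1}\bm{A}^\top = \bm{I} - \lambda(\bm{A}\bm{A}^\top + \lambda\bm{I})^{-1}$ with $\bm{A} = \bm{D}^{1/2}\bm{X}^{(-j)}$, arrives at the clean identity
\[
    (\lambda + \rho_j)\,\widehat\beta_j = \langle\bm{X}_{\cdot j}, [\bm{r}^{(-j)}]_L\rangle + \rho_j\,\beta^\star_j - \nu\xi_j, \qquad \rho_j = \lambda\big\langle\bm{D}^{1/2}\bm{X}_{\cdot j},\, (\bm{A}\bm{A}^\top + \lambda\bm{I})^{-1}\bm{D}^{1/2}\bm{X}_{\cdot j}\big\rangle.
\]
The structural point that makes the argument work is that $0 \le \rho_j \le \langle\bm{X}_{\cdot j}, \bm{D}\bm{X}_{\cdot j}\rangle \le \norm{\bm{X}_{\cdot j}}^2 = O(1)$: the feedback of $\widehat\beta_j$ onto itself through the perturbed residual has a \emph{favorable} sign, so it adds to the regularizer $\lambda$ rather than competing with it. Hence on the three conditioning events,
\[
    \lambda\abs{\widehat\beta_j} \le (\lambda + \rho_j)\abs{\widehat\beta_j} \le \abs{\langle\bm{X}_{\cdot j}, [\bm{r}^{(-j)}]_L\rangle} + \rho_j\abs{\beta^\star_j} + \nu\abs{\xi_j} \le O(n^c),
\]
and a union bound over $j \in [d]$ completes the proof.

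\textbf{Anticipated difficulties.} The entire difficulty is concentrated in the crux step. The naive route --- bounding $\abs{\langle\bm{X}_{\cdot j}, [\bm{r}]_L - [\bm{r}^{(-j)}]_L\rangle} \le \norm{\bm{X}_{\cdot j}}\,\norm{\bm{r} - \bm{r}^{(-j)}}$ and controlling $\norm{\widehat{\bm{\beta}} - \widehat{\bm{\beta}}^{(-j)}}$ via strong convexity --- incurs an $\ell^1$-to-$\ell^2$ loss of $\sqrt n$ in the Lipschitz constant of the Huber sum and yields only the vacuous bound $\abs{\widehat\beta_j} = O(\sqrt n)$; moreover bootstrapping does not improve the exponent, since the resulting recursion $M \mapsto Cn^{1/4}\sqrt{M}$ has a fixed point at $\Theta(n^{1/2})$. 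The push-through identity is precisely what removes both the $\sqrt n$ and any dependence on $\norm{\bm{X}}$, by diagonalizing the self-feedback exactly. One technical subtlety is that $\bm{D}$ may have zero entries, which I would handle by restricting the matrix identities to the support of $\bm{D}$ (or by replacing $\bm{D}$ with $\bm{D} + \epsilon\bm{I}$ and letting $\epsilon \to 0^+$). Finally, it is worth emphasizing that clipping is what makes the argument robust to an arbitrary $\bm{\eps}^\star$: only the \emph{clipped} residual $[\bm{r}^{(-j)}]_L$ enters, and it is bounded entrywise by $L$ regardless of how large $\bm{\eps}^\star$ is.
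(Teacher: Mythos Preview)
Your proof is correct, and it shares the leave-one-feature-out skeleton with the paper's, but the ``crux'' step is handled quite differently. The paper defines $\widehat{\bm{\beta}}^{(j)}$ as the minimizer of the \emph{same} perturbed objective $H$ subject to the constraint $\beta_j = \beta^\star_j$. The key observation is then that $\nabla H(\widehat{\bm{\beta}}^{(j)})$ vanishes in every coordinate except the $j$-th (by optimality in the unconstrained directions), so $\lambda$-strong convexity gives, in one line,
\[
    |\widehat\beta_j - \beta^\star_j| \le \|\widehat{\bm{\beta}} - \widehat{\bm{\beta}}^{(j)}\| \le \tfrac{1}{\lambda}\|\nabla H(\widehat{\bm{\beta}}^{(j)})\| = \tfrac{1}{\lambda}\bigl|\nabla_j H(\widehat{\bm{\beta}}^{(j)})\bigr| = \tfrac{1}{\lambda}\bigl|\langle \bm{X}_{\cdot j}, [\bm{y} - \bm{X}\widehat{\bm{\beta}}^{(j)}]_L\rangle + \lambda\beta^\star_j + \nu\xi_j\bigr|,
\]
and the inner product concentrates by the same independence argument you use. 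No matrix algebra, no push-through identity. In other words, the paper's route is essentially the ``naive strong-convexity'' approach you dismissed --- the $\sqrt n$ loss you anticipated never materializes, because with the \emph{constrained} leave-one-out the gradient has support of size one, not $d$. Your push-through identity yields a genuinely sharper relation (the self-feedback $\rho_j \ge 0$ strengthens $\lambda$ to $\lambda + \rho_j$), and it is the right object if one later wants the debiased estimator or exact AMP-style formulas; for the $n^{o(1)}$ bound needed here, though, it is more machinery than required.
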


\begin{proof}
    We use the leave-one-(feature)-out technique, as in \cite{han2023universality}, while accounting for the noise for differential privacy. First, denote the perturbed objective function by $H$:
    \[
        H(\bm{\beta}) = H_L(\bm{y} - \bm{X}\bm{\beta}) + \frac{\lambda}{2}\norm{\bm{\beta}}^2 + \nu\langle \bm{\xi}, \bm{\beta}\rangle.
    \]
    We wish to study the estimator $\widehat{\bm{\beta}}^{(j)}$ obtained by omitting the $j$\textsuperscript{th} feature from all data points. By equation \eqref{eq:huber-change-of-variables}, in which $\bm{u} = \bm{\beta} - \bm{\beta}^\star$ and $\bm{\hu} = \bm{\hb} - \bm{\beta}^\star$, we see that dropping the $j$\textsuperscript{th} column from $\bm{X}$ is equivalent to enforcing the constraint $u_j = 0$, or $\beta_j = \beta^\star_j$:
    \[
        \widehat{\bm{\beta}}^{(j)} = \argmin_{\bm{\beta} \in \mathbb{R}^d} \; H(\bm{\beta}) \quad\text{s.t.}\quad \beta_j = \beta^\star_j.
    \]
    To bound the $j$\textsuperscript{th} coordinate of $\widehat{\bm{\beta}}$, we will relate $\widehat{\bm{\beta}}$ to $\widehat{\bm{\beta}}^{(j)}$, whose $j$\textsuperscript{th} coordinate is $\beta^\star_j$ by definition. To this end, note that $H$ is $\lambda$-strongly convex, so
    \[
        \|\widehat{\bm{\beta}} - \widehat{\bm{\beta}}^{(j)}\| \le \frac{1}{\lambda}\|\nabla H(\widehat{\bm{\beta}}) - \nabla H(\widehat{\bm{\beta}}^{(j)})\|.
    \]
    The distance $\|\widehat{\bm{\beta}} - \widehat{\bm{\beta}}^{(j)}\|$ is at least $|\widehat{\beta}_j - \widehat{\beta}^{(j)}_j|$. 
    Since $\widehat{\bm{\beta}}$ minimizes $H$, the gradient $\nabla H(\widehat{\bm{\beta}})$ vanishes. Similarly, the gradient $\nabla H(\widehat{\bm{\beta}}^{(j)})$ vanishes in all but its $j$\textsuperscript{th} coordinate, so the inequality simplifies to
    \[
        |\widehat{\beta}_j - \beta^\star_j| \le \left|\frac{1}{\lambda}\nabla_j H(\widehat{\bm{\beta}}^{(j)})\right|.
    \]
    All that remains is to bound the partial derivative on the right side. By direct calculation,
    \[\nabla_j H(\bm{\beta}) = \left\langle [\bm{y} - \bm{X}\bm{\beta}]_L, \bm{X}_j\right\rangle + \lambda \beta_j + \nu\xi_j.\]
    Here, $\bm{X}_j \in \mathbb{R}^n$ denotes the $j$\textsuperscript{th} column of $\bm{X}$ (\emph{not} the $j$\textsuperscript{th} data point). Substituting $\bm{\beta} = \widehat{\bm{\beta}}^{(j)}$, whose $j$\textsuperscript{th} coordinate is $\beta^\star_j$, we see that
    \[\abs{\widehat{\beta}_j - \beta^\star_j} \le \left\lvert \frac{1}{\lambda}\nabla_j H(\widehat{\bm{\beta}}^{(j)}) \right\rvert = {\left\lvert \frac{1}{\lambda} \bigl\langle[\bm{y} - \bm{X}\bm{\hb}^{(j)}]_L, \bm{X}_j\bigr\rangle + \beta^\star_j + \frac{\nu}{\lambda}\xi_j \right\rvert}.\]
    Observe that we can rewrite the truncated residual vector  as $[\bm{y} - \bm{X}\bm{\hb}^{(j)}]_L = [\bm{\eps}^\star - \bm{X}(\bm{\hb}^{(j)} - \bm{\beta}^\star)]_L$, which, along with the fact that $\bm{\hb}^{(j)} - \bm{\beta}^\star$ vanishes in its $j$\textsuperscript{th} coordinate, makes it clear that it is independent of the vector $\bm{X}_j$. Since this truncated vector lies in $[-L,+L]^n$ and is independent of $\bm{X}_j$, we have by subgaussianity (\cref{thm:subgaussian-inner-product}) and the triangle inequality that
    \[
        \bigl\lVert \abs{\widehat{\beta}_j - \beta^\star_j} - \abs{\beta^\star_j} \bigr\rVert_{\psi_2} \le \frac{L\sqrt{n}}{\lambda}\norm{\bm{X}_j}_{\psi_2} + \frac{\nu}{\lambda} \norm{\xi_j}_{\psi_2}.
    \]
    In a subgaussian design, $\norm{\bm{X}_j}_{\psi_2} = O(1/\sqrt{n})$. Since $\bm{\xi}$ is Gaussian, $\norm{\xi_j}_{\psi_2} = O(1)$. It follows that $ \bigl\lVert \abs{\widehat{\beta}_j - \beta^\star_j} - \abs{\beta^\star_j} \bigr\rVert_{\psi_2} \le O(1)$ for each $j \in [d]$. Hence, by a union bound over all $j \in [d]$, as $t \to \infty$, \[\Pr\mathopen{}\left[\max_{j \in [d]}\, \Abs{\abs{\widehat{\beta}_j - \beta^\star_j} - \abs{\beta^\star_j}} > t\right] \le n \cdot \exp(-\Omega(t^2)).\] Setting $t = \log n$ and recalling that $\norm{\bm{\beta}}_\infty = \max_{j \in [d]}\,\abs{\beta_j}$, we see that with probability $1 - n^{-\Omega(\log n)}$,
    \[
        \norm{\bm{\hb}}_\infty \le \log n + 2\norm{\bm{\beta}^\star}_\infty.
    \]
    To complete the proof, recall from \cref{thm:uniform-bound-from-convergence} that for all $c > 0$, w.h.p., $\norm{\bm{\beta}^\star}_\infty = O(n^c)$ as well.
\end{proof}

\subsubsection{Step 3: CGMT Analysis}

In this step, we analyze the random variable $Q'_{\bm{u}, \bm{v}}$ using CGMT (\cref{thm:cgmt}), roughly following the strategy of \cite{thrampoulidis2018mestimators}.

In some ways, our proof is simpler than the proof in \cite{thrampoulidis2018mestimators}. This is partly because  we focus on the special case of robust linear regression with $\ell^2$ regularization, as opposed to a general loss function $\mathcal{L}$ and general regularizer $f$. It is also simpler because our proof circumvents the introduction of several extraneous scalar- and vector-valued variables that are used in prior work.

In other ways, our proof is somewhat more complex than prior papers in this line of work. This is partly because we consider the random perturbation $\bm{\xi}$ introduced for differential privacy, and partly because in order to make do with our weaker assumptions and still prove a stronger conclusion, we must take great care to bound the magnitude of various error terms.

To begin, we recall and analyze the \emph{auxiliary} random variable for CGMT, defined as
\[
    Q''_{\bm{u}, \bm{v}} = \frac{\norm{\bm{u}}}{\sqrt{d}}\langle \bm{h}, \bm{v}\rangle - \frac{\norm{\bm{v}}}{\sqrt{d}}\langle \bm{g}, \bm{u}\rangle + \psi(\bm{u}, \bm{v}) \qquad \text{for} \qquad \bm{g} \sim \mathcal{N}(\bm{0}, \bm{I}_{d}) \text{ and } \bm{h} \sim \mathcal{N}(\bm{0}, \bm{I}_{n}).
\]
Eventually, we will use CGMT (\cref{thm:cgmt}) to relate $Q_{\bm{u}, \bm{v}}$ via $Q'_{\bm{u}, \bm{v}}$ to $Q''_{\bm{u}, \bm{v}}$.

\begin{lemma}
\label{thm:huber-objective-aux}
    Define $\sigma^\star, \tau^\star > 0$ as in \cref{thm:main-huber-objective-perturbation}\ref{thm:main-huber-objective-utility}, and consider the pair $(\widetilde{\bm{u}}, \widetilde{\bm{v}})$ with
    \begin{align*}
        \widetilde{\bm{u}} = \tau^\star\mathopen{}\left(\sqrt{\frac{1}{\delta} \E\mathopen{}\left[\frac{\sigma^\star Z + \varepsilon^\star_0}{1 + \tau^\star}\right]_L^2} \bm{g} - \lambda\bm{\beta}^\star - \nu\bm{\xi}\right)\mathclose{},  \qquad \widetilde{\bm{v}} = \left[\frac{\sigma^\star \bm{h} + \bm{\varepsilon}^\star}{1+\tau^\star}\right]_L\mathclose{}.
    \end{align*}
    Then, under the assumptions of \cref{thm:main-huber-objective-perturbation}\ref{thm:main-huber-objective-utility}, there exists a constant $c^\star \in \R$ such that w.h.p.,
    \begin{itemize}
        \item The function $\bm{u} \mapsto Q''_{\bm{u}, \bm{\tv}}$ is $\lambda$-strongly convex in $\bm{u} \in \R^d$,
        \item The function $\bm{v} \mapsto Q''_{\bm{\tu}, \bm{v}}$ is $1$-strongly concave in $\bm{v} \in [-L, +L]^n$,
        \item The pair $(\bm{\tu}, \bm{\tv})$ satisfies 
        \[
            c^\star n - n^{1 - \Omega(1)} \le \min_{\bm{u} \in \R^d} Q''_{\bm{u}, \widetilde{\bm{v}}} \le  Q''_{\widetilde{\bm{u}}, \widetilde{\bm{v}}} \le \max_{\bm{v} \in [-L, +L]^n} Q''_{\widetilde{\bm{u}}, \bm{v}} \le c^\star n + n^{1 - \Omega(1)}.
        \]
    \end{itemize}
    We call $(\widetilde{\bm{u}}, \widetilde{\bm{v}})$ an \emph{approximate saddle point} of $Q''_{\bm{u}, \bm{v}}$.
\end{lemma}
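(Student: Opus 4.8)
The plan is to derive all three bullets from a common package of \emph{concentration facts} pinning down the scalar limits of the random quantities built from $\bm g$, $\bm h$, $\bm\xi$, $\bm{\beta}^\star$, $\bm{\eps}^\star$ that surface once $Q''$ is reduced to a scalar problem. Write $v^\star := \tfrac1\delta\E[\tfrac{\sigma^\star Z+\eps^\star_0}{1+\tau^\star}]_L^2$ (so $\widetilde{\bm u}=\tau^\star(\sqrt{v^\star}\bm g-\lambda\bm{\beta}^\star-\nu\bm\xi)$) and $\bm w := \lambda\bm{\beta}^\star+\nu\bm\xi-\tfrac{\norm{\widetilde{\bm v}}}{\sqrt d}\bm g$. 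The facts I will record, each holding w.h.p. at rate $n^{-\Omega(1)}$, are: $\tfrac1d\norm{\bm g}^2,\tfrac1d\norm{\bm\xi}^2\to1$, $\tfrac1d\norm{\bm{\beta}^\star}^2\to\kappa^2$, $\tfrac1n\norm{\widetilde{\bm v}}^2\to\delta v^\star$ and $\tfrac1n\langle\bm{\eps}^\star,\widetilde{\bm v}\rangle\to\E[\eps^\star_0[\tfrac{\sigma^\star Z+\eps^\star_0}{1+\tau^\star}]_L]$ (from $\bm{\beta}^\star\rightsquigarrow\beta^\star_0$, $\bm{\eps}^\star\rightsquigarrow\eps^\star_0$, and Lipschitz concentration for the Gaussians); consequently $\tfrac1d\norm{\widetilde{\bm u}}^2\to(\sigma^\star)^2$ and $\tfrac1d\norm{\bm w}^2\to(\sigma^\star/\tau^\star)^2$, both being exactly \eqref{eq:huber-1} rearranged once the off-diagonal cross terms are shown to be $o(1)$; $\tfrac1{\sqrt d}\langle\bm g,\widetilde{\bm u}\rangle$ and $\tfrac1{\sqrt d}\langle\bm h,\widetilde{\bm v}\rangle$ are both nonnegative w.h.p.\ (the first behaving like $\tau^\star\sqrt{v^\star}\sqrt d$, the second like a positive constant times $\sqrt d$), where for the latter I apply Stein's lemma to the nondecreasing map $t\mapsto[\tfrac{\sigma^\star t+e}{1+\tau^\star}]_L$ conditionally on $\bm{\eps}^\star$ to get $\tfrac1n\langle\bm h,\widetilde{\bm v}\rangle\to\tfrac{\sigma^\star}{1+\tau^\star}\Pr[-L<\tfrac{\sigma^\star Z+\eps^\star_0}{1+\tau^\star}<L]$; and, using \emph{both} \eqref{eq:huber-1} and \eqref{eq:huber-2}, $\tfrac1{\sqrt d}\big(\norm{\bm w}-\tfrac1{\sqrt d}\langle\bm h,\widetilde{\bm v}\rangle\big)\to\lambda\sigma^\star>0$ and $\tfrac1n\big(\tfrac1{\sqrt d}\langle\bm g,\widetilde{\bm u}\rangle\big)^2\to(\tau^\star)^2\E[\tfrac{\sigma^\star Z+\eps^\star_0}{1+\tau^\star}]_L^2$. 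The one point needing care is that $\bm{\eps}^\star$ is only assumed to satisfy $\bm{\eps}^\star\rightsquigarrow\eps^\star_0$, not to have i.i.d.\ coordinates, so a sum like $\tfrac1n\sum_i f(h_i,\eps^\star_i)$ must be controlled by freezing $\bm{\eps}^\star$, concentrating the (Lipschitz-in-$\bm h$) average, and then averaging over the frozen coordinates using the $\rightsquigarrow$-calculus; the needed $\ell^\infty$ bounds on $\bm{\beta}^\star$, $\bm{\eps}^\star$ come from \cref{thm:uniform-bound-from-convergence}, and those on $\bm g,\bm h,\bm\xi$ (hence $\widetilde{\bm u}$) from Gaussianity.

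Given these facts the first two bullets are immediate. Since $\tfrac1{\sqrt d}\langle\bm h,\widetilde{\bm v}\rangle\ge0$ w.h.p., the map $\bm u\mapsto Q''_{\bm u,\widetilde{\bm v}}=\tfrac\lambda2\norm{\bm u}^2+\tfrac{\langle\bm h,\widetilde{\bm v}\rangle}{\sqrt d}\norm{\bm u}+\langle\bm w,\bm u\rangle-\tfrac12\norm{\widetilde{\bm v}}^2+\langle\bm{\eps}^\star,\widetilde{\bm v}\rangle$ is the sum of a $\lambda$-strongly convex quadratic, a nonnegative multiple of the convex function $\norm{\cdot}$, and an affine function, hence $\lambda$-strongly convex on $\R^d$; symmetrically, since $\tfrac1{\sqrt d}\langle\bm g,\widetilde{\bm u}\rangle\ge0$ w.h.p., $-Q''_{\widetilde{\bm u},\bm v}$ is $1$-strongly convex on $[-L,+L]^n$, i.e.\ $\bm v\mapsto Q''_{\widetilde{\bm u},\bm v}$ is $1$-strongly concave there.

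For the third bullet I will use the trivial sandwich $\min_{\bm u\in\R^d}Q''_{\bm u,\widetilde{\bm v}}\le Q''_{\widetilde{\bm u},\widetilde{\bm v}}\le\max_{\bm v\in[-L,+L]^n}Q''_{\widetilde{\bm u},\bm v}$ and show the two outer quantities both equal $c^\star n\pm n^{1-\Omega(1)}$ for one and the same $c^\star$. For the left side, freezing $\widetilde{\bm v}$ and minimizing over directions gives $\min_{\norm{\bm u}=r}\langle\bm w,\bm u\rangle=-r\norm{\bm w}$, so $\min_{\bm u}Q''_{\bm u,\widetilde{\bm v}}=-\tfrac12\norm{\widetilde{\bm v}}^2+\langle\bm{\eps}^\star,\widetilde{\bm v}\rangle+\min_{r\ge0}\big\{\tfrac\lambda2 r^2+\big(\tfrac{\langle\bm h,\widetilde{\bm v}\rangle}{\sqrt d}-\norm{\bm w}\big)r\big\}=-\tfrac12\norm{\widetilde{\bm v}}^2+\langle\bm{\eps}^\star,\widetilde{\bm v}\rangle-\tfrac1{2\lambda}\max\big(0,\norm{\bm w}-\tfrac{\langle\bm h,\widetilde{\bm v}\rangle}{\sqrt d}\big)^2$, and the concentration facts (notably $\tfrac1{\sqrt d}(\norm{\bm w}-\tfrac1{\sqrt d}\langle\bm h,\widetilde{\bm v}\rangle)\to\lambda\sigma^\star>0$) turn this into $c^\star n\pm n^{1-\Omega(1)}$ with $c^\star:=-\tfrac12\E[\tfrac{\sigma^\star Z+\eps^\star_0}{1+\tau^\star}]_L^2+\E[\eps^\star_0[\tfrac{\sigma^\star Z+\eps^\star_0}{1+\tau^\star}]_L]-\tfrac{\delta\lambda}2(\sigma^\star)^2$. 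For the right side, set $\gamma_u:=\tfrac1{\sqrt d}\langle\bm g,\widetilde{\bm u}\rangle\ge0$; using $-\gamma_u\norm{\bm v}=\min_{\norm{\bm a}\le\gamma_u}\langle\bm a,\bm v\rangle$ and Sion's minimax theorem to swap, the objective becomes $\tfrac\lambda2\norm{\widetilde{\bm u}}^2+\langle\lambda\bm{\beta}^\star+\nu\bm\xi,\widetilde{\bm u}\rangle+\min_{\norm{\bm a}\le\gamma_u}\max_{\bm v\in[-L,+L]^n}\big(\langle\tfrac{\norm{\widetilde{\bm u}}}{\sqrt d}\bm h+\bm{\eps}^\star+\bm a,\bm v\rangle-\tfrac12\norm{\bm v}^2\big)$; the inner maximum is separable over coordinates and equals $\sum_i H_L(c_i+a_i)$ with $c_i=\tfrac{\norm{\widetilde{\bm u}}}{\sqrt d}h_i+\eps^\star_i$, so the right side is a constant plus $\min_{\norm{\bm a}\le\gamma_u}\sum_i H_L(c_i+a_i)$. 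Its KKT conditions force $c_i+a_i=\prox_{\tau H_L}(c_i)$ (equivalently $a_i=-\tau[\tfrac{c_i}{1+\tau}]_L$) with the multiplier $\tau\ge0$ pinned by the active constraint $\tau^2\sum_i[\tfrac{c_i}{1+\tau}]_L^2=\gamma_u^2$; substituting $\tfrac{\norm{\widetilde{\bm u}}}{\sqrt d}\to\sigma^\star$, $\tfrac1n\gamma_u^2\to(\tau^\star)^2\E[\tfrac{\sigma^\star Z+\eps^\star_0}{1+\tau^\star}]_L^2$ and $d/n\to\delta$ shows this constraint is solved by $\tau=\tau^\star$, and rewriting the value through the Moreau envelope, $\sum_i H_L(c_i+a_i)=\sum_i e_{H_L}(c_i;\tau)-\tfrac1{2\tau}\gamma_u^2$, then simplifying with \eqref{eq:huber-1}--\eqref{eq:huber-2} reproduces exactly the same $c^\star n\pm n^{1-\Omega(1)}$. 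Combining the two outer estimates with the sandwich gives the third bullet.

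I expect the right-side computation to be the main obstacle: the box constraint $\bm v\in[-L,+L]^n$ is precisely what forces the $\min$--$\max$ swap and the passage to $\prox_{\tau H_L}$, and one then has to verify, using only the algebra of \eqref{eq:huber-1} and \eqref{eq:huber-2}, that both the multiplier $\tau$ selected by the active norm constraint and the optimal value of the constrained proximal problem agree with the quantities produced on the left side. A secondary, purely bookkeeping difficulty is propagating the $n^{-\Omega(1)}$ rates: every ``$\to$'' above is really ``$=\cdots\pm n^{-\Omega(1)}$'', the maximization over $\bm v$ ranges over a set of $\ell^\infty$-diameter $O(1)$ but $\ell^2$-diameter $\Theta(\sqrt n)$ (so the coordinatewise reduction and the concentration of the $\tfrac1n\sum_i(\cdots)$ sums, including $\tfrac1n\sum_i e_{H_L}(c_i;\tau)$ uniformly for $\tau$ near $\tau^\star$, must carry explicit rates), and one must also use the $\ell^\infty=n^{o(1)}$ control from \cref{thm:uniform-bound-from-convergence} so that the pseudo-Lipschitz machinery applies throughout.
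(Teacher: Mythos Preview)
Your proposal is correct and your treatment of the first two bullets (strong convexity/concavity from the signs of $\langle\bm g,\widetilde{\bm u}\rangle$ and $\langle\bm h,\widetilde{\bm v}\rangle$) matches the paper exactly. The third bullet, however, is established by a genuinely different route. The paper never solves either one-sided optimization explicitly: it computes $\nabla_{\bm u}Q''_{\bm u,\bm v}$ and $\nabla_{\bm v}Q''_{\bm u,\bm v}$, plugs in the concentration facts to show that at $(\widetilde{\bm u},\widetilde{\bm v})$ the $\bm u$-gradient has $\ell^2$-norm $n^{1/2-\Omega(1)}$ and that some element of $\nabla_{\bm v}Q''_{\widetilde{\bm u},\widetilde{\bm v}}-\partial\bm 1_{[-L,+L]^n}(\widetilde{\bm v})$ has norm $n^{1/2-\Omega(1)}$, and then invokes the strong convexity/concavity from the first two bullets to bound the optimality gaps by $n^{1-\Omega(1)}$. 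You instead solve $\min_{\bm u}Q''_{\bm u,\widetilde{\bm v}}$ by polar decomposition and $\max_{\bm v}Q''_{\widetilde{\bm u},\bm v}$ by the $-\gamma_u\norm{\bm v}=\min_{\norm{\bm a}\le\gamma_u}\langle\bm a,\bm v\rangle$ dualization, a Sion swap, and a KKT analysis of the resulting constrained Huber minimization. The paper's route is shorter and sidesteps what you correctly identify as your main obstacle: it never needs the KKT/prox analysis, nor the algebraic verification (via \eqref{eq:huber-1}--\eqref{eq:huber-2}) that the two explicit optimal values coincide at the same $c^\star$. Your route buys you explicit formulas for the one-sided optimizers, which the lemma does not require; the paper's gradient check is the more economical argument here.
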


\begin{proof}
    To show that $(\widetilde{\bm{u}}, \widetilde{\bm{v}})$ is an approximate saddle point of $Q''_{\bm{u}, \bm{v}}$, it suffices to show that each of $\widetilde{\bm{u}}$ and $\widetilde{\bm{v}}$ is an \emph{approximately best response} to the other, and that $\abs{Q''_{\widetilde{\bm{u}}, \widetilde{\bm{v}}} - c^\star n} \le n^{1 - \Omega(1)}$. Formally, we say that $\widetilde{\bm{u}}$ is an approximate best response to $\widetilde{\bm{v}}$ if
    \[
        Q''_{\widetilde{\bm{u}}, \widetilde{\bm{v}}} \le \min_{\bm{u} \in \R^d} Q''_{\bm{u}, \widetilde{\bm{v}}} + n^{1 - \Omega(1)}.
    \]
    Similarly, we say that $\widetilde{\bm{v}}$ is an approximate best response to $\widetilde{\bm{u}}$ if
    \[
        Q''_{\widetilde{\bm{u}}, \widetilde{\bm{v}}} \ge \max_{\bm{v} \in [-L, +L]^n} Q''_{\widetilde{\bm{u}}, \bm{v}} - n^{1 - \Omega(1)}.
    \]
    In order to prove these inequalities, we will check that certain derivatives approximately vanish.
    \begin{enumerate}[(a)]
        \item \textbf{($\widetilde{\bm{u}}$ is an approximate best response to $\widetilde{\bm{v}}$)} We study the minimum over $\bm{u} \in \R^d$ of
        \[
            Q''_{\bm{u}, \widetilde{\bm{v}}} = \frac{\norm{\bm{u}}}{\sqrt{d}}\langle \bm{h}, \widetilde{\bm{v}}\rangle - \frac{\norm{\widetilde{\bm{v}}}}{\sqrt{d}}\langle \bm{g}, \bm{u}\rangle + \psi(\bm{u}, \widetilde{\bm{v}}).
        \]
        If we can show that $Q''_{\bm{u}, \widetilde{\bm{v}}}$ is a $\lambda$-strongly convex function of $\bm{u}$, then minimizing $Q''_{\bm{u}, \widetilde{\bm{v}}}$ over $\bm{u} \in \R^d$ reduces to finding a point $\bm{u} \in \R^d$ at which the gradient  $\nabla_{\bm{u}} Q''_{\bm{u}, \widetilde{\bm{v}}}$ has small norm. To this end, observe that $\psi$ contains the term $\frac{\lambda}{2}\norm{\bm{u}}^2$, so $\psi(\bm{u}, \widetilde{\bm{v}})$ is $\lambda$-strongly convex in $\bm{u}$. The term $-\frac{\norm{\widetilde{\bm{v}}}}{\sqrt{d}}\langle \bm{g}, \bm{u}\rangle$ is a linear function of $\bm{u}$, and hence convex. Finally, the term $\frac{\norm{\bm{u}}}{\sqrt{d}}\langle\bm{h},\widetilde{\bm{v}}\rangle$ will be convex, as well, if we can show that $\langle \bm{h},\widetilde{\bm{v}}\rangle$ is positive with high probability.
        
        Quantities like $\langle \bm{h},\widetilde{\bm{v}}\rangle$, $\langle \bm{g}, \widetilde{\bm{u}} \rangle$, $\norm{\bm{\tv}}$, and $\norm{\bm{\tu}}$ can be easily computed in the limit by combining the definitions of $\widetilde{\bm{u}}$ and $\widetilde{\bm{v}}$ in this lemma's statement with the assumptions on $\bm{g}$, $\bm{h}$, $\bm{\beta}^\star$, $\bm{\varepsilon}^\star$, and $\bm{\xi}$. By the definition of pseudo-Lipschitz convergence (Definition~\ref{def:pl-convergence}), w.h.p.,
        \[
        \begin{array}{cc}
           \displaystyle \frac{\norm{\widetilde{\bm{u}}}}{\sqrt{d}} = \tau^\star\sqrt{(\sigma^\star_v)^2 + \lambda^2\kappa^2 + \nu^2} \pm n^{- \Omega(1)}, &
           \displaystyle \frac{1}{d}\langle \bm{g}, \widetilde{\bm{u}} \rangle = \tau^\star \sigma^\star_v \pm n^{-\Omega(1)}, \\
           \\
           \displaystyle \frac{\norm{\widetilde{\bm{v}}}}{\sqrt{d}} = \sigma^\star_v \pm n^{-\Omega(1)}, &
           \displaystyle \frac{1}{d}\langle \bm{h}, \widetilde{\bm{v}} \rangle = \frac{1}{\delta}\E\mathopen{}\left[\left[\frac{\sigma^\star Z + \varepsilon^\star_0}{1+\tau^\star}\right]_LZ\right] \pm n^{-\Omega(1)}.
        \end{array}
        \]
        Above, for convenience, we have introduced the abbreviation
        \[
            \sigma^\star_v = \sqrt{\frac{1}{\delta}\E\mathopen{}\left[\frac{\sigma^\star Z + \varepsilon^\star_0}{1 + \tau^\star}\right]_L^2}.
        \]
        Using equation \eqref{eq:huber-1} (see \cref{thm:main-huber-objective-perturbation}\ref{thm:main-huber-objective-utility}), we can simplify the expression for $\norm{\widetilde{\bm{u}}}$ to
        \[
            \frac{\norm{\widetilde{\bm{u}}}}{\sqrt{d}} = \sigma^\star \pm n^{- \Omega(1)}.
        \]
        Using Stein's lemma followed by equation \eqref{eq:huber-2}, we can simplify the expression for $\langle \bm{h}, \widetilde{\bm{v}}\rangle$ to
        \[
            \frac{1}{d}\langle \bm{h}, \widetilde{\bm{v}}\rangle = \frac{\sigma^\star}{\delta(1+\tau^\star)}\Pr\mathopen{}\left[-L < \frac{\sigma^\star Z + \varepsilon^\star_0}{1 + \tau^\star} < L\right] \pm n^{-\Omega(1)} = \left(\frac{1}{\tau^\star} - \lambda^\star\right)\sigma^\star \pm n^{-\Omega(1)}.
        \]
        By examining equation \eqref{eq:huber-2}, we see that $1/\tau^\star > \lambda^\star$, so the quantity above is positive with high probability, which establishes that $Q''_{\bm{u}, \widetilde{\bm{v}}}$ is $\lambda$-strongly convex in $\bm{u}$ with high probability. Therefore, all that remains is to evaluate its gradient at $\widetilde{\bm{u}}$. To this end, we calculate
        \[
            \nabla_{\bm{u}} Q''_{\bm{u}, \bm{v}} = \frac{1}{\sqrt{d}} \frac{\bm{u}}{\norm{\bm{u}}} \langle \bm{h}, \bm{v}\rangle - \frac{\norm{\bm{v}}}{\sqrt{d}} \bm{g} + \lambda\bm{u} + \lambda\bm{\beta}^\star + \nu\bm{\xi}.
        \]
        Evaluating at $(\widetilde{\bm{u}}, \widetilde{\bm{v}})$ and substituting our expressions for $\langle \bm{h}, \widetilde{\bm{v}}\rangle$, $\langle \bm{g}, \widetilde{\bm{u}} \rangle$, $\norm{\bm{\tv}}$, and $\norm{\bm{\tu}}$ yields
        \[
            \nabla_{\bm{u}} Q''_{\widetilde{\bm{u}}, \widetilde{\bm{v}}} = \left(\frac{1}{\tau^\star} \pm n^{-\Omega(1)}\right)\widetilde{\bm{u}} - (\sigma^\star_v \pm n^{-\Omega(1)})\bm{g} + \lambda\bm{\beta}^\star + \nu\bm{\xi}.
        \]
        By substituting the definition of $\widetilde{\bm{u}}$ in this lemma's statement and using the fact that for any constant $c > 0$, we have $\norm{\widetilde{\bm{u}}} + \norm{\bm{g}} + \norm{\bm{\beta}^\star} + \norm{\bm{\xi}} \le O(n^{\frac{1}{2} + c})$ w.h.p. (\cref{thm:uniform-bound-from-convergence}), we see that the above gradient clearly has $\ell^2$ norm
        \[
            \norm{\nabla_{\bm{u}} Q''_{\widetilde{\bm{u}}, \widetilde{\bm{v}}}} \le n^{\frac{1}{2} - \Omega(1)}.
        \]
        Thus, by $\lambda$-strong convexity, we have with high probability that
        \[
            Q''_{\widetilde{\bm{u}}, \widetilde{\bm{v}}} \le \min_{\bm{u} \in \R^d} Q''_{\bm{u}, \widetilde{\bm{v}}} + n^{1 - \Omega(1)}.
        \]
        \item \textbf{($\widetilde{\bm{v}}$ is an approximate best response to $\widetilde{\bm{u}}$)} We study the max over $\bm{v} \in [-L, +L]^n$ of
        \[
            Q''_{\widetilde{\bm{u}}, \bm{v}} = \frac{\norm{\widetilde{\bm{u}}}}{\sqrt{d}}\langle \bm{h}, \bm{v}\rangle - \frac{\norm{\bm{v}}}{\sqrt{d}}\langle \bm{g}, \widetilde{\bm{u}}\rangle + \psi(\widetilde{\bm{u}}, \bm{v}).
        \]
        Analogously to the previous part, we first verify $1$-strong concavity with respect to $\bm{v}$. To this end, observe that $\psi$ contains the term $-\frac{1}{2}\norm{\bm{v}}^2$, so $\psi(\widetilde{\bm{u}}, \bm{v})$ is $1$-strongly concave in $\bm{v}$. The term $\frac{\norm{\widetilde{\bm{u}}}}{\sqrt{d}}\langle\bm{h},\bm{v}\rangle$ is a linear function of $\bm{v}$, and hence concave. Finally, the term $-\frac{\norm{\bm{v}}}{\sqrt{d}}\langle\bm{g}, \widetilde{\bm{u}}\rangle$ is concave, as well, since we have already shown that $\langle \bm{g}, \widetilde{\bm{u}}\rangle$ is positive with high probability. Now, all that remains is to evaluate the gradient of $Q''_{\widetilde{\bm{u}}, \bm{v}}$ at $\widetilde{\bm{v}}$. To this end, we calculate
        \[
            \nabla_{\bm{v}}Q''_{\bm{u}, \bm{v}} = \frac{\norm{\bm{u}}}{\sqrt{d}}\bm{h} - \frac{1}{\sqrt{d}}\frac{\bm{v}}{\norm{\bm{v}}}\langle \bm{g}, \bm{u}\rangle + \bm{\varepsilon^\star}.
        \]
        Evaluating at $(\widetilde{\bm{u}}, \widetilde{\bm{v}})$ and substituting our expressions for $\langle \bm{h}, \widetilde{\bm{v}}\rangle$, $\langle \bm{g}, \bm{\tu} \rangle$, $\norm{\bm{\tv}}$, and $\norm{\bm{\tu}}$ yields
        \[
            \nabla_{\bm{v}} Q''_{\widetilde{\bm{u}}, \widetilde{\bm{v}}} = (\sigma^\star \pm n^{-\Omega(1)})\bm{h} + \bm{\varepsilon}^\star - (1 + \tau^\star \pm n^{-\Omega(1)})\widetilde{\bm{v}}.
        \]
        By substituting the definition of $\widetilde{\bm{v}}$ and using the fact that for any constant $c > 0$, we have $\norm{\widetilde{\bm{v}}} + \norm{\bm{h}} + \norm{\bm{\eps}^\star} \le O(n^{\frac{1}{2} + c})$ w.h.p. (\cref{thm:uniform-bound-from-convergence}), we see that there exists a vector of $\ell^2$ norm at most $O(n^{\frac{1}{2} + c})$ in the set \[\nabla_{\bm{v}} Q''_{\widetilde{\bm{u}}, \widetilde{\bm{v}}} - \partial \bm{1}_{[-L, +L]^n}(\widetilde{\bm{v}}).\]
        This set is a translation of the \emph{subdifferential set} $\partial \bm{1}_{[-L, +L]^n}(\bm{v}) = \prod_{i=1}^n \partial \bm{1}_{[-L, +L]}(v_i)$, where
        \[
            \partial \bm{1}_{[-L, +L]}(v_i) = \begin{cases}
                \{0\} & \text{if } \abs{v_i} < L, \\
                (-\infty, 0] & \text{if } v_i = -L, \\
                [0, +\infty) & \text{if } v_i = L, \\
                \varnothing & \text{if } \abs{v_i} > L. \\      
            \end{cases}
        \]
        By $1$-strong concavity on $[-L, +L]^n$, we have with high probability that
        \[
            Q''_{\widetilde{\bm{u}}, \widetilde{\bm{v}}} \ge \max_{\bm{v} \in [-L, +L]^n} Q''_{\widetilde{\bm{u}}, \bm{v}} - n^{1 - \Omega(1)}.
        \]
    \end{enumerate}
    To conclude the proof, we remark that plugging our estimates for $\langle \bm{h}, \widetilde{\bm{v}}\rangle$, $\langle \bm{g}, \widetilde{\bm{u}} \rangle$, $\norm{\bm{\tv}}$, and $\norm{\bm{\tu}}$ into the definition of $Q''_{\bm{u}, \bm{v}}$ similarly implies that there exists a constant $c^\star \in \R$ such that w.h.p.,
    \[
        \abs{Q''_{\widetilde{\bm{u}}, \widetilde{\bm{v}}} - c^\star n} \le n^{1 - \Omega(1)}.
    \]
\end{proof}

\subsubsection{Putting Steps 1, 2, and 3 Together}
\label{sec:huber-put-steps-together}

\begin{proof}(\cref{thm:main-huber-objective-perturbation})
    Note that part \ref{thm:logistic-objective-privacy} follows immediately from \cref{thm:our-objective-perturbation-privacy}, the observation that $\rho'' : \R \to [0, 1]$, and a change of variables in the case that $R \neq 1$. Therefore, we focus on part \ref{thm:main-huber-objective-utility}. By \cref{thm:huber-objective-aux}, there exist constants $c^\star \in \R$ and $c_{\mathrm{cgmt}} > 0$ such that w.h.p,
    \begin{equation}
    \label{eq:huber-aux-consequence}
        c^\star n - O(n^{1 - c_{\mathrm{cgmt}}}) \le \min_{\bm{u} \in \R^d} Q''_{\bm{u}, \widetilde{\bm{v}}} \le  Q''_{\widetilde{\bm{u}}, \widetilde{\bm{v}}} \le \max_{\bm{v} \in [-L, +L]^n} Q''_{\widetilde{\bm{u}}, \bm{v}} \le c^\star n + O(n^{1 - c_{\mathrm{cgmt}}}).
    \end{equation}
    By \cref{thm:uniform-bound-from-convergence} and \cref{thm:objective-uniform-bound}, for any arbitrarily small constant $c_{\mathrm{diam}} > 0$, there exists an upper bound $L_{\bm{u}}= O(n^{c_{\mathrm{diam}}})$ such that w.h.p.,
    \[
        \bm{\tu}, \bm{\hu} \in [-L_{\bm{u}}, +L_{\bm{u}}]^d.
    \]
    For brevity, set $\cS_{\bm{u}} = [-L_{\bm{u}}, +L_{\bm{u}}]^d$ and $\cS_{\bm{v}} = [-L, +L]^n$. Then, $\bm{\tu} \in \cS_{\bm{u}}$ implies that w.h.p.,
    \[
        \min_{\bm{u} \in \cS_{\bm{u}}} \max_{\bm{v} \in \cS_{\bm{v}}}\, Q''_{\bm{u}, \bm{v}} \le c^\star n + O(n^{1 - c_{\mathrm{cgmt}}}).
    \]
    By CGMT (\cref{thm:cgmt}), we similarly have that w.h.p.,
    \[
       \min_{\bm{u} \in \cS_{\bm{u}}} \max_{\bm{v} \in \cS_{\bm{v}}}\, Q'_{\bm{u}, \bm{v}} \le c^\star n + O(n^{1 - c_{\mathrm{cgmt}}}).
    \]
    By universality (\cref{thm:objective-huber-universality}), there is a constant $c_{\mathrm{univ}} \in (0, c_{\mathrm{cgmt}}]$ such that w.h.p.,
    \begin{equation}
    \label{eq:huber-objective-lower}
       \min_{\bm{u} \in \cS_{\bm{u}}} \max_{\bm{v} \in \cS_{\bm{v}}}\, Q_{\bm{u}, \bm{v}} \le c^\star n + O(n^{1 - c_{\mathrm{univ}}}).
    \end{equation}
    By the definition of $\widetilde{\bm{u}}$ in the statement of \cref{thm:huber-objective-aux},
    along with our assumption that $\bm{\beta}^\star \rightsquigarrow \beta^\star_0$,
    \begin{equation*}
        (\bm{\beta}^\star, \bm{\xi}, \widetilde{\bm{u}}) \rightsquigarrow \left(\beta_0^\star, \; \xi_0, \; \tau^\star{\left(\sqrt{\frac{1}{\delta} \E\mathopen{}\left[\frac{\sigma^\star Z + \varepsilon^\star_0}{1 + \tau^\star}\right]_L^2} Z - \lambda \beta^\star_0 - \nu \xi_0\right)}\right).
    \end{equation*}
    For brevity, let $u_0 \in \R$ denote the third random variable in the above triple. Then, the above assertion is that $(\bm{\beta}^\star, \bm{\xi}, \widetilde{\bm{u}}) \rightsquigarrow (\beta_0^\star, \xi_0, u_0)$, and we want to show that $(\bm{\beta}^\star, \bm{\xi}, \bm{\hb} - \bm{\beta}^\star) \rightsquigarrow (\beta_0^\star, \xi_0, u_0)$, as well. To this end, fix an order-$k$ pseudo-Lipschitz function $f : \R^3 \to \R$. By Definition~\ref{def:pl-convergence}, the above statement means that there exists a constant $c_{\mathrm{pL}} \in (0, c_{\mathrm{univ}}]$ such that w.h.p., 
    \begin{equation}
    \label{eq:u-tilde-convergence}
        \Abs{\bar{f}(\bm{\beta}^\star, \bm{\xi}, \bm{\tu}) -\E[f(\beta^\star_0, \xi_0, u_0)]} \le O(n^{-c_{\mathrm{pL}}}),
    \end{equation}
    where we have defined the function $\bar{f} : \R^{d \times 3} \to \R$ by
    \(
        \bar{f}(\bm{\beta}^\star, \bm{\xi}, \bm{u}) = \frac{1}{d}\sum_{j=1}^d f(\beta^\star_j, \xi_j, u_j).
    \)
    Next, let $c_{\mathrm{slack}} > 0$ be any strictly positive constant satisfying the strict inequality $c_{\mathrm{slack}} < c_{\mathrm{pL}}/2 - (k-1)c_{\mathrm{diam}}$. Note that such a choice of $c_{\mathrm{slack}}$ is always possible because, as we noted earlier, we can take the constant $c_{\mathrm{diam}}$ to be arbitrarily small. Next, define the open set
    \begin{equation}
    \label{eq:huber-objective-constraint-set}
        \cT_{\bm{u}} = \left\{\bm{u} \in \R^d : \Abs{\bar{f}(\bm{\beta}^\star, \bm{\xi}, \bm{u}) - \E[f(\beta^\star_0, \xi_0, u_0)]} < n^{-c_{\mathrm{slack}}}\right\}.
    \end{equation}
    Observe that by \cref{thm:huber-objective-aux}, the auxiliary objective function is $\lambda$-strongly convex in $\bm{u}$ when $\bm{v} = \bm{\tv}$. Also, since $\cS_{\bm{u}} = [-L_{\bm{u}}, +L_{\bm{u}}]^d$, the order-$k$ pseudo-Lipschitz function $\bar{f}$ is in fact Lipschitz continuous in $\bm{u}$ over $\cS_{\bm{u}}$ with Lipschitz constant
    \[
        L_{\bar{f}} \le \frac{1}{d} \cdot \sqrt{d} \cdot (1 + 2L_{\bm{u}})^{k-1} = O(n^{-\frac{1}{2} + (k-1)c_{\mathrm{diam}}}).
    \]
    Together, $\Omega(1)$-strong convexity, $O(n^{-\frac{1}{2} + (k-1)c_{\mathrm{diam}}})$-Lipschitzness, \eqref{eq:huber-aux-consequence}, \eqref{eq:u-tilde-convergence}, and \eqref{eq:huber-objective-constraint-set} imply that if we excise $\cT_{\bm{u}}$ from $\cS_{\bm{u}}$ to form a new compact constraint set $\cS_{\bm{u}} \setminus \cT_{\bm{u}}$, then w.h.p.,
    \[
        \min_{\bm{u} \in \cS_{\bm{u}} \setminus \cT_{\bm{u}}} \max_{\bm{v} \in \cS_{\bm{v}}}\, Q''_{\bm{u}, \bm{v}} \ge c^\star n + \Omega{\left(\left(\frac{n^{-c_{\mathrm{slack}}}}{n^{-\frac{1}{2} + (k-1)c_{\mathrm{diam}}}}\right)^2\right)} - O(n^{1 - c_{\mathrm{pL}}}).
    \]
    The $\Omega(\cdot)$ term on the right side simplifies to $\Omega(n^{1-c_{\mathrm{total}}})$ where $c_{\mathrm{total}} = 2c_{\mathrm{slack}} + 2(k-1)c_{\mathrm{diam}}$. By our choice of $c_{\mathrm{slack}}$, we have that $\Omega(n^{1 - c_{\mathrm{total}}})$ dominates $O(n^{1 - c_{\mathrm{pL}}})$, so w.h.p.,
    \[
        \min_{\bm{u} \in \cS_{\bm{u}} \setminus \cT_{\bm{u}}} \max_{\bm{v} \in \cS_{\bm{v}}}\, Q''_{\bm{u}, \bm{v}} \ge c^\star n + \Omega(n^{1 - c_{\mathrm{total}}}).
    \]
    Applying CGMT (\cref{thm:cgmt}) and universality (\cref{thm:objective-huber-universality}) as before (note that $c_{\mathrm{total}} < c_{\mathrm{cgmt}}$ and $c_{\mathrm{total}} < c_{\mathrm{univ}}$), we have w.h.p. that
    \begin{equation}
    \label{eq:huber-objective-upper}
       \min_{\bm{u} \in \cS_{\bm{u}} \setminus \cT_{\bm{u}}} \max_{\bm{v} \in \cS_{\bm{v}}}\, Q_{\bm{u}, \bm{v}} \ge c^\star n + \Omega(n^{1 - c_{\mathrm{total}}}).
    \end{equation}
    At this point, \eqref{eq:huber-objective-lower} and \eqref{eq:huber-objective-upper} imply that w.h.p.,
    \[
        \min_{\bm{u} \in \cS_{\bm{u}} \setminus \cT_{\bm{u}}} \max_{\bm{v} \in \cS_{\bm{v}}}\, Q_{\bm{u}, \bm{v}} > \min_{\bm{u} \in \cS_{\bm{u}}} \max_{\bm{v} \in \cS_{\bm{v}}}\, Q_{\bm{u}, \bm{v}}.
    \]
    (note the strict inequality). Thus, the minimizer, which by \cref{thm:objective-huber-legendre} is $\bm{\hu} = \widehat{\bm{\beta}} - \bm{\beta}^\star \in \cS_{\bm{u}}$, w.h.p. lies inside $\cT_{\bm{u}}$. Unpacking the definition of $\cT_{\bm{u}}$ in \eqref{eq:huber-objective-constraint-set}, we conclude that
    \[
        (\bm{\beta}^\star, \bm{\xi}, \widehat{\bm{\beta}} - \bm{\beta}^\star) \rightsquigarrow (\beta^\star_0, \xi_0, u_0).
    \]
    This completes the characterization of the estimation error of $\bm{\hb}$ in part \ref{thm:main-huber-objective-utility}. For the truncated residual error, we now carry out an entirely analogous dual argument with the roles of $\bm{u}$ and $\bm{v}$ exchanged. As before, we start by noting that by CGMT (\cref{thm:cgmt}) and universality (\cref{thm:objective-huber-universality}), we have w.h.p. that
    \[
        \max_{\bm{v} \in \cS_{\bm{v}}} \min_{\bm{u} \in \cS_{\bm{u}}} Q_{\bm{u}, \bm{v}} \ge c^\star n - O(n^{1 - c_{\mathrm{univ}}}).
    \]
    By the definition of $\bm{\tv}$ in the statement of \cref{thm:huber-objective-aux}, along with our assumption that $\bm{\eps}^\star \rightsquigarrow \eps^\star_0$,
    \[
        (\bm{\eps}^\star, \bm{\tv}) \rightsquigarrow \left(\eps^\star_0, \, \left[\frac{\sigma^\star Z + \eps^\star_0}{1 + \tau^\star}\right]_L\right).
    \]
    For brevity, let $v_0 \in \R$ denote the second random variable in the above pair. Then, the above assertion is that $(\bm{\eps}^\star, \bm{\tv}) \rightsquigarrow (\eps^{\star}_0, v_0)$, and we want to show that $(\bm{\eps}^\star, [\bm{y} - \bm{X}\bm{\hb}]_L) \rightsquigarrow (\eps^\star_0, v_0)$ as well. To this end, as before, we take any order-$k$ pseudo-Lipschitz function $f : \R^2 \to \R$, and excise from $\cS_{\bm{v}}$ the open set $\cT_{\bm{v}}$ such that $\bm{v} \in \cT_{\bm{v}}$ iff the average value of $f$ over the coordinates of $(\bm{\eps}^{\star}, \bm{v})$ differs from the expected value of $f$ over the randomness of $(\eps^\star_0, v_0)$ by strictly less than $n^{-c'_{\mathrm{slack}}}$, for a sufficiently small constant $c'_{\mathrm{slack}} > 0$. Doing so yields (again by CGMT, universality, pseudo-Lipschitzness, and the strong concavity afforded by \cref{thm:huber-objective-aux}), w.h.p.,
    \[
        \max_{\bm{v} \in \cS_{\bm{v}} \setminus \cT_{\bm{v}}} \min_{\bm{u} \in \cS_{\bm{u}}} Q_{\bm{u}, \bm{v}} \le c^\star n - \Omega(n^{1 - c'_{\mathrm{total}}}),
    \]
    for an appropriate positive constant $c'_{\mathrm{total}} < c_{\mathrm{univ}}$. At this point, we have that w.h.p.,
    \[
        \max_{\bm{v} \in \cS_{\bm{v}} \setminus \cT_{\bm{v}}} \min_{\bm{u} \in \cS_{\bm{u}}} Q_{\bm{u}, \bm{v}} < \max_{\bm{v} \in \cS_{\bm{v}}} \min_{\bm{u} \in \cS_{\bm{u}}} Q_{\bm{u}, \bm{v}}.
    \]
    Thus, the maximizer, which by \cref{thm:objective-huber-legendre} is $\bm{\hv} = [\bm{y} - \bm{X}\bm{\hb}]_L \in \cS_{\bm{v}}$, w.h.p. lies inside $\cT_{\bm{v}}$, so
    \[
        (\bm{\varepsilon}^\star, \, [\bm{y} - \bm{X}\widehat{\bm{\beta}}]_L) \rightsquigarrow \left(\varepsilon^\star_0, \; \left[\frac{\sigma^\star Z + \varepsilon^\star_0}{1 + \tau^\star}\right]_L\right).
    \]
\end{proof}

\subsection{Additional Proofs}
\label{sec:pl-proofs}

\begin{proof}(\cref{thm:pl-from-randomness})
    If $f$ is order-$k$ pseudo-Lipschitz, then $\abs{f(x)} \le L(1 + \abs{x}^k)$ for some $L$. Thus, $f(x_0)$ has finite mean and variance as long as $x_0$ has finite $2k$\textsuperscript{th} moment. By Chebyshev's inequality, independence of $x_1, \ldots, x_m$, and the fact that $m = \Theta(n)$,
    \[
        \Pr\left[\Abs{\frac{1}{m}\sum_{i=1}^m f(x_i) - \E[f(x_0)]} \ge n^{-\frac{1}{3}}\right] \le \frac{\mathrm{Var}(f(x_0))}{m(n^{-1/3})^2} = O(n^{-\frac{1}{3}}).
    \]
\end{proof}

\begin{proof}(\cref{thm:uniform-bound-from-convergence})
    For simplicity, suppose first that $\bm{x}$ is deterministic. Then, by Definition~\ref{def:pl-convergence},
    \begin{equation}
    \label{eq:pl-convergence-uniform-bound-inequality}
        \frac{1}{m}\sum_{i=1}^m \abs{x_i}^k \le \E\abs{x_0}^k + n^{-\Omega(1)}.
    \end{equation}
    Consequently, for any threshold $t > 0$,
    \[
        \frac{1}{m}\sum_{i=1}^m \bm{1}\left[\abs{x_i}^k > t\right] \le \frac{\E\abs{x_0}^k + n^{-\Omega(1)}}{t}.
    \]
    Substituting $t = m^2$ and rearranging terms,
    \[
        \bm{1}\left[\max_{i \in [m]}\,\abs{x_i} \le m^{\frac{2}{k}}\right] \ge 1 - \frac{\E\abs{x_0}^k + n^{-\Omega(1)}}{m}.
    \]
    Since $m = \Theta(n)$, the right hand side is strictly positive for large enough $n$. Therefore,
    \[
        \max_{i \in [m]}\, \abs{x_i} = O_k(n^{\frac{2}{k}}).
    \]
    Since this holds for all $k \in \N$, we conclude that
    \[
        \norm{\bm{x}}_\infty = \max_{i \in [m]}\, \abs{x_i} = n^{o(1)}.
    \]
    In the case that $\bm{x}$ is random, then Definition~\ref{def:pl-convergence} instead implies that \eqref{eq:pl-convergence-uniform-bound-inequality} holds \emph{with high probability}, and we conclude by the same argument that for any $c > 0$, w.h.p., $\norm{\bm{x}}_\infty = O(n^c)$.
\end{proof}

\section{Utility of Objective Perturbation for Logistic Regression}
\label{sec:objective-perturbation-logistic}

In this section, we study the privacy-utility tradeoff for objective perturbation, applied to the problem of \emph{logistic} regression. The version of the algorithm that we consider is still that of \cref{alg:objective-perturbation}, but we will instantiate it with a different loss function $\ell$. Before stating the main result of this section (\cref{thm:logistic-objective-perturbation}), we briefly review the setup for logistic regression and comment on some slight differences in the notion of convergence that will arise in the theorem's conclusion.

\paragraph{Logistic Regression Model.} We are given a dataset $(\bm{X}, \bm{y})$ of $n$ samples. In the worst case, $(\bm{x}_i, y_i)$ are arbitrary points in $\R^{d + 1}$ with $\norm{\bm{x}_i} \le R$ and $y \in \{0, 1\}$, but in the average case, there is a ground-truth coefficient vector $\bm{\beta}^\star \in \R^d$ independent of $\bm{X}$ such that
\[
    \bm{y}|(\bm{X}, \bm{\beta}^\star) \sim \mathrm{Bernoulli}(\rho'(\bm{X}\bm{\beta}^\star)).
\]
Here, $\rho(t) = \log(1 + e^t)$, so that $\rho'(t) = 1/(1 + e^{-x})$ is the well-known \emph{sigmoid} function. For $\bm{t} \in \R^n$, we define $\rho(\bm{t}) = \sum_{i=1}^n \rho(t_i) \in \R$ and $\rho'(\bm{t}) = (\rho'(t_1), \ldots, \rho'(t_n)) \in \R^n$. The logistic loss of a coefficient vector $\bm{\beta}$ on a data point $(\bm{x}, y)$ is \[\rho(\langle \bm{x}, \bm{\beta}\rangle) - y\langle \bm{x}, \bm{\beta}\rangle =\rho(-(2y-1)\langle \bm{x}, \bm{\beta}\rangle),\] which simplifies to $\rho(\langle \bm{x}, \bm{\beta}\rangle)$ if $y = 0$ or $\rho(-\langle \bm{x}, \bm{\beta}\rangle)$ if $y = 1$.

\begin{figure}
    \centering
    \includegraphics{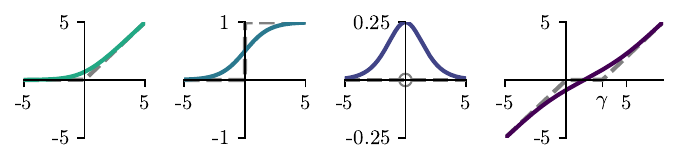}
    \caption{The functions $\rho$, $\rho'$, $\rho''$, and $\prox_{\gamma \rho}$ with $\gamma = 3$.}
    \label{fig:logistic-loss}
\end{figure}

\paragraph{Notion of Convergence.} Since the universality results applicable to logistic regression in the existing statistics literature currently have slower convergence rates than those for robust linear regression, it will be convenient to introduce the following variant $\dashrightarrow$ of the $\rightsquigarrow$ notation we introduced in \cref{sec:objective-perturbation-linear}.

\begin{definition}[Slow Pseudo-Lipschitz Convergence]
\label{def:slow-pl-convergence}
    Consider a random variable $x_0 \in \R$ with finite moments of all orders and a random vector $\bm{x} \in \R^m$ with $m = \Theta(n)$. We write $\bm{x} \dashrightarrow x_0$, if for all $f : \R \to \R$ that are either bounded and continuous, or pseudo-Lipschitz of order $2$,
    \begin{equation*}
        \exists c > 0 \quad\text{such that}\quad \Pr\mathopen{}\left[\Abs{\frac{1}{m}\sum_{i=1}^m f(x_i) - \E[f(x_0)]} \le e^{-(\log n)^{c}}\right] \ge 1 - O(n^{-c}).
    \end{equation*}
    Define $\bm{X} \dashrightarrow \bm{x}_0$ analogously for $\bm{x}_0 \in \R^\ell$, as in Definition~\ref{def:pl-convergence}.
\end{definition}

We are now ready to state and prove the main theorem of this section.

\begin{theorem}
\label{thm:logistic-objective-perturbation}
    Let $\bm{\hb}$ denote the output of \cref{alg:objective-perturbation} with parameters $R, \lambda, \nu > 0$ and instantiated with the logistic loss function: \[\ell(\bm{\beta}; (\bm{x}, y)) = \rho(\langle \bm{x}, \bm{\beta}\rangle) - y\langle\bm{x}, \bm{\beta}\rangle.\]
    \begin{enumerate}[(a)]
        \item \emph{\textbf{(Privacy)}}
        \label{thm:logistic-objective-privacy}
        $\bm{\hb}$ satisfies $(\pe, \pd)$-differential privacy for any $\pe \ge 0$ and
        \[
            \pd = \begin{cases}
                2 \cdot \mathrm{HockeyStick}(\tpe, \frac{LR}{\nu}) &\text{if }\hpe \ge 0,\\
                (1 - e^{\hpe}) + 2e^{\hpe}\cdot \mathrm{HockeyStick}\left(\frac{L^2R^2}{2\nu^2}, \frac{LR}{\nu}\right) &\text{otherwise,}
            \end{cases}
        \]
        where we set $\tpe = \pe - \log(1 + R^2/4\lambda)$ and $\hpe = \tpe - L^2R^2/2\nu^2$.
        \item \emph{\textbf{(Utility)}}
        \label{thm:logistic-objective-utility}
        Suppose the following hold for some $\bm{\beta}^\star \in \R^d$ as $n \to \infty$ and $d/n \to \delta$:
        \begin{enumerate}[(i)]
            \item $\bm{X} \in B_R(\bm{0})^n \subseteq \R^{n \times d}$ follows a subgaussian design and $\bm{y} \sim \mathrm{Bernoulli}(\rho'(\bm{X}\bm{\beta}^\star))$.
            \item There exists a random variable $\beta^\star_0 \in \R$ such that $\bm{\beta}^\star \rightsquigarrow \beta^\star_0$.
        \end{enumerate}
        
        Suppose there exist $\sigma^\star, \alpha^\star, \gamma^\star > 0$ solving the following system of three scalar equations in three variables $(\sigma, \alpha, \gamma)$, which we write in terms of dummy variables $Z_1, Z_2 \iid \mathcal{N}(0, 1)$ and $\kappa^2 = \E(\beta^\star_0)^2$ as
        \begin{subequations}
        \begin{align}
            \sigma^2 &= \gamma^2\left(\frac{1}{\delta}\E\mathopen{}\left[2\rho'(-\kappa Z_1)\rho'\bigl(\prox_{\gamma\rho}(\kappa\alpha Z_1+\sigma Z_2) \bigr)^2\right] + \nu^2\right), \label{eq:logistic-1} \\
            \alpha &= -\frac{1}{\delta}\E[ 2\rho''(-\kappa Z_1)\prox_{\gamma \rho}\big(\kappa \alpha Z_1+\sigma Z_2\big)], \label{eq:logistic-2} \\
            \gamma &= \frac{1}{\lambda \delta}\mathopen{}\left(\delta - 1 + \E\mathopen{}\left[\frac{2\rho'(-\kappa Z_1)}{1+\gamma \rho''\big(\prox_{\gamma\rho}(\kappa\alpha Z_1 + \sigma Z_2)\big)}\right]\right). \label{eq:logistic-3}
        \end{align}
        \end{subequations}
        The estimation error $\widehat{\bm{\beta}} - \bm{\beta}^\star$ satisfies, for $Z,\xi_0 \iid \mathcal{N}(0, 1)$,
        \[
            (\bm{\beta}^\star, \, \bm{\xi}, \, \widehat{\bm{\beta}}) \dashrightarrow \left(\beta^\star_0, \, \xi_0, \, \alpha^\star \beta^\star_0 + \sqrt{(\sigma^\star)^2 - (\gamma^\star \nu)^2} Z - \gamma^\star\nu \xi_0\right),
        \]
        The difference $\rho'(\bm{X}\bm{\beta}^\star) - \rho'(\bm{X} \widehat{\bm{\beta}})$ satisfies, for $Z_1, Z_2 \iid \mathcal{N}(0, 1)$ and $y_0|Z_1 \sim \mathrm{Bernoulli}(\rho'(\kappa Z_1))$,
        \[
            (\bm{X}\bm{\beta}^\star, \, \rho'(\bm{X} \widehat{\bm{\beta}})) \dashrightarrow \left(\kappa Z_1, \; \alpha^\star \kappa Z_1 + \sigma^\star Z_2 + y_0 - \prox_{\gamma^\star \rho}(\alpha^\star \kappa Z_1 + \sigma^\star Z_2 + \gamma^\star y_0)\right).
        \]

    \end{enumerate}
\end{theorem}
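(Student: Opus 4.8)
The plan is to mirror the three-phase strategy used for \cref{thm:main-huber-objective-perturbation}, but with GFOM universality (\cref{thm:gfom-universality}) replacing CGMT universality: as noted in \cref{sec:techniques}, the logistic objective cannot be written as $\langle\bm{X}\bm{u},\bm{v}\rangle+\psi(\bm{u},\bm{v})$ with $\psi$ independent of $\bm{X}$ unless $\bm{X}$ is already Gaussian, so the universality step must act on the objective in its original form. Part \ref{thm:logistic-objective-privacy} will follow from the refined objective-perturbation privacy analysis of \cref{sec:objective-perturbation-privacy} (\cref{thm:rho-zcdp-bound} together with its $(\pe,\pd)$-DP refinement, which extends \cite{redberg2023improving} to all $\lambda,\nu>0$), applied to the GLM loss $\ell_0(\eta,y)=\rho(\eta)-y\eta$: here $\abs{\partial_1\ell_0}=\abs{\rho'(\eta)-y}\le 1$ and $0\le\partial_1^2\ell_0=\rho''(\eta)\le 1/4$, so $s=1/4$ (which is why $\log(1+R^2/4\lambda)$ appears), and a change of variables handles $R\ne 1$.

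For the utility proof, the first step is a reduction to Gaussian design. Since $F(\bm\beta)=\rho(-(2\bm y-1)\odot\bm X\bm\beta)+\tfrac\lambda2\norm{\bm\beta}^2+\nu\langle\bm\xi,\bm\beta\rangle$ is $\lambda$-strongly convex, $\bm{\hb}$ is the unique fixed point of a proximal-gradient map. I would introduce a $\sigma$-smoothed surrogate of $\rho$ (a narrow Gaussian mollification, making the loss $C^\infty$ with globally bounded derivatives) and run the associated forward-backward iteration $\bm\beta^{(t+1)}_\sigma=\prox_{\eta\frac\lambda2\norm{\cdot}^2}(\bm\beta^{(t)}_\sigma-\eta\bm X^\top\nabla(\text{smoothed loss})(\bm X\bm\beta^{(t)}_\sigma)-\eta\nu\bm\xi)$ from $\bm\beta^{(0)}=\bm 0$. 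This is a GFOM in the sense of the definition preceding \cref{thm:gfom-universality} (auxiliary row-iterates track $\bm X\bm\beta^{(t)}_\sigma$, $\bm X\bm\beta^\star$, and $\bm y$; the vectors $\bm\beta^\star$ and $\bm\xi$ enter only through the initialization and the $G$-maps, and \cref{thm:gfom-universality} permits conditioning on a random initialization), and a step size $\eta\asymp 1/(\lambda+\text{smoothness})$ keeps every update map uniformly Lipschitz with an $n$-independent $\Lambda$. Strong convexity gives geometric convergence $\norm{\bm\beta^{(t)}_\sigma-\bm{\hb}_\sigma}\le e^{-\Omega(t)}n^{O(1)}$, and a Moreau-type estimate gives $\norm{\bm{\hb}_\sigma-\bm{\hb}}\le\sigma^{\Omega(1)}n^{O(1)}$; taking $t=(\log n)^{\Theta(1)}$ and $\sigma=n^{-\Theta(1)}$ renders both negligible at the $e^{-(\log n)^{\Omega(1)}}$ scale of \cref{def:slow-pl-convergence}. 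A leave-one-feature-out argument exactly as in \cref{thm:objective-uniform-bound} (using boundedness of $\rho'$ and subgaussianity of the columns of $\bm X$) supplies the a priori bound $\norm{\bm{\hb}}_\infty\le n^{o(1)}$ needed to stay inside pseudo-Lipschitz test-function regime. Because $\bm y$ depends on $\bm X$, I would either augment the design with $\bm\beta^\star$ as in \cite{han2024entrywise} or condition on $(\bm X\bm\beta^\star,\bm y)$ and keep only the component of $\bm X$ orthogonal to $\bm\beta^\star$ random. \cref{thm:gfom-universality} then transfers every $\Lambda$-pseudo-Lipschitz summary of $(\bm\beta^\star,\bm\xi,\bm\beta^{(1:t)}_\sigma)$ — hence of $(\bm\beta^\star,\bm\xi,\bm{\hb})$ and of $\rho'(\bm X\bm{\hb})$ — from a subgaussian $\bm X$ to a Gaussian $\bm X$ with matching first two moments, at an additive cost $e^{-(\log n)^{\Omega(1)}}$.

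The second step is the CGMT analysis for Gaussian $\bm X=\bm G/\sqrt d$. Decompose $\bm\beta=\mu\bm\beta^\star+\bm\beta_\perp$ (absorbing $\nu\bm\xi$), so the logistic risk depends on $\bm G$ only through the scalar $\bm G\bm\beta^\star/\sqrt d$ (asymptotically $\kappa Z_1$) and the noise direction $\bm G\bm\beta_\perp/\sqrt d$. Completing the square on the ridge-plus-perturbation term, introducing the residual $\bm r=\bm X\bm\beta$ and its Lagrange multiplier, puts the optimization in the convex-concave, bilinear-in-$\bm G$ form required by CGMT (\cref{thm:cgmt}) — the logistic analogue of \cref{thm:objective-huber-legendre}, following \cite{salehi2019impact}. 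Applying CGMT replaces $\bm G$ by Gaussian vectors $\bm g,\bm h$; optimizing the auxiliary problem coordinate-wise produces a Moreau envelope of $\gamma\rho$, and letting $n\to\infty$ with $d/n\to\delta$ collapses the $n+d$ stationarity conditions to the three scalar equations \eqref{eq:logistic-1}--\eqref{eq:logistic-3} in $(\sigma,\alpha,\gamma)$, with $\alpha^\star$ the projection of $\bm{\hb}$ onto $\bm\beta^\star/\norm{\bm\beta^\star}^2$ (bias), $\sigma^\star$ the standard deviation of the debiased estimate $\bm{\hb}-\alpha^\star\bm\beta^\star$, and $\gamma^\star$ the Moreau parameter whose minimizer is precisely the $\prox_{\gamma^\star\rho}$ in the conclusion. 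The localization argument — constraining $\bm{\hb}$ and $\rho'(\bm X\bm{\hb})$ to pseudo-Lipschitz-defined sets and showing the auxiliary optimum strictly increases off them — is as in \cref{sec:huber-put-steps-together}; carrying $\nu\bm\xi$ through the completion of the square contributes the $-\gamma^\star\nu\bm\xi$ term and the variance bookkeeping $\sigma^2=(\gamma\nu)^2+\cdots$, yielding both stated limits. Setting $\nu\to 0$ recovers Theorem 2 of \cite{salehi2019impact} and Theorem 4.3 of \cite{han2024entrywise}, and sending $\lambda\to 0$ as well recovers \cite{sur2019modern}.

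The main obstacle is Step 1: exhibiting a \emph{single} GFOM whose iterates provably track $\bm{\hb}$ with a high-probability, $e^{-(\log n)^{\Omega(1)}}$-accurate bound while simultaneously keeping all update maps uniformly Lipschitz with an $n$-independent constant (which forces both the smoothing of $\rho$ and the careful step-size choice) and correctly encoding the $\bm X$-dependence of $\bm y$ plus the extra randomness $\bm\xi$ so the moment hypotheses of \cref{thm:gfom-universality} hold. Verifying the $\ell^\infty$ bound on $\bm{\hb}$ in the logistic setting and checking solvability/non-degeneracy of \eqref{eq:logistic-1}--\eqref{eq:logistic-3} are secondary but still non-trivial; by contrast, the CGMT calculus of Step 2 is essentially a perturbed rerun of the non-private computation in the literature.
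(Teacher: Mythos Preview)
Your three-phase plan (GFOM universality first, then Legendre transform and the $\bm\beta^\|/\bm\beta^\perp$ decomposition, then CGMT on the auxiliary problem) is exactly the paper's route, and your identification of part \ref{thm:logistic-objective-privacy} as an application of the refined privacy analysis with $s=1/4$ is correct.

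There is one genuine technical misstep in your Step 1: you propose ``a $\sigma$-smoothed surrogate of $\rho$'', but $\rho$ is already $C^\infty$ with globally bounded derivatives, so mollifying it is a no-op and does not address the actual source of non-Lipschitzness. The real issue is that to make the GFOM self-contained you must let the iterates carry $\bm X\bm\beta^\star$ (so that swapping $\bm X$ for $\bm G$ swaps the labels too), and the per-sample loss then becomes a function of $(\langle\bm x_i,\bm\beta\rangle,\langle\bm x_i,\bm\beta^\star\rangle)$ through $y_i=\bm 1[\langle\bm x_i,\bm\beta^\star\rangle+\eps_i^\star>0]$ in the latent-variable form; it is this \emph{indicator} that must be smoothed. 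The paper writes $L_\sigma(x,y;\eps)=\rho(-(2\varphi_\sigma(y+\eps)-1)x)$ with $\varphi_\sigma$ a smooth step, and the Lipschitz bound then reads $\norm{G_{\sigma,i}}_{\mathrm{Lip}}\le O(1+M/\sigma)$ where $M$ is an a priori bound on $\norm{\bm X\bm\beta^{(t)}_\sigma}_\infty$ obtained by a leave-one-\emph{sample}-out argument (in addition to the leave-one-feature-out bound on $\norm{\bm{\hb}}_\infty$ that you cite). This also forces the parameter scaling: because the GFOM universality error is $(C\sigma^{-1}\log n)^{Ct^3}n^{-1/(Ct^3)}$, taking $\sigma=n^{-\Theta(1)}$ as you suggest makes the first factor explode once $t$ grows; the paper instead takes $\sigma=e^{-(\log n)^{1/8}}$ and $t=(\log n)^{1/8}$, which is precisely what produces the $e^{-(\log n)^{\Omega(1)}}$ rate in $\dashrightarrow$. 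Your ``Moreau-type estimate'' for $\norm{\bm{\hb}_\sigma-\bm{\hb}}$ is likewise replaced by a direct comparison of $\mathcal L_\sigma$ and $\mathcal L_0$: they differ only on samples with $|\langle\bm x_i,\bm\beta^\star\rangle+\eps_i^\star|\le\sigma$, of which there are $O(\sigma n+\log n)$ with high probability, and strong convexity converts this into $\norm{\bm{\hb}_\sigma-\bm{\hb}}\le O((\sigma n+\log n)^{1/4}\sqrt{\norm{\bm{\hb}}+\norm{\bm{\hb}_\sigma}})$. With these corrections your Step 1 goes through, and your Step 2 is indeed the perturbed rerun of \cite{salehi2019impact} that you describe.
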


As in the case of robust linear regression (\cref{thm:main-huber-objective-perturbation}), part \ref{thm:logistic-objective-privacy} follows from the results of \cref{sec:objective-perturbation-privacy}, and combining \cref{thm:logistic-objective-perturbation}\ref{thm:logistic-objective-utility} with Definition~\ref{def:pl-convergence} yields several interesting corollaries. For example, for the \emph{bias} of $\widehat{\bm{\beta}}$, we see that w.h.p.,
\begin{align*}
    \frac{1}{d}\langle \bm{\hb}, \bm{\beta}^\star \rangle &= \E[(\alpha^\star \beta^\star_0 + \sqrt{(\sigma^\star)^2 - (\gamma^\star \nu)^2}Z + \gamma^\star\nu\xi_0)\cdot \beta^\star_0] \pm e^{-(\log n)^{\Omega(1)}} \\
    &= \alpha^\star\kappa^2 \pm e^{-(\log n)^{\Omega(1)}}.
\end{align*}
Therefore, we deduce that $\bm{\hb}$ is positively correlated with $\bm{\beta}^\star$, which is intuitively what we would expect, and this correlation is captured by $\alpha^\star$. Similarly, for the \emph{variance} of $\bm{\hb}$, we see that w.h.p.,
\begin{align*}
    \frac{1}{d}\norm{\bm{\hb} - \alpha^\star \bm{\beta}^\star}^2 &= \E\mathopen{}\left(\sqrt{(\sigma^\star)^2 - (\gamma^\star \nu)^2}Z + \gamma^\star\nu\xi_0\right)^2 \pm e^{-(\log n)^{\Omega(1)}} \\
    &= (\sigma^\star)^2 \pm e^{-(\log n)^{\Omega(1)}}.
\end{align*}
In total, we see that the mean squared error of $\bm{\hb}$ is, w.h.p., \[\frac{1}{d} \norm{\bm{\hb} - \bm{\beta}^\star}^2 = (1 - \alpha^\star)^2\kappa^2 + (\sigma^\star)^2 \pm e^{-(\log n)^{\Omega(1)}}.\] Similar calculations can be carried out for the difference $\rho'(\bm{X}\bm{\beta}^\star) - \rho'(\bm{X}\widehat{\bm{\beta}})$. We validate these corollaries against simulated data in Figure \ref{fig:logistic-objective-estimation}. For more commentary on how to interpret \cref{thm:logistic-objective-perturbation} and Figure \ref{fig:logistic-objective-estimation}, we refer the reader to the discussion surrounding (the very similarly-worded) \cref{thm:main-huber-objective-perturbation} and Figure \ref{fig:huber-objective-estimation} on robust linear regression, in \cref{sec:objective-perturbation-linear}.

\begin{figure}
    \centering
    \includegraphics[width=0.495\linewidth]{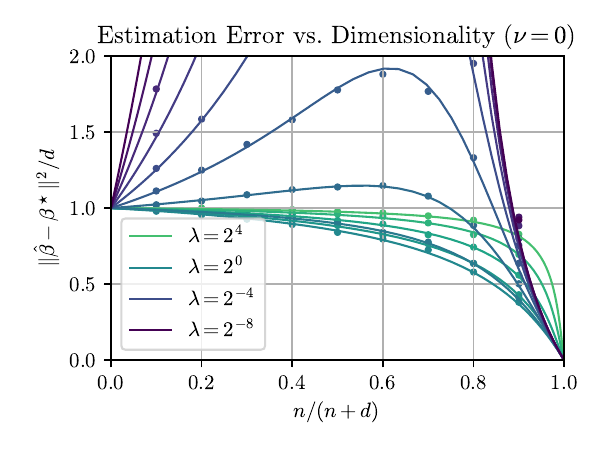}
    \includegraphics[width=0.495\linewidth]{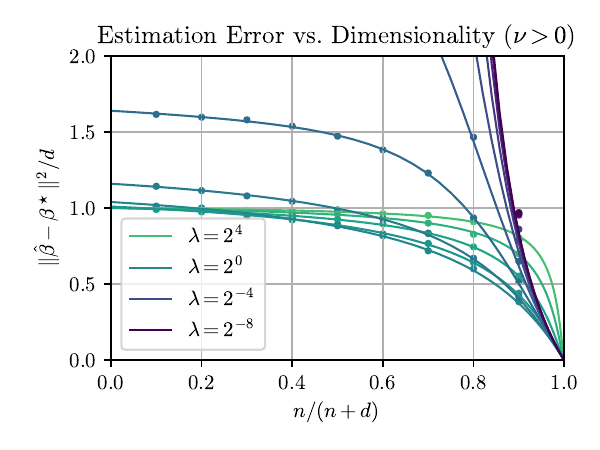}
    \includegraphics[width=0.495\linewidth]{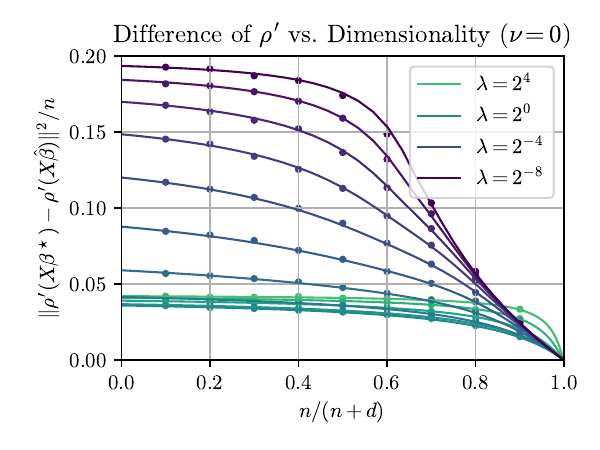}
    \includegraphics[width=0.495\linewidth]{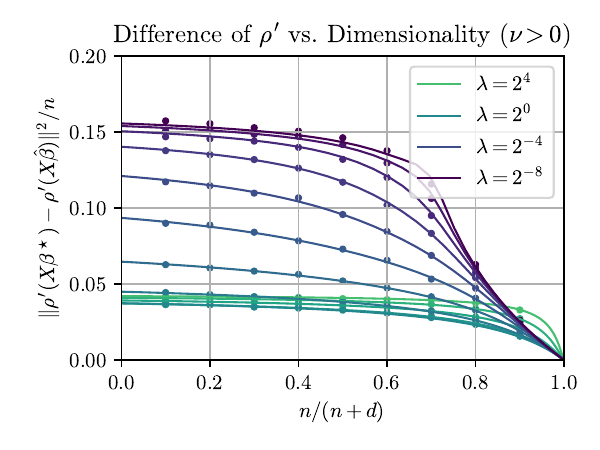}
    \caption{\cref{thm:logistic-objective-perturbation}'s predictions of the error of \cref{alg:objective-perturbation} with logistic loss. \emph{Estimation error} refers to $\frac{1}{d}\norm{\bm{\hb} - \bm{\beta}^\star}^2$. \emph{Difference of $\rho'$} refers to $\frac{1}{n}\norm{\rho'(\bm{X}\bm{\beta}^\star) - \rho'(\bm{X}\bm{\hb})}^2$, which is related to the residual vector $\bm{y} - \rho'(\bm{X}\bm{\hb})$ since $\bm{y} \sim \mathrm{Bernoulli}(\rho'(\bm{X}\bm{\beta}^\star))$. Curves correspond to theoretical predictions, and dots correspond to the mean over $1000$ simulations of the algorithm on synthetic data with $n \times d = 1000$. In the left plots, the perturbation magnitude is $\nu = 0$, but in the right plots, $\nu = 1/5$. In all plots, the signal strength is $\kappa = 1$, and we consider $\bm{\beta}^\star \sim \mathcal{N}(\bm{0}, \kappa^2\bm{I}_d)$, along with $\bm{X} \sim \frac{1}{\sqrt{d}} \mathrm{Uniform}(\{-1,+1\}^{n \times d})$ and $\bm{y} \sim \mathrm{Bernoulli}(\rho'(\bm{X}\bm{\beta}^\star))$.}
    \label{fig:logistic-objective-estimation}
\end{figure}

\subsection{Heuristic Derivation}

Before we begin the utility proof, we give a non-rigorous, heuristic derivation to motivate the three equations in the statement of \cref{thm:logistic-objective-perturbation}\ref{thm:logistic-objective-utility}. The derivation is based on Theorem 1 of \cite{salehi2019impact}, which determines the asymptotic behavior of $\widehat{\bm{\beta}}$ for regularized logistic regression, provided that $\bm{\beta}^\star$ has i.i.d. coordinates and the regularizer is a deterministic function of the form $\sum_{j=1}^d f(\beta_j)$.

In order to apply Theorem 1 of \cite{salehi2019impact}, one first assumes that the following system of six equations in six variables has a unique solution $(\alpha^\star, \sigma^\star, \gamma^\star, \theta^\star, \tau^\star, r^\star)$. These equations are stated in terms of $Z, Z_1, Z_2 \iid \mathcal{N}(0, 1)$ and $\bm{\beta}^\star \iid \beta^\star_0$, as follows.\footnote{We have replaced $\delta$ with $1/\delta$ and $\lambda$ with $\lambda \delta$ to account for differences in notation and scaling in \cite{salehi2019impact}.}

\begin{subequations}
\begin{align}
    \kappa^2 \alpha &= \E\mathopen{}\left[\beta^\star_0\; \prox_{\lambda\delta\sigma\tau f}\mathopen{}\left(\sigma\tau\left(\theta \beta^\star_0 +r\sqrt{\delta}Z\right)\right)\right]\mathclose{}, \label{eq:salehi-1} \\
    \gamma &= \frac{\sqrt{\delta}}{r} \E\mathopen{}\left[Z\,\prox_{\lambda\delta\sigma\tau f}\mathopen{}\left(\sigma\tau\left(\theta \beta^\star_0+r\sqrt{\delta}Z\right)\right)\right]\mathclose{}, \label{eq:salehi-2} \\
    \kappa^2\alpha^2 + \sigma^2 &= \E\mathopen{}\left[\prox_{\lambda\delta\sigma\tau f}\mathopen{}\left(\sigma\tau\left(\theta \beta^\star_0+r\sqrt{\delta}Z\right)\right)^2\right]\mathclose{}, \label{eq:salehi-3} \\
    \gamma^2 &= \frac{1}{r^2} \E\mathopen{}\left[2\rho'(-\kappa Z_1)\left(\kappa\alpha Z_1+\sigma Z_2 - \prox_{\gamma\rho}(\kappa\alpha Z_1+\sigma Z_2) \right)^2\right]\mathclose{}, \label{eq:salehi-4} \\
    \theta\gamma &= -\E\mathopen{}\left[ 2\rho''(-\kappa Z_1)\;\prox_{\gamma \rho}\big(\kappa \alpha Z_1+\sigma Z_2\big)\right]\mathclose{}, \label{eq:salehi-5} \\
    1-\frac{\gamma}{\sigma\tau} &= \E\mathopen{}\left[\frac{2\rho'(-\kappa Z_1)}{1+\gamma \rho''\big(\prox_{\gamma\rho}(\kappa\alpha Z_1 + \sigma Z_2)\big)}\right]\mathclose{}.\label{eq:salehi-6}
\end{align}
\end{subequations}

The conclusion of \cite{salehi2019impact}, informally speaking, is that the coordinates of $\widehat{\bm{\beta}}$ behave like
\begin{equation}
\label{eq:derivation-conclusion}
    \prox_{\lambda\delta \sigma^\star \tau^\star f}\left(\sigma^\star\tau^\star\left(\theta^\star \beta^\star_0 + r^\star\sqrt{\delta} Z\right)\right).
\end{equation}
Next, observe that logistic regression with objective perturbation can be recast as regularized logistic regression with a ``randomized regularizer,'' as follows. First, sample $\xi_1, \ldots, \xi_d \iid \xi_0$ where $\xi_0 \sim \mathcal{N}(0, 1)$ and define the functions $f_0, \ldots, f_d : \R \to \R$ as
\[
    f_j(\beta) = \frac{\lambda}{2}\beta^2 + \nu \xi_j \beta.
\]
Then, the objective perturbation algorithm's output $\widehat{\bm{\beta}}$ can be written as
\[
    \widehat{\bm{\beta}} = \argmin_{\bm{\beta} \in \R^d}\, \rho(\bm{X} \bm{\beta}) - \langle \bm{y}, \bm{X}\bm{\beta}\rangle + \sum_{j=1}^d f_j(\beta_j).
\]
Therefore, even though Theorem 1 of \cite{salehi2019impact} requires the coordinate-wise application of a single, deterministic regularizer $f$, and the above expression uses $d$ distinct, ``randomized regularizers'' $f_1, \ldots, f_d$, one might guess that substituting $f = f_0$ into the above system of six equations and taking the expectation over the randomness of $f_0$ might nevertheless lead to the equations in \cref{thm:logistic-objective-perturbation}\ref{thm:logistic-objective-utility}, which we will later show correctly describe $\widehat{\bm{\beta}}$. Indeed, this is the case, which we now show. First, observe that
\begin{equation}
\label{eq:prox}
    \prox_{t f_0}(x) = \frac{1}{1+t} x + \frac{t}{1+t}\left(-\frac{\nu}{\lambda}\xi_0\right) = \frac{x - t \cdot \frac{\nu}{\lambda} \xi_0}{1 + t}.
\end{equation}
Substituting \eqref{eq:prox} into \eqref{eq:salehi-2} yields
\begin{equation*}
    \gamma = \frac{\sqrt{\delta}}{r} \E\left[Z \cdot \frac{\sigma\tau(\theta\beta^\star_0 +r\sqrt{\delta} Z) - \delta\sigma\tau\nu \xi_0}{1+\lambda\delta\sigma\tau}\right].
\end{equation*}
If the random variables inside the expectation ($Z$, $\beta^\star_0$, and $\xi_0$) are all independent, we have $\E[Z\beta^\star_0] = \E[Z\xi_0] = 0$. We also have $\E[Z^2] = 1$. Thus, the above equation simplifies to
\begin{equation}
    \label{eq:tau}
    \gamma = \frac{\delta \sigma\tau}{1+\lambda\delta\sigma\tau}.
\end{equation}
Next, substituting \eqref{eq:prox} into \eqref{eq:salehi-1} yields
\begin{equation*}
    \kappa^2\alpha = \E\left[\beta^\star_0 \cdot \frac{\sigma\tau(\theta\beta^\star_0 + r\sqrt{\delta}Z) - \delta\sigma\tau\nu\xi_0}{1+\lambda\delta\sigma\tau}\right].
\end{equation*}
Once again, $\E[\beta^\star_0 Z] = \E[\beta^\star_0 \xi_0] = 0$ and we know $\E(\beta^\star_0)^2 = \kappa^2$, so the above equation simplifies to
\begin{equation*}
    \kappa^2 \alpha = \frac{\sigma\tau\theta \kappa^2}{1+\lambda\delta\sigma\tau}.
\end{equation*}
Using \eqref{eq:tau} to eliminate $\tau$ further simplifies this to
\begin{equation}
\label{eq:theta}
    \alpha = \frac{\gamma\theta}{\delta}.
\end{equation}
Finally, substituting \eqref{eq:prox} into \eqref{eq:salehi-3} yields
\[
    \kappa^2\alpha^2 + \sigma^2 = \E\left[\left(\frac{\sigma\tau(\theta\beta+r\sqrt{\delta}Z) - \delta\sigma\tau\nu\xi_0}{1+\lambda\delta\sigma\tau}\right)^2\right].
\]
Again using the fact that pairwise products of $Z$, $\beta^\star_0$, and $\xi_0$ vanish in expectation, this becomes
\begin{equation*}
    \kappa^2\alpha^2 + \sigma^2 = \left(\frac{\sigma\tau}{1+\lambda\delta\sigma\tau}\right)^2 \left(\theta^2\E(\beta^\star_0)^2 + r^2\delta\E[Z^2] + (\nu\delta)^2\E[\xi_0^2]\right).
\end{equation*}
Eliminating $\tau$ with \eqref{eq:tau} and $\theta$ with \eqref{eq:theta} yields
\[
    \kappa^2\alpha^2 + \sigma^2 = \left(\frac{\gamma}{\delta}\right)^2\left(\left(\frac{\alpha\delta}{\gamma}\right)^2\cdot \E(\beta^\star_0)^2 + \frac{r^2}{\delta}\E[Z^2] + (\nu\delta)^2\E[\xi_0^2]\right)
\]
and substituting $\E(\beta^\star_0)^2 = \kappa^2$ and $\E Z^2 = \E\xi_0^2 = 1$ finally yields
\begin{equation}
\label{eq:r}
    \sigma^2 = \frac{\gamma^2r^2}{\delta} + \gamma^2\nu^2.
\end{equation}
At this point, equations \eqref{eq:tau}, \eqref{eq:theta}, and \eqref{eq:r} can be summarized as
\begin{align*}
    \theta = \frac{\alpha \delta}{\gamma}, \qquad
    \tau = \frac{\gamma}{\delta\sigma\big(1-\lambda \gamma\big)}, \qquad
    r = \sqrt{\frac{\delta \sigma^2}{\gamma^2} - \delta \nu^2}.
\end{align*}
Substituting these expressions into \eqref{eq:salehi-4}, \eqref{eq:salehi-5}, \eqref{eq:salehi-6} yields 
the system of three equations in \cref{thm:logistic-objective-perturbation}\ref{thm:logistic-objective-utility}. Therefore, we should expect that for the solution $\alpha^\star, \sigma^\star, \gamma^\star > 0$, the expression \eqref{eq:derivation-conclusion} that captures the behavior of the coordinates of $\widehat{\bm{\beta}}$ should reduce to
\[
    \alpha^\star \beta^\star_0 + \sqrt{(\sigma^\star)^2 - \gamma^2 \nu^2} Z + \gamma \nu \xi_0.
\]
This completes our heuristic derivation, which we emphasize again was not rigorous because we did not satisfy the technical assumptions of \cite{salehi2019impact}, including a single, deterministic regularizer $f : \R \to \R$ applied coordinate-wise, and a ground-truth vector $\bm{\beta}^\star$ with i.i.d. coordinates.

\subsection{Proof of Theorem~\ref{thm:logistic-objective-perturbation}}

The overall structure of our proof of \cref{thm:logistic-objective-perturbation} mirrors analyses for non-private logistic regression estimators that have appeared in the literature \cite{salehi2019impact, han2024entrywise}, but we make several changes to account for the differences between their setting and ours. The main differences include our analysis of the random linear perturbation term introduced for differential privacy, our weakened assumptions on $\bm{\beta}^\star$ (\cite{salehi2019impact} assumed $\bm{\beta}^\star$ to be random with i.i.d. coordinates, but we do not), and our conclusion that includes a description of $\rho'(\bm{X}\bm{\beta}^\star) - \rho'(\bm{X}\bm{\hb})$. Our proof has three steps.

\begin{enumerate}[(1)]
    \item Use universality to relate the perturbed logistic objective to one with a Gaussian design.
    \item Rewrite this modified problem as a min-max optimization amenable to CGMT.
    \item Use CGMT to analyze this min-max optimization with a Gaussian design matrix as $n \to \infty$.
\end{enumerate}

For the first step, we will rely on recent universality results of \cite{han2024entrywise} that they use to study non-private, regularized logistic regression. In their most general framing, these universality results apply to any estimator that can be expressed as the output of a certain class of \emph{first-order} algorithms, or a limit thereof. As we will show, the objective perturbation algorithm for logistic regression falls in this class. This allows us to relate the behavior of the algorithm's output $\bm{\hb} \in \R^d$ (and the corresponding predictions $\rho'(\bm{X}\bm{\hb}) \in [0, 1]^n$) in the setting amenable to differential privacy where $\bm{X}$ has bounded entries to the setting in which the entries of $\bm{X}$ are unbounded, but precisely follow a Gaussian law.

For the second step, as before, we will take the Legendre transform of the logistic loss function in order to isolate the randomness of the design matrix $\bm{X}$ in a single, bilinear term. Compared to the case of robust linear regression in \cref{sec:huber-objective-utility}, some extra care will be needed to ensure that this bilinear term is independent of the mean function $\psi$.

For the third step, we will use CGMT, roughly following the template in \cite{salehi2019impact}, to derive the asymptotic behavior of the estimation error and prediction error of $\widehat{\bm{\beta}}$. Unlike \cite{salehi2019impact}, our analysis will account for the random linear perturbation term, and also explain the behavior of $\rho'(\bm{X}\bm{\beta}^\star) - \rho'(\bm{X}\widehat{\bm{\beta}})$, which is related to the residual error. As before, our analysis will be made more challenging by the fact that we do not assume that $\bm{\beta}^\star$ has i.i.d. coordinates.

One will notice that the order of steps (1), (2), and (3) is different than in our analysis of robust linear regression in \cref{sec:huber-objective-utility}. The reason we must apply universality before transforming the problem into a min-max optimization amenable to CGMT is that, as we will see, this transformation \emph{itself} requires the Gaussianity assumption. This is also why we cannot simply rely on the universality results of \cite{han2023universality} (which require the problem to already be expressed in ``CGMT form'') and must instead use the more recent results of \cite{han2024entrywise}.

\subsubsection{Step 1: GFOM Universality}

The goal of this step is to prove the following lemma, a differentially private analogue of Theorem 4.3 in \cite{han2024entrywise}. The following lemma is a universality result that relates the output of objective perturbation (\cref{alg:objective-perturbation}) with the logistic loss function to its output when we replace $\bm{X}$ with $\frac{1}{\sqrt{d}}\bm{G}$, where $\bm{G}$ has the same shape as $\bm{X}$ but independent, standard Gaussian entries. Unlike $\bm{X}$, whose rows have $\ell^2$ norm bounded by $R$ (as required for differential privacy), the entries of the matrix $\bm{G}$ are unbounded in the worst case. Despite this, the lemma shows that the coordinate-wise average of any pseudo-Lipschitz test function, applied to either the estimate $\bm{\hb}$ or its predictions $\rho'(\bm{X}\bm{\hb})$, is roughly the same regardless of whether we start with $\bm{X}$ or $\frac{1}{\sqrt{d}}\bm{G}$.

\begin{lemma}
\label{thm:logistic-universality}
    Suppose $\bm{X} \in \R^{n \times d}$ follows a subgaussian design and $\bm{y}|(\bm{\beta}^\star,\bm{X}) \sim \mathrm{Bernoulli}(\rho'(\bm{X}\bm{\beta}^\star))$ for some $\bm{\beta}^\star \in \R^d$ satisfying $\norm{\bm{\beta}^\star} \le O(n)$. Let $\bm{G} \in \R^{n \times d}$ have standard Gaussian entries, and let $\bm{\hb}(\bm{A})$ denote the output of \cref{alg:objective-perturbation} with the logistic loss function and design matrix $\bm{A} \in \{\bm{X}, \frac{1}{\sqrt{d}}\bm{G}\}$.
    Then, for any collection of $O(1)$-pseudo-Lipschitz functions $\psi_{1i} : \R^2 \to \R$ and $\psi_{2j} : \R \to \R$ of order $2$ for $i \in [n]$ and $j \in [d]$, we have that w.h.p.,\footnote{Our proof will actually show that the bound holds both in expectation and with high probability.}
    \begin{align*}
        &\biggl\lvert\frac{1}{n}\sum_{i=1}^n \biggl(\psi_{1i}(\langle \bm{x}_i, \bm{\beta}^\star\rangle, \langle \bm{x}_i, \bm{\hb}(\bm{X})\rangle) - \psi_{1i}\Bigl(\Bigl\langle\frac{\bm{g}_i}{\sqrt{d}}, \bm{\beta}^\star\Bigr\rangle, \Bigl\langle\frac{\bm{g}_i}{\sqrt{d}}, \bm{\hb}\Bigl(\frac{\bm{G}}{\sqrt{d}}\Bigr)\Bigr\rangle\Bigr)\biggr)\biggr\rvert \\
        &+ \biggl\lvert\frac{1}{d}\sum_{j=1}^d \biggl(\psi_{2j}(\hb_j(\bm{X})) - \psi_{2j}\Bigl(\hb_j\Bigl(\frac{\bm{G}}{\sqrt{d}}\Bigr)\Bigr)\biggr)\biggr\rvert \le e^{-(\log n)^{\Omega(1)}}.
    \end{align*}
\end{lemma}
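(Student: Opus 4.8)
# Proof Proposal for Lemma (GFOM Universality for Logistic Objective Perturbation)

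The plan is to express the output $\bm{\hb}(\bm{A})$ of objective perturbation with logistic loss as the limit of a sequence of iterates produced by a general first-order method (GFOM) in the sense of the definition preceding \cref{thm:gfom-universality}, and then invoke that theorem to transfer pseudo-Lipschitz statistics from $\bm{X}$ to $\bm{G}/\sqrt{d}$. This mirrors the argument in Theorem 4.3 of \cite{han2024entrywise}, so the work is mostly in verifying the technical hypotheses in the presence of the perturbation vector $\bm{\xi}$. First I would note that $\bm{\hb}(\bm{A})$ is the unique minimizer of the strongly convex objective $\bm{\beta} \mapsto \rho(\bm{A}\bm{\beta}) - \langle \bm{y}, \bm{A}\bm{\beta}\rangle + \frac{\lambda}{2}\norm{\bm{\beta}}^2 + \nu\langle\bm{\xi},\bm{\beta}\rangle$ (the problem is $\lambda$-strongly convex since $\rho$ is convex), so it suffices to exhibit a GFOM whose iterates converge to $\bm{\hb}(\bm{A})$ at a rate that, combined with the error bound in \cref{thm:gfom-universality}, still gives $e^{-(\log n)^{\Omega(1)}}$ overall.

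The natural choice is a proximal-gradient / forward-backward iteration. Concretely, I would analyze iterates of the form $\bm{\beta}^{(t+1)} = \prox_{\eta(\frac{\lambda}{2}\norm{\cdot}^2 + \nu\langle\bm{\xi},\cdot\rangle)}\bigl(\bm{\beta}^{(t)} - \eta \bm{A}^\top(\rho'(\bm{A}\bm{\beta}^{(t)}) - \bm{y})\bigr)$ for a fixed step size $\eta$ small enough (depending only on $\lambda$, $\nu$, and the smoothness constant $1/4$ of $\rho$, plus an a priori bound on $\norm{\bm{A}}_{\mathrm{op}}$, which holds w.h.p. for subgaussian designs). Writing $\bm{v}^{(t)} = \bm{A}\bm{\beta}^{(t)}$, this fits the GFOM template: the update to $\bm{\beta}^{(t)}$ applies $\bm{A}^\top$ to a coordinatewise function of $\bm{v}^{(t)}$ and $\bm{y}$, then a coordinatewise affine-plus-prox map; the auxiliary $\bm{v}$-iterate applies $\bm{A}$ to $\bm{\beta}^{(t)}$. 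The functions $F^{(t)}, G^{(t)}$ are $O(1)$-Lipschitz uniformly in $t$ because $\rho'$ is $1$-Lipschitz, the prox of the regularizer is an explicit contraction ($\bm{x} \mapsto (\bm{x} - \eta\nu\bm{\xi})/(1+\eta\lambda)$), and $\bm{y}$ only enters as an additive shift whose value at zero is bounded by $1$. The role of $\bm{\xi}$ is just an additive constant in one of the $G$-maps; its magnitude at zero is $O(\nu|\xi_j|)$, which is $n^{o(1)}$ w.h.p. by Gaussianity, so after conditioning on $\bm{\xi}$ (which \cref{thm:gfom-universality} permits, as remarked after its statement) the Lipschitz/initialization bound $\Lambda$ is $n^{o(1)}$.

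The main obstacle — and where I would spend the most care — is controlling the number of GFOM steps $t = t(n)$ needed, since the error bound in \cref{thm:gfom-universality} degrades like $(\Lambda\log n)^{Ct^3} n^{-1/(Ct^3)}$ and the strong-convexity contraction gives geometric convergence $\norm{\bm{\beta}^{(t)} - \bm{\hb}(\bm{A})} \le (1-c)^t \norm{\bm{\beta}^{(0)} - \bm{\hb}(\bm{A})}$ only down to $\mathrm{poly}(n) \cdot (1-c)^t$ (using crude a priori bounds $\norm{\bm{\hb}(\bm{A})}, \norm{\bm{\beta}^{(0)}} \le \mathrm{poly}(n)$ w.h.p., the latter via the optimality condition and $\norm{\bm{A}}_{\mathrm{op}} = O(1)$). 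Choosing $t = (\log n)^{a}$ for a small constant $a > 0$ makes the contraction error $e^{-(\log n)^{1+\Omega(1)}}$, negligible, while $(\Lambda\log n)^{Ct^3} n^{-1/(Ct^3)} = \exp\bigl(Ct^3\log(\Lambda\log n) - \tfrac{1}{Ct^3}\log n\bigr)$, which for $t^6 = o(\log n / \log\log n)$ — i.e. $a < 1/6$ — is $e^{-(\log n)^{\Omega(1)}}$. One also has to pass from "pseudo-Lipschitz test of $(\langle\bm{x}_i,\bm{\beta}^\star\rangle, \langle\bm{x}_i,\bm{\beta}^{(t)}\rangle)$ and of $\beta^{(t)}_j$" to the same test of $\bm{\hb}$, which follows because pseudo-Lipschitz functions applied to vectors within $\ell^2$-distance $e^{-(\log n)^{\Omega(1)}}$ (and of $\mathrm{poly}(n)$ norm) differ in their coordinatewise average by $e^{-(\log n)^{\Omega(1)}}$, using the polynomial growth of the test function together with a union bound; the $\langle\bm{x}_i,\cdot\rangle$ functionals are handled the same way since $\max_i\norm{\bm{x}_i} \le R$ (or $O(1)$ for $\bm{G}/\sqrt d$ w.h.p.). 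Assembling these three pieces — GFOM representation, the two-sided geometric convergence with $t = (\log n)^a$, and the universality bound — yields the claimed estimate both in expectation and w.h.p.
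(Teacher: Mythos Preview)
Your proposal has a genuine gap: you treat the labels $\bm{y}$ as a fixed side input (``$\bm{y}$ only enters as an additive shift''), but in this lemma $\bm{y}$ depends on the design matrix through $\bm{y}\mid\bm{A} \sim \mathrm{Bernoulli}(\rho'(\bm{A}\bm{\beta}^\star))$. When \cref{thm:gfom-universality} swaps $\bm{A} = \bm{X}$ for $\bm{A} = \bm{G}/\sqrt{d}$, the initialization is held fixed, so if $\bm{y}$ sits in the initialization then $\bm{\hb}(\bm{G}/\sqrt{d})$ would be computed with labels generated from $\bm{X}$ --- the wrong coupling, and not the one needed downstream where CGMT is applied to the fully Gaussian model. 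To get the right coupling one must internalize label generation into the GFOM: rewrite $\bm{y} = \bm{1}[\bm{A}\bm{\beta}^\star + \bm{\eps}^\star > 0]$ in latent-variable form with $\bm{\eps}^\star$ i.i.d.\ Logistic independent of both design matrices, and carry $\bm{A}\bm{\beta}^\star$ as a second channel of the $\bm{u}$-iterate. But then the per-sample nonlinearity involves $\bm{1}[\langle \bm{a}_i, \bm{\beta}^\star\rangle + \eps^\star_i > 0]$, which is \emph{not} Lipschitz in $\langle \bm{a}_i, \bm{\beta}^\star\rangle$, and the Lipschitz hypothesis of \cref{thm:gfom-universality} fails outright.

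The paper's proof repairs exactly this by introducing a smoothing parameter $\sigma > 0$: replace the indicator by a smooth $\varphi_\sigma$ with $\norm{\varphi_\sigma'}_\infty = O(1/\sigma)$, and run gradient descent on the resulting $\sigma$-smoothed perturbed objective, producing iterates $\bm{\beta}^{(t)}_\sigma$ that converge to a smoothed minimizer $\bm{\hb}_\sigma$. Two further ingredients you did not anticipate are then required: (i) control of $\norm{\bm{\hb}_\sigma - \bm{\hb}}$ as $\sigma \to 0$, obtained from a Bernstein bound on $\sum_i \bm{1}[\abs{\langle \bm{a}_i,\bm{\beta}^\star\rangle + \eps^\star_i} \le \sigma]$ together with $\lambda$-strong convexity; and (ii) a truncation $[u_1]_M$ with $M = O(\log n)$ inside the GFOM nonlinearity, justified by a leave-one-sample-out bound on $\norm{\bm{A}\bm{\beta}^{(t)}_\sigma}_\infty$, because $\abs{\partial_1\partial_2 L_\sigma(x,y;\eps)} = O(\abs{x}/\sigma)$ and Lipschitzness in the second argument would otherwise be unbounded. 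The final rate comes from balancing \emph{both} $t$ and $\sigma$ (the paper takes $t = (\log n)^{1/8}$ and $\sigma = e^{-(\log n)^{1/8}}$), not $t$ alone as in your sketch.
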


\begin{proof}
    As in the proof of Theorem 4.3 of \cite{han2024entrywise} for non-private logistic regression, we will define a suitably smoothed sequence of iterates $\bm{\beta}^{(t)} \in \R^d$, indexed by $t \in \N$, that converge to $\bm{\hb}$ as $t \to \infty$. We will then conclude by \cref{thm:gfom-universality}, the universality of general first order methods (GFOMs). Unlike prior work, our proof will account for the noise introduced by objective perturbation for differential privacy. We also extend prior work by establishing universality for the logits $\bm{X}\bm{\hb}$ in addition to $\bm{\hb}$. Besides these changes, our proof and notation borrow heavily from that of Theorem 4.3 of \cite{han2024entrywise}, with a few simplifications due to our focus on the $\ell^2$ regularizer.

    To begin, we recast the logistic regression model in its \emph{latent-variable} form. To this end, consider the distribution over $\R$ with CDF $\rho'$, called the \emph{logistic distribution}, and define $\eps^\star_1, \ldots, \eps^\star_n \iid \mathrm{Logistic}$. Then, the assumption that $\bm{y} \sim \mathrm{Bernoulli}(\rho'(\bm{X}\bm{\beta}^\star))$ can be rephrased as
    \[
        \bm{y} = \bm{1}[\bm{X}\bm{\beta}^\star + \bm{\eps}^\star > 0].
    \]
    Similarly, the loss value $\ell(\bm{\beta}; (\bm{x}_i, y_i)) = \rho(\langle \bm{x}_i, \bm{\beta}\rangle) - y_i\langle \bm{x}_i, \bm{\beta}\rangle$ incurred on the $i$\textsuperscript{th} data point can be rephrased as $L(\langle \bm{x}_i, \bm{\beta}\rangle, \langle \bm{x}_i, \bm{\beta}^\star \rangle; \eps^\star_i)$, where the function $L : \R^3 \to \R$ is given by
    \[
        L(x, y; \eps) = \rho(-(2 \cdot \bm{1}[y + \eps > 0] - 1)x).
    \]
    Since we plan to take derivatives of $L$ with respect to both its first and second inputs (call these operations $\partial_1$ and $\partial_2$), it will be convenient to consider the following smoothed version of $L$, parameterized by $\sigma > 0$. Letting $\varphi : \R \to [0, 1]$ be any infinitely differentiable, monotonically increasing function such that $\varphi(x) = 0$ for all $x \le -1$ and $\varphi(x) = 1$ for all $x \ge 1$, set $\varphi_\sigma(x) = \varphi(x/\sigma)$ and
    \(
        L_\sigma(x, y; \eps) = \rho(-(2\varphi_\sigma(y + \eps) - 1)x).
    \)
    Similarly, set $\varphi_0(x) = \bm{1}[x > 0]$ and $L_0(x,y;\eps) = L(x,y;\eps)$.

    Next, consider the $\sigma$-smoothed version of the pertubed objective:
    \[
        \mathcal{L}_\sigma(\bm{\beta}) = \sum_{i=1}^n L_\sigma(\langle \bm{x}_i, \bm{\beta}\rangle, \langle \bm{x}_i, \bm{\beta}^\star \rangle; \eps^\star_i) + \frac{\lambda}{2}\norm{\bm{\beta}}^2 + \nu\langle\bm{\xi},\bm{\beta}\rangle.
    \]
    Its gradient is
    \[
        \nabla \mathcal{L}_\sigma(\bm{\beta}) = \sum_{i=1}^n \bm{x}_i\, \partial_1 L_\sigma(\langle \bm{x}_i, \bm{\beta}\rangle, \langle \bm{x}_i, \bm{\beta}^\star \rangle; \eps^\star_i) + \lambda \bm{\beta} + \nu\bm{\xi},
    \]
    where
    \begin{align*}
        \partial_1 L_\sigma(x, y; \eps) &= -(2\varphi_\sigma(y+\eps)-1)\rho'(-(2\varphi_\sigma(y+\eps)-1)x) \in [-1, 1]\\
        \partial_1^2 L_\sigma(x, y; \eps) &= (2\varphi_\sigma(y + \eps) - 1)^2\rho''(-(2\varphi_\sigma(y+\eps)-1)x) \in[0, 1/4].
    \end{align*}
    For some step size $\eta$, define the following gradient descent iterates on the $\sigma$-smoothed version of the perturbed objective:
    \begin{align*}
        \bm{\beta}_\sigma^{(0)} = \bm{0}, \qquad \bm{\beta}^{(t)}_\sigma = \bm{\beta}^{(t - 1)}_\sigma - \eta \nabla \mathcal{L}_\sigma(\bm{\beta}^{(t - 1)}_\sigma).
    \end{align*}
    We now derive several useful properties of $\bm{\beta}^{(t)}_\sigma$, such as its behavior as $t \to \infty$, $\sigma \to 0$ and upper bounds on $\norm{\bm{\beta}^{(t)}_\sigma}$, $\norm{\bm{X}\bm{\beta}^{(t)}_\sigma}_\infty$. The argument for these upper bounds will be similar in spirit to the proof of \cref{thm:objective-uniform-bound} that we already saw and will, at various times, require bounding the effect of the noise introduced for differential privacy.

    \begin{enumerate}[(a)]
        \item \label{sec:logistic-iterates-converge} \textbf{(Behavior as $t \to \infty$)} In the limit, the iterates converge to $\bm{\hb}_\sigma = \argmin_{\bm{\beta} \in \R^d}\, \mathcal{L}_\sigma(\bm{\beta})$. To prove this, we subtract the equation $\eta\nabla \mathcal{L}_\sigma(\bm{\hb}_\sigma) = \bm{0}$ from the equation defining $\bm{\beta}^{(t)}_\sigma$:
        \[
            \bm{\beta}^{(t)}_\sigma - \bm{\hb}_\sigma = \left(\bm{\beta}^{(t - 1)}_\sigma - \eta\nabla\mathcal{L}_\sigma(\bm{\beta}^{(t-1)}_\sigma)\right)-\left(\bm{\hb}_\sigma - \eta \nabla \mathcal{L}_\sigma(\bm{\hb}_\sigma)\right).
        \]
        Observe that both invocations of $\nabla \mathcal{L}_\sigma$ in the above equation contain an term of the form $-\eta \nu\bm{\xi}$ due to objective perturbation, but these terms are canceled out by the subtraction. Thus, the remainder of this part of the proof is identical to the non-private case. We include it only for the sake of completeness:

        We apply the mean value theorem to the $\partial_1 L_\sigma$ terms in $\nabla \mathcal{L}_\sigma$, and use the fact that $\partial_1^2 L_\sigma$ takes values in $[0, 1/4]$ to obtain $\bm{c} \in [0, 1/4]^n$ such that
        \[
            \bm{\beta}^{(t)}_\sigma - \bm{\hb}_\sigma = \left((1 - \eta\lambda)\bm{I} - \eta \sum_{i=1}^n c_i \bm{x}_i\bm{x}_i^\top\right)(\bm{\beta}^{(t - 1)}_\sigma - \bm{\hb}_\sigma).
        \]
        Since $\bm{X}$ follows a subgaussian design, with probability $1 - e^{-\Omega(n)}$, we have $\bm{0} \preceq \eta\sum_{i=1}^n c_i\bm{x}_i\bm{x}_i^\top \preceq O(\eta)\bm{I}$. Therefore, there exists some choice of step size $\eta = \Theta(1/(1 + \lambda))$ such that iterating the above equation for $t = 1, 2, \ldots$ yields, with probability $1 - e^{-\Omega(n)}$,
        \[
            \norm{\bm{\beta}^{(t)}_\sigma - \bm{\hb}_\sigma} \le e^{-\Omega(t)}\norm{\bm{\hb}_\sigma}.
        \]
        \item \label{sec:smoothing-effect} \textbf{(Behavior as $\sigma \to 0$)} We will analyze the behavior of the function $\mathcal{L}_\sigma(\bm{\beta}) - \mathcal{L}(\bm{\beta})$. Clearly, the subtraction cancels out the objective perturbation term, so this part of the proof is also identical to the non-private case. We include it only for the sake of completeness:
        
        The outputs of $\varphi_\sigma$ and $\varphi_0$ only differ on inputs smaller than $\sigma$ in absolute value, and when they differ, they differ by at most $1$.
        Using this observation, followed by Cauchy-Schwarz,
        \begin{align*}
            \abs{\mathcal{L}_\sigma(\bm{\beta}) - \mathcal{L}_0(\bm{\beta})} &= \Abs{\sum_{i=1}^n (L_\sigma - L_0)(\langle \bm{x}_i, \bm{\beta} \rangle, \langle \bm{x}_i, \bm{\beta}^\star\rangle, \eps_i)} \\
            &\le 2\sum_{i=1}^n \abs{\langle \bm{x}_i, \bm{\beta}\rangle} \cdot \abs{(\varphi_\sigma - \varphi_0)(\langle \bm{x}_i, \bm{\beta}^\star\rangle + \eps_i)} \\
            &\le 2\norm{\bm{X}\bm{\beta}}\Bigl(\sum_{i=1}^n \bm{1}[\langle \bm{x}_i, \bm{\beta}^\star\rangle + \eps_i \in [-\sigma, +\sigma]]\Bigr)^{1/2}.
        \end{align*}
        Since the Logistic PDF takes values between $0$ and $1/4$, the terms of the parenthesized sum are independent Bernoulli random variables, each with parameter at most $2\sigma \cdot 1/4$. Consequently, standard concentration inequalities (e.g. Bernstein's) show that for any fixed constant $D > 0$, with probability $1 - O(n^{-D})$, the above sum is at most $O(\sigma n + \log n)$. We also have with probability $1 - e^{-\Omega(n)}$ that $\norm{\bm{X}} = O(1)$. Thus, with probability $1 - O(n^{-D})$,
        \[
            \abs{\mathcal{L}_\sigma(\bm{\beta}) - \mathcal{L}_0(\bm{\beta})} \le O((\sigma n + \log n)^{1/2} \norm{\bm{\beta}}).
        \]
        We apply this inequality to $\bm{\hb}$ and $\bm{\hb}_\sigma$, which minimize $\mathcal{L}$ and $\mathcal{L}_\sigma$, respectively. By the $\lambda$-strong convexity of these functions and \cref{thm:convex-minimization}, we deduce that with probability $1 - O(n^{-D})$,
        \[
            \norm{\bm{\hb}_\sigma - \bm{\hb}} \le O((\sigma n + \log n)^{1/4}(\norm{\bm{\hb}} + \norm{\bm{\hb}_\sigma})^{1/2}).
        \]
        \item \label{sec:coefficients-worst-case} \textbf{(Worst-Case Upper Bounds on $\norm{\bm{\beta}^{(t)}_\sigma}$, $\norm{\bm{\hb}_\sigma}$)} From the definition of $\bm{\beta}^{(t)}_\sigma$ and the fact that $\partial_1 L$ takes values in $[-1, +1]$,
        \[
            \norm{\bm{\beta}^{(t)}_\sigma} \le (1 - \eta\lambda)\norm{\bm{\beta}^{(t - 1)}_\sigma} + \eta\norm{\bm{X}^\top}_{\infty \to 2} + \eta\nu\norm{\bm{\xi}} = O(t\cdot (\norm{\bm{X}^\top}_{\infty \to 2} + \norm{\bm{\xi}})).
        \]
        The fact that $\mathcal{L}_\sigma(\bm{\hb}_\sigma) = \Omega(\norm{\bm{\hb}_\sigma}^2)$ is smaller than $\mathcal{L}_\sigma(\bm{0}) = O(n)$ implies $\norm{\bm{\hb}_\sigma} \le O(\sqrt{n})$.
        
        \item \label{sec:coefficients-high-prob} \textbf{(High Probability Upper Bound on $\norm{\bm{\beta}^{(t)}_\sigma}$, $\norm{\bm{\hb}_\sigma}$)} As in our analysis of $t \to \infty$, applying the mean value theorem to $\partial_1 L_\sigma$ in the equation $\nabla \mathcal{L}_\sigma(\bm{\hb}_\sigma) = \bm{0}$ yields $\bm{c} \in [0, 1/4]^n$ such that
        \[
            \left(\lambda\bm{I} + \sum_{i=1}^n c_i\bm{x}_i\bm{x}_i^\top \right)\bm{\hb}_\sigma + \sum_{i=1}^n \bm{x}_i\,\partial_1 L_\sigma(\bm{0}, \langle \bm{x}_i, \bm{\beta}^\star\rangle, \eps^\star_i) + \nu\bm{\xi} = \bm{0}.
        \]
        Therefore, with probability $1 - e^{-\Omega(n)}$,
        \[
            \norm{\bm{\hb}_\sigma} \le O{\left(\Bigl\lVert{\sum_{i=1}^n \bm{x}_i\,\partial_1 L_\sigma(\bm{0}, \langle \bm{x}_i, \bm{\beta}^\star\rangle, \eps^\star_i)\Bigr\rVert} + \norm{\bm{\xi}}\right)}.
        \]
        The second term measures the norm of the random perturbation vector $\bm{\xi}$, and is clearly at most $O(\sqrt{n \log n})$ with probability $1 - O(n^{-D})$, for any fixed constant $D > 0$. The same bound can be established for the first term, and the proof of this is identical to the non-private case. We include it only for the sake of completeness:

        Define the function $F_\sigma(t) = \E[\varphi_\sigma(t + \eps^\star_0)]$, where $\eps^\star_0 \sim \mathrm{Logistic}$. Using the fact that $\partial_1 L_\sigma(0, y, \eps) = -(2\varphi_\sigma(y + \eps)-1)$, followed by the triangle inequality,
        \begin{align*}
            &\Bigl\lVert\sum_{i=1}^n \bm{x}_i\,\partial_1 L(\bm{0}, \langle \bm{x}_i, \bm{\beta}^\star\rangle, \eps^\star_i)\Bigr\rVert^2 \\
            &= \frac{1}{4}\Bigl\lVert\sum_{i=1}^n \bm{x}_i(2\varphi_\sigma(\langle \bm{x}_i, \bm{\beta}^\star\rangle + \eps^\star_i)-1)\Bigr\rVert^2 \\
            &\le O\Biggl(\underbrace{\Bigl\lVert\sum_{i=1}^n \bm{x}_i(\varphi_\sigma(\langle \bm{x}_i, \bm{\beta}^\star\rangle + \eps^\star_i)-F_\sigma(\langle \bm{x}_i, \bm{\beta}^\star\rangle))\Bigr\rVert^2}_{S_1}  + \underbrace{\Bigl\lVert\sum_{i=1}^n \bm{x}_i(2F_\sigma(\langle \bm{x}_i, \bm{\beta}^\star\rangle)-1)\Bigr\rVert^2}_{S_2}\Biggr).
        \end{align*}
        We bound the terms $S_1$ and $S_2$ separately. First, $S_1 \le O(n \log n)$ with probability $1 - O(n^{-D})$ for any fixed constant $D > 0$ because for each fixed coordinate $j \in [d]$, the $j$\textsuperscript{th} coordinates of the summands across $i \in [n]$ are centered, independent variables with subgaussian norm $O(n^{-1/2})$. To bound $S_2$, we use the fact that $|F'_\sigma(t)| \le \norm{\varphi'}_\infty \le O(1)$ for any $\sigma \ge 0$ and $t \in \R$. Thus, for each fixed coordinate $j \in [d]$, if we let $\bm{x}_{i,(-j)} \in \R^d$ denote the version of $\bm{x}_i$ with the $j$\textsuperscript{th} coordinate zeroed out, then there is some scalar $\tau_j$ with $\abs{\tau_j} = O(1)$ such that
        \[
            F_\sigma(\langle \bm{x}_i, \bm{\beta}^\star\rangle) = F_\sigma(\langle \bm{x}_{i,(-j)}, \bm{\beta}^\star\rangle) + \tau_j x_{ij}\beta^\star_j.
        \]
        Consequently, expanding $S_2$,
        \[
            S_2 \le O{\left(\sum_{j=1}^d\left(\sum_{i=1}^n x_{ij}\right)^2 + \sum_{j=1}^d\left(\sum_{i=1}^n x_{ij}F_\sigma(\langle \bm{x}_{i,(-j)}, \bm{\beta}^\star\rangle)\right)^2 + \sum_{j=1}^d \left(\sum_{i=1}^n x_{ij}^2\right)^2(\beta^\star_j)^2\right)}.
        \]
        Using standard subgaussian concentration inequalities on $\bm{X}$, the fact that $x_{ij}$ is independent of $F_\sigma(\langle \bm{x}_{i,(-j)},\bm{\beta}^\star\rangle)$, and the fact that $\norm{\bm{\beta}^\star}_2 = O(n)$, we conclude that $S_2 \le O(n \log n)$ with probability $1 - O(n^{-D})$ for any constant $D > 0$.

        In summary, we have shown that with probability $1 - O(n^{-D})$,
        \[
            \norm{\bm{\hb}_\sigma} \le O(\sqrt{n \log n}).
        \]
        Combining this with the bound from part \ref{sec:logistic-iterates-converge}, with probability $1 - O(n^{-D})$,
        \[
            \norm{\bm{\beta}^{(t)}_\sigma} \le O(\sqrt{n \log n}).
        \]
        
        \item \textbf{(High Probability Upper Bound on $\norm{\bm{X}\bm{\beta}^{(t)}_\sigma}_\infty$)} \label{sec:logits-high-prob} Fix an index $k \in [n]$, and let $\mathcal{L}_{\sigma; [-k]}$ be the modified loss function that omits the term corresponding to the $k$\textsuperscript{th} sample $\bm{x}_k$. As before, define iterates
        \[
            \bm{\beta}^{(0)}_{\sigma; [-k]} = \bm{0}, \qquad \bm{\beta}^{(t)}_{\sigma; [-k]} = \bm{\beta}^{(t-1)}_{\sigma; [-k]} - \eta\nabla \mathcal{L}_{\sigma; [-k]}(\bm{\beta}^{(t-1)}_{\sigma; [-k]}).
        \]
        Clearly, these converge to $\bm{\hb}_{\sigma; [-k]} = \argmin_{\bm{\beta} \in \R^d} \mathcal{L}_{\sigma; [-k]}(\bm{\beta})$ at the same rate that we proved $\bm{\beta}^{(t)}_\sigma$ converges to $\bm{\hb}_{\sigma}$. We proceed by subtracting the equations defining $\bm{\beta}^{(t)}_{\sigma}$ and $\bm{\beta}^{(t)}_{\sigma;[-k]}$, which cancels out the effect of linear objective perturbation. Therefore, the remainder of this part of the proof is identical to the non-private case. We include it only for the sake of completeness:
        
        Similar to before, we see that there exist $\bm{c} \in [0, 1/4]^n$ and $c_0 \in [-1, 1]$ such that
        \[
            \bm{\beta}^{(t)}_{\sigma} - \bm{\beta}^{(t)}_{\sigma;[-k]} = \left((1 - \eta\lambda)\bm{I} - \eta\sum_{i=1}^n c_i \bm{x}_i\bm{x}_i^\top\right)(\bm{\beta}^{(t-1)}_{\sigma} - \bm{\beta}^{(t-1)}_{\sigma;[-k]}) - \eta c_0 \bm{x}_k.
        \]
        Notice that the final $-\eta c_0\bm{x}_k$ term arises because the gradient of $\mathcal{L}_{\sigma}$ includes a term for the loss of $(\bm{x}_k, y_k)$, whereas $\mathcal{L}_{\sigma; [-k]}$ does not. This implies that for appropriate $\eta = \Theta(1)$, with probability $1 - e^{-\Omega(n)}$,
        \[
            \norm{\bm{\beta}^{(t)}_{\sigma} - \bm{\beta}^{(t)}_{\sigma;[-k]}} \le \frac{1}{2} \norm{\bm{\beta}^{(t-1)}_{\sigma} - \bm{\beta}^{(t-1)}_{\sigma;[-k]}} + O(1).
        \]
        Iterating the above equation for $t = 1, 2, \ldots$ and taking a union bound over $k \in [n]$ yields, with probability $1 - e^{-\Omega(n)}$,
        \[
            \norm{\bm{\beta}^{(t)}_{\sigma} - \bm{\beta}^{(t)}_{\sigma;[-k]}} \le O(1).
        \]
        A nearly identical argument yields, with probability $1 - e^{-\Omega(n)}$,
        \[
            \norm{\bm{\hb}_{\sigma} - \bm{\hb}_{\sigma;[-k]}} \le O(1).
        \]
        Next, by the triangle inequality,
        \[
            \norm{\bm{X} \bm{\beta}^{(t)}_\sigma}_\infty \le \underbrace{\max_{k \in [n]}\,\abs{\langle \bm{x}_k, \bm{\beta}^{(t)}_{\sigma;[-k]}\rangle}}_{S_1} + \underbrace{\max_{k \in [n]}\, \abs{\langle \bm{x}_k, \bm{\beta}^{(t)}_\sigma - \bm{\beta}^{(t)}_{\sigma;[-k]}\rangle}}_{S_2}.
        \]
        We bound $S_1$ and $S_2$ separately. First, for $S_1$, observe that $\bm{x}_k$ and $\bm{\beta}^{(t)}_{\sigma;[-k]}$ are independent by definition, and recall that we showed in part \ref{sec:logistic-iterates-converge} that for any constant $D > 0$, with probability $1 - O(n^{-D})$, we have $\norm{\bm{\beta}^{(t)}_{\sigma;[-k]}} \le O(\sqrt{n \log n})$. Therefore, \cref{thm:subgaussian-inner-product} implies that $S_1 \le O(\log n)$. Next, for $S_2$, recall that we just proved $\norm{\bm{\beta}^{(t)}_{\sigma} - \bm{\beta}^{(t)}_{\sigma;[-k]}} \le O(1)$. Therefore, by Cauchy-Schwarz, $S_2 \le O(\sqrt{\log n})$. Similar arguments hold with $\bm{\hb}_{\sigma;[-k]}$ in place of $\bm{\beta}^{(t)}_{\sigma;[k]}$. We conclude that for any constant $D > 0$, with probability $1 - O(n^{-D})$,
        \[
            \norm{\bm{X}\bm{\beta}^{(t)}_\sigma}_\infty + \norm{\bm{X}\bm{\hb}_\sigma}_\infty \le O(\log n).
        \]
    \end{enumerate}
    
    With our various bounds on the iterates $\bm{\beta}^{(t)}$ in hand, we now are ready to invoke GFOM universality (\cref{thm:gfom-universality}), which is the main step of the proof. We will construct the required GFOM using the following functions $G_{\sigma,i} : \R^{2} \to \R$ for $\sigma \ge 0$ and $i \in [n]$:
    \[
        G_{\sigma,i}(\bm{u}) = \partial_1 L_\sigma([u_1]_M, u_2; \eps_i),
    \]
    where $M \le O(\log n)$ is the bound on $\norm{\bm{X}\bm{\beta}^{(t)}_\sigma}_\infty + \norm{\bm{X}\bm{\hb}_\sigma}_\infty$ from part \ref{sec:logits-high-prob}. Let $G_\sigma : \R^{m \times 2} \to \R$ be the separable function that applies $G_{\sigma, i}$ to its $i$\textsuperscript{th} input. A direct calculation reveals that $\abs{\partial_1\partial_2 L_\sigma(x, y; \eps)} \le O(\abs{x}/\sigma)$, from which it follows that
    \[
        \sup_{\eps \in \R^n} \max_{i \in [n]}\,\norm{G_{\sigma, i}}_{\mathrm{Lip}} + \abs{G_{\sigma, i}(\bm{0})} \le O{\left(1 + \frac{M}{\sigma}\right)}.
    \]
    Now define the GFOM iterates $\bm{u}^{(t)} \in \R^{n \times 2}$ and $\bm{v}^{(t)} \in \R^d$ by $\bm{u}^{(0)} = \bm{0}$ and $\bm{v}^{(0)} = \bm{0}$ and
    \begin{equation*}
    \begin{split}
        \bm{u}^{(t)} &= \bm{X}\left[\bm{v}^{(t-1)}\mid \bm{\beta}^\star\right] \in \R^{n \times 2}, \\
        \bm{v}^{(t)} &= \bm{X}^\top [-\eta G_\sigma(\bm{u}^{(t)})] + (1 - \eta \lambda)\bm{v}^{(t - 1)} - \eta \nu \bm{\xi} \in \R^d.
    \end{split}
    \end{equation*}
    Clearly, in the event $E_\sigma$ that $\norm{\bm{X}\bm{\beta}^{(t)}_\sigma}_\infty \le M$ at each step, the truncation $[\cdot]_M$ in the definition of $G$ never takes effect, so we have for all $t \in \N$ that $\bm{v}^{(t)}$ coincides with $\bm{\beta}^{(t)}_\sigma$ and that the first column of $\bm{u}^{(t)}$ coincides with $\bm{X}\bm{\beta}^{(t)}_\sigma$. By GFOM universality (\cref{thm:gfom-universality}), for any $O(1)$-pseudo-Lipschitz functions $\psi_{1i} : \R^2 \to \R$ and $\psi_{2j} : \R \to \R$ of order $O(1)$ for $i \in [n]$ and $j \in [d]$, there exists a constant $C > 0$ such that if $\bm{G} \in \R^{n \times d}$ has independent standard Gaussian entries,
    \begin{align*}
        &\E\left[\biggl\lvert{\frac{1}{n}\sum_{i=1}^n \bigg(\psi_{1i}(\langle \bm{x}_i, \bm{\beta}^\star\rangle, \langle \bm{x}_i, \bm{\beta}^{(t)}_{\sigma}(\bm{X})\rangle) - \psi_{1i}\Bigl(\Bigl\langle \frac{\bm{g}_i}{\sqrt{d}}, \bm{\beta}^\star\Bigr\rangle, \Bigl\langle \frac{\bm{g}_i}{\sqrt{d}}, \bm{\beta}^{(t)}_{\sigma}\Bigl(\frac{\bm{G}}{\sqrt{d}}\Bigr)\Bigr\rangle\Bigr)\biggr)}\biggr\rvert\bm{1}_{E_\sigma}\right] \\
        &+ \E\left[\biggl\lvert{\frac{1}{d}\sum_{j=1}^d \biggl( \psi_{2j}(\beta^{(t)}_{\sigma,j}(\bm{X})) - \psi_{2j}\Bigl(\beta^{(t)}_{\sigma,j}\Bigl(\frac{\bm{G}}{\sqrt{d}}\Bigr)\Bigr)\biggr)}\biggr\rvert\bm{1}_{E_\sigma}\right] \le {(C\sigma^{-1}\log(n))^{Ct^3}}{n^{-1/(Ct^3)}}.
    \end{align*}
    To conclude the proof, all that remains is to derive a version of the above inequality without the indicator functions $\bm{1}_{E_\sigma}$, and then consider the limit as $t \to \infty$ and $\sigma \to 0$. The remainder of the proof is similar to that of the non-private case; the exception is that we often invoke the bounds from parts \ref{sec:logistic-iterates-converge}, \ref{sec:smoothing-effect}, \ref{sec:coefficients-worst-case}, \ref{sec:coefficients-high-prob}, \ref{sec:logits-high-prob}, whose proofs took into consideration the effect of objective perturbation at various points.
    
    To remove the indicator functions, first recall that we showed in part \ref{sec:logits-high-prob} above that $\Pr[E_\sigma] = 1 - O(n^{-D})$ for any constant $D > 0$. Thus, it suffices to show that even in the \emph{worst} case, which includes the unlikely, probability $O(n^{-D})$ event that $E_\sigma$ does \emph{not} occur, that all of the sums in the above display are bounded in expectation by $O(n^{c})$ for some constant $c > 0$ that does not depend on $D$. Indeed, taking $D > c$, this crude bound will lead to only a mild additive $O(n^{c} \cdot n^{-D}) = n^{-\Omega(1)}$ increase to the GFOM universality error bound of $(C\sigma^{-1}\log(n))^{Ct^3}n^{-1/(Ct^3)}$.

    To derive such a bound, observe that if the functions $\psi_{1i},\psi_{2j}$ are pseudo-Lispchitz \emph{of order $2$}, then for either $\bm{A} \in \{\bm{X}, \frac{1}{\sqrt{d}}\bm{G}\}$,
    \begin{align*}
        &\E\Abs{\frac{1}{d}\sum_{j=1}^d \psi_{1j}(\langle \bm{a}_i, \bm{\beta}^\star\rangle, \langle \bm{a}_i, \beta^{(t)}_{\sigma,j}(\bm{A})\rangle)} + \E\Abs{\frac{1}{d}\sum_{j=1}^d \psi_{2j}(\beta^{(t)}_{\sigma,j}(\bm{A}))} \\
        &\le O{\left(1 + \frac{\E\norm{\bm{A}\bm{\beta}^\star}^2}{n} + \frac{\E\norm{\bm{A}\cdot \bm{\beta}^{(t)}_\sigma(\bm{A})}^2}{n} + \frac{\E\norm{\bm{\beta}^{(t)}_\sigma(\bm{A})}^2}{n}\right)}.
    \end{align*}
    By part \ref{sec:coefficients-worst-case}, we have $\norm{\bm{A}\bm{\beta}^\star}^2 + \norm{\bm{A}\cdot \bm{\beta}^{(t)}_\sigma(\bm{A})}^2 + \norm{\bm{\beta}^{(t)}_\sigma(\bm{A})}^2 \le (tn\norm{\bm{A}}\norm{\bm{\xi}})^{O(1)}$, which is clearly $n^{O(1)}$ in expectation, as desired.

    We have thus removed the indicator functions $\bm{1}_{E_\sigma}$. Specifically, we have shown that for any collection of $O(1)$-pseudo-Lipschitz functions $\psi_{1i}, \psi_{2j} : \R \to \R$ of order $2$ and any constant $D > 0$, there exists a constant $C > 0$ such that with probability $1 - Cn^{-D}$:
    \begin{align*}
        &\E\biggl\lvert{\frac{1}{n}\sum_{i=1}^n \bigg(\psi_{1i}(\langle \bm{x}_i, \bm{\beta}^\star\rangle, \langle \bm{x}_i, \bm{\beta}^{(t)}_{\sigma}(\bm{X})\rangle) - \psi_{1i}\Bigl(\Bigl\langle \frac{\bm{g}_i}{\sqrt{d}}, \bm{\beta}^\star\Bigr\rangle, \Bigl\langle \frac{\bm{g}_i}{\sqrt{d}}, \bm{\beta}^{(t)}_{\sigma}\Bigl(\frac{\bm{G}}{\sqrt{d}}\Bigr)\Bigr\rangle\Bigr)\biggr)}\biggr\rvert \\
        &+ \E\biggl\lvert{\frac{1}{d}\sum_{j=1}^d \biggl( \psi_{2j}(\beta^{(t)}_{\sigma,j}(\bm{X})) - \psi_{2j}\Bigl(\beta^{(t)}_{\sigma,j}\Bigl(\frac{\bm{G}}{\sqrt{d}}\Bigr)\Bigr)\biggr)}\biggr\rvert \le {(C\sigma^{-1}\log n)^{Ct^3}}{n^{-1/(Ct^3)}} + Cn^{-D}.
    \end{align*}
    Next, we consider what happens as $t \to \infty$. Indeed, using the definition of order-$2$ pseudo-Lipschitzness, followed by Cauchy-Schwarz, for both $\bm{A} \in \{\bm{X}, \frac{1}{\sqrt{d}}\bm{G}\}$,
    \begin{align*}
        &\biggl\lvert{\frac{1}{n}\sum_{i=1}^n \biggl( \psi_{1i}(\langle \bm{a}_i, \bm{\beta}^\star\rangle, \langle \bm{a}_i, \bm{\beta}^{(t)}_{\sigma}(\bm{A})\rangle) - \psi_{1i}(\langle \bm{a}_i, \bm{\beta}^\star\rangle, \langle \bm{a}_i, \bm{\hb}_{\sigma}(\bm{A})\rangle)\biggr)}\biggr\rvert
        \\
        &\qquad\le O{\left( \frac{1}{n}\sum_{i=1}^n \abs{\langle \bm{a}_i, \bm{\beta}^{(t)}_{\sigma}(\bm{A}) - \bm{\hb}_{\sigma}(\bm{A})\rangle} \cdot (1 + \abs{\langle \bm{a}_i, \bm{\beta}^\star\rangle} + \abs{\langle \bm{a}_i, \bm{\beta}^{(t)}_{\sigma}(\bm{A})\rangle} + \abs{\langle \bm{a}_i, \bm{\hb}_{\sigma}(\bm{A})\rangle}) \right)} \\
        &\qquad\le O{\left(\frac{1}{n} \cdot \norm{\bm{A}} \cdot \norm{\bm{\beta}^{(t)}_\sigma(\bm{A}) - \bm{\hb}_\sigma(\bm{A})} \cdot \bigl(\sqrt{n} + \norm{\bm{A}} \cdot (\norm{\bm{\beta}^\star} + \norm{\bm{\beta}^{(t)}_{\sigma}(\bm{A})} + \norm{\bm{\hb}_{\sigma}(\bm{A})})\bigr)\right)}.
    \end{align*}
    Similarly, for either $\bm{A} \in \{\bm{X}, \frac{1}{\sqrt{d}}\bm{G}\}$,
    \begin{align*}
        &\biggl\lvert{\frac{1}{d}\sum_{j=1}^d \biggl( \psi_{2j}(\beta^{(t)}_{\sigma,j}(\bm{A})) - \psi_{2j}(\hb_{\sigma,j}(\bm{A}))\biggr)}\biggr\rvert
        \\
        &\qquad\le O{\left( \frac{1}{d}\sum_{j=1}^d \abs{\beta^{(t)}_{\sigma,j}(\bm{A}) - \hb_{\sigma, j}(\bm{A})} \cdot (1 + \abs{\beta^{(t)}_{\sigma,j}(\bm{A})} + \abs{\hb_{\sigma, j}(\bm{A})}) \right)} \\
        &\qquad\le O{\left(\frac{1}{n} \cdot \norm{\bm{\beta}^{(t)}_\sigma(\bm{A}) - \bm{\hb}_\sigma(\bm{A})} \cdot (\sqrt{n} + \norm{\bm{\beta}^{(t)}_{\sigma}(\bm{A})} + \norm{\bm{\hb}_{\sigma}(\bm{A})})\right)}.
    \end{align*}
    As before, we consider separately the cases in which $E_\sigma$ does and does not occur. In the case that $E_\sigma$ does occur, our high-probability bound  $\norm{\bm{\beta}^{(t)}_\sigma(\bm{A}) - \bm{\hb}_\sigma(\bm{A})} \le e^{-\Omega(t)}\norm{\bm{\hb}_{\sigma}(\bm{A})}$ from part \ref{sec:logistic-iterates-converge} holds, as does our high probability bound $\norm{\bm{\hb}_\sigma(\bm{A})} + \norm{\bm{\beta}^{(t)}_\sigma(\bm{A})} \le O(\sqrt{n\log n})$ from part \ref{sec:coefficients-high-prob}, as well as the bound $\norm{\bm{A}} \le O(1)$. In the case that $E_\sigma$ does \emph{not} occur, our crude bounds on all the aforementioned terms of order $n^{O(1)}$ from part \ref{sec:coefficients-worst-case} still hold.
    Therefore, taking expectations of both sides of the above two displays, we see that for any constant $D > 0$, there exists a constant $C > 0$ such that
    \begin{align*}
        &\E\biggl\lvert{\frac{1}{n}\sum_{i=1}^n \biggl( \psi_{1i}(\langle \bm{a}_i, \bm{\beta}^\star\rangle, \langle \bm{a}_i, \bm{\beta}^{(t)}_{\sigma}(\bm{A})\rangle) - \psi_{1i}(\langle \bm{a}_i, \bm{\beta}^\star\rangle, \langle \bm{a}_i, \bm{\hb}_{\sigma}(\bm{A})\rangle)\biggr)}\biggr\rvert \\
        &+ \E\biggl\lvert{\frac{1}{d}\sum_{j=1}^d \biggl( \psi_{2j}(\beta^{(t)}_{\sigma,j}(\bm{A})) - \psi_{2j}(\hb_{\sigma,j}(\bm{A}))\biggr)}\biggr\rvert  \le e^{-t/C}(\log n)^C + Cn^{-D}.
    \end{align*}
    Just as the above inequality uses the bound from part \ref{sec:logistic-iterates-converge} (along with parts \ref{sec:coefficients-worst-case} \ref{sec:coefficients-high-prob},  \ref{sec:logits-high-prob}) to determine the effect of taking $t \to \infty$, so too can we use the bound from part \ref{sec:smoothing-effect} (along with parts \ref{sec:coefficients-worst-case} \ref{sec:coefficients-high-prob},  \ref{sec:logits-high-prob}) to determine the effect of taking $\sigma \to 0$:
    \begin{align*}
        &\E\biggl\lvert{\frac{1}{n}\sum_{i=1}^n \biggl( \psi_{1i}(\langle \bm{a}_i, \bm{\beta}^\star\rangle, \langle \bm{a}_i, \bm{\hb}_{\sigma}(\bm{A})\rangle) - \psi_{1i}(\langle \bm{a}_i, \bm{\beta}^\star\rangle, \langle \bm{a}_i, \bm{\hb}(\bm{A})\rangle)\biggr)}\biggr\rvert \\
        &+ \E\biggl\lvert{\frac{1}{d}\sum_{j=1}^d \biggl( \psi_{2j}(\hb_{\sigma,j}(\bm{A})) - \psi_{2j}(\hb_{j}(\bm{A}))\biggr)}\biggr\rvert \le \left(\sigma + \frac{\log n}{n}\right)^{1/4}(\log n)^{C} + Cn^{-D}.
    \end{align*}
    At this point, we have shown that for any constant $D > 0$, there exists a constant $C > 0$ such that
    \begin{align*}
        &\E\biggl\lvert\frac{1}{n}\sum_{i=1}^n \biggl(\psi_{1i}(\langle \bm{x}_i, \bm{\beta}^\star\rangle, \langle \bm{x}_i, \bm{\hb}(\bm{X})\rangle) - \psi_{1i}\Bigl(\Bigl\langle\frac{\bm{g}_i}{\sqrt{d}}, \bm{\beta}^\star\Bigr\rangle, \Bigl\langle\frac{\bm{g}_i}{\sqrt{d}}, \bm{\hb}\Bigl(\frac{\bm{G}}{\sqrt{d}}\Bigr)\Bigr\rangle\Bigr)\biggr)\biggr\rvert \\
        &+ \E\biggl\lvert\frac{1}{d}\sum_{j=1}^d \biggl(\psi_{2j}(\hb_j(\bm{X})) - \psi_{2j}\Bigl(\hb_j\Bigl(\frac{\bm{G}}{\sqrt{d}}\Bigr)\Bigr)\biggr)\biggr\rvert \\
        &\le {(C\sigma^{-1}\log n)^{Ct^3}}{n^{-1/(Ct^3)}} + e^{-t/C}(\log n)^C + \left(\sigma + \frac{\log n}{n}\right)^{1/4}(\log n)^{C} + Cn^{-D}.
    \end{align*}
    Taking $t = (\log n)^{1/8}$ and $\sigma = e^{-(\log n)^{1/8}}$ simplifies the entire bound to $e^{-\Omega((\log n)^{1/8})}$. Passing from a bound in expectation to a high-probability bound via Markov's inequality concludes the proof.
\end{proof}

\subsubsection{Step 2: Legendre Transform}

The goal of this step is to prove the following lemma, which relates the output of \cref{alg:objective-perturbation} with logistic loss to the min-max optimization of a certain random variable. As before, the lemma technically holds in a worst-case sense, and only later will the randomness of $\bm{X}$ become necessary.

In order to state the lemma, first recall from \cref{sec:preliminaries} that $\rho^\star$ denotes the convex conjugate of $\rho$. It is $4$-strongly convex, and its negation is commonly known as the \emph{binary entropy function} (in nats):
\[
    -\rho^\star(s) = \begin{cases}
        s\log\left(\frac{1}{s}\right) + (1 - s)\log\left(\frac{1}{1 - s}\right) &\text{if } 0 < s < 1, \\
        0 &\text{if }s = 0 \text{ or } s = 1.
    \end{cases}
\]

\begin{lemma}
\label{thm:objective-logistic-legendre}
    Fix any $\bm{\beta}^\star \in \R^d$ and let $\widehat{\bm{\beta}}$ be the output of \cref{alg:objective-perturbation} when instantiated with
    \[
        \ell(\bm{\beta}; (\bm{x}, y)) = \rho'(\langle \bm{x}, \bm{\beta}\rangle) - y\langle \bm{x}, \bm{\beta}\rangle.
    \]
    Let $Q_{\bm{\beta}, \bm{v}}$ be the following random variable indexed by $\bm{\beta} \in \R^d$ and $\bm{v} \in [0, 1]^n$:
    \[
        Q_{\bm{\beta}, \bm{v}} = \langle \bm{X} \bm{\beta}, \bm{v} - \bm{y}\rangle + \frac{\lambda}{2}\norm{\bm{\beta}}^2 + \nu \langle \bm{\xi}, \bm{\beta}\rangle - \rho^\star(\bm{v}).
    \] Let $\bm{\hv} = \rho'(\bm{X}\widehat{\bm{\beta}})$. Then,
    $(\widehat{\bm{\beta}}, \widehat{\bm{v}})$
    is the unique point in $\R^d \times [0, 1]^n$ satisfying
    \[
        \max_{\bm{v} \in [0, 1]^n}\, Q_{\widehat{\bm{\beta}}, \bm{v}} = \min_{\bm{\beta} \in \R^d}\, Q_{\bm{\beta}, \widehat{\bm{v}}}.
    \]
    We call it the saddle point or Nash equilibrium of $Q_{\bm{\beta}, \bm{v}}$.
\end{lemma}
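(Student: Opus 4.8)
The plan is to mirror the proof of \cref{thm:objective-huber-legendre}, replacing the explicit Legendre transform of the Huber loss with the biconjugation identity $\rho = \rho^{\star\star}$. Instantiated with the logistic loss, \cref{alg:objective-perturbation} computes
\[
    \widehat{\bm{\beta}} = \argmin_{\bm{\beta} \in \R^d}\, \rho(\bm{X}\bm{\beta}) - \langle \bm{y}, \bm{X}\bm{\beta}\rangle + \frac{\lambda}{2}\norm{\bm{\beta}}^2 + \nu\langle \bm{\xi}, \bm{\beta}\rangle.
\]
First I would invoke biconjugation to write $\rho(\bm{r}) = \rho^{\star\star}(\bm{r}) = \sup_{\bm{v} \in \R^n} \langle \bm{r}, \bm{v}\rangle - \rho^\star(\bm{v})$ for every $\bm{r} \in \R^n$. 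Since $\rho^\star$ is the separable binary entropy function, whose effective domain is exactly $[0,1]^n$, the supremum may be restricted to $\bm{v} \in [0,1]^n$; and since $\rho$ is differentiable with $(\rho^\star)' = (\rho')^{-1}$, it is attained uniquely at $\bm{v} = \rho'(\bm{r}) \in (0,1)^n$. Substituting $\bm{r} = \bm{X}\bm{\beta}$ and collecting the terms linear in $\bm{X}\bm{\beta}$ turns the display above into $\widehat{\bm{\beta}} = \argmin_{\bm{\beta} \in \R^d} \max_{\bm{v} \in [0,1]^n} Q_{\bm{\beta}, \bm{v}}$, and the unique inner maximizer at $\bm{\beta} = \widehat{\bm{\beta}}$ is precisely $\widehat{\bm{v}} = \rho'(\bm{X}\widehat{\bm{\beta}})$.

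It then remains to upgrade ``$\widehat{\bm{\beta}}$ is the outer minimizer and $\widehat{\bm{v}}$ the corresponding inner maximizer'' to ``$(\widehat{\bm{\beta}}, \widehat{\bm{v}})$ is the unique saddle point.'' The map $\bm{\beta} \mapsto Q_{\bm{\beta}, \bm{v}}$ is $\lambda$-strongly convex (thanks to $\frac{\lambda}{2}\norm{\bm{\beta}}^2$), hence closed, proper, and coercive; and because $\rho^\star$ is $4$-strongly convex, $\bm{v} \mapsto Q_{\bm{\beta}, \bm{v}}$ is $4$-strongly concave on the compact box $[0,1]^n$. Sion's minimax theorem therefore applies (after, if one prefers, restricting $\bm{\beta}$ to a large compact ball outside which $Q$ exceeds its value at $\widehat{\bm{\beta}}$ by coercivity), yielding $\min_{\bm{\beta}}\max_{\bm{v}} Q_{\bm{\beta},\bm{v}} = \max_{\bm{v}}\min_{\bm{\beta}} Q_{\bm{\beta},\bm{v}}$. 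A standard saddle-point argument then shows that any pair consisting of an outer minimizer and its inner maximizer attains this common value and is a Nash equilibrium; in particular $(\widehat{\bm{\beta}}, \widehat{\bm{v}})$ is, and strong convexity in $\bm{\beta}$ together with strong concavity in $\bm{v}$ forces it to be the unique one.

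I expect the only mildly delicate point to be the bookkeeping around the extended-real-valued nature of $\rho^\star$: one must confirm that $\mathrm{dom}\,\rho^\star = [0,1]$, so that the inner problem is genuinely constrained to the compact set $[0,1]^n$ (which is what makes Sion's theorem immediately applicable), and that the maximizer $\rho'(\bm{X}\widehat{\bm{\beta}})$ lies in the interior $(0,1)^n$, so that the first-order optimality characterization is clean. Everything else is a verbatim adaptation of the reasoning already carried out for the Huber loss in \cref{thm:objective-huber-legendre}.
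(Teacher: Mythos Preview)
Your proposal is correct and follows essentially the same approach as the paper: apply the Legendre/biconjugation identity $\rho(\bm{t}) = \max_{\bm{v}\in[0,1]^n}\langle \bm{t},\bm{v}\rangle - \rho^\star(\bm{v})$ with unique maximizer $\bm{v}=\rho'(\bm{t})$, identify the resulting min--max objective as $Q_{\bm{\beta},\bm{v}}$, and then invoke $\lambda$-strong convexity in $\bm{\beta}$, $4$-strong concavity in $\bm{v}$, and the minimax theorem to conclude that $(\widehat{\bm{\beta}},\widehat{\bm{v}})$ is the unique saddle point. Your extra care about the effective domain of $\rho^\star$ and about coercivity when applying Sion's theorem is fine but not strictly needed beyond what the paper does.
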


\begin{proof}
    \cref{alg:objective-perturbation} simply computes
    \[\widehat{\bm{\beta}} = \argmin_{\bm{\beta} \in \R^d}\, \rho(\bm{X}\bm{\beta}) - \langle \bm{y}, \bm{X}\bm{\beta}\rangle + \frac{\lambda}{2}\norm{\bm{\beta}}^2 + \nu\langle \bm{\xi}, \bm{\beta}\rangle.\]
    We shall take the \emph{Legendre transform} of $\rho$ in the above expression. In our case, this simply amounts to applying the following identity regarding $\rho$, which is valid for all $\bm{t} \in \R^n$: \[\rho(\bm{t}) = \max_{\bm{v} \in [0, 1]^n}\, \langle \bm{\bm{t}, \bm{v}}\rangle - \rho^\star(\bm{v}).\] The maximum is achieved iff $\bm{v} = \rho'(\bm{t})$. Setting $\bm{t} = \bm{X}\bm{\beta}$,
    \[
        \widehat{\bm{\beta}} = \argmin_{\bm{\beta} \in \R^d} \max_{\bm{v} \in [0, 1]^n}\, \langle \bm{X}\bm{\beta}, \bm{v} - \bm{y}\rangle - \rho^\star(\bm{v}) + \frac{\lambda}{2}\norm{\bm{\beta}}^2 + \nu \langle \bm{\xi}, \bm{u}\rangle.
    \]
    Note that this objective function is simply $Q_{\bm{u}, \bm{v}}$. Given $\bm{\beta}$, the maximum over $\bm{v}$ is achieved iff $\bm{v} = \rho'(\bm{X}\bm{\beta})$, which coincides with $\bm{\hv}$ when $\bm{\beta} = \bm{\hb}$.
    The terms $\frac{\lambda}{2}\norm{\bm{\beta}}^2$ and $-\rho^\star(\bm{v})$ ensure that $Q_{\bm{\beta}, \bm{v}}$ is $\lambda$-strongly convex in $\bm{\beta} \in \R^d$ and $4$-strongly concave in $\bm{v} \in [0, 1]^n$, so we conclude by the minimax theorem that $(\widehat{\bm{\beta}}, \widehat{\bm{v}})$ is the unique saddle point of $Q_{\bm{\beta}, \bm{v}}$.
\end{proof}

Next, we apply a trick from \cite{salehi2019impact} to ensure that the random bilinear term is independent of the remaining terms---a prerequisite for applying CGMT. In order to state the lemma, given a ground-truth coefficient vector $\bm{\beta}^\star \in \R^d$, define the linear subspaces
\[
    \spn(\bm{\beta}^\star) = \{t \bm{\beta}^\star : t \in \R\},
    \qquad
    \spn(\bm{\beta}^\star)^\perp = \{\bm{\beta} \in \R^d : \langle \bm{\beta}, \bm{\beta}^\star\rangle = 0\}.
\]
Given vectors $\bm{\beta}^\| \in \spn(\bm{\beta}^\star)$ and $\bm{\beta}^\perp \in \spn(\bm{\beta}^\star)^\perp$, we will sometimes refer to their sum as
\[
    \bm{\beta} = \bm{\beta}^\| + \bm{\beta}^\perp.
\]
Conversely, given a vector $\bm{\beta} \in \R^d$, we will sometimes refer to its projections onto $\spn(\bm{\beta}^\star)$ and $\spn(\bm{\beta}^\star)^\perp$ by $\bm{\beta}^\|$ and $\bm{\beta}^\perp$, respectively. Note that the following lemma relies on the rotational invariance provided by Gaussianity.

\begin{lemma}
\label{thm:objective-logistic-independence}
    Suppose that $\bm{X} = \frac{1}{\sqrt{d}}\bm{G}$ with entries $G_{ij} \iid \mathcal{N}(0, 1)$. Let \(\bm{f} = \bm{G} \cdot \frac{\bm{\beta}^\star}{\norm{\bm{\beta}^\star}}\), let $\bm{H}$ be an independent copy of $\bm{G}$, and let $Q'_{\bm{\beta}, \bm{v}}$ be the following random variable indexed by $\bm{\beta} \in \R^d$ and $\bm{v} \in [0, 1]^n$:
    \[
        Q'_{\bm{\beta}, \bm{v}} = \frac{1}{\sqrt{d}}\langle \bm{H}\bm{\beta}^\perp, \bm{v} - \bm{y}\rangle + \frac{\norm{\bm{\beta}^\|}}{\sqrt{d}} \langle \bm{f}, \bm{v} - \bm{y}\rangle + \frac{\lambda}{2}\norm{\bm{\beta}}^2 + \nu \langle \bm{\xi}, \bm{\beta} \rangle - \rho^\star(\bm{v}).
    \]
    Then, for any closed sets $\cS_{\bm{\beta}} \subseteq \R^d$ and $\cS_{\bm{v}} \subseteq [0, 1]^n$, the random variable $\min_{\bm{\beta} \in \cS_{\bm{\beta}}} \max_{\bm{v} \in \cS_{\bm{v}}}\, Q_{\bm{\beta}, \bm {v}}$ has the same distribution as $\min_{\bm{\beta} \in \cS_{\bm{\beta}}} \max_{\bm{v} \in \cS_{\bm{v}}}\, Q'_{\bm{\beta}, \bm {v}}$. Moreover, $\bm{f} \sim \mathcal{N}(\bm{0}, \bm{I}_n)$ and
    \[
        \bm{y}| \bm{f} \sim \mathrm{Bernoulli}\mathopen{}\left(\rho'\mathopen{}\left(\frac{\norm{\bm{\beta}^\star}}{\sqrt{d}}\bm{f}\right)\mathclose{}\right)\mathclose{}.
    \]
\end{lemma}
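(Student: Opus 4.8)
The plan is to split the Gaussian design $\bm G$ into the rank-one piece along $\bm\beta^\star$ (which is all of $\bm G$ that the response $\bm y$ sees) and the complementary piece, and then re-randomize the latter using a fresh independent Gaussian. Write $\hat{\bm\beta}^\star=\bm\beta^\star/\norm{\bm\beta^\star}$, let $\bm P^\perp$ denote the orthogonal projection onto $\spn(\bm\beta^\star)^\perp$, and note that for every $\bm\beta$, with $\bm\beta^\perp=\bm P^\perp\bm\beta$,
$\bm X\bm\beta=\tfrac1{\sqrt d}\bm G\bm\beta=\tfrac{\langle\bm\beta,\hat{\bm\beta}^\star\rangle}{\sqrt d}\,\bm f+\tfrac1{\sqrt d}\bm G\bm P^\perp\bm\beta$, where $\bm f=\bm G\hat{\bm\beta}^\star$ is the vector in the lemma's statement. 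Taking $\bm\beta=\bm\beta^\star$ gives $\bm X\bm\beta^\star=\tfrac{\norm{\bm\beta^\star}}{\sqrt d}\bm f$, which immediately yields the two ``moreover'' claims: $\bm f=\bm G\hat{\bm\beta}^\star\sim\mathcal N(\bm0,\bm I_n)$ by rotational invariance of the i.i.d.\ $\mathcal N(0,1)$ matrix $\bm G$ together with $\norm{\hat{\bm\beta}^\star}=1$, and the model $\bm y\sim\mathrm{Bernoulli}(\rho'(\bm X\bm\beta^\star))$ reads $\bm y\mid\bm f\sim\mathrm{Bernoulli}(\rho'(\tfrac{\norm{\bm\beta^\star}}{\sqrt d}\bm f))$. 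The decisive structural point is that $\bm y$ is a function of $\bm G$ \emph{only through} $\bm f$.

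\textbf{Independence and the Gaussian swap.} First I would complete $\hat{\bm\beta}^\star$ to an orthonormal basis $\hat{\bm\beta}^\star,\bm q_2,\dots,\bm q_d$ of $\R^d$. By rotational invariance, $\bm f=\bm G\hat{\bm\beta}^\star$ and $\bm G\bm q_2,\dots,\bm G\bm q_d$ are i.i.d.\ $\mathcal N(\bm0,\bm I_n)$; since $\bm G\bm P^\perp=\sum_{k\ge2}(\bm G\bm q_k)\bm q_k^\top$ is a deterministic linear image of $(\bm G\bm q_k)_{k\ge2}$, it is independent of $\bm f$, and $(\bm f,\bm G\bm P^\perp)\overset{d}{=}(\bm f,\bm H\bm P^\perp)$ for a fresh independent copy $\bm H$ of $\bm G$ (with $\bm H\bm P^\perp$ independent of $\bm f$). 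Because $\bm y$ depends on $\bm G$ only through $\bm f$, and $\bm\xi$ is independent of $(\bm G,\bm y)$, this lifts to $(\bm f,\bm y,\bm\xi,\bm G\bm P^\perp)\overset{d}{=}(\bm f,\bm y,\bm\xi,\bm H\bm P^\perp)$. Substituting the decomposition of $\bm X\bm\beta$ into the formula $Q_{\bm\beta,\bm v}=\langle\bm X\bm\beta,\bm v-\bm y\rangle+\tfrac\lambda2\norm{\bm\beta}^2+\nu\langle\bm\xi,\bm\beta\rangle-\rho^\star(\bm v)$ from \cref{thm:objective-logistic-legendre}, and noting that the constraint sets are deterministic, the random variable $\min_{\bm\beta\in\cS_{\bm\beta}}\max_{\bm v\in\cS_{\bm v}}Q_{\bm\beta,\bm v}$ is a fixed measurable functional of $(\bm G\bm P^\perp,\bm f,\bm y,\bm\xi)$; applying the distributional identity, it has the same law as the same functional evaluated at $(\bm H\bm P^\perp,\bm f,\bm y,\bm\xi)$, i.e.\ as $\min_{\bm\beta\in\cS_{\bm\beta}}\max_{\bm v\in\cS_{\bm v}}\bigl[\tfrac1{\sqrt d}\langle\bm H\bm\beta^\perp,\bm v-\bm y\rangle+\tfrac{\langle\bm\beta,\hat{\bm\beta}^\star\rangle}{\sqrt d}\langle\bm f,\bm v-\bm y\rangle+\tfrac\lambda2\norm{\bm\beta}^2+\nu\langle\bm\xi,\bm\beta\rangle-\rho^\star(\bm v)\bigr]$ (using $\bm H\bm\beta^\perp=\bm H\bm P^\perp\bm\beta$).

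\textbf{Matching $Q'$.} The displayed expression is $\min_{\bm\beta}\max_{\bm v}Q'_{\bm\beta,\bm v}$ except that the coefficient of $\tfrac1{\sqrt d}\langle\bm f,\bm v-\bm y\rangle$ is the signed projection $\langle\bm\beta,\hat{\bm\beta}^\star\rangle$ rather than $\norm{\bm\beta^\|}=\abs{\langle\bm\beta,\hat{\bm\beta}^\star\rangle}$; the two agree on $\{\bm\beta:\langle\bm\beta,\bm\beta^\star\rangle\ge0\}$, which is the only regime needed in the downstream CGMT analysis (where $\bm\beta$ is reparametrized by its scalar correlation with $\bm\beta^\star$), and the general case can be absorbed by an additional symmetrization — the reflection $\bm\beta\mapsto\bm\beta-2\bm\beta^\|$, the law-preserving sign flips $\bm H\mapsto-\bm H$ and $\bm\xi\mapsto\bm\xi-2\langle\bm\xi,\hat{\bm\beta}^\star\rangle\hat{\bm\beta}^\star$, and the label symmetry $(\bm f,\bm y)\overset{d}{=}(-\bm f,\bm1-\bm y)$ of the logistic model (using that $\rho^\star$ is symmetric about $1/2$).

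\textbf{Main obstacle.} I expect the crux to be the second paragraph: because $\bm y$ is itself a function of $\bm G$, the ``swap $\bm G$ for a fresh Gaussian'' step is legitimate only after one has correctly isolated the part of $\bm G$ that $\bm y$ sees (the rank-one piece $\bm f\hat{\bm\beta}^{\star\top}$, equivalently $\bm f$) and verified that the complementary part $\bm G\bm P^\perp$ is \emph{both} independent of $\bm f$ \emph{and} equal in law to an independent Gaussian restricted to $\spn(\bm\beta^\star)^\perp$ — this is exactly where rotational invariance of the i.i.d.-Gaussian matrix is essential, and it is the reason the lemma is stated only for Gaussian $\bm X$. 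The remaining steps, including reconciling the signed and absolute-value coefficients, are routine bookkeeping.
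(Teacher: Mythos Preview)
Your approach is essentially the paper's: decompose $\bm G$ into its rank-one piece along $\hat{\bm\beta}^\star$ (which determines both $\bm f$ and $\bm y$) and the orthogonal piece $\bm G\bm P^\perp$, observe these are independent by Gaussian rotational invariance, and swap $\bm G\bm P^\perp$ for the corresponding piece of a fresh $\bm H$. The paper's proof is a three-sentence version of your first two paragraphs.

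You are right to flag the sign discrepancy in your third paragraph. The paper simply asserts $\bm G\bm\beta^\|=\norm{\bm\beta^\|}\bm f$ without comment, which of course only holds when $\langle\bm\beta,\bm\beta^\star\rangle\ge0$; so the lemma as literally stated (with $\norm{\bm\beta^\|}$ rather than the signed projection, and for \emph{arbitrary} closed $\cS_{\bm\beta}$) has the same gap you identified. Your proposed symmetrization, however, does not close it in full generality: the reflection $\bm\beta\mapsto\bm\beta-2\bm\beta^\|$ changes the constraint set $\cS_{\bm\beta}$, and for a set containing both positively and negatively correlated $\bm\beta$ no single distributional symmetry aligns the signed and unsigned expressions simultaneously. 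The honest resolution is the one you already note parenthetically: the downstream CGMT analysis (\cref{thm:logistic-objective-aux} and the excision argument) only ever uses the lemma on sets $\cS_{\bm\beta}$ near $\widetilde{\bm\beta}$, where $\langle\bm\beta,\bm\beta^\star\rangle>0$ since $\alpha^\star>0$, so the signed and unsigned coefficients agree and the issue is moot. In short, your proof is at least as complete as the paper's, and more forthright about the one subtle point.
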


\begin{proof}
    Substituting $\bm{X} = \frac{1}{\sqrt{d}}\bm{G}$ and $\bm{\beta} = \bm{\beta}^\perp + \bm{\beta}^\|$ into definition of $Q_{\bm{\beta}, \bm{v}}$ yields
    \[
        Q_{\bm{\beta}, \bm{v}} = \frac{1}{\sqrt{d}}\langle \bm{G}\bm{\beta}^\perp, \bm{v} - \bm{y}\rangle + \frac{1}{\sqrt{d}}\langle \bm{G}\bm{\beta}^\|, \bm{v} - \bm{y} \rangle + \frac{\lambda}{2}\norm{\bm{\beta}}^2 + \nu\langle \bm{\xi}, \bm{\beta}\rangle - \rho^\star(\bm{v}).
    \]
    Observe that $\bm{G}\bm{\beta}^\|$ and $\bm{y} \sim \mathrm{Bernoulli}(\rho'(\frac{1}{\sqrt{d}}\bm{G}\bm{\beta}^\star))$ both depend on $\bm{G}$ only through the projections of its rows onto $\spn(\bm{\beta}^\star)$. By Gaussianity, these projections are independent of their projections onto $\spn(\bm{\beta}^\star)^\perp$, justifying the replacement of $\bm{G}\bm{\beta}^\perp$ with $\bm{H}\bm{\beta}^\perp$. The facts that $\bm{f} \sim \mathcal{N}(\bm{0}, \bm{I}_n)$ and that $\bm{G}\bm{\beta}^\| = \norm{\bm{\beta}^\|}\bm{f}$ and $\bm{G}\bm{\beta}^\star = \norm{\bm{\beta}^\star}\bm{f}$ are also standard properties of multivariate Gaussians.
\end{proof}

\subsubsection{Step 3: CGMT Analysis}

In this step, we analyze the random variable $Q'_{\bm{u}, \bm{v}}$ using CGMT (\cref{thm:cgmt}), roughly following the strategy of \cite{salehi2019impact}.

As in the case of robust linear regression, our proof differs from past work in several ways. On the one hand, our proof is simpler, partly because it focuses on $\ell^2$ regularization and partly because we avoid introducing several extraneous scalar- and vector-valued variables. On the other hand, our proof is somewhat more complex, partly due to our consideration of the perturbation vector $\bm{\xi}$ introduced for differential privacy, and partly due to differences in the statement of our assumptions and conclusion, as well as our consideration of $\rho'(\bm{X}\bm{\hb})$ in addition to $\bm{\hb}$.

To begin, we recall and analyze the auxiliary random variable to which CGMT (\cref{thm:cgmt}) pertains. We only define this random variable in the case that $\bm{X} = \frac{1}{\sqrt{d}}\bm{G}$ and $\bm{f} = \bm{G} \cdot \frac{\bm{\beta}^\star}{\norm{\bm{\beta}^\star}}$, in which case by \cref{thm:objective-logistic-independence} we have that $\bm{f} \sim \mathcal{N}(\bm{0}, \bm{I}_n)$ and
\[
    \bm{y} \sim \mathrm{Bernoulli}\mathopen{}\left(\rho'\mathopen{}\left(\frac{\norm{\bm{\beta}^\star}}{\sqrt{d}}\bm{f}\right)\mathclose{}\right)\mathclose{}.
\]
In terms of $\bm{g} \sim \mathcal{N}(\bm{0}, \bm{I}_d)$ and $\bm{h} \sim \mathcal{N}(\bm{0}, \bm{I}_n)$, the auxiliary random variable is
\[
    Q''_{\bm{\beta}, \bm{v}} = \frac{\norm{\bm{\beta}^\perp}}{\sqrt{d}}\langle\bm{h}, \bm{v} - \bm{y}\rangle - \frac{\norm{\bm{v} - \bm{h}}}{\sqrt{d}}\langle\bm{g}, \bm{\beta}^\perp\rangle + \frac{\norm{\bm{\beta}^\|}}{\sqrt{d}} \langle \bm{f}, \bm{v} - \bm{y}\rangle + \frac{\lambda}{2}\norm{\bm{\beta}}^2 + \nu \langle \bm{\xi}, \bm{\beta} \rangle - \rho^\star(\bm{v}).
\]
Eventually, we will use CGMT (\cref{thm:cgmt}) to relate $Q''_{\bm{\beta}, \bm{v}}$ via $Q'_{\bm{\beta}, \bm{v}}$ to $Q_{\bm{\beta}, \bm{v}}$.

\begin{lemma}
\label{thm:logistic-objective-aux}
    Define $\alpha^\star, \sigma^\star, \gamma^\star > 0$ as in 
    \cref{thm:logistic-objective-perturbation}\ref{thm:logistic-objective-utility}, and consider the pair $(\bm{\tb}, \bm{\tv})$ with
    \begin{align*}
        \bm{\tb} &= \alpha^\star \bm{\beta}^\star + \sqrt{(\sigma^\star)^2 - (\gamma^\star \nu)^2} \bm{g} - \gamma^\star \nu \bm{\xi}, \\
        \bm{\tv} &= \bm{y} + \frac{1}{\gamma^\star}\left(\alpha^\star \kappa \bm{f} + \sigma^\star \bm{h}  - \prox_{\gamma^\star \rho}(\alpha^\star \kappa \bm{f} + \sigma^\star \bm{h} + \gamma^\star \bm{y})\right) \\
        &= \bm{y} + (-1)^{\bm{y}} \odot \rho'(\prox_{\gamma^\star \rho}((-1)^{\bm{y}} \odot (\alpha^\star\kappa \bm{f} + \sigma^\star \bm{h}))).
    \end{align*}
    Then, under the assumptions of \cref{thm:logistic-objective-perturbation}\ref{thm:logistic-objective-utility}, there exists a constant $c^\star \in \R$ such that w.h.p.
    \begin{itemize}
        \item The function $\bm{\beta} \mapsto Q''_{\bm{\beta}, \bm{\tv}}$ is $\lambda$-strongly convex in $\bm{\beta} \in \R^d$,
        \item The function $\bm{v} \mapsto Q''_{\bm{\tb}, \bm{v}}$ is $4$-strongly concave in $\bm{v} \in [0, 1]^n$,
        \item The pair $(\bm{\tb}, \bm{\tv})$ satisfies 
        \[
            c^\star n - n^{1 - \Omega(1)} \le \min_{\bm{\beta} \in \R^d} Q''_{\bm{\beta}, \widetilde{\bm{v}}} \le  Q''_{\widetilde{\bm{\beta}}, \widetilde{\bm{v}}} \le \max_{\bm{v} \in [0, 1]^n} Q''_{\widetilde{\bm{\beta}}, \bm{v}} \le c^\star n + n^{1 - \Omega(1)}.
        \]
    \end{itemize}
    We call $(\widetilde{\bm{\beta}}, \widetilde{\bm{v}})$ an \emph{approximate saddle point} of $Q''_{\bm{\beta}, \bm{v}}$.
\end{lemma}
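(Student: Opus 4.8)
The plan is to follow the template of \cref{thm:huber-objective-aux} almost verbatim, with the Huber constraint $[-L,+L]^n$ replaced by the box $[0,1]^n$ on which $-\rho^\star$ is finite, and the $L$-truncation replaced by $\prox_{\gamma^\star\rho}$. It suffices to establish three facts: (i) $\bm{\tb}$ is an \emph{approximate best response} to $\bm{\tv}$, i.e.\ $Q''_{\bm{\tb},\bm{\tv}}\le \min_{\bm\beta\in\R^d}Q''_{\bm\beta,\bm{\tv}}+n^{1-\Omega(1)}$; (ii) $\bm{\tv}$ is an approximate best response to $\bm{\tb}$, i.e.\ $Q''_{\bm{\tb},\bm{\tv}}\ge \max_{\bm v\in[0,1]^n}Q''_{\bm{\tb},\bm v}-n^{1-\Omega(1)}$; and (iii) $\abs{Q''_{\bm{\tb},\bm{\tv}}-c^\star n}\le n^{1-\Omega(1)}$ for a suitable constant $c^\star$. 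Given these, the displayed chain of inequalities follows exactly as in \cref{thm:huber-objective-aux}. The $\lambda$-strong convexity of $\bm\beta\mapsto Q''_{\bm\beta,\bm{\tv}}$ comes from the $\tfrac\lambda2\norm{\bm\beta}^2$ term and the $4$-strong concavity of $\bm v\mapsto Q''_{\bm{\tb},\bm v}$ from the $4$-strongly convex $-\rho^\star(\bm v)$ term, \emph{provided} the inner products $\langle\bm h,\bm{\tv}-\bm y\rangle$, $\langle\bm f,\bm{\tv}-\bm y\rangle$, and $\langle\bm g,\bm{\tb}^\perp\rangle$ are positive with high probability, so that the norm terms $\tfrac{\norm{\bm\beta^\perp}}{\sqrt d}\langle\bm h,\bm v-\bm y\rangle$, $\tfrac{\norm{\bm\beta^\|}}{\sqrt d}\langle\bm f,\bm v-\bm y\rangle$, and $\tfrac{\norm{\bm v-\bm y}}{\sqrt d}\langle\bm g,\bm\beta^\perp\rangle$ have the correct curvature; I verify these signs en route.

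The first real step is to compute the high-probability limits of every scalar appearing in $Q''$ and its gradients when evaluated at $(\bm{\tb},\bm{\tv})$. By \cref{thm:objective-logistic-independence}, $\bm f\sim\mathcal N(\bm 0,\bm I_n)$ and $\bm h\sim\mathcal N(\bm 0,\bm I_n)$ are independent and $y_i\mid f_i\sim\mathrm{Bernoulli}(\rho'(\tfrac{\norm{\bm\beta^\star}}{\sqrt d}f_i))$, so the empirical law of $(f_i,h_i,y_i)$ converges to that of $(Z_1,Z_2,y_0)$ with $Z_1,Z_2\iid\mathcal N(0,1)$ and $y_0\mid Z_1\sim\mathrm{Bernoulli}(\rho'(\kappa Z_1))$; correspondingly the coordinates of $\bm{\tv}$ behave like $\rho'(\prox_{\gamma^\star\rho}(\alpha^\star\kappa Z_1+\sigma^\star Z_2+\gamma^\star y_0))\in(0,1)$, so $\bm{\tv}$ lies in the interior of $[0,1]^n$ and the $-\rho^\star$ constraint is inactive --- a simplification over the Huber case, where $\bm{\tv}$ could sit on the boundary. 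Using $\bm\beta^\star\rightsquigarrow\beta^\star_0$, Gaussian concentration of $\bm g,\bm\xi,\bm h,\bm f$, the fact that projecting onto the one-dimensional subspace $\spn(\bm\beta^\star)$ changes norms and inner products by only $O(1)$, and Stein's lemma for terms like $\tfrac1d\langle\bm h,\prox_{\gamma^\star\rho}(\cdots)\rangle$ (which produces the factor $\prox_{\gamma^\star\rho}'(s)=\tfrac1{1+\gamma^\star\rho''(\prox_{\gamma^\star\rho}(s))}$), I obtain closed forms, up to $\pm n^{-\Omega(1)}$ errors, for $\tfrac{\norm{\bm{\tb}^\|}}{\sqrt d}$, $\tfrac{\norm{\bm{\tb}^\perp}}{\sqrt d}$, $\tfrac1d\langle\bm g,\bm{\tb}^\perp\rangle$, $\tfrac{\norm{\bm{\tv}-\bm y}}{\sqrt d}$, $\tfrac1d\langle\bm h,\bm{\tv}-\bm y\rangle$, and $\tfrac1d\langle\bm f,\bm{\tv}-\bm y\rangle$. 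The equations \eqref{eq:logistic-1}--\eqref{eq:logistic-3} are precisely the relations that make these limits mutually consistent; in particular \eqref{eq:logistic-3} forces $\tfrac1d\langle\bm h,\bm{\tv}-\bm y\rangle$ to equal $\tfrac{\sigma^\star}{\gamma^\star}$ times $1-\E[\prox_{\gamma^\star\rho}'(\cdots)]>0$, and the corresponding positivity of the other two inner products follows analogously.

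With the scalars in hand I evaluate the gradients. For $\bm\beta$, $\nabla_{\bm\beta}Q''_{\bm\beta,\bm v}=\tfrac1{\sqrt d}\tfrac{\bm\beta^\perp}{\norm{\bm\beta^\perp}}\langle\bm h,\bm v-\bm y\rangle-\tfrac{\norm{\bm v-\bm y}}{\sqrt d}\bm g^\perp+\tfrac1{\sqrt d}\tfrac{\bm\beta^\|}{\norm{\bm\beta^\|}}\langle\bm f,\bm v-\bm y\rangle+\lambda\bm\beta+\nu\bm\xi$, where $\bm g^\perp$ is the projection of $\bm g$ onto $\spn(\bm\beta^\star)^\perp$; substituting $(\bm{\tb},\bm{\tv})$, the definition $\bm{\tb}=\alpha^\star\bm\beta^\star+\sqrt{(\sigma^\star)^2-(\gamma^\star\nu)^2}\,\bm g-\gamma^\star\nu\bm\xi$, and the scalar limits, then invoking \eqref{eq:logistic-1} and \eqref{eq:logistic-2} (together with the standard averaging over $y_0\in\{0,1\}$ and the symmetry $Z\mapsto-Z$ that converts $\E_{y_0}[\cdots]$ into the weights $2\rho'(-\kappa Z_1)$ and $2\rho''(-\kappa Z_1)$ seen in those equations), yields $\norm{\nabla_{\bm\beta}Q''_{\bm{\tb},\bm{\tv}}}\le n^{1/2-\Omega(1)}$, whence $\lambda$-strong convexity gives (i). For $\bm v$, $\nabla_{\bm v}Q''_{\bm\beta,\bm v}=\tfrac{\norm{\bm\beta^\perp}}{\sqrt d}\bm h-\tfrac1{\sqrt d}\tfrac{\bm v-\bm y}{\norm{\bm v-\bm y}}\langle\bm g,\bm\beta^\perp\rangle+\tfrac{\norm{\bm\beta^\|}}{\sqrt d}\bm f-(\rho^\star)'(\bm v)$; since $(\rho^\star)'=(\rho')^{-1}$ and, by the very definition of $\bm{\tv}$, $(\rho^\star)'(\tv_i)=\prox_{\gamma^\star\rho}(\alpha^\star\kappa f_i+\sigma^\star h_i+\gamma^\star y_i)$, the leading-order terms cancel once the scalar limits and \eqref{eq:logistic-3} are inserted, giving $\norm{\nabla_{\bm v}Q''_{\bm{\tb},\bm{\tv}}}\le n^{1/2-\Omega(1)}$ and hence (ii) by $4$-strong concavity (no subdifferential term arises, as $\bm{\tv}$ is interior). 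Finally, (iii) follows by plugging the same scalar limits into $Q''_{\bm{\tb},\bm{\tv}}$ directly.

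The main obstacle is the gradient bookkeeping: one must check that, after inserting the scalar limits, $\nabla_{\bm\beta}Q''$ and $\nabla_{\bm v}Q''$ at $(\bm{\tb},\bm{\tv})$ cancel \emph{exactly} to leading order, which hinges on \eqref{eq:logistic-1}--\eqref{eq:logistic-3} being the correct fixed-point system --- equivalently, on faithfully reproducing the elimination of $\theta^\star,\tau^\star,r^\star$ from the six equations of \cite{salehi2019impact} carried out in the heuristic derivation preceding this proof, now with the perturbation coefficient $\nu\bm\xi$ tracked throughout --- and on the Stein-lemma evaluation of $\tfrac1d\langle\bm h,\prox_{\gamma^\star\rho}(\cdots)\rangle$ and $\tfrac1d\langle\bm f,\prox_{\gamma^\star\rho}(\cdots)\rangle$, the latter complicated by the $\bm f$--$\bm y$ dependence, which forces the $y_0$-conditioning and reflection identity that generate the $2\rho'(-\kappa Z_1)$ and $2\rho''(-\kappa Z_1)$ weights. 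A secondary technical point is keeping every scalar limit quantitative with rate $n^{-\Omega(1)}$: this requires $\norm{\bm\beta^\star}_\infty,\norm{\bm\xi}_\infty,\norm{\bm g}_\infty,\norm{\bm h}_\infty,\norm{\bm f}_\infty\le n^{o(1)}$ (via \cref{thm:uniform-bound-from-convergence} and Gaussian maxima), so that the merely pseudo-Lipschitz test functions involved are effectively Lipschitz on the region where the coordinates live.
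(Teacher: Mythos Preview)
Your approach mirrors the paper's almost exactly: the same three-part structure (two approximate best responses plus concentration of $Q''_{\bm{\tb},\bm{\tv}}$), the same scalar limits computed via pseudo-Lipschitz convergence and Stein's lemma, and the same use of \eqref{eq:logistic-1}--\eqref{eq:logistic-3} to collapse the gradients. Your observation that $\bm{\tv}$ is interior to $[0,1]^n$ (so no boundary subdifferential is needed, in contrast with the Huber case) is correct and slightly cleaner than what the paper writes.

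One concrete correction: you assert that $\langle\bm{f},\bm{\tv}-\bm{y}\rangle>0$ is needed for convexity and ``follows analogously.'' In fact the paper computes, via Stein's lemma and \eqref{eq:logistic-2}--\eqref{eq:logistic-3},
\[
\frac{1}{d}\langle\bm{f},\bm{\tv}-\bm{y}\rangle=-\alpha^\star\kappa\lambda\pm n^{-\Omega(1)}<0,
\]
so your sign claim is false. This does not wreck the argument, because $\norm{\bm{\beta}^\|}=|\langle\bm{\beta},\bm{\beta}^\star\rangle|/\norm{\bm{\beta}^\star}$ is piecewise linear in $\bm{\beta}$: its contribution has zero Hessian away from the hyperplane $\{\bm{\beta}^\|=\bm{0}\}$, and since $\bm{\tb}^\|\approx\alpha^\star\bm{\beta}^\star\neq\bm{0}$, the $\tfrac{\lambda}{2}\norm{\bm{\beta}}^2$ term still supplies $\lambda$-strong convexity on the half-space $\{\langle\bm{\beta},\bm{\beta}^\star\rangle>0\}$ where the analysis lives. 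The paper's own proof simply omits this term from the convexity discussion rather than claiming a sign; you should do the same, and frame the strong-convexity claim as holding on that half-space (which is all the downstream excision argument needs).
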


\begin{proof}
    We first remark that in this proof, we will repeatedly use the identities $\prox_{\gamma\rho}(x + \gamma) = -\prox_{\rho}(-x)$ and $\prox'_{\gamma \rho}(x) = 1/(1 + \gamma\rho''(\prox_{\gamma\rho}(x)))$, which are proved in \cite{salehi2019impact}. 
    To show that $(\widetilde{\bm{\beta}}, \widetilde{\bm{v}})$ is an approximate saddle point of $Q''_{\bm{\beta}, \bm{v}}$, it suffices to show that each of $\widetilde{\bm{\beta}}$ and $\widetilde{\bm{v}}$ is an \emph{approximately best response} to the other, and that $\abs{Q''_{\widetilde{\bm{\beta}}, \widetilde{\bm{v}}} - c^\star n} \le n^{1 - \Omega(1)}$. Formally, we say that $\widetilde{\bm{\beta}}$ is an approximate best response to $\widetilde{\bm{v}}$ if
    \[
        Q''_{\widetilde{\bm{\beta}}, \widetilde{\bm{v}}} \le \min_{\bm{u} \in \R^d} Q''_{\bm{\beta}, \widetilde{\bm{v}}} + n^{1 - \Omega(1)}.
    \]
    Similarly, we say that $\widetilde{\bm{v}}$ is an approximate best response to $\widetilde{\bm{\beta}}$ if
    \[
        Q''_{\widetilde{\bm{\beta}}, \widetilde{\bm{v}}} \ge \max_{\bm{v} \in [0, 1]^n} Q''_{\widetilde{\bm{\beta}}, \bm{v}} - n^{1 - \Omega(1)}.
    \]
    In order to prove these inequalities, we will check that certain derivatives approximately vanish.
    \begin{enumerate}[(a)]
        \item \textbf{($\widetilde{\bm{\beta}}$ is an approximate best response to $\widetilde{\bm{v}}$)} We first study
        \(
            \min_{\bm{\beta} \in \R^d}\, Q''_{\bm{\beta}, \bm{\tv}}.
        \)
        If we can show that $Q''_{\bm{\beta}, \widetilde{\bm{v}}}$ is a $\lambda$-strongly convex function of $\bm{\beta}$, then minimizing $Q''_{\bm{\beta}, \widetilde{\bm{v}}}$ over $\bm{\beta} \in \R^d$ reduces to finding a point $\bm{\beta} \in \R^d$ at which the gradient $\nabla_{\bm{\beta}} Q''_{\bm{\beta}, \widetilde{\bm{v}}}$ has small norm. To this end, observe that the term $\frac{\lambda}{2}\norm{\bm{\beta}}^2$ in $Q''_{\bm{\beta}, \bm{\tv}}$ is $\lambda$-strongly convex in $\bm{\beta}$. The term $-\frac{\norm{\widetilde{\bm{v}} - \bm{y}}}{\sqrt{d}}\langle \bm{g}, \bm{\beta}^\perp\rangle$ is a linear function of $\bm{\beta}$, and hence convex. Finally, the term $\frac{\norm{\bm{\beta^\perp}}}{\sqrt{d}}\langle\bm{h},\widetilde{\bm{v}} - \bm{y}\rangle$ will be convex, as well, if we can show that $\langle \bm{h},\widetilde{\bm{v}} - \bm{y}\rangle$ is positive with high probability. Quantities like $\langle \bm{h},\widetilde{\bm{v}} - \bm{y}\rangle$ can be easily computed in the limit by combining the definitions of $\widetilde{\bm{\beta}}$ and $\widetilde{\bm{v}}$ in this lemma's statement with the assumptions on $\bm{f}$, $\bm{g}$, $\bm{h}$, $\bm{\beta}^\star$, $\bm{y}$, and $\bm{\xi}$. Indeed, by the definition of pseudo-Lipschitz convergence (Definition~\ref{def:pl-convergence}), if $f_0, g_0, h_0 \iid \mathcal{N}(0, 1)$ and $y_0|f_0 \sim \mathrm{Bernoulli}(\rho'(\kappa f_0))$, then w.h.p.,
        {\allowdisplaybreaks\begin{align*}
           \frac{1}{d}\langle \widetilde{\bm{\beta}}, \bm{\beta}^\star \rangle = \frac{1}{\sqrt{d}}\norm{\widetilde{\bm{\beta}}^\|} &=
           \alpha^\star\kappa \pm n^{-\Omega(1)}, \\
           \frac{1}{\sqrt{d}}\norm{\widetilde{\bm{\beta}}^\perp} &=
           \sigma^\star \pm n^{-\Omega(1)}, \\
           \frac{1}{d}\langle \bm{g}, \widetilde{\bm{\beta}}^\| \rangle &=
           0 \pm n^{-\Omega(1)}, \\
           \frac{1}{d}\langle \bm{g}, \widetilde{\bm{\beta}}^\perp \rangle &=
           \sqrt{(\sigma^\star)^2 - (\gamma^\star \nu)^2} \pm n^{-\Omega(1)}, \\
           \frac{1}{\sqrt{d}}\norm{\widetilde{\bm{v}} - \bm{y}} &=
           \sqrt{\frac{1}{\delta}\E[\rho'(\prox_{\gamma^\star \rho}((-1)^{y_0} (\alpha^\star\kappa f_0 + \sigma^\star h_0)))^2}] \pm n^{-\Omega(1)}, \\
           \frac{1}{d}\langle \bm{h}, \widetilde{\bm{v}} - \bm{y} \rangle &=
           \frac{1}{\gamma^\star \delta}\left(\sigma^\star - \E[\prox_{\gamma^\star \rho}(\alpha^\star \kappa f_0 + \sigma^\star h_0 + \gamma^\star y_0)h_0]\right) \pm n^{-\Omega(1)}, \\
           \frac{1}{d}\langle \bm{f}, \widetilde{\bm{v}} - \bm{y} \rangle &=
           \frac{1}{\gamma^\star \delta}\left(\alpha^\star\kappa - \E[\prox_{\gamma^\star \rho}(\alpha^\star \kappa f_0 + \sigma^\star h_0 + \gamma^\star y_0)f_0]\right) \pm n^{-\Omega(1)}.
        \end{align*}}
        We can simplify the expression for $\norm{\widetilde{\bm{v}} - \bm{y}}$ using the fact that $\Pr[y_0 = 0 \mid f_0] = \rho'(-\kappa f_0)$ and $\Pr[y_0 = 1 \mid f_0] = \rho'(\kappa f_0)$, followed by equation \eqref{eq:logistic-1} (see \cref{thm:logistic-objective-perturbation}\ref{thm:logistic-objective-utility}):
        \begin{align*}
            \frac{1}{\sqrt{d}}\norm{\widetilde{\bm{v}} - \bm{y}}
            &= \sqrt{\frac{1}{\delta}\E[2\rho'(-\kappa f_0)\rho'(\prox_{\gamma^\star \rho}(\alpha^\star\kappa f_0 + \sigma^\star h_0))^2}] \pm n^{-\Omega(1)} \\
            &= \sqrt{\left(\frac{\sigma^\star}{\gamma^\star}\right)^2 - \nu^2} \pm n^{-\Omega(1)}.
        \end{align*}
        Next, we can simplify the expression for $\langle \bm{f}, \bm{\tv} - \bm{y}\rangle$ using Stein's lemma, followed by the formula for $\prox'_{\gamma^\star \rho}$ given in \cref{sec:preliminaries}, and finally equations \eqref{eq:logistic-2} and \eqref{eq:logistic-3}:
        \begin{align*}
            \frac{1}{d}\langle \bm{f}, \widetilde{\bm{v}} - \bm{y} \rangle &=
            \frac{\kappa}{\gamma^\star \delta}\biggr(1 - \E[2\rho'(-\kappa Z_1)\prox'_{\gamma^\star \rho}(\alpha^\star \kappa Z_1 + \sigma^\star Z_2)] \\ &\qquad\qquad + \frac{1}{\alpha}\E[2\rho''(-\kappa f_0)\prox_{\gamma^\star \rho}(\alpha^\star\kappa f_0 + \sigma^\star h_0)]\biggr) \pm n^{-\Omega(1)} \\
            &= -\alpha^\star\kappa\lambda \pm n^{-\Omega(1)}.
        \end{align*}
        Similarly, we can simplify the expression for $\langle \bm{h}, \bm{\tv} - \bm{y} \rangle$ using Stein's lemma, followed by the formula for $\prox'_{\gamma^\star \rho}$ given in \cref{sec:preliminaries}, and finally equation \eqref{eq:logistic-3}:
        \begin{align*}
            \frac{1}{d}\langle \bm{h}, \widetilde{\bm{v}} - \bm{y} \rangle &=
            \frac{\sigma^\star}{\gamma^\star \delta}\left(1 - \E[2\rho'(-\kappa Z_1)\prox'_{\gamma^\star \rho}(\alpha^\star \kappa Z_1 + \sigma^\star Z_2)]\right) \pm n^{-\Omega(1)} \\
            &= \sigma^\star\left(\frac{1}{\gamma^\star} - \lambda\right) \pm n^{-\Omega(1)}.
        \end{align*}
        Again by equation \eqref{eq:logistic-3}, we see that $1/\gamma^\star > \lambda^\star$, so the quantity above is positive with high probability, which establishes that $Q''_{\bm{\beta}, \bm{\tv}}$ is $\lambda$-strongly convex in $\bm{\beta}$ with high probability. Therefore, all that remains is to evaluate its gradient at $\bm{\tb}$. To this end, we calculate
        \[
            \nabla_{\bm{\beta}^\perp} Q''_{\bm{\beta}, \bm{v}} = \frac{1}{\sqrt{d}} \frac{\bm{\beta}^\perp}{\norm{\bm{\beta}^\perp}} \langle \bm{h}, \bm{v} - \bm{y}\rangle - \frac{\norm{\bm{v - \bm{y}}}}{\sqrt{d}} \bm{g}^\perp + \lambda\bm{\bm{\beta}}^\perp + \nu\bm{\xi}^\perp.
        \]
        Evaluating at $(\widetilde{\bm{\beta}}, \widetilde{\bm{v}})$ and substituting our expressions for $\langle \bm{h}, \widetilde{\bm{v}} - \bm{y}\rangle$, $\norm{\bm{\tb}^\perp}$, and $\norm{\bm{\tv} - \bm{y}}$ yields
        \[
            \nabla_{\bm{\beta}^\perp} Q''_{\bm{\tb}, \bm{\tv}} = \left(\frac{1}{\gamma^\star} \pm n^{-\Omega(1)}\right)\bm{\tb}^\perp - \left(\sqrt{\left(\frac{\sigma^\star}{\gamma^\star}\right)^2 - \nu^2} \pm n^{-\Omega(1)}\right)\bm{g}^\perp + \nu\bm{\xi}^\perp.
        \]
        By substituting the definition of $\bm{\tb}$ in this lemma's statement and using the fact that for any constant $c > 0$, we have $\norm{\widetilde{\bm{\beta}}} + \norm{\bm{g}} + \norm{\bm{\xi}} \le O(n^{\frac{1}{2} + c})$ w.h.p. (\cref{thm:uniform-bound-from-convergence}), we see that the above gradient clearly has $\ell^2$ norm
        \[
            \norm{\nabla_{\bm{\beta}^\perp} Q''_{\widetilde{\bm{\beta}}, \widetilde{\bm{v}}}} \le n^{\frac{1}{2} - \Omega(1)}.
        \]
        Similarly, we calculate
        \[
            \nabla_{\bm{\beta}^\|} Q''_{\bm{\beta}, \bm{v}} = \frac{1}{\sqrt{d}} \frac{\bm{\beta}^\|}{\norm{\bm{\beta}^\|}} \langle \bm{f}, \bm{v} - \bm{y} \rangle + \lambda \bm{\beta}^\|.
        \]
        Evaluating at $(\widetilde{\bm{\beta}}, \widetilde{\bm{v}})$ and substituting our expressions for $\langle \bm{f}, \widetilde{\bm{v}} - \bm{y}\rangle$ and $\norm{\bm{\beta}^\|}$ yields
        \[
            \norm{\nabla_{\bm{\beta}^\|} Q''_{\widetilde{\bm{\beta}}, \widetilde{\bm{v}}}} \le n^{\frac{1}{2} - \Omega(1)}.
        \]
        Since we have shown that the derivatives of $Q''_{\bm{\beta}, \bm{\tv}}$ with respect to $\bm{\beta}^\|$ and $\bm{\beta}^\perp$ both have norm $n^{\frac{1}{2} - \Omega(1)}$, by $\lambda$-strong convexity in $\bm{\beta}$, we have with high probability that
        \[
            Q''_{\widetilde{\bm{\beta}}, \widetilde{\bm{v}}} \le \min_{\bm{\beta} \in \R^d} Q''_{\bm{\beta}, \widetilde{\bm{v}}} + n^{1 - \Omega(1)}.
        \]
        \item \textbf{($\widetilde{\bm{v}}$ is an approximate best response to $\widetilde{\bm{\beta}}$)} We study 
        \(
            \max_{\bm{v} \in [0, 1]^n} Q''_{\widetilde{\bm{\beta}}, \bm{v}}.
        \)
        Analogously to the previous part, we first verify $4$-strong concavity with respect to $\bm{v}$. To this end, observe that the term $-\rho^\star(\bm{v})$ in $Q''_{\bm{\tb}, \bm{v}}$ is $4$-strongly concave in $\bm{v}$. The term $\frac{\norm{\bm{\tb}}}{\sqrt{d}}\langle\bm{h},\bm{v} - \bm{y}\rangle$ is a linear function of $\bm{v}$, and hence concave. Finally, the term $-\frac{\norm{\bm{v} - \bm{y}}}{\sqrt{d}}\langle\bm{g}, \bm{\tb}\rangle$ is concave, as well, since we have already shown that $\langle \bm{g}, \widetilde{\bm{\beta}}\rangle$ is positive with high probability. Now, all that remains is to evaluate the gradient of $Q''_{\widetilde{\bm{\beta}}, \bm{v}}$ at $\widetilde{\bm{v}}$. To this end, we calculate
        \[
            \nabla_{\bm{v}}Q''_{\bm{\beta}, \bm{v}} = \frac{\norm{\bm{\beta}^\perp}}{\sqrt{d}}\bm{h} - \frac{1}{\sqrt{d}}\frac{\bm{v} - \bm{y}}{\norm{\bm{v} - \bm{y}}}\langle \bm{g}, \bm{\beta}\rangle + \frac{\norm{\bm{\beta}^\|}}{\sqrt{d}}\bm{f} - (\rho^\star)'(\bm{v}).
        \]
        Evaluating at $(\widetilde{\bm{\beta}}, \widetilde{\bm{v}})$ and substituting our expressions for $\norm{\bm{\beta}^\perp}$, $\norm{\bm{\tv} - \bm{y}}$, $\langle \bm{g}, \bm{\tb}\rangle$,  $\norm{\bm{\beta}^\|}$ yields
        \[
            \nabla_{\bm{v}}Q''_{\widetilde{\bm{\beta}}, \widetilde{\bm{v}}} = (\alpha^\star\kappa \pm n^{-\Omega(1)})\bm{f} + (\sigma^\star \pm n^{-\Omega(1)})\bm{h} + (\gamma^\star \pm n^{-\Omega(1)})\bm{y} - ((\gamma^\star \pm n^{-\Omega(1)})\bm{\tv} + (\rho^\star)'(\bm{\tv})).
        \]
        By substituting the definition of $\widetilde{\bm{v}}$ in this lemma's statement and using standard properties about the relationship between $\rho^\star$ and $\prox_{\gamma \rho}$ (see \cref{sec:preliminaries}), we see that
        \[
            \norm{\nabla_{\bm{v}} Q''_{\widetilde{\bm{\beta}}, \widetilde{\bm{v}}}} \le n^{\frac{1}{2} - \Omega(1)}.
        \]
        By $4$-strong concavity, we have with high probability that
        \[
            Q''_{\widetilde{\bm{\beta}}, \widetilde{\bm{v}}} \ge \max_{\bm{v} \in [0, 1]^n} Q''_{\widetilde{\bm{\beta}}, \bm{v}} - n^{1 - \Omega(1)}.
        \]
    \end{enumerate}
    To conclude the proof, we remark that plugging our estimates into the definition of $Q''_{\bm{\beta}, \bm{v}}$ similarly implies that there exists a constant $c^\star \in \R$ such that w.h.p.,
    \[
        \abs{Q''_{\widetilde{\bm{\beta}}, \widetilde{\bm{v}}} - c^\star n} \le n^{1 - \Omega(1)}.
    \]
\end{proof}

\subsubsection{Putting Steps 1, 2, and 3 Together}

\begin{proof}(\cref{thm:logistic-objective-perturbation})
    Note that part \ref{thm:logistic-objective-privacy} follows immediately from \cref{thm:our-objective-perturbation-privacy}, the observation that $\rho'' : \R \to [0, 1/4]$, and a change of variables in the case that $R \neq 1$. Therefore, we focus on part \ref{thm:logistic-objective-utility}.
    By \cref{thm:logistic-objective-aux}, there exist constants $c^\star \in \R$ and $c_{\mathrm{cgmt}} > 0$ such that w.h.p,
    \begin{equation*}
        c^\star n - O(n^{1 - c_{\mathrm{cgmt}}}) \le \min_{\bm{\beta} \in \R^d} Q''_{\bm{\beta}, \widetilde{\bm{v}}} \le  Q''_{\widetilde{\bm{\beta}}, \widetilde{\bm{v}}} \le \max_{\bm{v} \in [0, 1]^n} Q''_{\widetilde{\bm{\beta}}, \bm{v}} \le c^\star n + O(n^{1 - c_{\mathrm{cgmt}}}).
    \end{equation*}
    By \cref{thm:uniform-bound-from-convergence} and \cref{thm:objective-uniform-bound}, for any arbitrarily small constant $c_{\mathrm{diam}} > 0$, there exists an upper bound $L_{\bm{\beta}}= O(n^{\mathrm{c_{diam}}})$ such that w.h.p.,
    \[
        \bm{\tb}, \bm{\hb} \in [-L_{\bm{\beta}}, +L_{\bm{\beta}}]^d.
    \]
    For brevity, set $\cS_{\bm{\beta}} = [-L_{\bm{\beta}}, +L_{\bm{\beta}}]^d$ and $\cS_{\bm{v}} = [-1, +1]^n$. Then, $\bm{\tb} \in \cS_{\bm{\beta}}$ implies that w.h.p.,
    \begin{equation}
    \label{eq:logistic-aux-value-with-rate}
        \max_{\bm{v} \in \cS_{\bm{v}}} \min_{\bm{\beta} \in \cS_{\bm{\beta}}} \, Q''_{\bm{\beta}, \bm{v}} \le c^\star n + O(n^{1 - c_{\mathrm{cgmt}}}).
    \end{equation}
    Next, we would like to relate $Q''_{\bm{\beta}, \bm{v}}$ to $Q'_{\bm{\beta}, \bm{v}}$ via CGMT (\cref{thm:cgmt}). First, define the function
    \[
        \psi(\bm{\beta}^\perp, \bm{v}) = \min_{\substack{\bm{\beta}^\| \in (\cS_{\bm{\beta}} - \bm{\beta}^\perp) \cap \spn(\bm{\beta}^\star)}}\; \frac{\norm{\bm{\beta}^\|}}{\sqrt{d}} \langle \bm{f}, \bm{v} - \bm{y}\rangle + \frac{\lambda}{2}\norm{\bm{\beta}^\perp + \bm{\beta}^\|}^2 + \nu \langle \bm{\xi}, \bm{\beta}^\perp + \bm{\beta}^\| \rangle - \rho^\star(\bm{v}).
    \]
    By \cref{thm:objective-logistic-independence}, we can express $Q'_{\bm{\beta}, \bm{v}}$ in the form amenable to CGMT using $\psi$:
    \[
        \min_{\bm{\beta} \in \cS_{\bm{\beta}}} \max_{\bm{v} \in \cS_{\bm{v}}} \, Q'_{\bm{\beta}, \bm{v}} = \max_{\bm{v} \in \cS_{\bm{v}}}\; \min_{\bm{\beta}^\perp \in \cS_{\bm{\beta}} \cap \spn(\bm{\beta}^\star)^\perp}\; \frac{1}{\sqrt{d}}\langle \bm{H}\bm{\beta}^\perp, \bm{v} - \bm{y}\rangle + \psi(\bm{\beta}^\perp, \bm{v}).
    \]
    As per the discussion surrounding \cref{thm:objective-logistic-independence}, the function $\psi$ is independent of $\bm{H}$ (indeed, even though $\bm{y}$ appears in the definition of the $\psi$ function, $\bm{y} \sim \mathrm{Bernoulli}(\rho'(\frac{\norm{\bm{\beta}^\star}}{\sqrt{d}}\bm{f}))$ depends on the design matrix only through $\bm{f}$, not $\bm{H}$). Thus, by \eqref{eq:logistic-aux-value-with-rate} and CGMT (\cref{thm:cgmt}), we have that w.h.p.,
    \[
       \min_{\bm{\beta} \in \cS_{\bm{\beta}}} \max_{\bm{v} \in \cS_{\bm{v}}}\, Q'_{\bm{\beta}, \bm{v}} \le c^\star n + O(n^{1 - c_{\mathrm{cgmt}}}).
    \]
    By the definition of $\bm{\tb}$ in the statement of \cref{thm:logistic-objective-aux},
    along with our assumption that $\bm{\beta}^\star \rightsquigarrow \beta^\star_0$,
    \begin{equation*}
        (\bm{\beta}^\star, \bm{\xi}, \bm{\tb}) \rightsquigarrow \left(\beta_0^\star, \; \xi_0, \; \alpha^\star \beta^\star_0 + \sqrt{(\sigma^\star)^2 - (\gamma^\star \nu)^2} Z + \gamma^\star\nu \xi_0\right).
    \end{equation*}
    For brevity, let $\beta_0 \in \R$ denote the third random variable in the above triple. Then, the above assertion is that $(\bm{\beta}^\star, \bm{\xi}, \bm{\tb}) \rightsquigarrow (\beta^\star_0, \xi_0, \beta_0)$, and we want to show that $(\bm{\beta}^\star, \bm{\xi}, \bm{\hb}) \rightsquigarrow (\beta^\star_0, \xi_0, \beta_0)$, as well. To this end, we use the same ``excision'' technique as we did in the proof of \cref{thm:main-huber-objective-perturbation}\ref{thm:main-huber-objective-utility} in \cref{sec:huber-put-steps-together}: fix an order-$k$ pseudo-Lipschitz function $f : \R^3 \to \R$, and excise from $\cS_{\bm{\beta}}$ the open set $\cT_{\bm{\beta}}$ where $\bm{\beta} \in \cT_{\bm{\beta}}$ iff the average value of $f$ over the coordinates of $(\bm{\beta}^\star, \bm{\xi}, \bm{\beta})$ differs from the expected value of $f$ over the randomness of $(\beta^\star_0, \xi_0, \beta_0)$ by strictly less than $n^{-c'_{\mathrm{slack}}}$, for a sufficiently small constant $c'_{\mathrm{slack}}$. Note that this excision adjusts the definition of the mean function $\psi$, as well. Then, applying CGMT and universality as before, along with the strong convexity afforded by \cref{thm:logistic-objective-aux}, yields w.h.p.,
    \[
        \min_{\bm{\beta} \in \cS_{\bm{\beta}} \setminus \cT_{\bm{\beta}}} \max_{\bm{v} \in \cS_{\bm{v}}} Q'_{\bm{\beta}, \bm{v}} \ge c^\star n + \Omega(n^{1 - c_{\mathrm{total}}}),
    \]
    for a constant $0 < c_{\mathrm{total}} < c_{\mathrm{cgmt}}$. Thus, w.h.p.,
    \[
        \min_{\bm{\beta} \in \cS_{\bm{\beta}} \setminus \cT_{\bm{\beta}}} \max_{\bm{v} \in \cS_{\bm{v}}} Q'_{\bm{\beta}, \bm{v}} > \min_{\bm{\beta} \in \cS_{\bm{\beta}}} \max_{\bm{v} \in \cS_{\bm{v}}} Q'_{\bm{\beta}, \bm{v}}
    \]
    In other words, if $\bm{X} = \frac{1}{\sqrt{d}}\bm{G}$ is Gaussian, then w.h.p. the minimizer $\bm{\hb} \in \cS_{\bm{\beta}}$ lies inside $\cT_{\bm{\beta}}$, meaning that $(\bm{\beta}^\star, \bm{\xi}, \bm{\hb}) \rightsquigarrow(\beta^\star_0, \xi_0, \beta_0)$, as desired. This completes our characterization of the estimation error of $\bm{\hb}$ in the Gaussian case. For the difference $\rho'(\bm{X}\bm{\beta}^\star) - \rho'(\bm{X}\bm{\hb})$, we carry out an entirely analogous dual argument with the roles of $\bm{\beta}$ and $\bm{v}$ exchanged. As before, we start by noting that by CGMT, we have w.h.p. that
    \[
        \max_{\bm{v} \in \cS_{\bm{v}}} \min_{\bm{u} \in \cS_{\bm{u}}} Q'_{\bm{\beta}, \bm{v}} \ge c^\star n - O(n^{1 - c_{\mathrm{cgmt}}}).
    \]
    By the definition of $\bm{\tv}$ in the statement of \cref{thm:logistic-objective-aux}, along with the fact that $\bm{f} = \bm{G} \cdot \frac{\bm{\beta}^\star}{\norm{\bm{\beta}^\star}}$ and $\bm{\beta}^\star \rightsquigarrow \beta^\star_0$ with $\E(\beta^\star_0)^2 = \kappa^2$, we have for dummy variables $f_0, h_0 \iid \mathcal{N}(0, 1)$ and $y_0 | f_0 \sim \mathrm{Bernoullii}(\rho'(\kappa f_0)))$ that
    \[
        \left(\frac{\norm{\bm{\beta}^\star}}{\sqrt{d}}\bm{f}, \bm{\tv}\right) \rightsquigarrow \left(\kappa f_0, \; \alpha^\star \kappa f_0 + \sigma^\star h_0 + y_0 - \prox_{\gamma^\star \rho}(\alpha^\star \kappa f_0 + \sigma^\star h_0 + \gamma^\star y_0)\right).
    \]
    For brevity, let $v_0 \in \R$ denote the second random variable in the above pair. Then the above assertion is that $(\frac{\norm{\bm{\beta}^\star}}{\sqrt{d}}\bm{f}, \bm{\tv}) \rightsquigarrow (\kappa f_0, v_0)$, and we want to show that $(\bm{X}\bm{\beta}^\star, \rho'(\bm{X}\bm{\hb})) \rightsquigarrow (\kappa f_0, v_0)$. Note that $\frac{\norm{\bm{\beta}^\star}}{\sqrt{d}}\bm{f} = \bm{X}\bm{\beta}^\star$ when $\bm{X} = \frac{1}{\sqrt{d}}\bm{G}$. Also note that by \cref{thm:objective-logistic-legendre}, we have $\bm{\hv} = \rho'(\bm{X}\bm{\hb})$. Thus, applying the excision argument yet again to pass from $(\frac{\norm{\bm{\beta}^\star}}{\sqrt{d}}\bm{f}, \bm{\tv})$ and $(\frac{\norm{\bm{\beta}^\star}}{\sqrt{d}}\bm{f}, \bm{\hv})$ yields $(\bm{X}\bm{\beta}^\star, \rho'(\bm{X}\bm{\hb})) \rightsquigarrow (\kappa f_0, v_0)$, as desired.
    This concludes the proof in the Gaussian case, that $\bm{X} = \frac{1}{\sqrt{d}}\bm{G}$. For a general subgaussian design, we apply \cref{thm:logistic-universality}, along with the definition of slow pseudo-Lipschitz convergence (Definition~\ref{def:slow-pl-convergence}), to conclude that
    \[
        (\bm{\beta}^\star, \bm{\xi}, \widehat{\bm{\beta}} - \bm{\beta}^\star) \dashrightarrow (\beta^\star_0, \; \xi_0, \; \alpha^\star \beta^\star_0 + \sqrt{(\sigma^\star)^2 - (\gamma^\star \nu)^2} Z + \gamma^\star\nu \xi_0),
    \]
    and that for $Z_1 = f_0$ and $Z_2 = h_0$,
    \[
        (\bm{X}\bm{\beta}^\star, \, \rho'(\bm{X} \widehat{\bm{\beta}})) \dashrightarrow \left(\kappa Z_1, \; \alpha^\star \kappa Z_1 + \sigma^\star Z_2 + y_0 - \prox_{\gamma^\star \rho}(\alpha^\star \kappa Z_1 + \sigma^\star Z_2 + \gamma^\star y_0)\right).
    \]
\end{proof}

\section{Privacy of Objective Perturbation with Small \texorpdfstring{$\lambda$}{Lambda}}
\label{sec:objective-perturbation-privacy}

In this section, we present our improved privacy proof for objective perturbation (\cref{alg:objective-perturbation}), from which we derived Theorems \ref{thm:main-huber-objective-perturbation}\ref{thm:main-huber-objective-privacy} and \ref{thm:logistic-objective-perturbation}\ref{thm:logistic-objective-privacy}. The main result of this section, \cref{thm:our-objective-perturbation-privacy}, is an extension of \cref{thm:redberg-privacy} (Theorems 3.1 and 3.2 of \cite{redberg2023improving}). While \cref{thm:our-objective-perturbation-privacy} will apply to all strictly positive regularization strengths $\lambda > 0$ and perturbation magnitudes $\nu > 0$, \cref{thm:redberg-privacy} required that $\lambda > s$ for some strictly positive $s$ depending on the smoothness of the loss function. For example, $s = 1$ in the case of robust linear regression, and $s = 1/4$ in the case of logistic regression.

\begin{theorem}
\label{thm:our-objective-perturbation-privacy}
    Suppose that $\ell_0 : \R^2 \to \R$ satisfies $\abs{\partial_1 \ell_0(\eta, y)} \le L$ and $0 \le \partial_1^2 \ell_0(\eta, y) \le s$ for some constants $L, s > 0$ and for all $\eta, y \in \R$. Then objective perturbation (\cref{alg:objective-perturbation}) with $R = 1$, any $\lambda, \nu > 0$, the GLM loss function $\ell(\bm{\beta}; (\bm{x}, y)) = \ell_0(\langle \bm{x}, \bm{\beta}\rangle, y)$, and any strictly positive $\lambda, \nu > 0$, satisfies $(\pe, \pd)$-differential privacy for any $\pe \ge 0$ and
    \[
        \pd = \begin{cases}
            2 \cdot \mathrm{HockeyStick}(\tpe, \frac{L}{\nu}) &\text{if }\hpe \ge 0,\\
            (1 - e^{\hpe}) + 2e^{\hpe}\cdot \mathrm{HockeyStick}\left(\frac{L^2}{2\nu^2}, \frac{L}{\nu}\right) &\text{otherwise,}
        \end{cases}
    \]
    where we set $\tpe = \pe - \log(1 + s/\lambda)$ and $\hpe = \tpe - L^2/2\nu^2$.

    The algorithm also satisfies $(\pa, \pe)$-R\'{e}nyi differential privacy for any $\pa > 1$ and
    \[
        \pe = \log\left(1 + \frac{s}{\lambda}\right) + \frac{L^2}{2\nu^2} + \frac{1}{\pa - 1}\log \mathbb{E}_{X \sim \mathcal{N}{\left(0, \frac{L^2}{\nu^2}\right)}}\bigl[e^{(\pa - 1)\abs{X}}\bigr].
    \]

\end{theorem}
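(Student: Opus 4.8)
The plan is to follow the proof of \cref{thm:redberg-privacy} (Theorems 3.1 and 3.2 of \cite{redberg2023improving}) and change the single step responsible for the restriction $\lambda > s$. Rescaling every feature vector by $1/R$ sends $(L,s)$ to $(LR,sR^2)$, so it suffices to treat $R=1$; assume this, so that $\norm{\bm{x}}\le 1$ for all data points. Writing $\mathcal{L}_D(\bm{\beta}) = \sum_{(\bm{x},y)\in D}\ell_0(\langle\bm{x},\bm{\beta}\rangle,y)$, convexity of $\ell_0$ in its first argument makes the perturbed objective $\lambda$-strongly convex, so its unique minimizer obeys $\nabla\mathcal{L}_D(\widehat{\bm{\beta}}) + \lambda\widehat{\bm{\beta}} + \nu\bm{\xi} = \bm{0}$, i.e.\ $\widehat{\bm{\beta}} = \Psi_D^{-1}(-\nu\bm{\xi})$ where $\Psi_D = \nabla(\mathcal{L}_D + \tfrac{\lambda}{2}\norm{\cdot}^2)$ is a $C^1$ diffeomorphism of $\R^d$ with Jacobian $\nabla^2\mathcal{L}_D + \lambda\bm{I}_d \succeq \lambda\bm{I}_d$. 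The change-of-variables formula then gives the density of the output on dataset $D$ as $g_D(\bm{b}) = \phi_\nu(\Psi_D(\bm{b}))\det(\nabla^2\mathcal{L}_D(\bm{b}) + \lambda\bm{I}_d)$, where $\phi_\nu$ is the $\mathcal{N}(\bm{0},\nu^2\bm{I}_d)$ density (for the Huber loss, $\partial_1^2\ell_0$ jumps on a null set, and the formula survives by approximation, as in \cite{redberg2023improving}).

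Next, fixing adjacent $D, D'$ with, say, $D = D' \cup \{(\bm{x}_0,y_0)\}$ (the reverse case, treated below, is easier), I would split the privacy loss at any output $\bm{b}$ as
\[
    \log\frac{g_D(\bm{b})}{g_{D'}(\bm{b})} = \log\frac{\det(\nabla^2\mathcal{L}_D(\bm{b})+\lambda\bm{I}_d)}{\det(\nabla^2\mathcal{L}_{D'}(\bm{b})+\lambda\bm{I}_d)} + \frac{1}{2\nu^2}\bigl(\norm{\Psi_{D'}(\bm{b})}^2 - \norm{\Psi_D(\bm{b})}^2\bigr).
\]
Since $\nabla^2\mathcal{L}_D(\bm{b}) = \nabla^2\mathcal{L}_{D'}(\bm{b}) + c_0\bm{x}_0\bm{x}_0^\top$ with $c_0 = \partial_1^2\ell_0(\langle\bm{x}_0,\bm{b}\rangle,y_0)\in[0,s]$, the matrix determinant lemma together with $\nabla^2\mathcal{L}_{D'}(\bm{b}) + \lambda\bm{I}_d \succeq \lambda\bm{I}_d$ gives $0 \le \log\frac{\det(\nabla^2\mathcal{L}_D(\bm{b})+\lambda\bm{I}_d)}{\det(\nabla^2\mathcal{L}_{D'}(\bm{b})+\lambda\bm{I}_d)} = \log(1 + c_0\bm{x}_0^\top(\nabla^2\mathcal{L}_{D'}(\bm{b})+\lambda\bm{I}_d)^{-1}\bm{x}_0) \le \log(1 + s/\lambda)$. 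This is the only change from \cite{redberg2023improving}: expanding the determinant ratio around the \emph{smaller} Hessian produces a bound valid for every $\lambda > 0$ (and, even when $\lambda > s$, tighter than the $\abs{\log(1 - s/\lambda)}$ obtained by inverting the larger Hessian). For the Gaussian term, the GLM structure gives $\Psi_D(\bm{b}) - \Psi_{D'}(\bm{b}) = \partial_1\ell_0(\langle\bm{x}_0,\bm{b}\rangle,y_0)\bm{x}_0$, a vector of norm at most $L$ lying along the \emph{fixed} direction $\bm{x}_0$.

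These two facts would reduce the problem to a one-dimensional Gaussian mechanism. Evaluating the Gaussian term at an output drawn from $g_D$, where $\Psi_D(\widehat{\bm{\beta}}) = -\nu\bm{Z}$ with $\bm{Z}\sim\mathcal{N}(\bm{0},\bm{I}_d)$ the realized noise, it equals $\tfrac{L'}{\nu}Z_1 + \tfrac{(L')^2}{2\nu^2}$ with $L' = \partial_1\ell_0(\langle\bm{x}_0,\widehat{\bm{\beta}}\rangle,y_0)\norm{\bm{x}_0}\in[-L,L]$ and $Z_1 = \langle\bm{Z},\bm{x}_0/\norm{\bm{x}_0}\rangle\sim\mathcal{N}(0,1)$. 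Because $L'$ depends on the output only through $\langle\bm{x}_0,\widehat{\bm{\beta}}\rangle$ and $\Psi_D^{-1}$ is a monotone gradient map, $L'$ is a monotone function of $Z_1$ with the other coordinates of $\bm{Z}$ fixed; this, together with the constant offset $\log(1+s/\lambda)$, lets one bound the per-pair hockey-stick divergence $H_{e^\pe}(g_D\|g_{D'})$ by that of $\mathcal{N}(\mu,\nu^2)$ versus $\mathcal{N}(0,\nu^2)$ with $\abs{\mu} = L$ and $\pe$ replaced by $\tpe = \pe - \log(1+s/\lambda)$, at the cost of a factor $2$ absorbing the sign of $\mu$. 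Invoking \cref{thm:hockey-stick} on this reduced problem yields $\pd = 2\,\mathrm{HockeyStick}(\tpe,L/\nu)$ when $\hpe = \tpe - L^2/2\nu^2 \ge 0$, and the second-case expression $(1-e^{\hpe}) + 2e^{\hpe}\,\mathrm{HockeyStick}(L^2/2\nu^2,L/\nu)$ when $\hpe < 0$ (the regime where $\tpe$ sits below the intrinsic privacy loss $L^2/2\nu^2$ of that Gaussian mechanism). The reverse adjacency case has a nonpositive determinant term, so the same bounds hold a fortiori. Finally, the same estimates bound the privacy loss pointwise by $\log(1+s/\lambda) + L^2/2\nu^2 + \abs{X}$ for $X\sim\mathcal{N}(0,L^2/\nu^2)$, whence $\E_{\widehat{\bm{\beta}}\sim g_D}[e^{(\pa-1)\log(g_D/g_{D'})}] \le e^{(\pa-1)(\log(1+s/\lambda) + L^2/2\nu^2)}\,\E_{X\sim\mathcal{N}(0,L^2/\nu^2)}[e^{(\pa-1)\abs{X}}]$; taking $\tfrac{1}{\pa-1}\log(\cdot)$ gives the stated $(\pa,\pe)$-RDP bound.

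The main obstacle is the third step: carrying the offset $\log(1+s/\lambda)$ through the analytic-Gaussian-mechanism computation of \cite{balle2018analytic} while keeping the estimate tight enough to reproduce the exact two-case formula (a crude upper-tail bound on the privacy loss would give $2\Phi(\tfrac{L}{2\nu} - \tfrac{\tpe\nu}{L})$ rather than $2\,\mathrm{HockeyStick}(\tpe,L/\nu)$), and verifying that the output-dependence of the single-point gradient difference costs no more than a factor of $2$. This essentially amounts to re-running the argument of \cite{redberg2023improving}; the genuinely new input is the short, $\lambda$-uniform determinant bound of the second step. The remaining ingredients — the diffeomorphism and change-of-variables argument (including the nonsmooth Huber case) and the $R\ne 1$ rescaling — are routine.
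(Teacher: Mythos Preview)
Your proposal is correct and follows essentially the same approach as the paper: both split the privacy loss into a Jacobian-determinant term and a Gaussian term, and the sole new ingredient is bounding the determinant ratio via the matrix determinant lemma expanded around the \emph{smaller} Hessian $\nabla^2\mathcal{L}_{D'}(\bm{b})+\lambda\bm{I}_d$, yielding $\log(1+s/\lambda)$ for all $\lambda>0$ in place of $\lvert\log(1-s/\lambda)\rvert$. The paper makes this same observation and then, like you, defers the remaining hockey-stick and R\'enyi computations to \cite{redberg2023improving}; you simply give a more explicit sketch of those deferred steps.
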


Before we prove \cref{thm:our-objective-perturbation-privacy}, we make a couple comments. First, the only differences between the statement of this theorem and the statement of \cref{thm:redberg-privacy} are our removal of the assumption that $\lambda > s$, and, relatedly, our replacement of the quantity $- \log(1 - s / \lambda)$ in both the approximate DP and RDP bounds with the strictly smaller quantity $\log(1 + s / \lambda)$. We also remark that the conclusion of \cref{thm:our-objective-perturbation-privacy} implies a $\pr$-zCDP bound with constant $\pr$ for any $\lambda,\nu > 0$, justifying our informal commentary in the main body of the paper:

\begin{corollary}
\label{thm:rho-zcdp-bound}
    Under the same conditions as \cref{thm:our-objective-perturbation-privacy}, \cref{alg:objective-perturbation} satisfies $\pr$-zCDP for
    \[
        \pr = \log{\left(1 + \frac{s}{\lambda}\right)} + \frac{L^2}{2\nu^2} + \sqrt{\frac{2}{\pi}} \cdot \frac{L}{\nu}.
    \]
\end{corollary}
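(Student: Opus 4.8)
The plan is to read off the $\pr$-zCDP bound from the R\'enyi DP guarantee already established in \cref{thm:our-objective-perturbation-privacy}. By definition, $\pr$-zCDP is the assertion that the algorithm is $(\pa, \pr\pa)$-RDP for every $\pa > 1$, and since an RDP guarantee only weakens as its second parameter increases, it suffices to show that the RDP parameter
\[
    \pe(\pa) = \log\!\left(1 + \frac{s}{\lambda}\right) + \frac{L^2}{2\nu^2} + \frac{1}{\pa - 1}\log \E_{X \sim \mathcal{N}(0,\, L^2/\nu^2)}\bigl[e^{(\pa - 1)\abs{X}}\bigr]
\]
furnished by \cref{thm:our-objective-perturbation-privacy} satisfies $\pe(\pa) \le \pr\pa$ for all $\pa > 1$, with $\pr$ as in the statement.

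First I would compute the relevant moment generating function in closed form. Writing $X = (L/\nu) Z$ with $Z \sim \mathcal{N}(0,1)$ and setting $a = (\pa - 1)L/\nu \ge 0$, a routine Gaussian integral gives $\E[e^{a\abs{Z}}] = 2 e^{a^2/2}\Phi(a)$, where $\Phi$ is the standard normal CDF. Consequently
\[
    \frac{1}{\pa - 1}\log \E\bigl[e^{(\pa - 1)\abs{X}}\bigr] = \frac{(\pa - 1)L^2}{2\nu^2} + \frac{1}{\pa - 1}\log\bigl(2\Phi(a)\bigr).
\]
The crux of the argument is the scalar inequality $\log\bigl(2\Phi(t)\bigr) \le \sqrt{2/\pi}\; t$ for all $t \ge 0$. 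To see it, let $g(t) = \log(2\Phi(t))$; then $g(0) = 0$ and $g'(t) = \varphi(t)/\Phi(t)$ is nonincreasing on $[0, \infty)$, being a ratio of the nonincreasing positive function $\varphi$ over the increasing positive function $\Phi$, with $g'(0) = \varphi(0)/\Phi(0) = 2\varphi(0) = \sqrt{2/\pi}$; integrating yields the claim. Applying this with $t = a$ and dividing by $\pa - 1$ gives $\frac{1}{\pa-1}\log(2\Phi(a)) \le \sqrt{2/\pi}\cdot (L/\nu)$, hence
\[
    \pe(\pa) \le \log\!\left(1 + \frac{s}{\lambda}\right) + \frac{\pa L^2}{2\nu^2} + \sqrt{\frac{2}{\pi}}\cdot\frac{L}{\nu}.
\]

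To finish I would compare the right-hand side with $\pr\pa = \pa\log(1 + s/\lambda) + \frac{\pa L^2}{2\nu^2} + \pa\sqrt{2/\pi}\cdot(L/\nu)$: the $\frac{\pa L^2}{2\nu^2}$ terms cancel, leaving $\pr\pa - \pe(\pa) \ge (\pa - 1)\bigl(\log(1 + s/\lambda) + \sqrt{2/\pi}\cdot(L/\nu)\bigr) \ge 0$ since $\pa > 1$ and $s, \lambda, L, \nu > 0$. Thus $\pe(\pa) \le \pr\pa$ for every $\pa > 1$, which is precisely $\pr$-zCDP. The only nontrivial step is the Gaussian inequality $\log(2\Phi(t)) \le \sqrt{2/\pi}\, t$: it is essential to retain the factor $\Phi(a)$ rather than bounding $\E[e^{a\abs X}] \le 2 e^{a^2/2}$ via $e^{\abs X} \le e^X + e^{-X}$, since the latter would contribute a term $\frac{\log 2}{\pa - 1}$ that diverges as $\pa \to 1^+$ and would preclude a finite, $\pa$-independent $\pr$.
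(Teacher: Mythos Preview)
Your proposal is correct and follows essentially the same approach as the paper: both proofs substitute the folded-Gaussian MGF into the RDP bound of \cref{thm:our-objective-perturbation-privacy} and then bound the residual term $\frac{1}{\pa-1}\log(2\Phi((\pa-1)L/\nu))$ by its limiting value $\sqrt{2/\pi}\cdot L/\nu$ at $\pa=1$. Your concavity argument for $\log(2\Phi(t)) \le \sqrt{2/\pi}\,t$ supplies a clean justification for what the paper leaves as ``one can check by hand.''
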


\begin{proof}
    By the definition of zCDP (Definition~\ref{def:zcdp}), we must show that the expression for $\eps$ in the RDP bound of \cref{thm:our-objective-perturbation-privacy} is at most $\pr \cdot \pa$, for all $\pa > 1$. Using a standard formula for the MGF of the \emph{folded Gaussian distribution}, we see that if $X \sim \mathcal{N}(0, \sigma^2)$, then
    \[
        \log \E[e^{t\abs{X}}] = \frac{\sigma^2t^2}{2} + \log(2\Phi(\sigma t)).
    \]
    Substituting $\sigma = L/\nu$ and $t = \pa - 1$ yields $(\pa, \pe)$-RDP for all $\pa > 1$ and
    \[
        \pe = \log\left(1 + \frac{s}{\lambda}\right) + \frac{L^2}{2\nu^2} + \left(\frac{L^2}{2\nu^2}(\pa-1) + \frac{\log(2\Phi(\frac{L}{\nu}(\pa - 1)))}{\pa - 1}\right).
    \]
    Thus, the expression for $\pe$ has three terms: a constant term $\log(1 + s/\lambda)$, a linear term $(L^2/2\nu^2) \cdot \pa$, and third term $\log(2\Phi(\frac{L}{\nu}(\pa - 1)))/(\pa - 1)$, which one can check by hand strictly decreases from $\sqrt{2/\pi}(L/\nu)$ to $0$ as $\pa$ increases from $1$ to $\infty$. We conclude that $\pe \le \pr \cdot \pa$ for the claimed value $\pr$.
\end{proof}

\begin{proof}(\cref{thm:our-objective-perturbation-privacy})
    The main difference between this proof and the proof of \cref{thm:redberg-privacy} (Theorems 3.1 and 3.2 in \cite{redberg2023improving}) is our more careful bounding of a certain additive ``change-of-variables'' term in the privacy loss random variable. For clarity, we focus our attention on the part of the proof that changes.

    To begin, let $\bm{\hb} \in \R^d$ be the output of the algorithm when run on the data set $(\bm{X}, \bm{y}) \in \R^{n \times d} \times \R^n$. Let $(\bm{X}', \bm{y}') \in \R^{(n \pm 1) \times d} \times \R^{(n \pm 1)}$ denote an add/remove-one adjacent data set. Because of the random perturbation term $\bm{\xi} \in \R^d$ in the algorithm, $\bm{\hb}$ is a random variable even for fixed $\bm{X}$ and $\bm{y}$. Denote its density at $\bm{\beta} \in \R^d$ by $\mathsf{PDF}(\bm{\beta}; (\bm{X}, \bm{y}))$ and consider the \emph{privacy loss random variable}
    \[
        Z = \log{\left(\frac{\mathsf{PDF}(\bm{\hb}; (\bm{X}, \bm{y}))}{\mathsf{PDF}(\bm{\hb}; (\bm{X}', \bm{y}'))}\right)}.
    \]
    Note that both the numerator and the denominator involve $\bm{\hb}$, which is the output of the algorithm on $(\bm{X}, \bm{y})$, not $(\bm{X}', \bm{y}')$. In order to simplify $Z$, notice that there is a one-to-one correspondence between possible instantiations of the perturbation vector $\bm{\xi}$ and possible outputs $\bm{\beta}$ of the algorithm. Indeed, the algorithm outputs $\bm{\beta}$ on input $(\bm{X}, \bm{y})$ if and only if the chosen perturbation vector is
    \[
        g(\bm{\beta}; (\bm{X}, \bm{y})) = -\frac{\lambda}{\nu} \bm{\beta} - \frac{1}{\nu}\sum_{i=1}^n \partial_1 \ell_0 (\langle \bm{x}_i, \bm{\beta}\rangle, y_i) \cdot \bm{x}_i.
    \]
    Let $J_g(\bm{\beta}; (\bm{X}, \bm{y})) \in \R^{d \times d}$ denote the Jacobian matrix of the function $g$ with respect to $\bm{\beta}$, and let $\mathsf{PDF}_{\mathcal{N}}(\bm{z})$ denote the standard Gaussian density at $\bm{z} \in \R^d$. Using the change-of-variables formula, we can simplify the expression for $Z$ to
    \[
        Z = \underbrace{\log{\left(\frac{\abs{\det J_g(\bm{\hb}; (\bm{X}, \bm{y}))}}{\abs{\det J_g(\bm{\hb}; (\bm{X}', \bm{y}'))}}\right)}}_{(*)} + \underbrace{\log{\left(\frac{\mathsf{PDF}_{\mathcal{N}}(g(\bm{\hb}; (\bm{X}, \bm{y}))}{\mathsf{PDF}_{\mathcal{N}}(g(\bm{\hb}; (\bm{X}', \bm{y}'))}\right)}}_{(**)}.
    \]
    To analyze $(*)$, the log determinant ratio,
    observe first that the function $J_g$ satisfies
    \[
        -J_g(\bm{\beta}; (\bm{X}, \bm{y})) = \frac{\lambda}{\nu} \bm{I}_d + \frac{1}{\nu}\sum_{i=1}^n \partial_1^2 \ell_0(\langle \bm{x}_i, \bm{\beta}\rangle, y_i) \cdot \bm{x}_i\bm{x}_i^\top
    \]
    Recall that the data sets $(\bm{X}, \bm{y})$ and $(\bm{X}', \bm{y}')$ differ in a single data point, which is present in one data set but not the other. Denote this point by $(\bm{x}_0, y_0) \in \R^d \times \R$, where $\norm{\bm{x}_0} \le R = 1$. Suppose first that $(\bm{x}_0, y_0)$ is present in $(\bm{X}, \bm{y})$ but not $(\bm{X}', \bm{y}')$. Then, we have that
    \[
        -J_g(\bm{\beta}; (\bm{X}, \bm{y})) = -J_g(\bm{\beta}; (\bm{X}', \bm{y}')) + \frac{1}{\nu} \partial_1^2 \ell_0(\langle \bm{x}_0, \bm{\beta}\rangle, y_0) \cdot \bm{x}_0\bm{x}_0^\top.
    \]
    Thus, by the matrix determinant lemma, we have that
    \[
        \frac{\det(-J_g(\bm{\beta}; (\bm{X}, \bm{y})))}{\det(-J_g(\bm{\beta}; (\bm{X}', \bm{y}')))} = 1 + \frac{1}{\nu}\partial_1^2\ell_0(\langle \bm{x}_0, \bm{\beta}\rangle, y_0) \cdot \bm{x}_0^\top (-J_g(\bm{\beta}; (\bm{X}', \bm{y}')))^{-1} \bm{x}_0.
    \]
    Observe that $-J_g(\bm{\beta}; (\bm{X}', \bm{y}')) \succeq \frac{\lambda}{\nu}\bm{I}_d$, that $\norm{\bm{x}_0} \le 1$, and that $\partial_1^2\ell(\langle \bm{x}_0, \bm{\beta}\rangle, y_0) \in [0, s]$. Using these three bounds, it follows that
    \[
        1 \le \frac{\det(-J_g(\bm{\beta}; (\bm{X}, \bm{y})))}{\det(-J_g(\bm{\beta}; (\bm{X}', \bm{y}')))} \le 1 + \frac{s}{\lambda}.
    \]
    Similarly, in the case that $(\bm{x}_0, y_0)$ is present in $(\bm{X}', \bm{y}')$ but not $(\bm{X}, \bm{y})$, we have that
    \[
         \left(1 + \frac{s}{\lambda}\right)^{-1} \le \frac{\det(-J_g(\bm{\beta}; (\bm{X}, \bm{y})))}{\det(-J_g(\bm{\beta}; (\bm{X}', \bm{y}')))} \le 1.
    \]
    In either case, we have that $\abs{(*)} \le \log(1 + s/\lambda)$ in absolute value, improving on the $-\log(1 - s/\lambda)$ upper bound on $\abs{(*)}$ from the proof of \cref{thm:redberg-privacy} (Theorems 3.1 and 3.2 of \cite{redberg2023improving}), which only held for $\lambda > s$. The remainder of the proof, which involves bounding $(**)$ and then analyzing various R\'{e}nyi and hockey stick divergences, is identical to the proof of \cref{thm:redberg-privacy}, and we omit it for brevity.
\end{proof}

\section{Utility of Output Perturbation}
\label{sec:output-perturbation}

In this section, we consider two immediate corollaries of \cref{thm:main-huber-objective-perturbation} and \cref{thm:logistic-objective-perturbation}, respectively. These corollaries give us the privacy-utility tradeoffs for \emph{output perturbation} (\cref{alg:output-perturbation}), when applied to the problems of robust linear regression and logistic regression. To arrive at these corollaries, we simply observe that output perturbation can be viewed as objective perturbation with $\nu = 0$ (i.e. no linear perturbation term), plus an additive Gaussian noise term to the output $\bm{\hb}$. Privacy follows immediately from \cref{thm:hockey-stick} (the analytic Gaussian mechanism of \cite{balle2018analytic}), along with the $\ell^2$ sensitivity bound of $L/\lambda$ afforded by $L$-Lipschitzness and $\lambda$-strong convexity. In Figure \ref{fig:output-estimation}, we validate the predictions of these corollaries against random, synthetic data, showing that output perturbation essentially incurs an additive increase in error that shifts the entire error curve of the non-private case ``up'' by $\nu^2$.

\begin{algorithm}[t]
    \caption{Output Perturbation}
    \label{alg:output-perturbation}
    \begin{algorithmic}[1]
        \State \textbf{input:} design matrix $\bm{X} \in \R^{n \times d}$ with $\norm{\bm{x}_i} \le R$, response vector $\bm{y} \in \R^n$, loss function $\ell : \R^d \times \R^{d + 1} \to \R$, regularization strength $\lambda > 0$, perturbation strength $\nu > 0$.

        \vspace{0.5cm}
        
        \State Randomly sample the additive perturbation term: \[\bm{\xi} \sim \mathcal{N}(\bm{0}, \bm{I}_d)\]
        
        \State Optimize the regularized objective function: \[\widetilde{\bm{\beta}} = \argmin_{\bm{\beta} \in \R^d}\, \sum_{i=1}^n \ell(\bm{\beta}; (\bm{x}_i, y_i)) + \frac{\lambda}{2}\norm{\bm{\beta}}^2\]
        
        \State Add the perturbation: \[\widehat{\bm{\beta}} = \widetilde{\bm{\beta}} + \nu\bm{\xi}\]
        
        \State \Return $\widehat{\bm{\beta}}$
    \end{algorithmic}
\end{algorithm}

\begin{corollary}[Output Perturbation for Robust Linear Regression]
\label{thm:main-huber-output-perturbation}
    Let $\bm{\hb}$ denote the output of \cref{alg:output-perturbation} with parameters $R, \lambda, \nu > 0$ and instantiated with the $L$-Lipschitz Huber loss function \[\ell(\bm{\beta}; (\bm{x}, y)) = H_L(y - \langle \bm{x}, \bm{\beta}\rangle).\]
    \begin{enumerate}[(a)]
        \item \emph{\textbf{(Privacy)}} $\widehat{\bm{\beta}}$ satisfies $(\pe, \pd)$-differential privacy for any $\pe \ge 0$ and
        \[
            \pd = \mathrm{HockeyStick}\mathopen{}\left(\pe, \frac{LR}{\lambda \nu}\right)\mathclose{}.
        \]
        \item \label{thm:main-huber-output-utility} \emph{\textbf{(Utility)}} Suppose the following hold for some $\bm{\beta}^\star \in \R^d$ and $\bm{\varepsilon}^\star \in \R^n$ as $n \to \infty$ and $d/n \to \delta$:
        \begin{enumerate}[(i)]
            \item $\bm{X} \in B_R(\bm{0})^n \subseteq \R^{n \times d}$ follows a subgaussian design and $\bm{y} = \bm{X}\bm{\beta}^\star + \bm{\varepsilon}^\star$.
            \item There exist random variables $\beta^\star_0, \varepsilon^\star_0\in \R$ such that $\bm{\beta}^\star \rightsquigarrow \beta^\star_0$ and $\bm{\varepsilon}^\star \rightsquigarrow \varepsilon^\star_0$.
        \end{enumerate}
        
        Suppose there exist $\sigma^\star, \tau^\star > 0$ solving the following system of two scalar equations in two variables $(\sigma, \tau)$, which we write in terms of a dummy variable $Z \sim \mathcal{N}(0, 1)$ and $\kappa^2 = \E(\beta^\star_0)^2$ as
        \begin{subequations}
        \begin{align}
            \sigma^2 &= \tau^2\left(\frac{1}{\delta} \E\mathopen{}\left[\frac{\sigma Z + \varepsilon^\star_0}{1 + \tau}\right]_L^2 + \lambda^2\kappa^2\right), \label{eq:huber-1-out} \\
            \tau &= \frac{1}{\lambda \delta}\left(\delta - \frac{\tau}{1+\tau}\Pr\mathopen{}\left[-L < \frac{\sigma Z + \varepsilon^\star_0}{1 + \tau} < L\right]\right). \label{eq:huber-2-out}
        \end{align}
        \end{subequations}
        Then, in terms of $(\sigma^\star, \tau^\star)$, the estimation error $\widehat{\bm{\beta}} - \bm{\beta}^\star$ satisfies, for $\xi_0 \sim \mathcal{N}(0, 1)$,
        \[
            (\bm{\beta}^\star, \, \bm{\xi}, \, \widehat{\bm{\beta}} - \bm{\beta}^\star) \rightsquigarrow \left(\beta_0^\star, \; \xi_0, \; \tau^\star{\left(\sqrt{\frac{1}{\delta} \E\mathopen{}\left[\frac{\sigma^\star Z + \varepsilon^\star_0}{1 + \tau^\star}\right]_L^2} Z - \lambda \beta^\star_0\right)} + \nu\xi_0\right).
        \]        
    \end{enumerate}
\end{corollary}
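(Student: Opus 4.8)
The plan is to deduce \cref{thm:main-huber-output-perturbation} from the corresponding objective-perturbation result \cref{thm:main-huber-objective-perturbation}, exploiting the fact that \cref{alg:output-perturbation} is exactly \cref{alg:objective-perturbation} run with perturbation strength $\nu = 0$, post-processed by the addition of an independent Gaussian vector $\nu\bm{\xi}$.

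For the privacy claim, I would first bound the $\ell^2$ sensitivity of the deterministic map $(\bm{X}, \bm{y}) \mapsto \widetilde{\bm{\beta}}$, where $\widetilde{\bm{\beta}}$ minimizes the $\lambda$-strongly convex objective $H(\bm{\beta}) = H_L(\bm{y} - \bm{X}\bm{\beta}) + \frac{\lambda}{2}\norm{\bm{\beta}}^2$. For two add/remove-one adjacent data sets differing in one point $(\bm{x}_0, y_0)$ with $\norm{\bm{x}_0} \le R$, the two objectives differ only by the single Huber summand $H_L(y_0 - \langle \bm{x}_0, \bm{\beta}\rangle)$, whose gradient $-[y_0 - \langle \bm{x}_0, \bm{\beta}\rangle]_L\,\bm{x}_0$ has norm at most $LR$; $\lambda$-strong convexity of $H$ then forces the two minimizers to lie within $LR/\lambda$ of one another. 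Since $\widehat{\bm{\beta}} = \widetilde{\bm{\beta}} + \nu\bm{\xi}$ with $\bm{\xi} \sim \mathcal{N}(\bm{0}, \bm{I}_d)$, this is precisely the Gaussian mechanism of \cref{thm:hockey-stick} applied to a function of sensitivity $\Delta = LR/\lambda$ with noise scale $\nu$, yielding $\pd = \mathrm{HockeyStick}(\pe, \tfrac{LR}{\lambda\nu})$ for all $\pe \ge 0$.

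For the utility claim, note that $\widetilde{\bm{\beta}}$ is the output of \cref{alg:objective-perturbation} with $\nu = 0$ and that the system \eqref{eq:huber-1-out}--\eqref{eq:huber-2-out} is exactly \eqref{eq:huber-1}--\eqref{eq:huber-2} specialized to $\nu = 0$. The utility half of \cref{thm:main-huber-objective-perturbation} never invokes $\nu > 0$ (only the privacy half does), so it applies verbatim at $\nu = 0$ and gives $(\bm{\beta}^\star, \widetilde{\bm{\beta}} - \bm{\beta}^\star) \rightsquigarrow (\beta^\star_0, \tau^\star(\sigma^\star_v Z - \lambda\beta^\star_0))$, where $\sigma^\star_v = \sqrt{\tfrac{1}{\delta} \E[\tfrac{\sigma^\star Z + \varepsilon^\star_0}{1+\tau^\star}]_L^2}$. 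It then remains to ``adjoin'' the independent Gaussian: given an order-$k$ pseudo-Lipschitz $f : \R^3 \to \R$, I would set $\bar{f}(a, u) = \E_{\xi_0 \sim \mathcal{N}(0,1)}[f(a, \xi_0, u + \nu\xi_0)]$, check via Gaussian moment bounds that $\bar{f}$ is again pseudo-Lipschitz of order $k$ (up to a constant), apply the $\nu = 0$ convergence to $\bar{f}$, and then control the difference between $\frac{1}{d}\sum_j f(\beta^\star_j, \xi_j, \widehat{\beta}_j - \beta^\star_j)$ and $\frac{1}{d}\sum_j \bar{f}(\beta^\star_j, \widetilde{\beta}_j - \beta^\star_j)$ by a Chebyshev bound conditional on $(\bm{X}, \bm{y})$, using that the conditional second moments are dominated by $\frac{1}{d}\sum_j(1 + \abs{\beta^\star_j} + \abs{\widetilde{\beta}_j - \beta^\star_j})^{2(k-1)}$, itself $O(1)$ w.h.p.\ by an order-$(2k-1)$ pseudo-Lipschitz test together with $\bm{\beta}^\star \rightsquigarrow \beta^\star_0$. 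Combining these yields $(\bm{\beta}^\star, \bm{\xi}, \widehat{\bm{\beta}} - \bm{\beta}^\star) \rightsquigarrow (\beta^\star_0, \xi_0, \tau^\star(\sigma^\star_v Z - \lambda\beta^\star_0) + \nu\xi_0)$, which is the asserted limit.

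The main obstacle is bookkeeping rather than any new idea: confirming that the CGMT/universality argument underlying \cref{thm:main-huber-objective-perturbation}(b) genuinely survives down to $\nu = 0$, and that the post-processing step preserves polynomial-rate pseudo-Lipschitz convergence in the sense of \cref{def:pl-convergence} (i.e.\ with the correct $n^{-\Omega(1)}$ rate, not merely convergence in distribution). Both reduce to moment estimates of exactly the kind already carried out in the proof of \cref{thm:main-huber-objective-perturbation}.
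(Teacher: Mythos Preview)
Your approach is correct and matches the paper's: view \cref{alg:output-perturbation} as \cref{alg:objective-perturbation} run with $\nu = 0$ followed by independent additive Gaussian noise, obtain privacy from the $LR/\lambda$ sensitivity bound plus \cref{thm:hockey-stick}, and obtain utility by specializing \cref{thm:main-huber-objective-perturbation}\ref{thm:main-huber-objective-utility} to $\nu = 0$.

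One simplification worth noting: your Chebyshev ``adjoin'' step is unnecessary. The conclusion of \cref{thm:main-huber-objective-perturbation}\ref{thm:main-huber-objective-utility} at $\nu = 0$ already gives the \emph{joint} convergence $(\bm{\beta}^\star, \bm{\xi}, \widetilde{\bm{\beta}} - \bm{\beta}^\star) \rightsquigarrow (\beta^\star_0, \xi_0, \tau^\star(\sigma^\star_v Z - \lambda\beta^\star_0))$, with $\bm{\xi}$ included (it is sampled in \cref{alg:objective-perturbation} regardless of $\nu$, and the excision argument in \cref{sec:huber-put-steps-together} is carried out for tests of the triple $(\bm{\beta}^\star,\bm{\xi},\bm{u})$). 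Since $\widehat{\bm{\beta}} - \bm{\beta}^\star = (\widetilde{\bm{\beta}} - \bm{\beta}^\star) + \nu\bm{\xi}$, for any order-$k$ pseudo-Lipschitz test $f(a,\xi,u)$ you can simply apply the joint convergence to the composition $g(a,\xi,u) = f(a,\xi,u+\nu\xi)$, which is again order-$k$ pseudo-Lipschitz because $(a,\xi,u)\mapsto(a,\xi,u+\nu\xi)$ is Lipschitz. This transfers the $n^{-\Omega(1)}$ rate directly, without any conditional second-moment bookkeeping.
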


\begin{corollary}[Output Perturbation for Logistic Regression]
\label{thm:logistic-output-perturbation}
    Let $\bm{\hb}$ be the output of \cref{alg:output-perturbation} with parameters $R, \lambda, \nu > 0$ and instantiated with the logistic loss function \[\ell(\bm{\beta}; (\bm{x}, y)) = \rho(\langle \bm{x}, \bm{\beta}\rangle) - y\langle\bm{x}, \bm{\beta}\rangle.\]
    \begin{enumerate}[(a)]
        \item \emph{\textbf{(Privacy)}} $\bm{\hb}$ satisfies $(\pe, \pd)$-differential privacy for any $\pe \ge 0$ and
        \[
            \pd = \mathrm{HockeyStick}\mathopen{}\left(\pe, \frac{R}{\lambda \nu}\right)\mathclose{}.
        \]
        \item \label{thm:logistic-output-utility} \emph{\textbf{(Utility)}} Suppose the following assumptions hold for some $\bm{\beta}^\star \in \R^d$ as $n \to \infty$ and $d/n \to \delta$:
        \begin{enumerate}[(i)]
            \item $\bm{X} \in B_R(\bm{0})^n \subseteq \R^{n \times d}$ follows a subgaussian design and $\bm{y} \sim \mathrm{Bernoulli}(\rho'(\bm{X}\bm{\beta}^\star))$.
            \item There exists a random variable $\beta^\star_0 \in \R$ with $\kappa^2 = \E(\beta^\star_0)^2$ such that $\bm{\beta}^\star \rightsquigarrow \beta^\star_0$.
        \end{enumerate}
        Note that $\bm{\xi} \rightsquigarrow \xi_0$ for $\xi_0 \sim \mathcal{N}(0, 1)$.
        
        Suppose there are unique $\sigma^\star, \alpha^\star, \gamma^\star > 0$ solving the following system of three scalar equations in three variables $(\sigma, \alpha, \gamma)$, which we write in terms of dummy variables $Z_1, Z_2 \iid \mathcal{N}(0, 1)$ as
        \begin{subequations}
        \begin{align}
            \sigma^2 &= \frac{\gamma^2}{\delta}\E\mathopen{}\left[2\rho'(-\kappa Z_1)\rho'(\prox_{\gamma\rho}(\kappa\alpha Z_1+\sigma Z_2) \big)^2\right], \label{eq:logistic-1-out} \\
            \alpha &= -\frac{1}{\delta}\E[ 2\rho''(-\kappa Z_1)\prox_{\gamma \rho}\big(\kappa \alpha Z_1+\sigma Z_2\big)], \label{eq:logistic-2-out} \\
            \gamma &= \frac{1}{\lambda \delta}\mathopen{}\left(\delta - 1 + \E\mathopen{}\left[\frac{2\rho'(-\kappa Z_1)}{1+\gamma \rho''\big(\prox_{\gamma\rho}(\kappa\alpha Z_1 + \sigma Z_2)\big)}\right]\right). \label{eq:logistic-3-out}
        \end{align}
        \end{subequations}
        Then, the estimation error $\widehat{\bm{\beta}} - \bm{\beta}^\star$ satisfies
        \[
            (\bm{\beta}^\star, \, \bm{\xi}, \, \widehat{\bm{\beta}}) \dashrightarrow \left(\beta^\star_0, \, \xi_0, \, \alpha^\star \beta^\star_0 + \sigma^\star Z + \nu \xi_0\right).
        \]
    \end{enumerate}
\end{corollary}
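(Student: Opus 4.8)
The plan is to obtain both parts of Corollary~\ref{thm:logistic-output-perturbation} as direct consequences of the corresponding results for objective perturbation, exploiting the fact that Algorithm~\ref{alg:output-perturbation} is precisely Algorithm~\ref{alg:objective-perturbation} run with $\nu = 0$, followed by the addition of an independent Gaussian vector $\nu\bm{\xi}$ to the minimizer.

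\emph{Privacy.} First I would bound the $\ell^2$ sensitivity of the un-perturbed estimator $\widetilde{\bm{\beta}}$. For the logistic GLM loss we have $\nabla_{\bm{\beta}}\,\ell(\bm{\beta}; (\bm{x}, y)) = (\rho'(\langle \bm{x}, \bm{\beta}\rangle) - y)\bm{x}$, whose norm is at most $\|\bm{x}\| \le R$ since $\rho' \in (0,1)$ and $y \in \{0,1\}$. Hence adding or removing a single sample perturbs the gradient of the $\lambda$-strongly convex regularized objective by at most $R$ at every point, so $\|\widetilde{\bm{\beta}}(\bm{X}, \bm{y}) - \widetilde{\bm{\beta}}(\bm{X}', \bm{y}')\| \le R/\lambda$ on adjacent inputs by \cref{thm:convex-minimization}. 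The output $\widehat{\bm{\beta}} = \widetilde{\bm{\beta}} + \nu\bm{\xi}$ is then exactly the Gaussian mechanism applied to a function of $\ell^2$ sensitivity $R/\lambda$ at noise scale $\nu$, so \cref{thm:hockey-stick} yields $(\pe, \pd)$-DP precisely when $\pd \ge \mathrm{HockeyStick}(\pe, R/(\lambda\nu))$, as claimed.

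\emph{Utility.} Substituting $\nu = 0$ into the system \eqref{eq:logistic-1}--\eqref{eq:logistic-3} of \cref{thm:logistic-objective-perturbation} yields exactly \eqref{eq:logistic-1-out}--\eqref{eq:logistic-3-out}, and the term $\sqrt{(\sigma^\star)^2 - (\gamma^\star\nu)^2}\,Z - \gamma^\star\nu\xi_0$ in that theorem's conclusion collapses to $\sigma^\star Z$. Applying \cref{thm:logistic-objective-perturbation}\ref{thm:logistic-objective-utility} with $\nu = 0$ then gives $(\bm{\beta}^\star, \bm{\xi}, \widetilde{\bm{\beta}}) \dashrightarrow (\beta^\star_0, \xi_0, \alpha^\star\beta^\star_0 + \sigma^\star Z)$, where $\xi_0 \sim \mathcal{N}(0,1)$ is independent of $(\beta^\star_0, Z)$; here $\bm{\xi}$ is the fresh perturbation of Algorithm~\ref{alg:output-perturbation}, which is genuinely independent of $\widetilde{\bm{\beta}}$ because the regularized logistic estimator never sees it. To pass from $\widetilde{\bm{\beta}}$ to $\widehat{\bm{\beta}} = \widetilde{\bm{\beta}} + \nu\bm{\xi}$, I would fix any order-$k$ pseudo-Lipschitz $f : \R^3 \to \R$ and write $\frac1d\sum_{j} f(\beta^\star_j, \xi_j, \widehat{\beta}_j) = \frac1d\sum_{j} f(\beta^\star_j, \xi_j, \widetilde{\beta}_j + \nu\xi_j) = \frac1d\sum_{j} g(\beta^\star_j, \xi_j, \widetilde{\beta}_j)$, where $g(a,b,c) = f(a,b,c+\nu b)$ is again pseudo-Lipschitz of order $k$; applying the $\dashrightarrow$ convergence just established to $g$ then gives convergence of $\frac1d\sum_j f(\beta^\star_j, \xi_j, \widehat{\beta}_j)$ to $\E[f(\beta^\star_0, \xi_0, \alpha^\star\beta^\star_0 + \sigma^\star Z + \nu\xi_0)]$ at the rate demanded by \cref{def:slow-pl-convergence}, i.e. $(\bm{\beta}^\star, \bm{\xi}, \widehat{\bm{\beta}}) \dashrightarrow (\beta^\star_0, \xi_0, \alpha^\star\beta^\star_0 + \sigma^\star Z + \nu\xi_0)$.

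\emph{Main obstacle.} The only genuinely delicate point is the reduction to $\nu = 0$, since \cref{thm:logistic-objective-perturbation} is stated for $\nu > 0$. The cleanest route is to re-audit its utility proof (Steps 1--3 of \cref{sec:objective-perturbation-logistic}) and confirm that nothing degenerates at $\nu = 0$: the perturbation simply disappears from $\mathcal{L}_\sigma$, from the mean function $\psi$, and from $Q''$, so GFOM universality (\cref{thm:logistic-universality}) and the CGMT analysis (\cref{thm:logistic-objective-aux}) apply unchanged and, if anything, with fewer error terms to control. Alternatively one could invoke continuity in $\nu$ of the solution map $(\alpha^\star, \sigma^\star, \gamma^\star)$ together with $\lambda$-strong-convexity stability of the estimator and pass to the limit $\nu \to 0^+$. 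Everything else — the sensitivity computation, the collapse of the scalar system, and the pseudo-Lipschitz reparametrization absorbing the additive noise — is routine.
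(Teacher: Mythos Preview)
Your proposal is correct and follows essentially the same approach as the paper: the paper explicitly derives this corollary by viewing output perturbation as objective perturbation with $\nu = 0$ followed by additive Gaussian noise, obtaining privacy from the analytic Gaussian mechanism (\cref{thm:hockey-stick}) together with the $L/\lambda$ sensitivity bound from Lipschitzness and strong convexity, and obtaining utility by specializing \cref{thm:logistic-objective-perturbation} to $\nu = 0$. Your write-up is actually more explicit than the paper's in two respects---the pseudo-Lipschitz reparametrization $g(a,b,c)=f(a,b,c+\nu b)$ that carries the $\dashrightarrow$ convergence from $\widetilde{\bm\beta}$ to $\widehat{\bm\beta}$, and the observation that the utility proof needs to be checked at $\nu=0$ since the theorem is nominally stated for $\nu>0$---but these are exactly the details one would fill in, and the paper treats the corollary as immediate without spelling them out.
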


\begin{figure}
    \centering
    
    \includegraphics[width=0.495\textwidth]{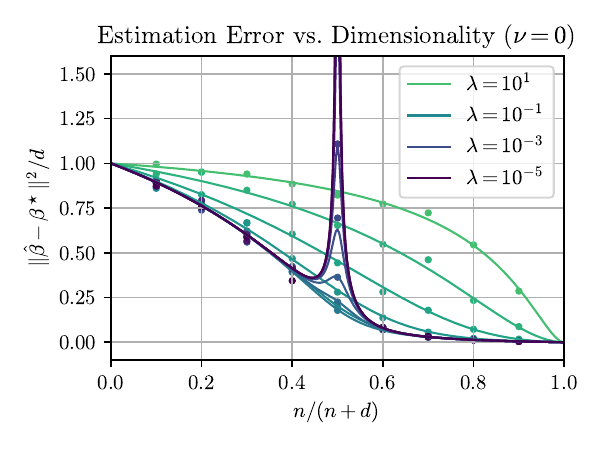}
    \includegraphics[width=0.495\textwidth]{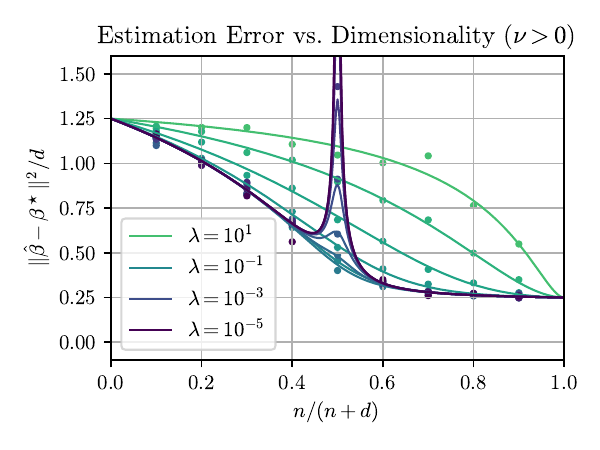}
    (a) \cref{alg:output-perturbation} with Huber loss ($L = 10$) and $\bm{\eps}^\star \sim \mathcal{N}(\bm{0}, (1/5)^2\bm{I}_n)$ and $\bm{y} = \bm{X}\bm{\beta}^\star + \bm{\eps}^\star$.
    \label{fig:huber-output-estimation}
    
    \vspace{0.5cm}
    
    \includegraphics[width=0.495\textwidth]{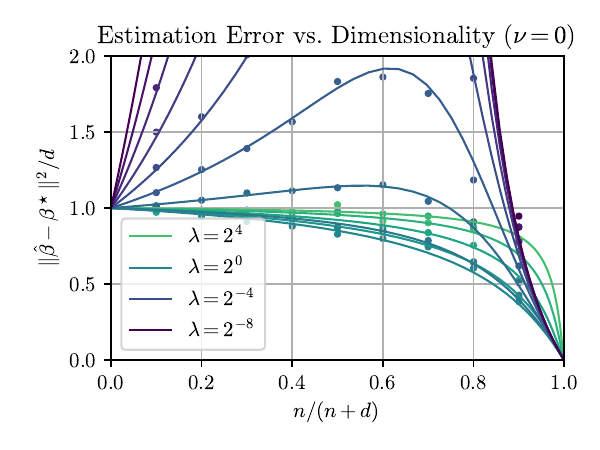}
    \includegraphics[width=0.495\textwidth]{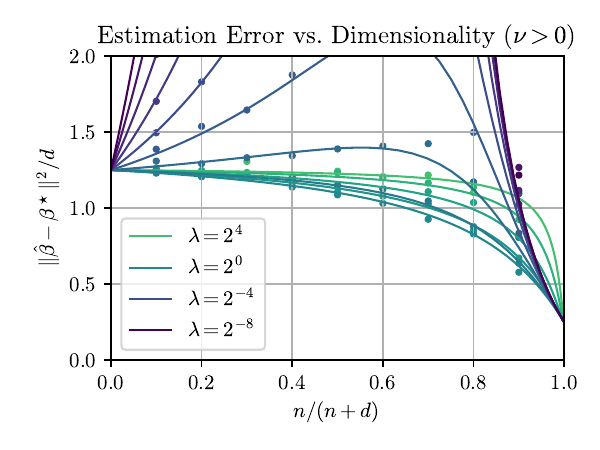}
    (b) \cref{alg:output-perturbation} with logistic loss and $\bm{y} \sim \mathrm{Bernoulli}(\rho'(\bm{X}\bm{\beta}^\star))$.
    \label{fig:logsitic-output-estimation}
    
    \vspace{0.5cm}
    
    \caption{Predictions of Corollaries \ref{thm:main-huber-output-perturbation} and \ref{thm:logistic-output-perturbation} on the estimation error of \cref{alg:output-perturbation}. In all plots, curves correspond to theoretical predictions, and dots correspond to the mean over $100$ simulations of the algorithm on synthetic data with $n \times d = 1000$. In the left plots, the perturbation magnitude is $\nu = 0$, but in the right plots, $\nu = 1/2$. In all plots, the signal strength is $\kappa = 1$, and we consider $\bm{\beta}^\star \sim \mathcal{N}(\bm{0}, \kappa^2\bm{I}_d)$, along with $\bm{X} \sim \frac{1}{\sqrt{d}} \mathrm{Uniform}(\{-1,+1\}^{n \times d})$.}
    \label{fig:output-estimation}
\end{figure}

\section{Utility of Noisy Stochastic Gradient Descent}
\label{sec:dp-sgd}

\begin{figure}
    \centering
    
    \includegraphics[width=0.495\textwidth]{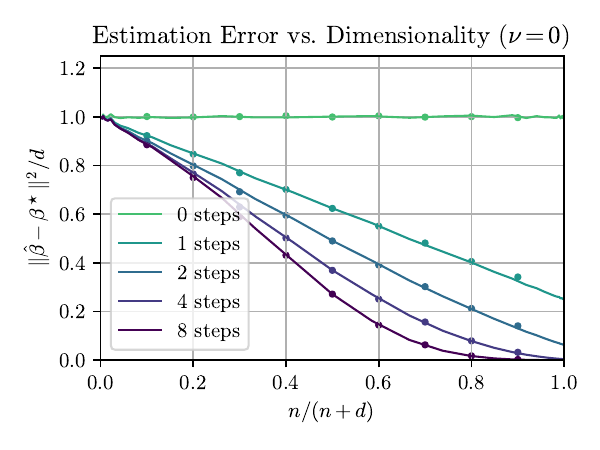}
    \includegraphics[width=0.495\textwidth]{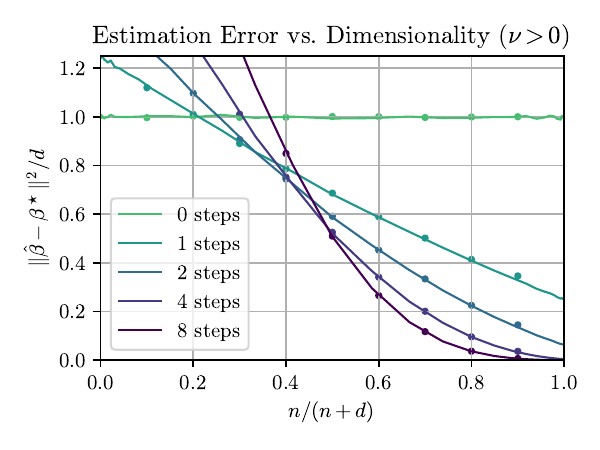}
    (a) \cref{alg:dp-sgd} with Huber loss, ``conditional expectation'' version ($L = 10$, $\eps^\star_0 \sim \mathcal{N}(0, (1/5)^2)$).
    
    \vspace{0.5cm}

    \includegraphics[width=0.495\textwidth]{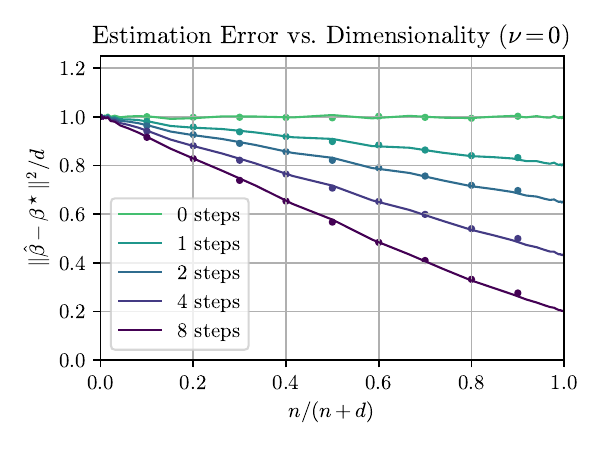}
    \includegraphics[width=0.495\textwidth]{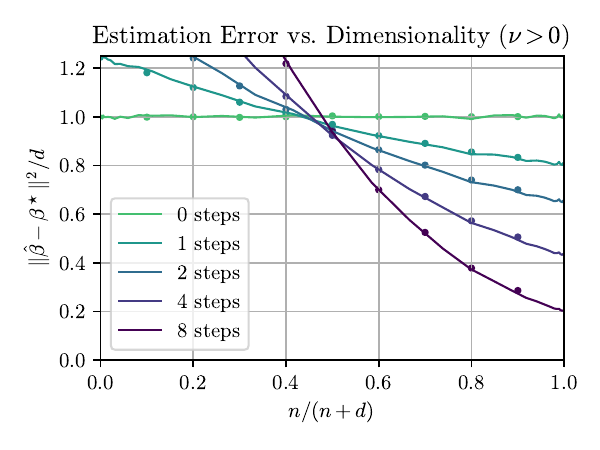}
    (b) \cref{alg:dp-sgd} with logistic loss, ``conditional expectation'' version.
    
    \vspace{0.5cm}
    
    \caption{Predictions of Theorems \ref{thm:dp-sgd-huber} and \ref{thm:dp-sgd-logistic} on the estimation error of \cref{alg:dp-sgd}. In all plots, curves correspond to theoretical predictions, and dots correspond to the mean over $10^4$ simulations of the algorithm on synthetic data with $n \times d = 1000$. In the left plots, the noise magnitude is $\nu = 0$, but in the right plots, $\nu = 1/10$. In all plots, the signal strength is $\kappa = 1$, the step size is $\gamma = 1/2 \cdot 1/(1+\delta)$, and we consider $\bm{\beta}^\star \sim \mathcal{N}(\bm{0}, \kappa^2\bm{I}_d)$, along with $\bm{X} \sim \frac{1}{\sqrt{d}} \mathrm{Uniform}(\{-1,+1\}^{n \times d})$ and $\bm{y} = \bm{X}\bm{\beta}^\star$.}
    \label{fig:dp-sgd}
\end{figure}

\begin{algorithm}[t]
    \caption{Noisy Gradient Descent}
    \label{alg:dp-sgd}
    \begin{algorithmic}[1]
        \State \textbf{input:} design matrix $\bm{X} = [\bm{x}_1\,\cdots\,\bm{x}_n]^\top \in \R^{n \times d}$ with $\norm{\bm{x}_i} \le R$, response vector $\bm{y} \in \R^n$, loss function $\ell : \R^d \times \R^{d + 1} \to \R$, step size $\gamma > 0$, noise magnitude $\nu > 0$.

        \vspace{0.5cm}

        \State Initialize \(\bm{\beta}^{(0)} = \bm{0}\)

        \For{$t = 0, \ldots, T - 1$}
            \State Sample the $t$\textsuperscript{th} gradient perturbation vector:
            \[
                \bm{\xi}^{(t)} \sim \mathcal{N}(\bm{0}, \bm{I}_d)
            \]
            \State Take a noisy gradient step:
            \[
                \bm{\beta}^{(t + 1)} = \bm{\beta}^{(t)} - \gamma\left(\sum_{i=1}^n \nabla_{\bm{\beta}}\,\ell(\bm{\beta}^{(t)};(\bm{x}_i, y_i)) + \nu \bm{\xi}^{(t)}\right)
            \]
        \EndFor
        
        \State \Return $\widehat{\bm{\beta}} = \bm{\beta}^{(T)}$
    \end{algorithmic}
\end{algorithm}

In this section, we present initial results for noisy stochastic gradient descent (DP-SGD) in the proportional regime that parallel our main results for objective perturbation (Theorems \ref{thm:main-huber-objective-perturbation} and \ref{thm:logistic-objective-perturbation}) and for output perturbation (Corollaries \ref{thm:main-huber-output-perturbation} and \ref{thm:logistic-output-perturbation}). The results in this section follow immediately from existing results in the non-private literature \cite{gerbelot2024meanfield, han2024entrywise}, and as such, we consider the results of this section as a valuable point of reference for situating our other results, rather than a main contribution in their own right.

Due to certain separability requirements on the loss function in \cite{gerbelot2024meanfield}, which builds on \cite{celentano2020gfom}, as well as certain smoothness requirements in both of \cite{celentano2020gfom, gerbelot2024meanfield}, we cannot use these results in a black-box fashion to directly analyze the utility of DP-SGD on the robust linear regression and logistic regression tasks that we focused on in the other sections of this paper. Instead, we shift our attention to the following, non-standard, ``conditional expectation'' versions of the problems, which satisfy the necessary separability and smoothness requirements.

For robust linear regression, we previously assumed a labeled data set $(\bm{X}, \bm{y})$ with $\bm{y} = \bm{X}\bm{\beta}^\star + \bm{\eps}^\star$ for suitably bounded $\bm{\eps}^\star$ and used the loss function $H_L(\bm{y} - \bm{X}\bm{\beta})$. We will now instead assume a labeled data set $(\bm{X}, \bm{y})$ with  $\bm{y} = \bm{X}\bm{\beta}^\star$ and use the ``conditional expectation'' loss function \[\ell_{\mathrm{RobustLinearCE}}(\bm{\beta}; (\bm{X}, \bm{y})) = \mathbb{E}_{\bm{\eps}^\star}[H_L(\bm{y} + \bm{\eps}^\star - \bm{X}\bm{\beta}) \mid \bm{X}, \bm{\beta}, \bm{\beta}^\star].\] For logistic regression, we previously assumed a labeled data set with $\bm{y} \sim \mathrm{Bernoulli}(\rho'(\bm{X}\bm{\beta}^\star))$ and used the loss function $\rho(\bm{X}\bm{\beta}) - \langle \bm{y}, \bm{X}\bm{\beta}\rangle$. We will now instead assume  $\bm{y} = \bm{X}\bm{\beta}^\star$ and use the ``conditional expectation'' loss function \[\ell_{\mathrm{LogisticCE}}(\bm{\beta}; (\bm{X}, \bm{y})) = \mathbb{E}_{\bm{\hat{y}} \sim \mathrm{Bernoulli}(\rho'(\bm{y}))}[\rho(\bm{X}\bm{\beta}) - \langle \bm{\hat{y}}, \bm{X}\bm{\beta}\rangle \mid \bm{X}, \bm{\beta}, \bm{\beta}^\star].\] These conditional expectation-based versions are admittedly unwieldy, but have the advantage of rigorously satisfying the assumptions needed to apply the results of \cite{celentano2020gfom, gerbelot2024meanfield}, because they are coordinate-wise separable and vary smoothly in $\bm{X}\bm{\beta}$ and $\bm{X}\bm{\beta}^\star$. For example, although \cite{celentano2020gfom, gerbelot2024meanfield} both observe that their theorems happen to give correct predictions about the behavior of (non-private) SGD for the standard formulation of logistic regression, they acknowledge that their theorems do not technically apply to this loss function because it takes as input $\bm{y} = \bm{1}[\bm{X}\bm{\beta}^\star + \bm{\eps}^\star > 0] \in \{0, 1\}^n$ for $\bm{\eps}^\star \sim \mathrm{Logistic}$, which is discontinuous with respect to $\bm{X}\bm{\beta}^\star$.

For simplicity, we consider private, full batch gradient descent (\cref{alg:dp-sgd}), although we could have just as easily analyzed the stochastic version for a mini-batch of size $\Omega(n)$ using the same results from \cite{gerbelot2024meanfield, han2024entrywise}. We validate the predictions of the following theorems against simulated data in Figure \ref{fig:dp-sgd}.

\begin{theorem}[DP-SGD for Robust Linear Regression, ``Conditional Expectation'' Version]
\label{thm:dp-sgd-huber}
Let $\bm{\hb}$ be the output of $T$ iterations of \cref{alg:dp-sgd} with parameters $R, \gamma, \nu > 0$ and instantiated with the loss function
\[
    \ell(\bm{\beta}; (\bm{x}, y)) = \E_{\eps^\star_0}\bigl[H_L(y + \eps^\star_0 - \langle \bm{x}, \bm{\beta}\rangle) \mid \bm{\beta}, (\bm{x}, y)\bigr]
\]
for some fixed, continuous random variable $\eps_0^\star \in \R$.
\begin{enumerate}[(a)]
    \item \label{thm:dp-sgd-huber-privacy} \emph{\textbf{(Privacy)}} $\bm{\hb}$ satisfies $\pr$-zCDP for $\pr = T \cdot \frac{L^2R^2}{2\nu^2}$.
    \item \label{thm:dp-sgd-huber-utility} \emph{\textbf{(Utility)}} Suppose the following assumptions hold for some $\bm{\beta}^\star \in \R^d$ as $n \to \infty$ and $d/n \to \delta$:
    \begin{enumerate}[(i)]
        \item $\bm{X} \in B_R(\bm{0})^n \subseteq \R^{n \times d}$ follows a subgaussian design and $\bm{y} = \bm{X}\bm{\beta}^\star$.
        \item There exists a random variable $\beta^\star_0 \in \R$ such that $\bm{\beta}^\star \rightsquigarrow \beta^\star_0$.
    \end{enumerate}
    Consider the following $O(T^2)$ equations that recursively define, for all $t, s \in \{0, 1, \ldots, T\}$, the random variables $\bm{\theta}^{(t)}, \bm{\eta}^{(t)}, \bm{u}^{(t)}, \bm{\omega}^{(t)} \in \R^2$ and $\partial \bm{\eta}^{(t)}/\partial \bm{\omega}^{(s)} \in \R^{2 \times 2}$, as well as the deterministic $2 \times 2$ matrices $\bm{R_\theta}(t, s), \bm{R_g}(t, s), \bm{\Gamma}^{(t)}, \bm{C_\theta}(t, s), \bm{C_g}(t, s) \in \R^{2 \times 2}$.

    The random variables are
    {\allowdisplaybreaks\begin{align*}
        \bm{\theta}^{(0)} &= \left[\begin{array}{c}
            0 \\
            \beta^\star_0
        \end{array}\right] \in \R^2, \\
        \bm{\theta}^{(t + 1)} &= (1 + \bm{\Gamma}^{(t)})\bm{\theta}^{(t)} - \gamma \nu \left[\begin{array}{c} \xi^{(t)}_0 \\ 0\end{array}\right] + \sum_{k=0}^{t - 1}\bm{R_g}(t, k)\bm{\theta}^{(k)} + \bm{u}^{(t)} \in \R^2, \\
        \bm{\eta}^{(t)} &= -\gamma\sum_{k=0}^{t-1} \bm{R_\theta}(t, k) \left[\begin{array}{c}
            \E_{\eps^\star_0}[\eta^{(k)}_1 - \eta^{(k)}_2 - \eps^\star_0]_L \\
            0
        \end{array}\right] + \bm{\omega}^{(t)} \in \R^2, \\
        \frac{\partial \bm{\eta}^{(t)}}{\partial \bm{\omega}^{(s)}} &= \left\{\begin{array}{ll}
            \bm{0} &\text{if } t < s, \\
            \bm{I}_2 &\text{if } t = s, \\
            -\gamma\sum_{k=0}^{t-1}\bm{R_\theta}(t, k)\Pr_{\eps^\star_0}\left[\abs{\eta^{(k)}_1 - \eta^{(k)}_2 - \eps^\star_0} < L\right]\left[\begin{array}{cc}
            1 & -1 \\
            0 & 0
        \end{array}\right] \frac{\partial \bm{\eta}^{(k)}}{\partial \bm{\omega}^{(s)}} &\text{if } t > s
        \end{array}\right\} \in \R^{2 \times 2},
    \end{align*}}
    where $\xi^{(0)}_0, \ldots, \xi^{(T)}_0 \iid \mathcal{N}(0, 1)$, the random vector $[(\bm{u}^{(0)})^\top \mid \cdots \mid (\bm{u}^{(t)})^\top]^\top \in \R^{2(t + 1)}$ (for each $t = 0, \ldots, T$) follows a centered multivariate Gaussian distribution with covariance matrix entries given by $\bm{C_g}$, and the random vector $[(\bm{\omega}^{(0)})^\top \mid \cdots \mid (\bm{\omega}^{(t)})^\top]^\top \in \R^{2(t + 1)}$ (for each $t = 0, \ldots, T$) follows a centered multivariate Gaussian distribution with covariance matrix entries given by $\bm{C_\theta}$.
    
    The deterministic $2 \times 2$ matrices are
    {\allowdisplaybreaks\begin{align*}
        \bm{R_\theta}(t + 1, s) &= \left\{\begin{array}{ll}
            \bm{0} &\text{if }t < s, \\
            \bm{I}_2 &\text{if }t = s, \\
            (\bm{I}_2 + \bm{\Gamma}^{(t)})\bm{R_\theta}(t, s) + \sum_{k=0}^{t-1} \bm{R_g}(t, k)\bm{R_\theta}(k, s) &\text{if }t > s
        \end{array}\right\} \in \R^{2 \times 2} \\
        \bm{R_g}(t, s) &= -\frac{\gamma}{\delta}\E\left[\Pr_{\eps^\star_0}\left[\abs{\eta^{(t)}_1 - \eta^{(t)}_2 - \eps^\star_0} < L\right]\left[\begin{array}{cc}
            1 & -1 \\
            0 & 0
        \end{array}\right] \frac{\partial\bm{\eta}^{(t)}}{\partial \bm{\omega}^{(s)}}\right] \in \R^{2 \times 2},\\
        \bm{\Gamma}^{(t)} &= -\frac{\gamma}{\delta}\E\left[\Pr_{\eps^\star_0}\left[\abs{\eta^{(t)}_1 - \eta^{(t)}_2 - \eps^\star_0} < L\right]\left[\begin{array}{cc}
            1 & -1 \\
            0 & 0
        \end{array}\right]\right] \in \R^{2 \times 2},\\
        \bm{C_\theta}(t, s) &= \E[\bm{\theta}^{(t)}(\bm{\theta}^{(s)})^\top] \in \R^{2 \times 2}, \\
        \bm{C_g}(t, s) &= \frac{\gamma^2}{\delta}\E\left[\begin{array}{cc}
            \E_{\eps^\star_0}[\eta^{(t)}_1 - \eta^{(t)}_2 - \eps^\star_0]_L\E_{\eps^\star_0}[\eta^{(s)}_1 - \eta^{(s)}_2 - \eps^\star_0]_L & 0 \\
            0 & 0
        \end{array}\right] \in \R^{2 \times 2}.
    \end{align*}
    Then, for any pseudo-Lipschitz functions $\psi: \R^{T + 1} \to \R$ and $\phi : \R^T \to R$,
    \begin{align*}
        \frac{1}{d}\sum_{j=1}^d \psi(\beta^\star_j, \beta^{(1)}_j, \ldots, \beta^{(T)}_j) &\toP \E[\psi(\beta^\star_0, \theta^{(1)}_1, \ldots, \theta^{(T)}_1)], \\
        \frac{1}{n}\sum_{i=1}^n \phi(\langle \bm{x}_i, \bm{\beta}^\star\rangle, \langle \bm{x}_i, \bm{\beta}^{(1)}\rangle, \ldots, \langle \bm{x}_i, \bm{\beta}^{(T-1)}\rangle) &\toP \E[\phi(\omega^{(0)}, \eta^{(1)}_1, \ldots, \eta^{(T)}_1)].
    \end{align*}}
\end{enumerate}
    
\end{theorem}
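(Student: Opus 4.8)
\textbf{Proof proposal for \cref{thm:dp-sgd-huber}.}

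The plan is to reduce the theorem to a direct application of the state-evolution results for generalized first-order methods from \cite{gerbelot2024meanfield}, together with the GFOM universality law \cref{thm:gfom-universality} of \cite{han2024entrywise} to handle the subgaussian (non-Gaussian) design. Part \ref{thm:dp-sgd-huber-privacy}, the privacy claim, is immediate: each iteration of \cref{alg:dp-sgd} is a Gaussian mechanism applied to the gradient sum $\sum_i \nabla_{\bm\beta}\ell(\bm\beta^{(t)};(\bm x_i,y_i))$, whose $\ell^2$ sensitivity is at most $LR$ (since $\ell$ is a GLM loss that is $L$-Lipschitz in its first argument and $\norm{\bm x_i}\le R$, and the conditional-expectation smoothing does not increase the Lipschitz constant). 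By \cref{thm:gaussian-mechanism-zcdp} each step is $\frac{L^2R^2}{2\nu^2}$-zCDP, and adaptive composition of $T$ steps of zCDP yields $\pr = T\cdot\frac{L^2R^2}{2\nu^2}$. So the real work is part \ref{thm:dp-sgd-huber-utility}.

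For the utility proof, I would first observe that the noisy gradient descent recursion, with the conditional-expectation loss $\ell_{\mathrm{RobustLinearCE}}$, fits the template of a general first-order method in the sense of \cite{celentano2020gfom, gerbelot2024meanfield}: the update $\bm\beta^{(t+1)} = \bm\beta^{(t)} - \gamma(\bm X^\top[\bm y - \bm X\bm\beta^{(t)}]_L^{\text{(smoothed)}} + \nu\bm\xi^{(t)})$ alternates a $\bm X^\top(\cdot)$ application with a coordinatewise (separable, smooth) nonlinearity, and the noise $\bm\xi^{(t)}$ can be folded in as part of the deterministic $F_2^{(t)}$ piece of the GFOM (conditioned on its value, exactly as the remark after \cref{thm:gfom-universality} permits). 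The second key step is to write down the state evolution: for a Gaussian design $\bm X = \frac1{\sqrt d}\bm G$, the joint empirical distribution of the iterates $(\bm\beta^\star,\bm\beta^{(1)},\ldots,\bm\beta^{(T)})$ and of the ``on-sample'' quantities $(\langle\bm x_i,\bm\beta^\star\rangle, \langle\bm x_i,\bm\beta^{(1)}\rangle,\ldots)$ converges to an explicit Gaussian process whose covariance and response (memory/Onsager) kernels satisfy the recursion in the theorem statement. This recursion is precisely the instantiation of the general state-evolution equations of \cite{gerbelot2024meanfield} to the separable loss $H_L$ (with its derivative $[\,\cdot\,]_L$ appearing in the $\bm R_g$, $\bm C_g$ terms and the clipping-probability $\Pr[|\cdot|<L]$ appearing in the response matrices $\bm R_\theta$, $\bm\Gamma$), augmented by the extra deterministic Gaussian coordinate $\xi^{(t)}_0$ tracking the privacy noise; the $2\times 2$ matrix structure arises because we track two linear functionals of each row of $\bm X$ (namely $\langle\bm x_i,\bm\beta^\star\rangle$ and the current iterate's prediction). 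The smoothness and separability of $\ell_{\mathrm{RobustLinearCE}}$ — which is exactly why we pass to the conditional-expectation version — is what licenses citing \cite{gerbelot2024meanfield} as a black box here.

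The third step is to remove the Gaussianity assumption. Since $T$ is held constant as $n\to\infty$ and $d/n\to\delta$, and since each $F^{(t)}, G^{(t)}$ in the GFOM is Lipschitz with constant depending only on $\gamma, L, \lambda, \nu$ (again using smoothness of the CE loss and boundedness of $[\,\cdot\,]_L$ and its smoothed derivative), the hypotheses of \cref{thm:gfom-universality} are met for any subgaussian design (recalling that a subgaussian design has independent, mean-zero, variance-$1/d$ entries with $\psi_2$ norm $O(n^{-1/2})$, matching the moment conditions). Applying \cref{thm:gfom-universality} with $\bm A$ the subgaussian design and $\bm B = \frac1{\sqrt d}\bm G$, together with a final bound (à la \cref{thm:uniform-bound-from-convergence} or the worst-case estimates used in \cref{thm:logistic-universality}) controlling $\norm{\bm\beta^{(t)}}$ and $\norm{\bm X\bm\beta^{(t)}}_\infty$ so that the error term $(\Lambda\log n(1+\norm{\bm u^{(0)}}_\infty+\norm{\bm v^{(0)}}_\infty))^{Ct^3}n^{-1/(Ct^3)}$ genuinely tends to zero, transfers the Gaussian state evolution of step two to the subgaussian case. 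Combining with the convergence-in-probability consequence of this universality bound yields the two stated $\toP$ limits for $\psi$ and $\phi$. The main obstacle I expect is step two: correctly matching the abstract state-evolution recursion of \cite{gerbelot2024meanfield} — which tracks memory kernels, Onsager correction terms, and the full covariance of the Gaussian limiting process across all $t \le T$ — to the concrete system of $O(T^2)$ matrix equations in the statement, and verifying that the privacy-noise vectors $\bm\xi^{(t)}$ enter only additively (hence contribute the $-\gamma\nu[\xi^{(t)}_0;0]$ term and the $\gamma^2\nu^2$-type contributions to $\bm C_g$, $\bm C_\theta$) without coupling into the response kernels, which requires care because the iterates $\bm\beta^{(t)}$ depend on all past noise.
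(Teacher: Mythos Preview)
Your proposal is correct and follows essentially the same route as the paper: the privacy part via the Gaussian mechanism plus zCDP composition, and the utility part by casting the noisy gradient iteration as a GFOM with two-dimensional iterates tracking $[\bm\beta^{(t)}\mid\bm\beta^\star]$, invoking the state-evolution theorem of \cite{gerbelot2024meanfield} for the Gaussian design, and then lifting to subgaussian designs via \cref{thm:gfom-universality}. The paper's proof is slightly terser than yours (it simply names assumptions (A1)--(A4) of \cite{gerbelot2024meanfield} and asserts that substituting $g^{(t)}$, $h^{(t)}$, and their partial derivatives into the state evolution yields the displayed system), but the content is the same; in particular your anticipated ``main obstacle'' of matching the abstract recursion to the concrete $O(T^2)$ system is exactly the step the paper summarizes in one sentence.
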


\begin{proof}
    Part \ref{thm:dp-sgd-huber-privacy} follows from standard properties of zCDP, such as \cref{thm:gaussian-mechanism-zcdp}. For part \ref{thm:dp-sgd-huber-utility}, consider the generalized first-order method initialized at $\bm{v}^{(0)} = [\bm{0} \mid \bm{\beta}^\star] \in \R^{d \times 2}$ and updated according to the following rule:
    \begin{align*}
        \bm{v}^{(t + 1)} &= h^{(t)}(\bm{v}^{(t)}; \bm{\xi}^{(t)}) + \bm{X}^\top g^{(t)}(\bm{r}^{(t)}) \in \R^{d \times 2}, \\
        \bm{r}^{(t)} &= \bm{X}\sum_{k=0}^t \bm{v}^{(k)} \in \R^{n \times 2},
    \end{align*}
    where we have defined the functions $g^{(t)} : \R^{n \times 2} \to \R^{n \times 2}$ and $h^{(t)} : \R^{d \times 2} \to \R^{d \times 2}$ in terms of a dummy variable $\bm{\eps}^\star = (\eps^\star_1, \ldots, \eps^\star_n) \iid \eps^\star_0$ as follows:
    \[
        g^{(t)}([\bm{r}^{(t)}_1 \mid \bm{r}^{(t)}_2]) = \left[-\gamma \E_{\bm{\eps}^\star}[\bm{r}^{(t)}_1 - \bm{r}^{(t)}_2 - \bm{\eps}^\star]_L \;\bigg|\; \bm{0}\right]
    \]
    and
    \[
        h^{(t)}([\bm{v}^{(t)}_1 \mid \bm{v}^{(t)}_2];\, \bm{\xi}^{(t)}) = \left[-\gamma \nu \bm{\xi}^{(t)} \;\Big|\; \bm{0}\right].
    \]
    It is straightforward to verify that $\bm{v}^{(t)} = [\bm{\beta}^{(t)} \mid \bm{\beta}^\star] \in \R^{d \times 2}$, where $\bm{\beta}^{(t)}$ is the $t$\textsuperscript{th} iterate of DP-SGD (\cref{alg:dp-sgd}). If $\bm{X}$ were an isotropic Gaussian, our desired result would follow directly from Theorem 3.2 of \cite{gerbelot2024meanfield} applied to this sequence, provided that we can verify their assumptions (A1), (A2), (A3.b) and (A4). Assumptions (A1) and (A2) are equivalent to our assumption (i) in part \ref{thm:dp-sgd-huber-utility}. The separability and smoothness of assumption (A3.b) are easily verified from the above expressions for $g^{(t)}$ and $h^{(t)}$. For assumption (A4), observe that $\bm{\beta}^\star \rightsquigarrow \beta^\star_0$ implies that $\frac{1}{d}\norm{\bm{\beta}^\star}^2$ converges to a finite constant, namely $\E(\beta^\star_0)^2$, as $n \to \infty$. Substituting the functions $g$ and $h$ and the partial derivatives of $g$ into the system of equations in their theorem, and then taking the limit as $n \to \infty$ and $d/n \to \delta$, yields our claimed system of equations, which we have partially simplified. For $\bm{X}$ following a general, subgaussian design, we simply apply \cref{thm:gfom-universality} (GFOM Universality, Theorem 3.2 of \cite{han2024entrywise}).
\end{proof}

\begin{theorem}[DP-SGD for Logistic Regression, ``Conditional Expectation'' Version]
\label{thm:dp-sgd-logistic}
Let $\bm{\hb}$ be the output of $T$ iterations of \cref{alg:dp-sgd} with parameters $R, \gamma, \nu > 0$, instantiated with the loss function
\[
    \ell(\bm{\beta}; (\bm{x}, y)) = \E_{\hat{y} \sim \mathrm{Bernoulli}(\rho'(y))}\bigl[\rho(\langle \bm{x}, \bm{\beta}\rangle) - \hat{y} \langle\bm{x}, \bm{\beta}\rangle \mid \bm{\beta}, (\bm{x}, y)\bigr]
\]
\begin{enumerate}[(a)]
    \item \label{thm:dp-sgd-logistic-privacy} \emph{\textbf{(Privacy)}} $\bm{\hb}$ satisfies $\pr$-zCDP for $\pr = T \cdot \frac{R^2}{2\nu^2}$.
    \item \label{thm:dp-sgd-logistic-utility} \emph{\textbf{(Utility)}} Suppose the following assumptions hold for some $\bm{\beta}^\star \in \R^d$ as $n \to \infty$ and $d/n \to \delta$:
    \begin{enumerate}[(i)]
        \item $\bm{X} \in B_R(\bm{0})^n \subseteq \R^{n \times d}$ follows a subgaussian design and $\bm{y} = \bm{X}\bm{\beta}^\star$.
        \item There exists a random variable $\beta^\star_0 \in \R$ such that $\bm{\beta}^\star \rightsquigarrow \beta^\star_0$.
    \end{enumerate}
    Consider the following $O(T^2)$ equations that recursively define, for all $t, s \in \{0, 1, \ldots, T\}$, the random variables $\bm{\theta}^{(t)}, \bm{\eta}^{(t)}, \bm{u}^{(t)}, \bm{\omega}^{(t)} \in \R^2$ and $\partial \bm{\eta}^{(t)}/\partial \bm{\omega}^{(s)} \in \R^{2 \times 2}$, as well as the deterministic $2 \times 2$ matrices $\bm{R_\theta}(t, s), \bm{R_g}(t, s), \bm{\Gamma}^{(t)}, \bm{C_\theta}(t, s), \bm{C_g}(t, s) \in \R^{2 \times 2}$.

    The random variables are
    {\allowdisplaybreaks\begin{align*}
        \bm{\theta}^{(0)} &= \left[\begin{array}{c}
            0 \\
            \beta^\star_0
        \end{array}\right] \in \R^2, \\
        \bm{\theta}^{(t + 1)} &= (1 + \bm{\Gamma}^{(t)})\bm{\theta}^{(t)} - \gamma \nu \left[\begin{array}{c} \xi^{(t)}_0 \\ 0\end{array}\right] + \sum_{k=0}^{t - 1}\bm{R_g}(t, k)\bm{\theta}^{(k)} + \bm{u}^{(t)} \in \R^2, \\
        \bm{\eta}^{(t)} &= -\gamma \sum_{k=0}^{t-1} \bm{R_\theta}(t, k) \left[\begin{array}{c}
            \rho'(\eta^{(k)}_1)\rho'(-\eta^{(k)}_2) - \rho'(-\eta^{(k)}_1)\rho'(\eta^{(k)}_2)\\
            0
        \end{array}\right] + \bm{\omega}^{(t)} \in \R^2, \\
        \frac{\partial \bm{\eta}^{(t)}}{\partial \bm{\omega}^{(s)}} &= \left\{\begin{array}{ll}
            \bm{0} &\text{if } t < s, \\
            \bm{I}_2 &\text{if } t = s, \\
            -\gamma \sum_{k=0}^{t-1}\bm{R_\theta}(t, k)\left[\begin{array}{cc}
            \rho''(\eta^{(k)}_1) & -\rho''(\eta^{(k)}_2) \\
            0 & 0
        \end{array}\right] \frac{\partial \bm{\eta}^{(k)}}{\partial \bm{\omega}^{(s)}} &\text{if } t > s
        \end{array}\right\} \in \R^{2 \times 2},
    \end{align*}}
    where $\xi^{(0)}_0, \ldots, \xi^{(T)}_0 \iid \mathcal{N}(0, 1)$, the random vector $[(\bm{u}^{(0)})^\top \mid \cdots \mid (\bm{u}^{(t)})^\top]^\top \in \R^{2(t + 1)}$ (for each $t = 0, \ldots, T$) follows a centered multivariate Gaussian distribution with covariance matrix entries given by $\bm{C_g}$, and the random vector $[(\bm{\omega}^{(0)})^\top \mid \cdots \mid (\bm{\omega}^{(t)})^\top]^\top \in \R^{2(t + 1)}$ (for each $t = 0, \ldots, T$) follows a centered multivariate Gaussian distribution with covariance matrix entries given by $\bm{C_\theta}$.
    
    The (deterministic / constant) $2 \times 2$ matrices are
    {\allowdisplaybreaks\begin{align*}
        \bm{R_\theta}(t + 1, s) &= \left\{\begin{array}{ll}
            \bm{0} &\text{if }t < s, \\
            \bm{I}_2 &\text{if }t = s, \\
            (\bm{I}_2 + \bm{\Gamma}^{(t)})\bm{R_\theta}(t, s) + \sum_{k=0}^{t-1} \bm{R_g}(t, k)\bm{R_\theta}(k, s) &\text{if }t > s
        \end{array}\right\} \in \R^{2 \times 2} \\
        \bm{R_g}(t, s) &= -\frac{\gamma}{\delta}\E\left[\left[\begin{array}{cc}
            \rho''(\eta^{(t)}_1) & -\rho''(\eta^{(t)}_2) \\
            0 & 0
        \end{array}\right] \frac{\partial\bm{\eta}^{(t)}}{\partial \bm{\omega}^{(s)}}\right] \in \R^{2 \times 2},\\
        \bm{\Gamma}^{(t)} &= -\frac{\gamma}{\delta}\E\left[\begin{array}{cc}
            \rho''(\eta^{(t)}_1) & -\rho''(\eta^{(t)}_2) \\
            0 & 0
        \end{array}\right] \in \R^{2 \times 2},\\
        \bm{C_\theta}(t, s) &= \E[\bm{\theta}^{(t)}(\bm{\theta}^{(s)})^\top] \in \R^{2 \times 2}, \\
        \bm{C_g}(t, s) &= \frac{\gamma^2}{\delta}\E\left[\begin{array}{cc}
            \substack{(\rho'(\eta^{(t)}_1)\rho'(-\eta^{(t)}_2) - \rho'(-\eta^{(t)}_1)\rho'(\eta^{(t)}_2)) \\ \times (\rho'(\eta^{(s)}_1)\rho'(-\eta^{(s)}_2) - \rho'(-\eta^{(s)}_1)\rho'(\eta^{(s)}_2))} & 0 \\
            0 & 0
        \end{array}\right] \in \R^{2 \times 2}.
    \end{align*}
    Then, for any pseudo-Lipschitz functions $\psi: \R^{T + 1} \to \R$ and $\phi : \R^T \to \R$,
    \begin{align*}
        \frac{1}{d}\sum_{j=1}^d \psi(\beta^\star_j, \beta^{(1)}_j, \ldots, \beta^{(T)}_j) &\toP \E[\psi(\beta^\star_0, \theta^{(1)}_1, \ldots, \theta^{(T)}_1)], \\
        \frac{1}{n}\sum_{i=1}^n \phi(\langle \bm{x}_i, \bm{\beta}^\star\rangle, \langle \bm{x}_i, \bm{\beta}^{(1)}\rangle, \ldots, \langle \bm{x}_i, \bm{\beta}^{(T-1)}\rangle) &\toP \E[\phi(\omega^{(0)}, \eta^{(1)}_1, \ldots, \eta^{(T)}_1)].
    \end{align*}}
\end{enumerate}
    
\end{theorem}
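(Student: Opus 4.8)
The plan is to follow the exact same template used to prove \cref{thm:dp-sgd-huber}, substituting in the logistic conditional-expectation loss in place of the Huber conditional-expectation loss. Part \ref{thm:dp-sgd-logistic-privacy} follows from the $\ell^2$ sensitivity bound on the gradient of a single sample together with composition for zCDP. For a single sample $(\bm{x}, y)$, the gradient of $\ell_{\mathrm{LogisticCE}}$ with respect to $\bm{\beta}$ is $\bm{x}\,(\rho'(\langle\bm{x},\bm{\beta}\rangle) - \rho'(y))$, whose norm is at most $\norm{\bm{x}} \le R$ since $\rho'$ takes values in $[0,1]$; hence each noisy step is a Gaussian mechanism with sensitivity $R$ and noise $\nu$, which is $\frac{R^2}{2\nu^2}$-zCDP by \cref{thm:gaussian-mechanism-zcdp}, and $T$-fold composition of zCDP gives $\pr = T \cdot \frac{R^2}{2\nu^2}$.

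For the utility claim, part \ref{thm:dp-sgd-logistic-utility}, I would first express the DP-SGD iterates as a general first-order method. Initialize $\bm{v}^{(0)} = [\bm{0} \mid \bm{\beta}^\star] \in \R^{d \times 2}$ with update $\bm{v}^{(t+1)} = h^{(t)}(\bm{v}^{(t)}; \bm{\xi}^{(t)}) + \bm{X}^\top g^{(t)}(\bm{r}^{(t)})$ and $\bm{r}^{(t)} = \bm{X}\sum_{k=0}^t \bm{v}^{(k)} \in \R^{n\times 2}$, where, with $\hat y \sim \mathrm{Bernoulli}(\rho'(y))$, the gradient of $\ell_{\mathrm{LogisticCE}}$ in the direction of the first coordinate of $\bm{r}^{(t)}$ is $\rho'(r^{(t)}_1) - \rho'(r^{(t)}_2)$; here I would write $g^{(t)}([\bm{r}^{(t)}_1 \mid \bm{r}^{(t)}_2]) = [-\gamma(\rho'(\bm{r}^{(t)}_1) - \rho'(\bm{r}^{(t)}_2)) \mid \bm{0}]$ and $h^{(t)}([\bm{v}^{(t)}_1 \mid \bm{v}^{(t)}_2]; \bm{\xi}^{(t)}) = [-\gamma\nu\bm{\xi}^{(t)} \mid \bm{0}]$, modulo a routine rewriting of $\rho'(r_1)-\rho'(r_2)$ in the ``two-product'' form $\rho'(r_1)\rho'(-r_2) - \rho'(-r_1)\rho'(r_2)$ that appears in the theorem statement (this is the identity $\rho'(a)-\rho'(b) = \rho'(a)(\rho'(-b)+\rho'(b)) - (\rho'(a)+\rho'(-a))\rho'(b) = \rho'(a)\rho'(-b) - \rho'(-a)\rho'(b)$ using $\rho'(t)+\rho'(-t)=1$). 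One checks by induction that $\bm{v}^{(t)} = [\bm{\beta}^{(t)} \mid \bm{\beta}^\star]$ where $\bm{\beta}^{(t)}$ is the $t$-th DP-SGD iterate.

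Next, I would verify the hypotheses of Theorem 3.2 of \cite{gerbelot2024meanfield}. Assumptions (A1) and (A2) are our assumption (i). For (A3.b), the functions $g^{(t)}$ and $h^{(t)}$ are coordinate-wise separable by construction, and they are smooth because $\rho'$ and $\rho''$ are bounded and infinitely differentiable ($\rho'$ is the sigmoid); this is precisely the point of using the conditional-expectation version of logistic loss rather than the standard one, which would be discontinuous in $\bm{X}\bm{\beta}^\star$. For (A4), the convergence $\bm{\beta}^\star \rightsquigarrow \beta^\star_0$ forces $\frac{1}{d}\norm{\bm{\beta}^\star}^2 \to \E(\beta^\star_0)^2$, a finite constant. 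Applying their theorem to this GFOM for an isotropic Gaussian design, substituting $g$, $h$ and their partial derivatives ($\partial g^{(t)}$ in the first coordinate is $-\gamma\rho''$), and taking $n\to\infty$ with $d/n\to\delta$, yields the stated $O(T^2)$ system: the derivatives $\rho''(\eta_1^{(k)})$ and $-\rho''(\eta_2^{(k)})$ appear exactly where the Huber theorem had $\Pr_{\eps^\star_0}[\abs{\cdots} < L]$, since $\rho''$ plays the role of the ``active set indicator'' for the smooth logistic loss, and $\bm{C_g}$ picks up the squared two-product gradient term. Finally, to pass from the isotropic Gaussian design to a general subgaussian design, I would invoke \cref{thm:gfom-universality} (GFOM universality), exactly as in the proof of \cref{thm:dp-sgd-huber}, since the Lipschitz-norm bounds on $g^{(t)}, h^{(t)}$ needed there follow from boundedness of $\rho', \rho''$ and their derivatives. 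I expect the only mildly delicate step to be the bookkeeping that matches the partial derivatives of $g^{(t)}$ and the Gaussian-covariance structure coming out of \cite{gerbelot2024meanfield} to the precise matrix recursions written in the statement; there is no genuine obstacle, only a translation of notation.
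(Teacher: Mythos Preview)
Your proposal is correct and matches the paper's approach exactly: the paper itself states that the proof of \cref{thm:dp-sgd-logistic} ``is essentially the same as that of \cref{thm:dp-sgd-huber}, so we omit it,'' and you have faithfully carried out that template. Your observation that $\rho'(r_1)-\rho'(r_2)=\rho'(r_1)\rho'(-r_2)-\rho'(-r_1)\rho'(r_2)$ via $\rho'(t)+\rho'(-t)=1$ is a nice bit of bookkeeping that the paper leaves implicit.
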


The proof of \cref{thm:dp-sgd-logistic} is essentially the same as that of \cref{thm:dp-sgd-huber}, so we omit it.

\bibliographystyle{alpha}
\bibliography{sources}

\end{document}